\documentclass[12pts]{article}
\usepackage[utf8]{inputenc}
\usepackage[round]{natbib}
\usepackage[margin=1in]{geometry}
\usepackage{graphicx} 
\usepackage{authblk}
\title{Generalization Properties of Score-matching Diffusion Models for Intrinsically Low-dimensional Data}
\author[1]{Saptarshi Chakraborty\thanks{email: \texttt{saptarsc@umich.edu}}}
\author[4]{Quentin Berthet \thanks{email: \texttt{qerthet@google.com}}}
 \author[2,3,4]{Peter L.~Bartlett\thanks{email: \texttt{peter@berkeley.edu}}}
 \affil[1]{Department of Statistics, University of Michigan}
 \affil[2]{Department of Statistics, University of California, Berkeley}
  \affil[3]{Department of Electrical Engineering and Computer Sciences, UC Berkeley}
 \affil[4]{Google DeepMind}
\date{\vspace{-5ex}}
\date{\today}
\usepackage{titletoc}
\usepackage{amsfonts}
\usepackage{amsmath}
\usepackage{amsthm}
\usepackage{amssymb}
\usepackage{bbm}
\usepackage{enumerate}
\usepackage{thm-restate}
\usepackage{graphicx} 
\usepackage{caption} 
\usepackage{subcaption} 
\newtheorem{defn}{Definition}
\newtheorem{theorem}[defn]{Theorem}
\newtheorem{lemma}[defn]{Lemma}
\newtheorem{cor}[defn]{Corollary}
\newtheorem{ass}{Assumption}

\newtheorem{propos}[defn]{Proposition}
\usepackage[mathscr]{euscript}
\usepackage[italicdiff]{physics}
\usepackage{natbib}
\usepackage{xcolor}
\usepackage{hyperref}
\hypersetup{
    colorlinks=true,
    citecolor = blue,
    linkcolor=blue,
    filecolor=magenta,      
    urlcolor=teal,
    pdftitle={Overleaf Example},
    pdfpagemode=FullScreen,
    }

\def\Real{\mathop{\mathbb{R}}\nolimits}

\def\argmin{\mathop{\rm argmin}}

\newcommand{\diff}{\mathrm{d}}
\newcommand{\prob}{\mathbb{P}}
\newcommand{\one}{\mathbbm{1}}

\newcommand{\bs}{\boldsymbol{s}}

\newcommand{\cB}{ \mathcal{B}}
\newcommand{\cC}{ \mathcal{C}}

\newcommand{\cE}{ \mathcal{E}}
\newcommand{\cF}{ \mathcal{F}}

\newcommand{\cL}{ \mathcal{L}}
\newcommand{\cM}{ \mathcal{M}}
\newcommand{\cN}{ \mathcal{N}}
\newcommand{\cO}{ \mathcal{O}}

\newcommand{\cR}{ \mathcal{R}}
\newcommand{\cS}{ \mathcal{S}}

\newcommand{\cW}{ \mathcal{W}}

\newcommand{\sA}{ \mathscr{A}}
\newcommand{\sB}{ \mathscr{B}}
\newcommand{\sC}{ \mathscr{C}}

\newcommand{\sH}{ \mathscr{H}}

\newcommand{\sM}{ \mathscr{M}}
\newcommand{\sN}{ \mathscr{N}}

\newcommand{\sQ}{ \mathscr{Q}}

\newcommand{\sS}{ \mathscr{S}}
\newcommand{\sT}{ \mathscr{T}}

\newcommand{\sX}{ \mathscr{X}}

\newcommand{\hP}{ \hat{P}}
\newcommand{\hQ}{ \hat{Q}}

\newcommand{\hX}{ \widehat{X}}
\newcommand{\hY}{ \widehat{Y}}

\newcommand{\hmu}{\hat{\mu}}
\newcommand{\E}{\mathbb{E}}

\newcommand{\relu}{\text{ReLU}}

\newcommand{\fH}{\mathbb{H}}

\newcommand{\fL}{\mathbb{L}}
\newcommand{\fM}{\mathbb{M}}
\newcommand{\fN}{\mathbb{N}}

\newcommand{\fW}{\mathbb{W}}

\newcommand{\hx}{\hat{x}}
\newcommand{\hp}{\hat{p}}

\newcommand{\tv}{\operatorname{TV}}
\newcommand{\kl}{\operatorname{KL}}

\date{\vspace{-5ex}}

\begin{document}
\maketitle
\begin{abstract}
Despite the remarkable empirical success of score-based diffusion models, their theoretical guarantees for statistical accuracy remain relatively underdeveloped. Existing analyses often yield pessimistic convergence rates that fail to reflect the intrinsic low-dimensional structure commonly present in real-world data distributions, such as those arising in natural images. In this work, we analyze the statistical convergence of score-based diffusion models for learning an unknown data distribution $\mu$ from finitely many samples. Under mild regularity conditions on both the forward diffusion process and the data distribution, we establish finite-sample error bounds on the learned generative distribution measured in the Wasserstein-$p$ distance. In contrast to prior results, our guarantees hold for arbitrary $p \ge 1$ and require only a finite-moment condition on $\mu$, without compact-support, manifold or smooth density assumptions. To characterize this intrinsic dimension, we introduce the concept of the  $(p,q)$-Wasserstein dimension, which generalizes the classical Wasserstein dimension to distributions with bounded support as well as to heavy-tailed distributions.  In particular, we show that with $n$ independent and identically distributed (i.i.d.) samples from the target distribution $\mu$ with finite $q$-th moment (i.e., $\E_{X \sim \mu}\|X\|^q < \infty$) and appropriately chosen network architectures, hyper-parameters and discretization scheme, the expected Wasserstein-$p$ distance between the learned distribution $\hat{\mu}^{\text{SGM}}$ and true distribution $\mu$ scales roughly as $\E \fW_p \left(\hat{\mu}^{\text{SGM}}, \mu\right) = \widetilde{\mathcal{O}}\left(n^{-1/d^\star_{p,q}(\mu)}\right)$, where $d^\star_{p,q}(\mu)$ denotes the $(p, q)$-Wasserstein dimension of $\mu$. Our analyses demonstrate that diffusion models naturally adapt to the intrinsic geometry of the data and effectively mitigate the curse of dimensionality, in the sense that the convergence exponent depends only on $d^\star_{p,q}(\mu)$ rather than the ambient dimension. Furthermore, our results conceptually bridge the theoretical understanding of diffusion models with that of GANs and the sharp minimax rates established in optimal transport theory.
\end{abstract}
\section{Introduction}
Score-based diffusion models have emerged as a central paradigm in generative modeling \citep{pmlr-v37-sohl-dickstein15,song2019generative, ho2020denoising,songscore}, achieving remarkable success across domains such as image and text generation \citep{dhariwal2021diffusion,austin2021structured,10.5555/3600270.3602913}, text-to-speech synthesis \citep{pmlr-v139-popov21a}, molecular structure modeling \citep{xu2022geodiff,trippe2023diffusion,watson2023novo} and many more. At their core, Denoising Diffusion Probabilistic Models (DDPMs) \citep{ho2020denoising} rely on a two-stage process. In the forward phase, data samples are progressively corrupted by the iterative addition of Gaussian noise, eventually transforming the data distribution into an isotropic Gaussian. This forward noising process is chosen to be simple, tractable, and analytically well-characterized, often modeled through a stochastic differential equation (SDE). The reverse phase then seeks to invert this corruption by learning a sequence of denoising transformations that iteratively recover clean data samples from noise. In practice, the reverse process is parameterized using deep neural networks that approximate the  score function of the intermediate noisy distributions. By chaining together these learned denoising steps, diffusion models are able to generate highly realistic samples that closely approximate the underlying data distribution.

The empirical success of diffusion models has sparked considerable interest in their theoretical foundations. For instance, \citet{chen2023sampling} and \citet{lee2023convergence} analyzed the convergence of the denoising process in terms of Total Variation (TV), assuming access to an $\epsilon$-accurate (in the $\ell_2$ sense) score function. Building on this line of work, recent studies \citep{chen2023improved,li2023towards,benton2024nearly} have examined the ``optimal"  time to run the forward process and developed appropriate partitioning schemes for the denoising procedure under the same assumption of an $\epsilon$-accurate score estimate. Their results show that with an $\epsilon$-approximate score estimate, one can achieve a guarantee of order $O(\operatorname{poly}(D, \log(1/\delta))\epsilon^{-2})$-approximation of the target distribution in terms of the Kullback Leibler (KL) divergence. Here $D$ is the ambient dimension of the data space and $\delta$ is the early stopping time for the reverse process.

There has been a growing body of work in understanding the performance of diffusion models from a learning theory perspective. \citet{oko2023diffusion} demonstrate that under certain smoothness conditions on the true density function, the resulting estimated data distribution achieves an (almost) minimax optimal convergence rate in both total variation and Wasserstein-1 distances, resulting in an error rate of  $\E \fW_1\left(\hat{\mu}^{\text{SGM}}, \mu\right) = \cO\left(n^{-\frac{s+1-\delta}{D+2s}}\right)$ error rate, when $\mu$ is $s$-smooth in the Besov sense (i.e., $\mu \in \mathcal{B}^s_{p,q}$).  In a related work, when $D=1$, i.e., when the data support is $[-1,1]$, \citet{dou2024optimalscorematchingoptimal} derived the minimax estimation rate for the TV and Wasserstein-1 metric and showed that score matching can effectively achieve this rate when the score function is modeled using kernel regression based estimators with appropriately chosen bandwidth. 

Although significant progress has been made in our theoretical understanding of score-based diffusion models, some limitations of the existing results are yet to be addressed. For instance, the generalization bounds frequently suffer from the curse of dimensionality. In practical applications, data distributions tend to have high dimensionality, making the convergence rates that have been proven exceedingly slow. However, high-dimensional data, such as images, texts, and natural languages, often possess latent low-dimensional structures that reduce the complexity of the problem. For example, it is hypothesized that natural images lie on a low-dimensional structure, despite its high-dimensional pixel-wise representation \citep{pope2020intrinsic}. Though in classical statistics there have been various approaches, especially using kernel tricks and Gaussian process regression that achieve a fast rate of convergence that depends only on their low intrinsic dimensionality \citep{bickel2007local,kim2019uniform}, such results are largely unexplored in the context of diffusion models. \citet{chen2023score} derive explicit convergence rates for specific score estimation methods when the data distribution lies on a low-dimensional hyperplane within the ambient space under the Wasserstein-$1$ distance.  Recently, \citet{pmlr-v238-tang24a} showed that when the target data distribution lies on a $d$-dimensional differentiable sub-manifold and has a density with respect to the volume measure of that manifold, diffusion models can achieve an error rate of the form,  $\E \fW_1\left(\hat{\mu}^{\text{SGM}}, \mu\right) = \cO \left(n^{-\frac{A}{B + d}}\right)$, when the score function class is chosen properly. \citet{debortoli2022convergence} analyzed the excess risk in the 1-Wasserstein distance under an $\ell_2$ error assumption on the score estimator. 
Recent advances by \citet{huang2024denoising}, \citet{potaptchik2024linear}, and \citet{Kadkhodaie26a} have established the proximity of estimated measures to target measures in the $\operatorname{KL}$ sense, particularly when the target $\mu$ exhibits manifold or low-dimensional Minkowski support. However, all of the aforementioned works rely on the idealized assumption that the estimated score function is $\epsilon$-close (in the $\ell_2$ sense) to the true score function, which cannot be guaranteed in practice. Furthermore, the analysis by \citet{Kadkhodaie26a} tends to overestimate the intrinsic dimension by utilizing the global covering number of the support while neglecting the impact of the discretization error.

 It is important to note that all of the aforementioned works fail to fully capture the intrinsic low-dimensional structure of the underlying data distribution (see Section~\ref{intrinsic}). In particular, the assumption in \citet{pmlr-v238-tang24a} that the support of the target distribution lies on a compact Riemannian manifold with a smooth, bounded density is rather restrictive and often unrealistic in practice. Similarly, the subspace-support assumption adopted in \citet{chen2023score} provides only a crude approximation of the underlying geometry and overlooks more general forms of low-dimensional structure observed in real-world data. Furthermore, none of the aforementioned approaches address the problem in its full generality or attain the sharp convergence rates for empirical distributions established in the optimal transport literature \citep{weed2019sharp}.

\subsection{Contributions} To address the aforementioned limitations in the existing literature, the main contributions of this paper are summarized below.
\begin{itemize}
\item In order to bridge the gap between the theory and practice of score-based diffusion generative models, in this paper, we develop a framework to establish the statistical convergence rates in the Wasserstein-$p$ metric in terms of the intrinsic dimension of the underlying target probability measure. 
    \item To formalize the notion of intrinsic dimension,  we introduce the $(p,q)$-Wasserstein dimension (see Definition~\ref{new_dim}) that develops the notion of Wasserstein dimension \citep{weed2019sharp} to distributions with an unbounded support but satisfying a finite moment condition. This $(p,q)$-Wasserstein dimension plays an important role in (upper) bounding the convergence rate of $\hmu_n$ (the empirical distribution based on $n$ i.i.d. data samples) and the true probability measure $\mu$ (See Theorem~\ref{moment_thm}). 
    \item Our results, in essence, suggest that when the score network class, the number of Monte Carlo samples used during training, and the discretization scheme are all chosen appropriately (see Theorem~\ref{mainthm}), diffusion-based generative models can achieve near-optimal statistical accuracy. Specifically, if the model is trained on $n$ independent and identically distributed (i.i.d.) samples drawn from the target distribution $\mu$, the expected error of the learned distribution satisfies $\E \fW_p\left(\hmu^{\text{SGM}}, \mu\right) \lesssim n^{-1/d^\star_{p,q}(\mu)} \operatorname{poly-log}(n)$. 
    where $d^\star_{p,q}(\mu)$ denotes the intrinsic $(p, q)$-Wasserstein dimension of $\mu$. This result highlights that, under mild regularity assumptions, diffusion models can adapt to the low-dimensional geometry of the data and achieve convergence rates that scale only with the intrinsic rather than the ambient dimension. Importantly, our analysis yields the sharpest known error bound for diffusion models to date. Notably, our results yield sharper rates for the special case of manifolds and $p=1$ as considered in the recent literature \citep{pmlr-v238-tang24a, oko2023diffusion} even under significantly milder regularity conditions.
    \item When the underlying target measure $\mu$ has a ``regular" support (this includes compact differentiable manifolds, affine subspaces etc.), our results indicate that deep score-based diffusion models can effectively achieve the minimax optimal error rates, albeit with poly-log factors in the number of samples $n$.
\end{itemize}
\subsection{Organization} The remainder of this paper is organized as follows: In Section~\ref{proof_of_concept}, we empirically validate that the sample efficiency of score-matching diffusion models is primarily contingent upon the intrinsic data dimension.  Section~\ref{background} revisits necessary notation and definitions and outlines the problem statement. In Section~\ref{intrinsic}, we revisit the concept of intrinsic dimension and introduce a new notion of intrinsic dimension termed the $(p,q)$-Wasserstein dimension of a measure, comparing it with commonly used metrics. This $(p,q)$-Wasserstein dimension determines the convergence rate of the empirical measure to the population in the Wasserstein-$p$ distance under finite moment conditions only, as shown in Theorem~\ref{moment_thm}. The subsequent focus shifts to theoretical analyses of score-based deep diffusion models in Section~\ref{theo_ana}. We begin by presenting the assumptions in Section~\ref{sec_assumptions}, then stating the main result in Section~\ref{main results} and providing a proof sketch in Section~\ref{pf_main_results}, with detailed proofs available in the appendices. Section~\ref{minimax bounds} demonstrates that diffusion can achieve the minimax optimal rates for estimating distributions, followed by concluding remarks in Section~\ref{conclusions}.

\section{A Proof of Concept Result}\label{proof_of_concept}
\begin{figure}[h!]
    \centering
    \begin{subfigure}[t]{0.3\textwidth}
        \includegraphics[width=\textwidth]{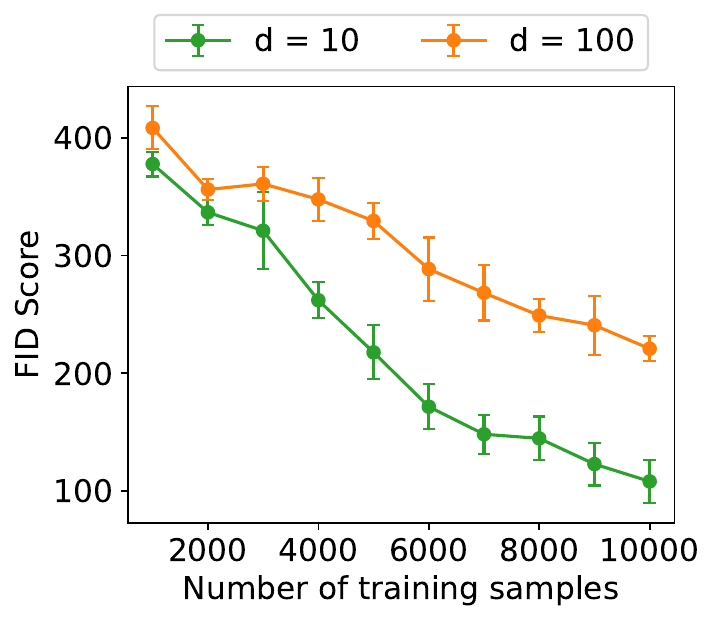}
        \caption{Results on the basenji class}
        \label{fig:sub1}
    \end{subfigure}
    \hfill 
    \begin{subfigure}[t]{0.3\textwidth}
        \includegraphics[width=\textwidth]{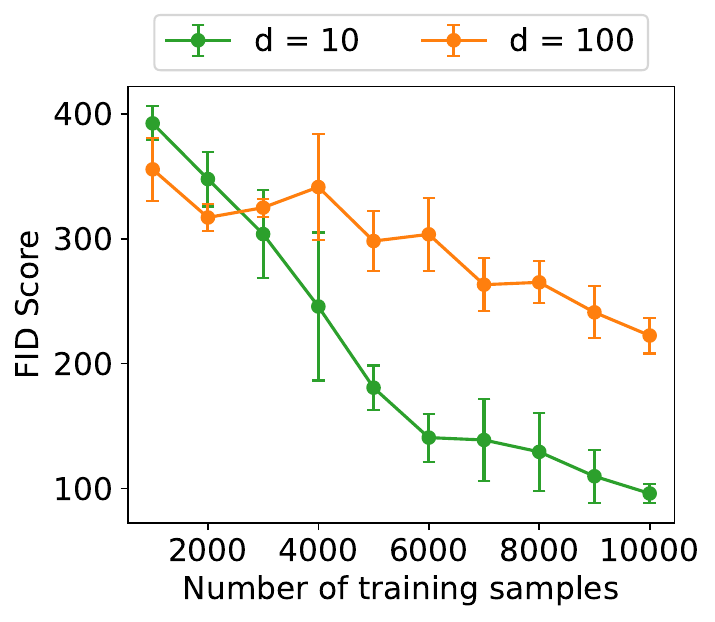}
        \caption{Results on the soap-bubble class}
        \label{fig:sub2}
    \end{subfigure}
    \hfill 
    \begin{subfigure}[t]{0.3\textwidth}
        \includegraphics[width=\textwidth]{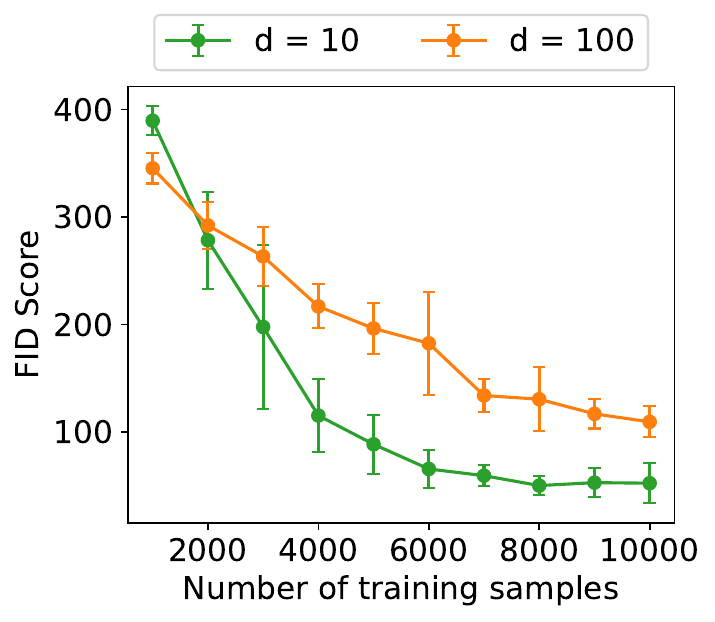}
        \caption{Results on the goldfish class}
        \label{fig:sub3}
    \end{subfigure}
    \caption{Average generalization error (in terms of FID scores) for different values of $n$ for DDPM. The error bars denote the standard deviation out of $10$ replications.}
    \label{fig:main}
\end{figure}
Before turning to the theoretical analysis, we present an experiment illustrating that 
the error rates of denoising diffusion probabilistic models (DDPMs) \citep{ho2020denoising} 
depend primarily on the \emph{intrinsic} dimension of the data. Directly estimating the 
intrinsic dimensionality of natural images is difficult, so we follow the synthetic data 
construction strategy of \citet{pope2020intrinsic} and \citet{chakraborty2024a}. Specifically, we use a pre-trained 
BigGAN \citep{brock2018large} with a 128-dimensional latent space and 
$128 \times 128 \times 3$ outputs, trained on the ImageNet dataset \citep{deng_imagenet}. We conduct three separate experiments, on the \texttt{basenji}, \texttt{soap-bubble} and \texttt{goldfish} image classes in the ImageNet data set. 
Using the BigGan generator, we produce $11{,}000$ images  by 
fixing all but $d$ latent coordinates to zero, thereby constraining the generated data 
to lie on a $d$-dimensional manifold. We consider two intrinsic dimensions, 
$d = 10$ and $d = 100$. All images are then downsampled to $28 \times 28$ using 
bilinear interpolation to reduce computational cost.

We train a DDPM with a standard UNet, closely following the architecture of 
\citet{ho2020denoising}. The model consists of a 3-channel input layer, four resolution 
levels, sinusoidal timestep embeddings, and residual blocks with attention at the 
$14\times14$ scale. We use a linear $\beta$-schedule with $1000$ diffusion timesteps. 
At each iteration, the model receives a clean training image $x_0$, a random timestep 
$t \sim \mathrm{Uniform}\{0,\dots,999\}$, and Gaussian noise $\varepsilon$, and is trained 
to predict the noise component using the standard denoising objective 
$\mathbb{E}\| \varepsilon - \varepsilon_\theta(x_t,t)\|_2^2$. For each intrinsic dimension, we vary the number of training samples in 
$\{1000, 2000, \ldots, 10000\}$ and reserve the final $1000$ images for testing. 
To ensure stable optimization across all sample sizes, the UNet is trained for 
$50$ epochs using Adam with learning rate $10^{-4}$ and batch size $64$. 
All experiments are run on a single NVIDIA GPU. To prevent memory overflow, generated 
samples during evaluation are produced in batches of size $64$, and gradients are 
disabled during the sampling stage.
After training, we generate $256$ samples from the learned reverse diffusion process 
using ancestral (DDPM) sampling, starting from standard Gaussian noise and iteratively 
denoising for all $1000$ timesteps. We compute the Fréchet Inception Distance (FID) 
\citep{heusel2017gans} between the generated samples and an equally sized subset of 
held-out real images. Since FID is sensitive to sampling noise, each configuration is 
repeated 10 times; we report average FID values across these repetitions. All FID 
calculations are performed using the \texttt{pytorch\_fid\_wrapper} library.

The empirical results, presented in Fig.~\ref{fig:main}, clearly validate our 
hypothesis. As the number of training samples increases from $1000$ to $10000$, the 
DDPM trained on data with intrinsic dimension $d = 10$ consistently achieves 
substantially lower FID scores than the model trained on data with intrinsic dimension 
$d = 100$. The gap persists across large sample sizes, and in fact widens slightly as 
more data become available, indicating that lower intrinsic-dimensional structure 
yields faster error decay under diffusion-based learning. This behavior aligns precisely 
with our theoretical predictions: the intrinsic dimension, rather than the ambient pixel 
dimension, governs the sample complexity and achievable error rates of diffusion models. All code pertaining to the experiment is publicly available through \url{https://github.com/saptarshic27/DiffusionIntrinsic}.
\section{Background}\label{background}
Before presenting the main theoretical results, we first introduce the notation and review several preliminary concepts that will be used throughout the paper.

\subsection{Notation}
We use the notation $x \vee y : = \max\{x,y\}$ and $x \wedge y : = \min\{x,y\}$.  
$B_\varrho(x,r)$ denotes the open ball of radius $r$ around $x$, with respect to (w.r.t.) the metric $\varrho$. 
For any probability measure $\gamma$, the support of $\gamma$ is defined as $\text{supp}(\gamma) = \{x: \gamma(B_\varrho(x,r))>0, \text{ for all }r >0\}$.  For any function $f: \cS \to \Real$, and any probability measure $\gamma$ on $\cS$, let $\|f\|_{\fL_p(\gamma)} : = \left(\int_\cS |f(x)|^p d \gamma(x) \right)^{1/p}$, if $0<p< \infty$. Also let, $\|f\|_{\fL_\infty(\gamma)} : = \operatorname{ess\, sup}_{x \in \text{supp}(\gamma)}|f(x)|$. 
We say $A_n \lesssim B_n$ (also written as $A_n = \cO(B_n)$) if there exists $C>0$, independent of $n$, such that $A_n \le C B_n$. Similarly, the notation, ``$\precsim$" (also written as $A_n = \widetilde{\cO}(B_n)$) ignores poly-log factors in  $n$, i.e., $A_n \le B_n \cdot \log^C(en)$, for some $C>0$. We say $A_n \asymp B_n$, if $A_n \lesssim B_n$ and $B_n \lesssim A_n$. For any $k \in \mathbb{N}$, we let $[k] = \{1, \dots, k\}$. For two random variables $X$ and $Y$, we say that $X  \overset{d}{=} Y$, if the random variables have the same distribution. We use bold lowercase letters to denote members of $\mathbb{N}^k$ for $k \in \mathbb{N}$. 
Suppose $Z$ is a random variable with law $\nu$. Then the distribution of $Z\one\{\|Z\|_\infty \le R\}$ is denoted as $\sT_R(\nu)$. $\cN(\theta, \Sigma)$ denotes the Gaussian distribution with mean $\theta$ and covariance matrix $\Sigma$. $\gamma_D$ denotes the standard Gaussian distribution on $\Real^D$, i.e., $\gamma_D \equiv \cN(0, I_D)$. $\sC_1$ denotes the set of all continuously differentiable functions from $\Real$ to $\Real$. 
\begin{defn}[Covering and Packing Numbers] 
    \normalfont 
    For a metric space $(\cS,\varrho)$, the $\epsilon$-covering number w.r.t. $\varrho$ is defined as:
    \(\cN(\epsilon; \cS, \varrho) = \inf\{n \in \mathbb{N}: \exists \, x_1, \dots x_n \text{ such that } \cup_{i=1}^nB_\varrho(x_i, \epsilon) \supseteq \cS\}.\) A minimal $\epsilon$ cover of $\cS$ is denoted as $\cC(\epsilon; \cS, \varrho)$.
    Similarly, the $\epsilon$-packing number is defined as:
    \(\cM(\epsilon; \cS, \varrho) = \sup\{m \in \mathbb{N}: \exists \, x_1, \dots x_m \in \cS \text{ such that } \varrho(x_i, x_j) \ge \epsilon, \text{ for all } i \neq j\}.\)
\end{defn}
\begin{defn}[Neural networks]\normalfont
 Let $L \in \mathbb{N}$ and $ \{N_i\}_{i \in [L]} \in \mathbb{N}$. Then a $L$-layer neural network $f: \Real^d \to \Real^{N_L}$ is defined as,
\begin{equation}
\label{ee1}
f(x) = A_L \circ \sigma_{L-1} \circ A_{L-1} \circ \dots \circ \sigma_1 \circ A_1 (x)    
\end{equation}
Here, $A_i(y) = W_i y + b_i$, with $W_i \in \Real^{N_{i} \times N_{i-1}}$ and $b_i \in \Real^{N_{i-1}}$, with $N_0 = d$. Note that $\sigma_j$ is applied component-wise.  Here, $\{W_i\}_{1 \le i \le L}$ are known as weights, and $\{b_i\}_{1 \le i \le L}$ are known as biases. $\{\sigma_i\}_{1 \le i \le L-1}$ are known as the activation functions. Without loss of generality, one can take $\sigma_\ell(0) = 0, \, \forall \, \ell \in [L-1]$. We define the following quantities:  
(Depth) $\cL(f) : = L$ is known as the depth of the network; (Number of weights) the number of weights of the network $f$ is denoted as  $\cW(f) = \sum_{i=1}^L N_i N_{i-1}$; 
(maximum weight) $\cB(f) = \max_{1 \le j \le L} (\|b_j\|_\infty) \vee \|W_j\|_{\infty}$ to denote the maximum absolute value of the weights and biases.
\begin{align*}
    \cN \cN_{\{\sigma_i\}_{1 \le i \le L-1}} (L, W, B) = \big\{  f \text{ of the form \eqref{ee1}}: \cL(f) \le L , \, \cW(f) \le W, \cB(f) \le B \big\}.
\end{align*}
 If $\sigma_j(x) = x \vee 0$, for all $j=1,\dots, L-1$, we use the notation $\cR \cN (L, W, B)$ to denote $\cN \cN_{\{\sigma_i\}_{1 \le i \le L-1}} (L, W, B)$.
 \end{defn}
\begin{defn}[Sobolev functions]\normalfont
Let $f: \mathcal{S} \to \Real^{d^\prime}$ be a function, where $\mathcal{S} \subseteq \Real^d$. For a multi-index $\bs = (s_1,\dots,s_d)$, let, $\partial^{\bs} f = \frac{\partial^{|\bs|} f}{\partial x_1^{s_1} \ldots \partial x_d^{s_d}}$, where, $|\bs| = \sum_{\ell = 1}^d s_\ell $. We say that a function $f$ is $(\beta,C)$-Sobolev (for $\beta \in \mathbb{N} \cup\{0\}$) if
\[ \|f\|_{\sH^\beta}: = \sum_{j=1}^{d^\prime}\sum_{\bs: 0 \le |\bs| \le \beta} \|\partial^{\bs} f_j\|_\infty \le C, \]
where $f_j$ denotes the $j$-th component of $f$, $j = 1, \ldots, d^\prime$.
\end{defn}
 \begin{defn}[Wasserstein $p$-distance] \normalfont
    Let $(\Omega,\text{dist})$ be a Polish space and let $\mu$ and $\nu$ be two probability measures on the same with finite $p$-moments. Then the $p$-Wasserstein distance between $\mu$ and $\nu$ is defined as:
    \[\fW_p(\mu, \nu) = \left(\inf_{\gamma \in \Gamma(\mu, \nu)} \E_{(X,Y) \sim \gamma} \left(\operatorname{dist}(X,Y)\right)^{p}\right)^{1/p}.\]
    Here $\Gamma(\mu, \nu)$ denotes the set of all measure couples between $\mu$ and $\nu$. In what follows, we take $\text{dist}(\cdot, \cdot)$ to be the $\ell_2$-norm on $\Real^D$, i.e., for $(\Real^D, \ell_2)$,
    \[\fW_p(\mu, \nu) = \left(\inf_{\gamma \in \Gamma(\mu, \nu)} \E_{(X,Y) \sim \gamma} \|X-Y\|_2^{p}\right)^{1/p}.\]
    Throughout this paper, we will assume that $p \ge 1$.
\end{defn}
\subsection{Score Matching Diffusion Models}
\label{sec_diff_intro}
The objective in generative modeling is to approximate an unknown distribution $\mu$ on a data space $\sX$ from $n$ independent and identically distributed (i.i.d.) samples ${X_1, \dots, X_n} \sim \mu$. In most practical settings, the data space $\sX$ is assumed to be a subset of $\Real^D$, where $D$ can be very large. For example, the ImageNet image data set consists of $128 \times 128$ colored images, making $D= 49152$. Classical approaches such as Variational Autoencoders (VAEs) \citep{kingma2014autoencoding} or Generative Adversarial Networks (GANs) \citep{goodfellow2020generative} seek to learn a generative map from a simple latent distribution (e.g., standard Gaussian) to $\sX$ whose pushforward distribution approximates $\mu$.

Score-based diffusion models \citep{ho2020denoising,songscore,song2019generative} adopt a different perspective: instead of directly learning a generative map, they construct a sequence of intermediate distributions that gradually transform the unknown data distribution into a tractable reference distribution, from which it is easy to sample, e.g., the standard Gaussian distribution. This is accomplished by adding Gaussian noise to the data in small increments, thereby diffusing the empirical distribution until, in the limit, it converges to a standard Gaussian distribution on $\Real^D$. The generative process is then learned in reverse: one trains a family of score functions (or equivalently, denoisers) that estimate the gradients of the log-density at each noise level. By combining these learned scores backward along the diffusion trajectory, one can iteratively remove noise from the reference distribution, eventually producing high-quality samples that approximate $\mu$. 

\paragraph{Forward Process} In mathematical terms, the forward process is modeled by a Stochastic Differential Equation (SDE). The simplest amongst these processes is the Ornstein-Uhlenbeck (OU) process \citep{oksendal2003stochastic}. We consider the forward time-rescaled OU process: 
\begin{align}
    \diff  \hX_t = -\beta_t \hX_t \diff  t + \sqrt{2 \beta_t} \diff  W_t, \, \hX_0 \sim \hmu_n, \label{forward}
\end{align}
where $\hmu_n = \frac{1}{n}\sum_{i=1}^n \delta_{X_i}$ denotes the empirical distribution based on the i.i.d. samples $\{X_i\}_{i \in [n]}$. Here $\{W_t\}_{t \ge 0}$ denotes the standard Brownian motion on $\Real^D$. The forward process is interpreted as transforming the empirical data distribution $\hmu_n$ to the Gaussian distribution. From the literature of Markov diffusion, it is well known $\hP_t := \operatorname{Law}(\hX_t)$ approaches the Gaussian distribution $\gamma_D$, exponentially fast, in terms of various divergence measures including KL and TV. Under Assumption~\ref{a1}, i.e., when $t \mapsto \beta_t$ is bounded (above and away from zero) and continuously differentiable, $\hP_t$ admits a density  $\hp_t$ with respect to the Lebesgue measure. This mild regularity condition ensures the stability of the forward process by preventing abrupt changes and guarantees the existence of the backward process. The standard OU process (i.e., when $\beta_t \equiv 1$) satisfies this regularity condition. Further, it is well-known that $\hX_t |\hX_s \sim \cN\left(e^{-\int_s^t \beta_\tau \diff  \tau } \hX_s, (1-e^{- 2\int_s^t \beta_\tau \diff  \tau}) I_D \right)$, making the forward process computationally tractable. See Proposition~\ref{prop_forward_conditional} for a proof of this result.

\paragraph{Reverse Process}
Under mild regularity conditions on $\{\beta\}_{t\ge 0}$ (see Assumption~\ref{a1}), the reverse process $\{Y_t\}_{0 \le t \le T} = \{\hX_{T-t}\}_{0 \le t \le T}$ satisfies the SDE,
\begin{align}
    \diff  Y_t = \beta_{T-t} \left( Y_t + 2 \nabla \log \hp_{T-t}(Y_t)\right) \diff  t + \sqrt{2 \beta_{T-t}} \diff  \Tilde{W}_t, \, Y_0 \sim \hP_T, \label{eq_rp}
\end{align}
where $\{\Tilde{W}_t\}_{t \ge 0}$ is another Brownian motion independent of $\{W_t\}_{t \ge 0}$. This result is proved in Proposition~\ref{reverse_guarantee} in our context.  Since the score function $\nabla \log \hat{p}_{T-t}$ is unknown, one estimates it by minimizing a Mean Squared Error (MSE) loss as described in the next paragraph. The score function estimate for $\nabla \log \hat{p}_t(x)$ is denoted by $\hat{s}(x,t)$. Further, due to computational limitations of exactly implementing the reverse SDE, one discretizes the interval $[0,T]$ for the reverse process as $ t_0 \le t_1 \le \dots \le t_N$. We will take $t_0 = 0$ and $t_N = T-\delta_0$. The reverse process is stopped at time $T-\delta_0$ instead at $T$ to prevent variance explosion near the data support, which is potentially low-dimensional. $\delta_0$ is known as the early stopping time of the reverse process and plays a crucial role in ensuring the stability of the score estimates both from a theoretical \citep{debortoli2022convergence,benton2024nearly,pmlr-v238-tang24a} and methodological viewpoint \citep{songscore,lai2025principlesdiffusionmodels}. We will use the exponential integrator scheme to approximate the reverse process as,
\begin{align}
    \diff  \hat{Y}_t = \beta_{T-t} (\hat{Y}_t + 2 s(\hY_{t_i}, T-t_i)) \diff  t + \sqrt{2 \beta_{T-t}} \diff  \Tilde{W}_t, \ t_i \le t \le t_{i+1} \text{ and } \hY_0 \sim \gamma_D. \label{e22}
\end{align}
Note that one starts the reverse process from $\gamma_D$ and not $\hat{P}_T$ as the former is easy to sample from and the two distributions are close (in the KL sense) when $T$ is large (see Lemma~\ref{kl_bd}). It can be easily shown (see Lemma~\ref{lem_a.1}) that \eqref{e22} is solved by taking
\begin{align}
    \hY_{t_{i+1}} = & \hY_{t_i} + \left(e^{\int_{T-t_{i+1}}^{T-t_i} \beta_\tau \diff  \tau} - 1\right)\left(\hY_{t_i} + 2 s(\hY_{t_i}, T-t_i)\right) + Z_{t_i} \sqrt{e^{2\int_{T-t_{i+1}}^{T-t_i} \beta_\tau \diff  \tau} - 1} ; \nonumber\\
    & \hY_0, Z_{t_0}, \dots, Z_{t_N} \overset{i.i.d.}{\sim} \gamma_D. \label{eq_ito}
\end{align}
For notational simplicity, we define $Q_t = \operatorname{Law}(Y_t)$ and $\hQ_t(s) = \operatorname{Law}(\hY_t)$. We write $\hQ_t$ as a function of $s$ to denote that it is dependent on the choice of $s$, which is estimated as described in the next paragraph. 

\paragraph{Score matching} The score function in diffusion models is estimated by minimizing the so-called MSE loss. In this paper, we will consider the following weighted MSE loss:
\begin{align}
    \hat{s}_n = \argmin_{s \in \sS} \sum_{i=0}^{N-1} h_i \|s(\cdot, T-t_i) - \nabla \log \hp_{T-t_i}(\cdot)\|^2_{\fL_2(\hP_{T-t_i})}, \label{sm_1}
\end{align}
where, $h_i = t_{i+1} - t_{i}.$ Here, $\sS$ is a class of functions from $\Real^D \to \Real^D$. In practice, this function class is realized by neural networks. We take $\sS = \cR\cN(L, W, B)$. Again, as $\nabla \log \hp_{t_i}(\cdot)$ is unknown, one does an equivalent reformulation of the objective as,  
\begin{align}
    \hat{s}_n
    = \argmin_{s \in \sS} \sum_{i=0}^{N-1} h_i \E \left\| s\left(\hX_{t_i} , t_i\right) + \frac{Z_{t_i}}{\sqrt{1-e^{- 2\int_0^{t_i} \beta_\tau \diff  \tau}}}  \right\|^2 , \label{sm1}
\end{align}
where $\left\{Z_{t_i}\right\}_{i \in [N]}$ are i.i.d. standard Gaussian random variables on $\Real^D$. Here, $\hX_{t_i} = e^{-\int_0^{t_i} \beta_\tau \diff  \tau } \hX_0 + \sqrt{1-e^{- 2\int_0^{t_i} \beta_\tau \diff  \tau}} Z_{t_i}$ We refer the reader to Lemma~\ref{stein} for a formal proof of this equivalence. In practice, one typically can not compute the expectation in \eqref{sm1} directly, but estimate it through Monte Carlo sampling as:
\begin{align*}
    X_{0}^{(j)} \sim \operatorname{Unif}\left(\{X_1, \dots , X_n\}\right) , \, Z^{(j)}_{t_i} \sim \gamma_D, \, j = 1, \dots, m_i
\end{align*}
\begin{align}
    \hat{s}^{\mathrm{mc}}_n
    = \argmin_{s \in \sS}   \sum_{i=0}^{N-1}  \sum_{j=1}^m \frac{h_i}{ m_i} \left\| s\left(\hX_{t_i}^{(j)} , t_i\right) + \frac{Z_{t_i}^{(j)}}{\sqrt{1-e^{- 2\int_0^{t_i} \beta_\tau \diff  \tau}}}  \right\|^2 , \label{sm2}
\end{align} 
where, $\hX_{t_i}^{(j)} = e^{-\int_0^{t_i} \beta_\tau \diff  \tau } X_{0}^{(j)} + \sqrt{1-e^{- 2\int_0^{t_i} \beta_\tau \diff  \tau}} Z_{t_i}^{(j)}$. The goal of our theoretical analyses is to bound $\fW_p(\sT_R(\hQ_{t_N}(\hat{s}_n)), \mu)$ and $\fW_p(\sT_R(\hQ_{t_N}(\hat{s}^{\mathrm{mc}}_n)), \mu)$ for appropriate choices of the forward process stopping time ($T$), backward process early stopping time ($\delta_0$), time partition ($\{t_i\}_{i=0}^N$), score function class ($\sS$),  the truncation level $R$, and the number of Monte Carlo samples $\{m_i\}_{i=0}^{N-1}$.

\section{Intrinsic Data Dimension}
\label{intrinsic}
In practice, real-world data is often believed to lie on a lower-dimensional structure embedded within a high-dimensional feature space. To formalize this intuition, researchers have introduced various notions of the \emph{effective dimension} of the underlying probability measure generating the data. Among these, the most widely used approaches rely on characterizing the growth rate, on the logarithmic scale, of the covering number of most of the measure's support.   Let $(\cS, \varrho)$ be a Polish space, and let $\mu$ be a probability measure defined on it. Throughout this paper, we take $\varrho$ to be the $\ell_\infty$-norm. Before proceeding, we recall the notion of an $(\epsilon, \tau)$-cover of a probability measure \citep{posner1967epsilon}, defined as  
\( \sN_\epsilon(\mu, \tau) = \inf\{ \cN(\epsilon; S, \varrho) : \mu(S) \ge 1 - \tau \}, \)  
i.e., $\sN_\epsilon(\mu, \tau)$ is the minimal number of $\epsilon$-balls required to cover a set $S$ with probability at least $1-\tau$.  

Perhaps the most rudimentary measure of dimensionality is the \emph{upper Minkowski dimension} of the support of $\mu$, given by  
\[
\overline{\text{dim}}_M(\mu) = \limsup_{\epsilon \downarrow 0} \frac{\log \cN(\epsilon; \text{supp}(\mu), \ell_\infty)}{\log(1/\epsilon)}.
\]  
This notion depends solely on the covering number of the support and does not require the support to be smoothly embedded in a Euclidean space of smaller dimension. As a result, it captures not only smooth manifolds but also highly irregular sets such as fractals.  The statistical implications of the upper Minkowski dimension have been extensively studied. For instance, \citet{kolmogorov1961} analyzed how covering numbers of various function classes depend on the Minkowski dimension of the support. More recently, \citet{JMLR:v21:20-002} demonstrated that deep learners can exploit this intrinsic low-dimensional structure, which manifests in their convergence rates, while \citet{JMLR:v23:21-0732} and \cite{chakraborty2024a} showed similar adaptivity for Wasserstein GANs (WGANs) and Wasserstein Autoencoders (WAEs), respectively.  

A well-known limitation \citep{JMLR:v26:24-0054}, however, is that the upper Minkowski dimension can be large if the measure spreads over the entire sample space, even if it is highly concentrated in certain regions.  To overcome this difficulty, \citet{weed2019sharp} extended Dudley's \citep{dudley1969speed} notion of entropic dimension to characterize the expected convergence rate of the Wasserstein-$p$ distance between a distribution $\mu$ and its empirical counterpart. They introduced the notions of \emph{upper} and \emph{lower Wasserstein dimensions}, defined as follows:

\begin{defn}[Upper and Lower Wasserstein Dimensions \citep{weed2019sharp}]\label{def1} \normalfont
For any $p > 0$, the $p$-upper dimension of $\mu$ is given by  
\[
d^\ast_p(\mu) \;=\; \inf \left\{ s > 2p : \limsup_{\epsilon \downarrow 0} \frac{\log \sN_{\epsilon}\!\left(\mu, \epsilon^{\tfrac{s p}{s-2 p}}\right)}{\log(1/\epsilon)} \;\leq\; s \right\}.
\]  
The \emph{lower Wasserstein dimension} of $\mu$ is defined as  
\(d_\ast(\mu) \;=\; \lim_{\tau \downarrow 0}\; \liminf_{\epsilon \downarrow 0} \frac{\log \sN_\epsilon(\mu,\tau)}{\log(1/\epsilon)}.\)
\end{defn}
\citet{weed2019sharp} established that, approximately,
\(n^{-1/d_\ast(\mu)} \lesssim \E \fW_p(\hmu_n,\mu) \lesssim n^{-1/d^\ast_p(\mu)}\). However, the result is only applicable when the target measure $\mu$ is supported on a compact set with diameter at most $1$.  To understand the behavior of $\fW_p(\hmu_n,\mu)$ for unbounded measures with a finite moment condition, we propose the $(p, q)$-Wasserstein dimension as follows.
\begin{defn} \label{new_dim}
For any $0< p < q < \infty$, the $(p, q)$-Wasserstein dimension of $\mu$ is defined as:
    \[d^\star_{p, q}(\mu) = \inf \left\{s > 2p: \limsup_{\epsilon \downarrow 0} \frac{\log \sN_\epsilon\left(\mu, \epsilon^{\frac{s  p q }{(q-p)(s - 2 p)}}\right)}{\log(1/\epsilon)} \le s\right\}.\]
\end{defn}

The proposed $(p,q)$-Wasserstein dimension is compared with different notions of intrinsic dimension popular in the literature in Proposition~\ref{relation_dimension}.  Before proceeding with the comparison, we recall the definition of regularity dimensions and packing dimension of a measure \citep{fraser2017upper}.
\begin{defn}[Regularity dimensions]
\normalfont
The upper and lower regularity dimensions of a measure are defined as:
    \begin{align*}
    \overline{\text{dim}}_{\text{reg}}(\mu) = & \inf\bigg\{ s: \exists \, C>0 \text{ such that, } \forall \ 0 < r < R \text{ and } x \in \text{supp}(\mu), \, \frac{\mu(B(x,R))}{\mu(B(x,r))} \le C \left(\frac{R}{r}\right)^s\bigg\},\\
        \underline{\text{dim}}_{\text{reg}}(\mu) = & \sup\bigg\{ s: \exists \, C >0\text{ such that, } \forall \ 0 < r < R  \text{ and } x \in \text{supp}(\mu), \, \frac{\mu(B(x,R))}{\mu(B(x,r))} \ge C \left(\frac{R}{r}\right)^s\bigg\}.
    \end{align*} 
\end{defn}

\begin{defn}[Upper packing dimension] \normalfont The upper packing dimension of a measure $\mu$ is defined as:
    $\overline{\text{dim}}_P(\mu) = \operatorname{ess\, sup} \left\{\limsup\limits_{r \rightarrow 0} \frac{\log \mu(B(x,r))}{\log r}: x \in \text{supp}(\mu)\right\}$.
\end{defn}
The following proposition compares the $(p,q)$-Wasserstein dimension and these other notions of intrinsic dimension used in the literature.
\begin{restatable}{propos}{propone}\label{relation_dimension}
    For any probability measure $\mu$ and $0< p < q < \infty$,
    \begin{enumerate}
        \item[(a)] $d^\ast_p(\mu) \le d^\star_{p,q}(\mu)$,
        \item [(b)] $d^\star_{p,q}(\mu) $ is non-increasing in $q$,
        \item[(c)] $d^\star_{p,q}(\mu) $ is non-decreasing in $p$,
        \item[(d)] For any $0 < p < \overline{\text{dim}}_M(\mu)/2$, $d^\star_{p,q}(\mu) \le \overline{\text{dim}}_M(\mu)$,
        \item[(e)] For any $0 < p < \overline{dim}_P(\mu)/2  $, $d^\star_{p,q}(\mu) \le \overline{dim}_P(\mu) \le \overline{dim}_{\text{reg}}(\mu)$,
        \item[(f)] $\underline{dim}_{\text{reg}}(\mu) \le d^\star_{p,q}(\mu)$.
    \end{enumerate}
\end{restatable}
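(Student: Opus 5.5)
The plan is to treat parts (a)--(d) and (f) by direct monotonicity and counting arguments from the definitions, and to put the real effort into part (e). Two elementary facts drive everything:
\begin{enumerate}
\item[(i)] $\tau \mapsto \sN_\epsilon(\mu,\tau)$ is non-increasing.
\item[(ii)] For $\epsilon<1$, $\epsilon^{a}\le \epsilon^{b}$ whenever $a\ge b$.
\end{enumerate}
Write $a_p(s)=\frac{sp}{s-2p}$ for the exponent in Definition~\ref{def1} and $b_{p,q}(s)=\frac{spq}{(q-p)(s-2p)}=a_p(s)\cdot\frac{q}{q-p}$ for the exponent in Definition~\ref{new_dim}. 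Let $S_{p,q}$ and $S_p$ be the corresponding sets of admissible $s$, so that $d^\star_{p,q}=\inf S_{p,q}$ and $d^\ast_p=\inf S_p$.

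For (a), note that $b_{p,q}(s)>a_p(s)$, hence $\epsilon^{b_{p,q}(s)}\le \epsilon^{a_p(s)}$. By (i) this gives $\sN_\epsilon(\mu,\epsilon^{a_p(s)})\le \sN_\epsilon(\mu,\epsilon^{b_{p,q}(s)})$, so $S_{p,q}\subseteq S_p$ and the infima compare as claimed. For (b), the factor $q/(q-p)$ is decreasing in $q$, so the tolerance $\epsilon^{b_{p,q}(s)}$ grows with $q$ and the covering number shrinks. Thus $S_{p,q}\subseteq S_{p,q'}$ for $q<q'$. For (c), $p\mapsto b_{p,q}(s)$ is increasing: the numerator $pq$ grows while $(q-p)(s-2p)$ decreases. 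Moreover the side constraint $s>2p$ tightens as $p$ grows, so $S_{p',q}\subseteq S_{p,q}$ for $p<p'$. For (d), $\sN_\epsilon(\mu,\tau)\le \cN(\epsilon;\text{supp}(\mu),\ell_\infty)$ for every $\tau$. Hence every $s>\overline{\text{dim}}_M(\mu)$ satisfies the limsup condition, and $s>2p$ holds automatically because $2p<\overline{\text{dim}}_M(\mu)$. Letting $s\downarrow\overline{\text{dim}}_M(\mu)$ gives the bound.

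For (f), fix $s<\underline{dim}_{\text{reg}}(\mu)$. Taking the large radius $R$ of order one in the regularity inequality gives $\mu(B(x,r))\le C' r^{s}$ for all $x\in\text{supp}(\mu)$ and small $r$. Any $\epsilon$-ball meeting the support sits inside a ball of radius $2\epsilon$ centred at a support point; I will also absorb the $\ell_\infty$ versus $\ell_2$ constant here. So each such ball carries mass at most $C''\epsilon^s$, and any set of mass $\ge 1-\tau\ge 1/2$ needs at least $\epsilon^{-s}/(2C'')$ balls. Therefore $\liminf_{\epsilon\downarrow 0}\log \sN_\epsilon(\mu,\epsilon^{b})/\log(1/\epsilon)\ge s$, and no admissible $s'$ can be smaller than $s$.

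Part (e) is where I expect the real difficulty. The second inequality, $\overline{dim}_P(\mu)\le \overline{dim}_{\text{reg}}(\mu)$, is standard \citep{fraser2017upper}. Taking $r\to0$ with $R$ fixed gives $\mu(B(x,r))\gtrsim r^{s}$ for every $s>\overline{dim}_{\text{reg}}$, which bounds the local dimension pointwise. For the first inequality, fix $s'$ with $\overline{dim}_P(\mu)<s'<s$ and define
\[
A_\delta=\{x:\mu(B(x,r))\ge r^{s'} \ \forall\, r<\delta\}.
\]
Then $\mu(A_\delta)\uparrow 1$. A maximal $\epsilon$-separated subset of $A_\delta$ has disjoint $\epsilon/2$-balls, each of mass $\ge(\epsilon/2)^{s'}$. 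This gives at most $2^{s'}\epsilon^{-s'}$ balls covering $A_\delta$, provided $\epsilon<\delta$.

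The obstacle is that the tolerance $\tau=\epsilon^{b_{p,q}(s)}$ shrinks polynomially, so we need $\mu(A_\delta^c)$ to decay polynomially in $\delta$ along some coupling $\delta=\delta(\epsilon)\ge\epsilon$. I would first try to avoid a quantitative Egorov estimate and instead cover the exceptional set directly. Decompose the points with $\mu(B(x,r))<r^{s'}$ at some scale $r\in[\epsilon,\delta)$ by dyadic scales $r_k=2^{-k}$. At each such scale, use a Vitali/Besicovitch covering: the bad points at scale $r_k$ are covered by boundedly overlapping balls of mass $<r_k^{s'}$, and I will count these balls against the packing bound. The goal is a bound on the number of $\epsilon$-balls needed for the bad set that is summable and at most $\epsilon^{-s}$, up to the slack $s-s'$. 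If this fails in full generality, the fallback is to establish the inequality through $\overline{dim}_{\text{reg}}$ directly. The uniform lower bound $\mu(B(x,r))\ge c\,r^{s}$ on the whole support makes the packing argument above cover all of $\text{supp}(\mu)$. That gives $d^\star_{p,q}\le\overline{\text{dim}}_M\le\overline{dim}_{\text{reg}}$ via (d), and I would flag the packing-dimension refinement as the step requiring this extra care.
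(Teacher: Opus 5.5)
Your parts (a)--(d) coincide with the paper's proof: inclusions between the admissible sets follow from monotonicity of $\tau\mapsto\sN_\epsilon(\mu,\tau)$ and of the tolerance exponent. In (c), your observation that $(q-p)(s-2p)$ is a product of positive decreasing factors replaces the paper's derivative computation. For (f) you argue directly: $R=1$ in the lower-regularity inequality gives $\mu(B(x,r))\lesssim r^s$ uniformly on the support, so any set of mass at least $1/2$ needs $\gtrsim\epsilon^{-s}$ balls. The paper instead chains $d^\star_{p,q}\ge d^\ast_p\ge d_\ast\ge\underline{\text{dim}}_{\text{reg}}$ through two cited propositions. Your route is correct and self-contained.

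The genuine gap is the first inequality of (e), and it cannot be filled because it is false. You located the obstacle exactly: $\overline{\text{dim}}_P(\mu)$ is an essential supremum of pointwise $\limsup$'s, so it gives $\mu(A_\delta)\uparrow 1$ at no particular rate, while the tolerance $\epsilon^{b_{p,q}(s)}$ is polynomially small. The paper's proof does not get around this either. It asserts, ``by the definition of the upper packing dimension'', an $r_0$ with $\mu(B(x,r))\ge r^s$ for all $r\le r_0$ and \emph{all} $x\in\operatorname{supp}(\mu)$, which is precisely the unavailable uniformity.

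For a counterexample, let $D=1$ and let $\mu$ have density $\tfrac12(1+|x|)^{-2}$. Then $\mu(\{|x|>a\})=(1+a)^{-1}$ and every point has local dimension $1$, so $\overline{\text{dim}}_P(\mu)=1$ and every $p\in(0,1/2)$ satisfies the hypothesis. A union of $N$ balls of radius $\epsilon$ has Lebesgue measure at most $2N\epsilon$. Since the density is symmetric and decreasing in $|x|$, such a union has $\mu$-mass at most $1-(1+N\epsilon)^{-1}$, hence $\sN_\epsilon(\mu,\tau)\ge(\tau^{-1}-1)/\epsilon$. With $\tau=\epsilon^{b_{p,q}(s)}$, every admissible $s$ must satisfy $s\ge 1+b_{p,q}(s)>1+\frac{pq}{q-p}$. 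Therefore $d^\star_{p,q}(\mu)\ge 1+\frac{pq}{q-p}>\overline{\text{dim}}_P(\mu)$.

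Compact support does not rescue the claim. Mix, with weights $\tfrac12$ each, the uniform measure on a unit segment in $\Real^D$ ($D\ge2$) with an atomic measure $\sum_{j\ge1}m_j\,\mathrm{Unif}(G_j)$ on a disjoint unit cube, where $G_j$ is the cube's $2^{-j}$-grid and $m_j\propto j^{-2}$. Atoms have local dimension $0$, so again $\overline{\text{dim}}_P(\mu)=1$. However, the levels $j\ge\log_2(1/\epsilon)$ carry mass $\asymp 1/\log(1/\epsilon)\gg\epsilon^{b_{p,q}(s)}$, while any $\epsilon$-ball holds only $\lesssim\epsilon^{D}/\log(1/\epsilon)$ of it. This forces $\sN_\epsilon(\mu,\epsilon^{b_{p,q}(s)})\gtrsim\epsilon^{-D}$ and $d^\star_{p,q}(\mu)\ge D$.

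Your fallback proves only $d^\star_{p,q}(\mu)\le\overline{\text{dim}}_M(\mu)\le\overline{\text{dim}}_{\text{reg}}(\mu)$. The second step follows from your uniform packing count when the support is bounded, and trivially otherwise, since unbounded support forces $\overline{\text{dim}}_{\text{reg}}(\mu)=\infty$. That is the provable form of (e), with the intermediate $\overline{\text{dim}}_P$ removed.
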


The $(p, q)$-Wasserstein dimension can be used to upper bound the Wasserstein-$p$ distance between $\hmu_n$ to $\mu$, under a finite moment condition. We can show that if $\sM_q(\mu) < \infty$ and $ 1\le p < q$, $\E\fW_p(\mu, \hmu_n)$ roughly scales as, $\cO\left(n^{-1/d^\star_{p,q}(\mu)}\right)$. To show this, we derive the following theorem that bounds $\E \fW^p_p(\mu, \hmu_n)$, when $n$ is large enough and $p > 0$, with proof appearing in Appendix~\ref{ap_moment_thm}. 

\begin{theorem}\label{moment_thm}
     Suppose that $\sM_q(\mu) < \infty$ and $0 < p < q$. Then for any $d > d^\star_{p, q}(\mu)$, there exists constant $n_0\in \fN$ and $c >0$ that may depend on $d, \mu, p$ and $q$, such that of $n \ge n_0$,
     \[\E \fW^p_p(\mu, \hmu_n) \le c \ n^{-p/d}.\]
 \end{theorem}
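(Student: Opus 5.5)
The approach is to adapt the dyadic-partition argument of \citet{weed2019sharp} to unbounded measures. Use the finite $q$-th moment to pay for the mass that a covering set $S$ misses, and use the $(p,q)$-Wasserstein dimension to control the cardinality of that covering.

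\textbf{Step 1 (multiscale bound).} Fix $d > s > d^\star_{p,q}(\mu)$ with $s$ admissible in Definition~\ref{new_dim}. Then for all small $\epsilon$ there is a set $S_\epsilon$ with $\mu(S_\epsilon^c)\le \tau_\epsilon := \epsilon^{\frac{spq}{(q-p)(s-2p)}}$, coverable by at most $\epsilon^{-s}$ $\ell_\infty$-balls of radius $\epsilon$. Work at the scales $\epsilon_k = 3^{-k}\epsilon^\ast$, $k=k^\ast,\dots,k_{\max}$. Build nested partitions $\mathcal{Q}^k$ of $S := S_{\epsilon_{k_{\max}}}$ from the covers, refining them by intersection so that they are nested. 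Each $\mathcal{Q}^k$ has at most $\prod_{j\le k}\epsilon_j^{-s}$ cells in the worst case, but I would instead cover $S_{\epsilon_k}\cap S$ directly, which gives at most $\epsilon_k^{-s}$ new cells plus the leftover.

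The standard coupling bound for $\fW_p^p(\mu,\hmu_n)$ then has three pieces:
\begin{enumerate}
\item[(i)] $\epsilon_{k_{\max}}^p$ for the finest scale;
\item[(ii)] $\sum_k \epsilon_{k-1}^p \sum_{Q\in\mathcal{Q}^k}|\mu(Q)-\hmu_n(Q)|$ for the discrepancies between scales;
\item[(iii)] a tail term for the mass outside $S$, which is transported over an unbounded distance.
\end{enumerate}
Term (iii) is the new ingredient.

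\textbf{Step 2 (tail term via moments).} Bound the contribution of points outside $S$, and of the top scale, by the transport cost to a fixed point such as $0$. This is at most $\E\big[\|X\|^p\one\{X\notin S\}\big]+\E\big[\hmu_n(\|x\|^p\one_{S^c})\big]$ plus cross terms. By Hölder, $\E\|X\|^p\one_{S^c}\le \sM_q(\mu)^{p/q}\,\mu(S^c)^{(q-p)/q}$, and the same holds in expectation for $\hmu_n$. With $\mu(S^c)\le\tau_\epsilon$, this gives the rate $\epsilon^{\frac{sp}{s-2p}}$, which explains the exponent in Definition~\ref{new_dim}. The leftover $|\mu(S^c)-\hmu_n(S^c)|$ is weighted by the coarse scale, which is now $O(1)$ rather than the diameter of the support; I would handle this by restricting the bounded part of the partition to a ball of radius chosen via Markov's inequality from the $q$-th moment.

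\textbf{Step 3 (balancing).} The estimate $\E|\mu(Q)-\hmu_n(Q)|\le\sqrt{\mu(Q)/n}$ together with Cauchy--Schwarz bounds term (ii) by $\sum_k \epsilon_k^{p}\sqrt{\epsilon_k^{-s}/n}$. This sum is dominated by the finest scale because $s>2p$, giving $\epsilon^{p-s/2}n^{-1/2}$. Choosing $\epsilon = n^{-1/s}\cdot n^{\theta}$ to balance the three terms, one checks that $\epsilon^{sp/(s-2p)}$ and $\epsilon^{p-s/2}n^{-1/2}$ meet at $\epsilon^{p}\asymp n^{-p/s}$ up to the exponents used. Since $d>s$, the resulting bound is $\lesssim n^{-p/d}$ once $n\ge n_0$, where $n_0$ ensures that $\epsilon$ lies below the threshold of the limsup.

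\textbf{Main obstacle.} I expect the main obstacle to be Step 2, and specifically the tail term for $\hmu_n$: its mass outside $S$ is random, and the far-away samples must be coupled with $\mu$'s tail without double-counting. I would first try the crude bound $\fW_p^p(\mu,\hmu_n)\le 2^{p-1}\big(\fW_p^p(\mu|_S\text{-part})+\ldots\big)$, obtained by transporting all mass outside $S$ (for both measures) to $0$, and then verify that the exponent bookkeeping from Definition~\ref{new_dim} closes exactly.
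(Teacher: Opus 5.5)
Your architecture (a Weed--Bach multiscale coupling on a good set, plus H\"older's inequality on the rest) is the paper's. However, two steps do not go through as written, and for part of the parameter range the statement itself cannot be proved.

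\textbf{Bad set and scales.} You take $S=S_{\epsilon_{k_{\max}}}$ and charge only $\mu(S^c)\le\tau_{\epsilon_{k_{\max}}}$. The scale-$\epsilon_k$ cover, though, only covers $S_{\epsilon_k}$. The leftover $S\setminus S_{\epsilon_k}$ carries $\mu$-mass up to $\tau_{\epsilon_k}$, which is of order one at the coarse scales. It is not part of $S^c$, so your tail term never pays for it. Covering it at scale $\epsilon_k$ is controlled only by the finest cover, i.e.\ by $\epsilon_{k_{\max}}^{-s}$ cells. That leaves a contribution of order $\epsilon_k^{p}\tau_{\epsilon_k}$, which does not vanish at a fixed coarse scale.

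The paper instead discards the union $\bigcup_r S_{r,0}$ of the bad sets of \emph{all} scales used, so every remaining point is covered at every scale. That union has mass dominated by the coarsest scale $\epsilon_{\mathrm c}$, and H\"older turns it into $\epsilon_{\mathrm c}^{sp/(s-2p)}$. The balancing therefore needs two parameters, not one:
\begin{itemize}
\item the finest scale $\epsilon_{\mathrm f}\asymp n^{-1/s}$ balances your terms (i) and (ii);
\item the coarsest scale must shrink with $n$, namely $\epsilon_{\mathrm c}=\epsilon_{\mathrm f}^{(s-2p)/s}$.
\end{itemize}
The paper, using the exponent $d$ directly, takes $3^{-t}\asymp n^{-1/d}$ and a coarsest scale $\asymp n^{-(d-2p)/d^2}$. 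Term (ii) summed between the two scales is still dominated by $\epsilon_{\mathrm f}$. This is exactly what the exponent in Definition~\ref{new_dim} is calibrated for: $\tau_{\epsilon_{\mathrm c}}^{(q-p)/q}=\epsilon_{\mathrm f}^{p}$. Read at the finest scale, as in your Step~3, it would give $n^{-p/(s-2p)}$, far more than needed.

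Also, neither intersecting covers (product cardinality) nor covering each scale afresh (no nesting) gives usable partitions. The paper builds $\sQ^\ell$ bottom-up, grouping level-$(\ell+1)$ cells by the first scale-$\ell$ ball they meet. This keeps $|\sQ^\ell|\le m_\ell$ and $\operatorname{diam}\le 3^{-\ell}$.

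\textbf{Top level.} The obstacle you flag is real. The imbalance between coarsest-level cells, and between the good and bad sets, must travel unbounded distances. The paper's write-up does not handle it: its bound \eqref{e12} at the coarsest level would need a still coarser partition and Lemma~\ref{lem:8}, neither of which is available there.

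Your Markov-ball fix does not close the gap. A fixed radius leaves $O(1)$ mass outside the ball. A growing radius $R$ multiplies the top-level mismatch by $R^p$; that mismatch is of order $\sqrt{m_{\mathrm c}/n}\asymp n^{-p/s}$ with $m_{\mathrm c}\le\epsilon_{\mathrm c}^{-s}$ cells. Balancing against the $R^{-(q-p)}$ tail then loses in the exponent.

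Your other idea, routing the top-level excess through $0$, does work when $q\ge 2p$. The cost is at most $2^{p-1}\sum_{Q}|\mu(Q)-\hmu_n(Q)|\sup_{x\in Q}\|x\|^p$ over the coarsest cells, plus the bad-set terms. Cauchy--Schwarz bounds its expectation by $\sqrt{m_{\mathrm c}/n}\,\bigl(\E(\|X\|+\sqrt{D})^{2p}\bigr)^{1/2}\lesssim n^{-p/s}$.

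For $p<q<2p$, no argument can succeed, because the statement is false there. Take $\mu=\sum_{k\ge 0}(1-2^{-r})2^{-rk}\delta_{2^k}$ on $\Real$ with $q<r<2p$.
\begin{itemize}
\item It has $\sM_q(\mu)<\infty$.
\item $\sN_\epsilon(\mu,\tau)\le 1+\log_2(1/\tau)/r$, so $d^\star_{p,q}(\mu)=2p$.
\item The atom of weight $\asymp 1/n$ is missed by the sample with constant probability, and it lies at distance $\gtrsim n^{1/r}$ from all other atoms.
\end{itemize}
Hence $\E\fW_p^p(\mu,\hmu_n)\gtrsim n^{-(r-p)/r}$. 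This exceeds $c\,n^{-p/d}$ for every $d\in\bigl(2p,\,pr/(r-p)\bigr)$ once $n$ is large.
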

 Note that under the stronger assumption that the support is compact, Theorem~1 of \citet{weed2019sharp} implies that for any $d > d_p^\ast(\mu)$,
\(\mathbb{E}\,\fW_p^p(\hat{\mu}_n,\mu) \lesssim n^{-p/d},\) for $ p \ge 1.$ Hence, when the data support is compact, Proposition~\ref{relation_dimension}(a), together with Theorem~1 of \citet{weed2019sharp}, immediately yields Theorem~\ref{moment_thm}. In contrast, when the support is not compact, we impose a stronger condition on the covering numbers of most of the mass of the data distribution. This is captured through the definition of the $(p,q)$-Wasserstein dimension, which allows us to extend the result from the compact-support setting to the more general case of distributions with bounded moments.

\section{Theoretical Analyses}
\label{theo_ana}

  \subsection{Assumptions}\label{sec_assumptions}
  To lay the foundation for our analysis of deep diffusion models, we introduce a set of assumptions that form the basis of our theoretical investigations. These assumptions encompass the underlying data distribution and the ``smoothness" of the process.  For the purpose of the theoretical analysis, we assume that the data are independent and identically distributed from some unknown target distribution $\mu$ on $\Real^D$. This is a standard assumption in the study of generative models \citep{liang2021well, JMLR:v23:21-0732, JMLR:v26:24-0054,pmlr-v238-tang24a,oko2023diffusion} and is stated formally as follows:

\begin{ass}
    \label{a2}
    We assume that $X_1, \dots, X_n$ are independent and identically distributed according to the probability distribution $\mu$, such that $\sM_q (\mu) : = \left(\int \|x\|^q \diff \mu\right)^{1/q} < \infty$ for some $q > 2$.
\end{ass}
It should be noted Assumption~\ref{a2} imposes only a mild integrability requirement on the target measure $\mu$, namely that it possesses a finite $q$-th moment. This condition ensures that $\mu$ is sufficiently well-behaved to admit meaningful error and convergence guarantees, while remaining broad enough to encompass a wide class of distributions encountered in practice, including heavy-tailed distributions. Importantly, compared to existing formulations for both GANs \citep{liang2021well, JMLR:v23:21-0732, JMLR:v26:24-0054} and diffusion models \citep{pmlr-v238-tang24a,oko2023diffusion}, Assumption~\ref{a2} is considerably weaker. In particular, it neither constrains the support of $\mu$ to lie on a (sub)manifold nor requires compactness of the support. Moreover, we do not assume the existence of a density with respect to the Lebesgue measure, nor do we rely on any Poincaré, log-Sobolev inequalities or sub-Gaussian tails. This relaxation of the assumptions significantly broadens the applicability of our theoretical results beyond settings considered in prior work.

To ensure the regularity of both the forward and backward diffusion processes, we assume that the time scaling $\{\beta_t\}$ is upper and lower bounded by positive constants $\overline{\beta}$ and $\underline{\beta}$, respectively. Formally, 
  \begin{ass}\label{a1}
    $0 < \underline{\beta} \le \beta_t \le \overline{\beta} < \infty$, for all $t \ge 0$. Further $t\mapsto \beta_t \in \sC^1$ on $[0,\infty)$.
\end{ass}
Assumption~\ref{a1} imposes a mild regularity condition on the time-scaling sequence $\{\beta_t\}_{t \ge 0}$, ensuring that it varies smoothly over time. This assumption plays several important roles in our analysis. First, it guarantees the stability of the forward diffusion process by preventing abrupt changes in the noise-injection rate, which could otherwise lead to ill-posed behavior or numerical instability. Second, this smoothness condition ensures that the forward process admits a well-defined reverse-time dynamics as shown in Proposition~\ref{reverse_guarantee}; the existence of such a reverse diffusion is central to both the theoretical formulation of score-based generative modeling and its algorithmic implementation. Third, Assumption~\ref{a1} ensures that, as the terminal time $T \to \infty$, the forward process converges to the reference measure $\gamma_D$. Such a condition is standard in diffusion-based analyses and primarily serves to control the rate at which noise is injected into the system.  

 \begin{restatable}{propos}{proptwo}
    \label{reverse_guarantee}
Under Assumption \ref{a1},  the reverse process $\{Y_t\}_{0 \le t \le T} = \{\hX_{T-t}\}_{0 \le t \le T}$ satisfies the SDE of Equation \eqref{eq_rp}.
\end{restatable}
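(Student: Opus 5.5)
The strategy is to invoke the classical time-reversal theorem for diffusions (Haussmann--Pardoux, or the version of Cattiaux--Conforti--Gentil--Léonard / Millet--Nualart--Sanz). Once its hypotheses are verified for the forward process \eqref{forward}, the reverse drift it produces is exactly the one in \eqref{eq_rp}. Because $\hX_0\sim\hmu_n$ is a finite mixture of Dirac masses and the forward SDE is linear, everything can be computed explicitly. The main task is to check the integrability condition of the reversal theorem.

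\textbf{Step 1 (explicit marginals).} Set $\alpha_t=\int_0^t\beta_\tau\diff\tau$. By Proposition~\ref{prop_forward_conditional}, conditional on $\hX_0=X_i$ we have $\hX_t\sim\cN(e^{-\alpha_t}X_i,\sigma_t^2 I_D)$ with $\sigma_t^2=1-e^{-2\alpha_t}$. Hence for $t>0$ the marginal is the Gaussian mixture
\[
\hp_t(x)=\frac1n\sum_{i=1}^n (2\pi\sigma_t^2)^{-D/2}\exp\!\big(-\|x-e^{-\alpha_t}X_i\|^2/(2\sigma_t^2)\big).
\]
This density is smooth and strictly positive on $(0,T]\times\Real^D$. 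Assumption~\ref{a1} gives $\alpha\in\sC^2$, so $\hp_t$ is $\sC^1$ in $t$. Differentiating confirms that it solves the Fokker--Planck equation $\partial_t\hp_t=\nabla\cdot(\beta_t x\hp_t)+\beta_t\Delta\hp_t$. The score is
\[
\nabla\log\hp_t(x)=-\sigma_t^{-2}\big(x-e^{-\alpha_t}m_t(x)\big),
\]
where $m_t(x)$ is a convex combination of the $X_i$. Consequently $\|\nabla\log\hp_t(x)\|\le\sigma_t^{-2}(\|x\|+\max_i\|X_i\|)$, which is locally bounded and of linear growth on $[s,T]$ for every $s>0$.

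\textbf{Step 2 (verifying the reversal hypotheses).} The forward coefficients $b(t,x)=-\beta_tx$ and $\sigma(t)=\sqrt{2\beta_t}I_D$ are Lipschitz in $x$ and continuous in $t$. The diffusion matrix is uniformly elliptic, because $\beta_t\ge\underline\beta$, and it does not depend on $x$. The Haussmann--Pardoux condition then reduces to local finite entropy/energy, namely
\[
\int_{s}^T\!\!\int_{\|x\|\le r}\|\nabla\hp_t(x)\|^2/\hp_t(x)\diff x\diff t<\infty,
\]
which can be replaced by the stronger $\int_s^T\E\|\nabla\log\hp_t(\hX_t)\|^2\diff t<\infty$. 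The score bound from Step 1, together with the Gaussian moments of $\hX_t$, gives this for every $s>0$. The main obstacle is the singularity at $t=0$, where $\sigma_t\to0$ and $\hP_0$ has no density. I would avoid it by applying the theorem on $[s,T]$ for arbitrary $s>0$ and then letting $s\downarrow0$. On the open interval the reversed process has the same law as the forward one, and at the endpoint we use $Y_T=\hX_0$, obtained by continuity of paths. Equivalently, it suffices to prove \eqref{eq_rp} on $[0,T-s]$ for each $s$, which is all that the algorithm, with early stopping at $\delta_0$, actually needs.

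\textbf{Step 3 (identifying the drift).} The reversal theorem says $Y_t=\hX_{T-t}$ solves $\diff Y_t=\bar b(t,Y_t)\diff t+\sqrt{2\beta_{T-t}}\diff\tilde W_t$, where
\[
\bar b(t,y)=-b(T-t,y)+\hp_{T-t}^{-1}\nabla\cdot\big(2\beta_{T-t}I_D\,\hp_{T-t}\big)(y)=\beta_{T-t}y+2\beta_{T-t}\nabla\log\hp_{T-t}(y).
\]
This is exactly \eqref{eq_rp}, with initial law $Y_0\sim\hP_T$. As a sanity check independent of the abstract theorem, I would verify directly that $q_t:=\hp_{T-t}$ satisfies the Fokker--Planck equation of \eqref{eq_rp}; this is a one-line computation using $\Delta\hp=\nabla\cdot(\hp\nabla\log\hp)$. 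Since the drift of \eqref{eq_rp} is locally Lipschitz with linear growth on $[0,T-s]$, the SDE has a unique strong solution there, and uniqueness of the Fokker--Planck equation within this class identifies its law with that of the reversed forward process.
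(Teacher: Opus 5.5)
Your argument is correct and has the same backbone as the paper's proof. Both reduce the claim to the Haussmann--Pardoux time-reversal theorem and read off the reverse drift $-b(T-t,y)+\hp_{T-t}^{-1}\nabla\cdot(2\beta_{T-t}\hp_{T-t})(y)=\beta_{T-t}y+2\beta_{T-t}\nabla\log\hp_{T-t}(y)$.

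You differ in how you check the theorem's density hypothesis. The paper never looks at $\hp_t$. It verifies the Lipschitz/linear-growth condition (H1), notes that the coefficients are smooth in $x$ and that $a=2\beta_t I_D\ge 2\underline{\beta} I_D$ is uniformly elliptic, and then cites Proposition~4.2 of Haussmann--Pardoux. That proposition turns coefficient regularity into the required integrability of the marginal densities for $t\ge t_0>0$. The route is short and does not care about the initial law, but it hands the real work to a PDE regularity result.

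You instead use that the forward process starts from $\hmu_n$, so $\hp_t$ is an explicit Gaussian mixture with $\hp_t\le(2\pi\sigma_t^2)^{-D/2}$ and $\|\nabla\log\hp_t(x)\|\le\sigma_t^{-2}(\|x\|+\max_i\|X_i\|)$. This buys three things:
\begin{itemize}
\item the hypothesis becomes elementary on $[s,T]$;
\item it is explicit why one must stay away from $t=0$, where $\hP_0$ has no density (the paper leaves this implicit in the $t_0>0$ formulation);
\item as a by-product, the reverse drift is locally Lipschitz with linear growth on $[0,T-s]$, so the reverse SDE is well posed there.
\end{itemize}
Your verification also uses only the boundedness of $\beta$ and the explicit marginals.

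Two small corrections. First, the Haussmann--Pardoux hypothesis is a local $L^2$ condition on $\hp_t$ and on $\nabla\hp_t$ weighted by the diffusion coefficient, over $[t_0,T]\times\mathcal{O}$ for bounded open $\mathcal{O}$. The Fisher-information form you wrote is the finite-entropy condition from the F\"ollmer and Cattiaux--Conforti--Gentil--L\'eonard versions. Both conditions hold here, because $\hp_t$, $\nabla\hp_t$ and $\nabla\log\hp_t$ are bounded on $[s,T]\times\{\|x\|\le r\}$, so nothing breaks.

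Second, your Fokker--Planck check only shows that the solution of the reverse SDE started from $\hP_T$ has one-time marginals $\hP_{T-t}$. It does not show that the path $Y_t=\hX_{T-t}$ itself solves that SDE for some Brownian motion $\tilde W$. That needs the reversal theorem, or a martingale-problem argument that matches transition kernels and not just marginals. Present it as a consistency check, not as an independent proof.
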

\paragraph{Partition} To approximate the continuous-time dynamics of the backward process, we employ a discrete-time scheme that iteratively updates the particles according to the drift and diffusion coefficients derived from the backward process as in equation \eqref{eq_ito}. We take the following choice of partition. 

  \begin{itemize}
      \item Take $t_0^\prime = \delta_0$.
      \item Define $h_i^\prime = \kappa \min\{t_i^\prime, 1\}$, $i \in \{0, \dots , N-1\}$, where $0 < \kappa \le 1$.
      \item $t_{i+1}^\prime = t_i^\prime + h_i^\prime$, $i \in \{0, \dots , N-1\}$.
      \item Take $t_i = T - t^\prime_{N-i}$.
  \end{itemize}
  This choice of partition is standard in the diffusion literature \citep{chen2023improved, benton2024nearly, huang2024denoising, potaptchik2024linear}, as it facilitates fast convergence of the backward process. Here, $\kappa$ can be interpreted as the maximum step size. The partitioning scheme is designed so that, as the reverse process approaches time $T - \delta_0$, the intervals become progressively finer. The main idea is that coarser early steps set
the global structure, later steps add fine detail, and the sample becomes more
realistic as the noise is removed. This adaptive refinement near the terminal time mitigates numerical instability and improves the accuracy of approximating the score function in regions where the variance of the process might explode due to possible singularities in $\mu$. Consequently, the discretized backward dynamics more faithfully capture the behavior of the underlying continuous-time diffusion. It can be shown that for the above partitioning scheme, the following lemma holds. This lemma plays a crucial role in the subsequent proofs by controlling the discretization error with a suitable choice of $\kappa$. The proof of this result can be found in Appendix~\ref{ap_discretization}.

  \begin{restatable}{lemma}{lemdis}\label{lemdis}
  For the above partitioning scheme, the discretization error is bounded by, 
      \begin{align}
    \sum_{i=0}^{N-1}\int_{t_i}^{t_{i+1}} \beta^2_{T-t} \E \|\nabla \log \hp_{T-t_i}(\hX_{T-t_i}) - \nabla \log \hp_{T-t}(\hX_{T-t})\|^2 \diff  t 
    \lesssim \kappa \left(\log(1/\delta_0) + T \right) . \label{e_dis}
\end{align}
Moreover, the number of points in the partition, $N \le \frac{\log(1/\delta_0)}{\log(1+\kappa)} +  \frac{T}{\kappa} \lesssim \frac{1}{\kappa}\left(\log(1/\delta_0) + T \right)$.
  \end{restatable}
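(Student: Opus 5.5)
The plan is to follow the discretization analysis of \citet{benton2024nearly}, adapted to the time-rescaled OU process, and to work in forward time. Set $s = T - t_i$ and $u = T-t$. Each interval $[t_i, t_{i+1}]$ of the reverse partition then becomes a forward interval $[s', s]$ with $s - s' = h'_{N-i-1}$. Write $\alpha_u = \int_0^u \beta_\tau \diff\tau$ and $\sigma_u^2 = 1 - e^{-2\alpha_u}$, and define the conditional-mean (denoiser) process $m_u = \E[\hX_0 \mid \hX_u]$, so that Tweedie's formula gives $\nabla \log \hp_u(\hX_u) = (e^{-\alpha_u} m_u - \hX_u)/\sigma_u^2$.

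The first key step is a martingale/Itô identity for the score along the forward path. By Itô's formula, $\diff\big(e^{\alpha_u}\nabla \log \hp_u(\hX_u)\big)$ has a martingale part and a drift that is controlled by $\beta_u$. Equivalently, $e^{\alpha_u}\sigma_u^2 \nabla\log\hp_u(\hX_u)$ relates to the reverse-time martingale $m_u$. From this I expect to derive the bound
\begin{align*}
\E\|\nabla\log\hp_{s}(\hX_{s}) - \nabla\log\hp_u(\hX_u)\|^2 \lesssim \big(\mathrm{E}_{s'} - \mathrm{E}_{s}\big) + (s-u)^2 \,\E\|\nabla\log\hp_u\|^2\text{-type terms},
\end{align*}
where $\mathrm{E}_u := \E\|\nabla \log \hp_u(\hX_u)\|^2$-like quantities, or more precisely $\E[\operatorname{tr}\operatorname{Cov}(\hX_0\mid\hX_u)]e^{-2\alpha_u}/\sigma_u^4$, are monotone in $u$. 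Assumption~\ref{a1} ensures that the factors $e^{\pm\alpha}$ and $\beta$ change only by constants across one step, since $h'\le\kappa\le 1$.

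The second step is the telescoping argument. Following \citet{benton2024nearly}, I would bound the interval contribution by $h \cdot \sigma_u^{-4}\big(\E\operatorname{tr}\Sigma_{s'} - \E\operatorname{tr}\Sigma_{s}\big)$ times constants, where $\Sigma_u=\operatorname{Cov}(\hX_0\mid\hX_u)$, which is nonincreasing in information. Because $h'_i \le \kappa\min\{t'_i,1\}$ and $\sigma_u^2 \asymp \min\{u,1\}$, the ratio $h_i/\sigma^2$ is at most a constant times $\kappa$. Summing over $i$, the weighted increments telescope. One can write $\frac{d}{du}\E\operatorname{tr}\Sigma_u$ against $\sigma_u^{-2}$, so that the sum equals $\kappa$ times $\int_{\delta_0}^T \sigma_u^{-2}\,\frac{d}{du}(\sigma_u^2\E\|\nabla\log\hp_u\|^2)\,du$-type integrals. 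These are bounded by $\kappa\,(D\cdot\text{const})\int_{\delta_0}^T \min\{u,1\}^{-1}\diff u \lesssim \kappa(\log(1/\delta_0)+T)$, with dimension and moment constants absorbed. The fact that $\hX_0\sim\hmu_n$ is discrete causes no problem, because only $\sigma_u$ enters.

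The main obstacle is making the Itô computation rigorous and getting the clean monotone/telescoping structure with time-varying $\beta$. I would first try reparametrizing time by $\alpha_u$, which reduces the process to a standard OU process and lets me reuse the $\beta\equiv1$ computation verbatim. The $\beta^2$ weights then contribute only factors of $\overline\beta^2/\underline\beta$.

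The count of $N$ is elementary. While $t'_i<1$, each step multiplies $t'$ by $(1+\kappa)$, so reaching $1$ from $\delta_0$ takes at most $\log(1/\delta_0)/\log(1+\kappa)$ steps. After that, steps have length $\kappa$, so at most $T/\kappa$ further steps are needed. Finally, $\log(1+\kappa)\ge\kappa/2$ for $\kappa\le1$.
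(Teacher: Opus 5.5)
Your route is correct and genuinely different from the paper's.

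\textbf{The paper's route.} The paper argues purely locally. Starting from $\nabla\log\hp_t(x)=\mathsf{m}_{t,s}^{-1}\E[\nabla\log\hp_s(\hX_s)\mid\hX_t=x]$ (Lemma~\ref{lem2}), it applies Jensen. It then uses a Hessian-moment bound with a $\chi^2$ change of measure (Lemma~\ref{lem:3}) and a score-moment bound to get a per-step estimate $\lesssim D^2(t-s)/\sigma_s^4$. Integrating gives $\sum_i h_i^2/\sigma_{t_i}^4$ (Lemma~\ref{lem4}). The partition makes each summand $\lesssim\kappa^2$, with $N\lesssim(\log(1/\delta_0)+T)/\kappa$ summands (Lemma~\ref{lem5}); the counting of $N$ is the same as yours.

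\textbf{Your route.} You exploit that same identity more sharply. It says exactly that $e^{-\alpha_u}\nabla\log\hp_u(\hX_u)$ is a reverse-time martingale. Note the sign: $e^{+\alpha_u}$ is the wrong weight, and no It\^o computation is needed. Its increments are therefore orthogonal. Apart from a lower-order $(1-e^{\alpha_u-\alpha_s})^2$ term, the per-interval error is then an exact difference of weighted Fisher informations $D/\sigma_u^2-e^{-2\alpha_u}\sigma_u^{-4}\E\operatorname{tr}\Sigma_u$. This yields two pieces:
\begin{itemize}
\item a $Dh^2/\sigma^4$-type piece;
\item a piece $\lesssim h\,e^{-2\alpha_s}\sigma_s^{-4}(\E\operatorname{tr}\Sigma_s-\E\operatorname{tr}\Sigma_{s'})$.
\end{itemize}
The increment in the second piece must have this sign, which makes it nonnegative because $\E\operatorname{tr}\Sigma_u$ is nondecreasing; you wrote it reversed. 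Using $h\lesssim\kappa\sigma_s^2$, the second piece sums, up to constants, to at most $\kappa\int_{\delta_0}^T(e^{2\alpha_u}-1)^{-1}\,\mathrm{d}\,\E\operatorname{tr}\Sigma_u$. Integrating by parts with $\E\operatorname{tr}\Sigma_u\le(e^{2\alpha_u}-1)D$ gives $\lesssim\kappa D(\log(1/\delta_0)+T)$ with no dependence on $\hmu_n$. This is what actually justifies your remark that only $\sigma_u$ enters. Your time change $\tau=\alpha_u$ is fine and costs only factors of $\overline{\beta}$ and $\underline{\beta}$.

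\textbf{What each route buys.} Yours gives linear rather than quadratic dependence on $D$, as in \citet{benton2024nearly}, and avoids the Hessian and $\chi^2$ estimates. It also avoids the implicit requirement in Lemma~\ref{lem:3} that the step be small relative to $\sigma_s^2/D$, since the factor $(1-2\alpha^2(\mathsf{m}_{t,s}^{-1}-1)/\sigma_s^2)^{-D/2}$ must stay bounded. The paper's route has simpler bookkeeping: each interval is handled in isolation, with no telescoping or boundary terms.
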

\subsection{Main Result}\label{main results} 
Under Assumptions~\ref{a2} and \ref{a1}, we derive an upper bound on the expected Wasserstein-$p$ distance between the score-matching estimator $\sT_R(\hQ_{T-\delta_0}(\hat{s}))$ and the target distribution $\mu$. The bound scales with the number of available training samples, and its rate exponent depends only on the intrinsic $(p, q)$-Wasserstein dimension of $\mu$, rather than on the ambient data dimension $D$. Thus, for suitable data-dependent choices of the model hyper-parameters, such as the forward process stopping time ($T$), backward process early stopping time ($\delta_0$), time partition ($\{t_i\}_{i=0}^N$) and score function class ($\sS$), the estimator obtained by minimizing the score-matching MSE loss adapts to the low-dimensional structure of the underlying distribution. The main theoretical guarantee is stated in the following theorem.

\begin{restatable}[Error rates for score-matching diffusion models]{theorem}{mainthm}\label{mainthm}
    Suppose that $d > d^\star_{p, q}(\mu) $ and $1 \le p < q$. Assume that Assumptions \ref{a2} and \ref{a1} hold. Then, with the choice of the partition as stated in Section \ref{sec_assumptions}, if, $T \ge \frac{p}{\underline{\beta}}\bigg( \frac{1 + p(q-p)}{dp(q-p)}\log n + \frac{1}{2} \log D + \frac{1}{q-p} \log( \sM_q^q(\mu) + \sM_q^q(\gamma_D)) 
    + \frac{1}{2p} \log(D + \sM_2^2(\mu)) + \frac{q-1}{q-p} \log 2\bigg)$, 
    $R = 2^{\frac{q-p}{q-p}} n^{\frac{1}{d p (q-p)}} \left(  \sM_q^q(\mu) +   \sM_q^q(\gamma_D) \right)^{\frac{1}{q-p}}$, $\delta_0 = n^{-2/d}$, and $\kappa \asymp  n^{-\frac{2 (1+p(q-p))}{d p (q-p)}} $, 
    \begin{equation}
        \E \fW_p\left(\mu, \sT_R\left(\hQ_{T-\delta_0}(\hat{s}_n)\right)\right) \lesssim n^{-1/d} \operatorname{poly-log}(n), \label{s_n}
    \end{equation}
    when $\sS = \cR\cN(L,W,B)$, with, $W \asymp \mathsf{M}_{n,x}^{D} n^{\frac{D+2 + (Dp + D + 2p)}{d(q-p)}}
\left(\log^2 n + \log n \cdot \log \mathsf{M}_{n,x}\right)$, $L \asymp \log n + \log \mathsf{M}_{n,x}$, and $B \asymp n^{\frac{1+(p+3)(q-p)}{d(q-p)}} \mathsf{M}_{n,x}$, where $\mathsf{M}_{n,x} = \max_{1 \le i \le n} \|X_i\|_\infty \vee 1$.

Further, there exists constants $c_m^i$ and a polynomial $\phi$
such that  if \[m_i \ge c_m^i \frac{\sigma_{t_i}^4}{h_i^2} n^{\frac{D+2 + (Dp + D + 2p)}{d(q-p)} + \frac{4(1+p(q-p))}{d(q-p)}} \phi 
(\log n , \mathsf{M}_{n,x}),\] then, 
\begin{equation}
    \E \fW_p\left(\mu, \sT_R\left(\hQ_{T-\delta_0}(\hat{s}_n^{\text{mc}})\right)\right) \lesssim n^{-1/d} \operatorname{poly-log}(n). \label{s_mc}
\end{equation}
\end{restatable}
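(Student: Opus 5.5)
We will bound $\E \fW_p(\mu, \sT_R(\hQ_{T-\delta_0}(\hat{s})))$ through the triangle inequality
\[
\fW_p(\mu, \sT_R(\hQ_{T-\delta_0})) \le \fW_p(\mu,\hmu_n) + \fW_p(\hmu_n, \hP_{\delta_0}) + \fW_p(\hP_{\delta_0}, \sT_R(\hP_{\delta_0})) + \fW_p(\sT_R(\hP_{\delta_0}), \sT_R(\hQ_{T-\delta_0})).
\]
The four terms are controlled as follows.

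\emph{First term.} By Theorem~\ref{moment_thm} and Jensen's inequality, $\E\fW_p(\mu,\hmu_n)\lesssim n^{-1/d}$.

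\emph{Second term.} Couple $\hX_0$ with $\hX_{\delta_0}=e^{-\int_0^{\delta_0}\beta}\hX_0+\sigma_{\delta_0}Z$. This gives a bound of order $\delta_0\,\sM_p(\hmu_n)+\sqrt{\overline\beta\delta_0 D}$, which is $\lesssim n^{-1/d}$ for $\delta_0=n^{-2/d}$.

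\emph{Third term.} Truncation costs at most $\big(\E\|X\|^p\one\{\|X\|_\infty>R\}\big)^{1/p}\le \sM_q^{q/p}R^{-(q-p)/p}$ by Markov/Hölder, using the $q$-moment of $\hP_{\delta_0}$, which is bounded via $\sM_q(\mu)+\sM_q(\gamma_D)$. The stated $R$ makes this $\lesssim n^{-1/d}$.

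\emph{Fourth term (the key step).} We convert a divergence bound into $\fW_p$ on bounded support: for measures on $[-R,R]^D$, $\fW_p^p\le (2\sqrt D R)^p\,\tv\le (2\sqrt DR)^p\sqrt{\kl/2}$ by Pinsker. Truncation is a pushforward, so by data processing $\kl(\sT_R\hP_{\delta_0}\|\sT_R\hQ)\le \kl(\hP_{\delta_0}\|\hQ_{T-\delta_0})$. We then bound the KL by Girsanov along the reverse SDE, in the style of \citet{chen2023improved,benton2024nearly}:
\[
\kl \lesssim \kl(\hP_T\|\gamma_D) + \sum_i h_i\|\hat s(\cdot,T-t_i)-\nabla\log\hp_{T-t_i}\|^2_{\fL_2(\hP_{T-t_i})} + \text{(discretization)}.
\]
The pieces are handled as follows.
\begin{itemize}
\item The initialization term decays like $e^{-2\underline\beta T}(D+\sM_2^2)$ by Lemma~\ref{kl_bd}. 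The lower bound on $T$ is exactly what makes $(\sqrt D R)^p\sqrt{\cdot}\lesssim n^{-p/d}$.
\item The discretization term is $\lesssim\kappa(\log(1/\delta_0)+T)$ by Lemma~\ref{lemdis}, and the choice of $\kappa$ absorbs the $R^{2p}$ factor.
\end{itemize}
The remaining task is to make the weighted score error at most $n^{-2/d}R^{-2p}$ up to logs.

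\emph{Score error.} Because $\hat s_n$ minimizes the population loss \eqref{sm_1} over $\sS$ against the empirical forward process, its score error equals the approximation error $\inf_{s\in\sS}$ of that same quantity. We will build an explicit ReLU approximant of $\nabla\log\hp_t(x)=\big(-x+e^{-\lambda_t}\sum_i w_i(x,t)X_i\big)/\sigma_t^2$, where the $w_i$ are softmax weights over the $n$ data points.

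\begin{enumerate}
\item \emph{Restriction to a box.} Since $\hP_t$ is a Gaussian mixture, we restrict $x$ to the box $\|x\|_\infty\lesssim \mathsf{M}_{n,x}+\sqrt{\log n}$ and control the tails by Gaussian concentration, clipping the network output at level $\asymp \mathsf{M}_{n,x}/\sigma_t^2$.
\item \emph{Local approximation.} Inside the box, we use local Taylor / piecewise-polynomial approximation of the smooth map $(x,t)\mapsto\nabla\log\hp_t(x)$. Its derivatives scale like $\mathrm{poly}(\mathsf{M}_{n,x})/\sigma_t^{k}$, with $\sigma_t^2\gtrsim\delta_0$.
\item \emph{Network size.} We combine this with the standard ReLU approximation of products and partitions of unity, in the style of Yarotsky. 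This yields width/weight count $\asymp \mathsf{M}_{n,x}^D\epsilon^{-(D+2)/\cdot}$ polylog, depth $\log n$, and weights of size polynomial in $n$. This matches the stated $W,L,B$ when $\epsilon$ is tuned to $n^{-1/(d(q-p))}$-type accuracy.
\end{enumerate}

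We expect this approximation step to be the main obstacle. The difficulty is that the blow-up of derivatives as $t\downarrow\delta_0$ must be balanced against the $h_i\asymp\kappa t$ weighting. We would try to exploit $h_i/\sigma^2_{t_i}\lesssim\kappa$ to keep the total approximation error at $\epsilon^2\log(1/\delta_0)$.

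\emph{Monte Carlo estimator.} For $\hat s^{\mathrm{mc}}_n$ we add a generalization term: the excess of the population loss over the empirical Monte Carlo loss, conditionally on the data. We bound it by a uniform deviation over $\sS$ using the pseudo-dimension of ReLU networks, $\lesssim WL\log W$ (Bartlett et al.). Since the loss is unbounded, we first truncate it: the Gaussian noise term $Z/\sigma_t$ has magnitude $\sqrt{D\log m}/\sigma_t$, and the network outputs are clipped. This gives a per-level deviation of order $\frac{\sigma_t^{-4}\cdot\mathrm{poly}\cdot WL\log}{m_i}$ (squared-loss / Bernstein-type), weighted by $h_i$. The stated lower bound on $m_i$ makes $\sum_i h_i(\cdot)$ at most $n^{-2/d}R^{-2p}$ times polylog. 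The same KL-to-$\fW_p$ chain then gives \eqref{s_mc}.
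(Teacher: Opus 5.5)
Apart from one step, your argument is the paper's. Lemma~\ref{lem_oracle_2} is exactly your four-term triangle inequality followed by the TV bound on $[-R,R]^D$ (Lemma~\ref{wtvbd}), data processing through $\sT_R$, Pinsker, and the Girsanov bound of Lemma~\ref{lem9}. The paper also uses, as you propose:
the generalization gap (Corollary~\ref{cor4});
the initialization error (Lemma~\ref{kl_bd});
the discretization error (Lemma~\ref{lemdis});
and the fact that $\hat s_n$, as the exact minimizer of \eqref{sm_1}, has score error equal to $\inf_{s\in\sS}$ of the same weighted loss.

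The gap is the step you flag yourself. You never construct the approximating network, and your joint $(x,t)$ Taylor scheme leaves open how to handle the derivative blow-up as $t\downarrow\delta_0$. The missing idea is that no regularity in $t$ is needed, because \eqref{sm_1} and the oracle bound only evaluate $s(\cdot,t)$ at the $N$ grid times.

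In the proof of Theorem~\ref{approx_thm}, the paper writes $\nabla\log\hp_t(x)=g_t(x)-x/\sigma_t^2$ with $g_t(x)=\mathsf m_t\E[\hX_0\mid\hX_t=x]/\sigma_t^2$. Then $\|g_t\|_\infty\le\mathsf m_t\mathsf M_{n,x}/\sigma_t^2$, and by Lemma~\ref{lem_h5}, $\nabla g_t=\mathsf m_t^2\operatorname{Var}(\hX_0\mid\hX_t=x)/\sigma_t^4$. So each $g_{t_i}$ is merely bounded and Lipschitz, with constants $\mathrm{poly}(\mathsf M_{n,x})/\sigma_{t_i}^4$. The construction then proceeds as follows:
approximate each $g_{t_i}$ on a box (Gaussian tails control the complement) by a first-order ReLU approximant with $\epsilon^{-D}\log(1/\epsilon)$ weights and depth $\log(1/\epsilon)$;
add back $-x/\sigma_{t_i}^2$ exactly;
glue the $N$ networks together with ReLU bump functions in $t$ that equal $1$ at $t_i$ and vanish at the other grid points.

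The $\sigma_{t_i}^{-4}$ blow-up is absorbed by a single polynomially small $\epsilon$; the paper takes $\epsilon\asymp\sqrt{\kappa\delta_0}/\mathsf M_{n,x}$, where $\kappa\delta_0=\min_i h_i$. The price is a factor $N$ in $\cW$, and this $N\epsilon^{-D}$ count is what produces the stated $W$. A dyadic-in-time version of your joint scheme could presumably be pushed through, but it is harder, and as written it is not yet an argument.

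For $\hat s_n^{\mathrm{mc}}$ your route differs from the paper's but is workable. The paper (Lemmas~\ref{lem_21} and~\ref{lem_22}) combines the covering bound of Lemma~\ref{lem_nakada}, a sub-exponential Bernstein inequality for each cover element, and a union bound over the cover and the $N$ levels. This gives only an $m_i^{-1/2}$ deviation, which is where the factor $\sigma_{t_i}^4/h_i^2$ in the $m_i$ condition comes from. Your pseudo-dimension plus variance (fast-rate) argument is legitimate since $\E[-Z_t/\sigma_t\mid\hX_t]=\nabla\log\hp_t(\hX_t)$; it gives a $1/m_i$ deviation and is less demanding. Clipping the outputs does change $\sS$ from the unclipped $\cR\cN(L,W,B)$ of the statement. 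However, the paper's $\psi_1$ bound on $\|s(\hX_{t_i})\|^2$ tacitly requires the same boundedness.

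Finally, recheck the bookkeeping you accepted from the statement.
The score and discretization targets are $(n^{-1/d}/R)^{2p}=n^{-2p/d}R^{-2p}$, not $n^{-2/d}R^{-2p}$. With the stated $R\propto n^{1/(dp(q-p))}$ this is $n^{-2(1+p(q-p))/(d(q-p))}$. That is the $\kappa$ used in the paper's proof, not the one displayed in the statement, whose $\kappa$ does not absorb the $R^{2p}$ factor when $p>1$.
The truncation term is $R^{-(q-p)/p}\asymp n^{-1/(dp^2)}$, which is $n^{-1/d}$ only for $p=1$.

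The paper's proof makes the same choices, so this is a flaw in the displayed constants rather than in your route. For $p>1$, take $R\asymp n^{p/(d(q-p))}$ and shrink $\kappa$ and the score target to $n^{-2pq/(d(q-p))}$ up to logarithms; the final rate is unchanged.
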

In simpler terms, the result demonstrates that the expected Wasserstein-$p$ distance between the target measure $\mu$ and the estimates obtained by score-matching diffusion models roughly scales as $\widetilde{\cO}\left( n^{-1/d^\star_{p,q}(\mu)}\right)$ under a finite $q$-moment condition on $\mu$. It is important to note that, in contrast to existing results, our error bounds are derived in terms of the general Wasserstein-$p$ distance, rather than being limited to the special case of the Wasserstein-$1$ distance \citep{pmlr-v238-tang24a,oko2023diffusion,azangulov2024convergence} or Wasserstein-$2$ distance \citep{gao2025wasserstein,beyler2025convergence} as considered in the recent literature. This generalization provides a finer analysis of the convergence behavior, as the Wasserstein-$p$ distance for $p>1$ captures higher-order geometric discrepancies between probability measures. Furthermore, the target distribution in our framework is only assumed to satisfy a mild finite-moment condition, without requiring support constraints such as being confined to a compact subset of $\Real^D$, a key assumption in the recent works \citep{pmlr-v238-tang24a, oko2023diffusion, azangulov2024convergence}. In addition, unlike \citet{pmlr-v238-tang24a, azangulov2024convergence, chen2023score}, our analysis does not impose any subspace, manifold or smoothness assumptions on the underlying data distribution, thereby substantially broadening the scope of applicability of our results to real-data settings.

\paragraph{Inference for Data Supported on a Manifold} 
Recall that a set $\sA$ is said to be $\tilde d$-regular with respect to the $\tilde d$-dimensional Hausdorff measure $\fH^{\tilde d}$ if 
\(\fH^{\tilde d}(B_\varrho(x,r)) \asymp r^{\tilde d},  \forall\, x \in \sA, \)
as defined in \citet[Definition~6]{weed2019sharp}. A classical result \citep[Proposition~8]{weed2019sharp} shows that if $\operatorname{supp}(\mu)$ is $\tilde d$-regular and $\mu \ll \fH^{\tilde d}$, then for any $p \in [1, \tilde d/2]$ one has 
\(d_\ast(\mu) = d_p^\ast(\mu) = \tilde d.\)
Consequently, Proposition~\ref{relation_dimension} implies that \(d^\star_{p,q}(\mu) = \tilde d.\)
By Theorem~\ref{mainthm}, it therefore follows that score-matching diffusion models achieve the convergence rate
\(\E\,\fW_p\!\left(\mu,\,\sT_R\!\left(\hQ_{T-\delta}(\hat s)\right)\right)
     = \widetilde{\cO}\!\left(n^{-1/\tilde d}\right),\)
for the estimators $\hat s = \hat s_n$ and $ \hat{s}^{\mathrm{mc}}_n$ defined in \eqref{sm1}--\eqref{sm2}. Moreover, since every compact $\tilde d$-dimensional differentiable manifold is $\tilde d$-regular \citep[Proposition~9]{weed2019sharp}, the same rate $\widetilde \cO(n^{-1/\tilde d})$ holds whenever $\operatorname{supp}(\mu)$ lies on such a manifold. This recovers, up to improved constants, the rates obtained by \citet{pmlr-v238-tang24a} in the special case $\alpha = 0$, corresponding to distributions with densities supported on a $\tilde d$-dimensional manifold. Similar conclusions apply when $\operatorname{supp}(\mu)$ is a nonempty compact convex subset of an affine space of dimension $\tilde d$, the relative boundary of a compact convex set of dimension $\tilde d + 1$, or a self-similar set with similarity dimension $\tilde d$ as all these sets are $\tilde{d}$-regular.

\paragraph{Comparison with Existing GAN Error Bounds}
Our results on score-based diffusion models parallel, and in several respects exceed, the existing theory of finite-sample convergence for GANs. Classical analyses of GANs \citep[e.g.,][]{singh2018nonparametric,uppal2019nonparametric} typically establish convergence in the $\beta$-H\"older Integral Probability Metric (IPM), with rates of order $\E \|\hmu^{\text{GAN}} - \mu\|_{\sH^\beta} = \widetilde{\cO}(n^{-\beta/D})$ under smoothness and compact-support assumptions, where $D$ denotes the ambient data dimension. Later refinements, such as those of \citet{dahal2022deep,JMLR:v23:21-0732}, demonstrate that when the data distribution is supported on a compact $\tilde{d}$-dimensional manifold, the convergence rate improves to $\E \|\hmu^{\text{GAN}} - \mu\|_{\sH^\beta} = \widetilde{\cO}\left(n^{-\beta/\tilde{d}}\right)$. \citet{JMLR:v26:24-0054} established that when the data support is within the unit hypercube, $[0,1]^D$, error rates for GANs in the $\beta$-H\"older IPM scales roughly as $\cO\left(n^{-\beta/d^\ast_\beta(\mu)}\right)$. Our analysis for diffusion-based estimators achieves a similar intrinsic dimension adaptive rate but under significantly weaker regularity conditions: it requires only a finite-moment assumption on $\mu$, without the need for compact support, bounded densities, or explicit manifold structure. In addition, our bounds are established in the general Wasserstein-$p$ metric ($p \ge 1$), thereby capturing higher-order geometric discrepancies between the estimated and target distributions, whereas most GAN analyses are confined to some metric that is in the form of an IPM (e.g. $\beta$-H\"older IPM or Wasserstein-$1$ distance). The dependence of our rates on the intrinsic $(p, q)$-Wasserstein dimension of $\mu$ reveals that diffusion models adapt automatically to low-dimensional structure in the data, a property that has been observed for GANs only under restrictive settings. 

\paragraph{Choice of Stopping Times for the Forward and Backward Processes}
The choice of the diffusion horizon $T$ and the early stopping offset $\delta_0$ plays a crucial role in balancing approximation and estimation errors in score-based diffusion models. From a theoretical perspective, taking $T \to \infty$ ensures that the forward process fully converges to the Gaussian prior; however, in practice and in finite-sample analyses, excessively large values of $T$ amplify numerical instability and estimation variance. Our results, therefore, consider a finite stopping time $T$ that grows only logarithmically with the sample size ($n$), which suffices to guarantee that the residual error from incomplete diffusion remains negligible relative to the statistical estimation error.  

Similarly, the backward process is terminated at $T-\delta_0$, for a small but positive $\delta_0 > 0$, instead of stopping exactly at $T$, to prevent instability in the score approximation near the data manifold. This truncation is standard in the analysis of score-based and denoising diffusion models \citep{songscore,lai2025principlesdiffusionmodels} and ensures that the learned score $\hat{s}$ remains well-behaved in regions of low density. In our framework, $\delta_0$ is chosen as $\cO(n^{-2/d})$ to control the variance explosion for the backward process near the data support which might result in singularities.

\paragraph{Choice of Partition}
The discretization of the backward diffusion processes plays a critical role in both the theoretical analysis and the practical performance of diffusion models. In our framework, we employ a nonuniform time partition that allocates finer resolution near regions where the drift or score function exhibits high curvature or rapid variation. This design ensures that the approximation error arising from discretizing the continuous-time process remains well-controlled while avoiding unnecessary computational overhead in smoother regions. Such adaptive or variance-aware partitioning schemes have been shown to yield significant empirical improvements in sample quality and likelihood estimation across diffusion-based generative models \citep{songscore,ho2020denoising,kingma2021variational,lu2022dpm}. From a theoretical standpoint, this popular choice of partition \citep{benton2024nearly,chen2023improved} ensures that the cumulative discretization error decays at a rate that matches the statistical convergence rate of the learned score function, thereby maintaining overall optimality of the derived error bounds. The choice of $\kappa \asymp n^{-\frac{2 (1+p(q-p))}{d(q-p)}} $ ensures that the number of partitions $N \lesssim \frac{\log (1/\delta_0) + T}{\kappa} \asymp n^{\frac{2 (1+p(q-p))}{d(q-p)}} \log n$. Hence, the number of partitions grows only polynomially with the sample size, ensuring both theoretical tractability and computational scalability.

\paragraph{Scaling of the Number of Monte Carlo Samples} 
When the target measure is compactly supported, from Theorem~\ref{mainthm}, it suffices to choose $m_i$ roughly as \[m_i = \widetilde{\cO}\left(\frac{\sigma_{t_i}^4}{h_i^2} n^{\frac{D+2 + (Dp + D + 2p)}{d(q-p)} + \frac{4(1+p(q-p))}{d(q-p)}}\right),\] barring poly-log factors in $n$. Since, $\kappa \asymp n^{-\frac{2 (1+p(q-p))}{d(q-p)}} $ and $\delta_0 \asymp n^{-2/d}$, $\frac{\sigma_{t_i}^4}{h_i^2} \le \frac{1}{\min_{1 \le i \le N }h_i^2} = 1/\kappa^2 \asymp n^{\frac{4 (1+p(q-p))}{d(q-p)}}$. Thus, it suffices to choose $m_i$ roughly as $\widetilde{\cO}\left(n^{\frac{D+2 + (Dp + D + 2p)}{d(q-p)}  + \frac{8p(1+p(q-p))}{d(q-p)}}\right)$. Thus it suffices to have $\sum_{i=0}^{N-1} m_i \asymp  n^{\frac{D+2 + (Dp + D + 2p)}{d(q-p)}  + \frac{10p(1+p(q-p))}{d(q-p)}}$, which is of the form $\widetilde{\cO}\left(n^\frac{\alpha_1+D}{\alpha_2+ \alpha_3 \cdot d}\right)$, for constants $\{\alpha_i\}_{i=1}^3$. Thus, the dimension affects the sample scaling; higher $D/d$ values necessitate more generated samples to achieve equivalent accuracy. This dimension-dependent scaling of the number of Monte Carlo samples also appears in recent works by \citet{JMLR:v26:24-0054} and \citet{JMLR:v23:21-0732} for GANs.

\paragraph{Network Architecture Scaling} Note that the data-dependent term in Theorem~\ref{mainthm}, i.e., $\mathsf{M}_{n,x}$ increases at a rate of $\cO(n^{1/q})$ under Assumption \ref{a2} (see Lemma~\ref{scaling}). To achieve the generalization guarantees, it suffices to set the network depth as $L \asymp \log n$ and scale the network parameters as
$B \asymp n^{\frac{1+(p+3)(q-p)}{d(q-p)} + \frac{1}{q}}$, and $W \asymp  n^{\frac{D+2 + (Dp + D + 2p)}{d(q-p)} + \frac{D}{q}}
\log^2 n$. 
The scaling consists of two distinct components. The primary terms scale with the intrinsic and extrinsic dimensions $d$ and $D$, reflecting the geometric complexity of approximating functions on low-dimensional structures. The additional factors $1/q$ in the exponent of $B$ and $D/q$ in the exponent of $W$ account for the growth of the spatial domain $\mathsf{M}_{n,x}$ with sample size. As $n$ increases, samples are increasingly likely to appear farther from the origin, effectively expanding the region over which the network must accurately represent the score function. This domain expansion requires additional network capacity beyond that needed for the intrinsic geometric complexity alone.

\subsection{Proof Sketch of the Main Result}
\label{pf_main_results}
\subsubsection{Error Decomposition}
As a first step toward obtaining a meaningful bound on the excess risk in terms of the Wasserstein-$p$ distance, we derive an oracle inequality that bounds this risk in terms of generalization, approximation, discretization, early stopping, and truncation errors as shown in Lemma~\ref{lem_oracle_2}, with proof appearing in Appendix~\ref{ap_oracle}.

\begin{restatable}{lemma}{lemoracle}\label{lem_oracle_2}Suppose that Assumptions~\ref{a2} and \ref{a1} hold and $s \in \sS$. Then,
\begingroup
\allowdisplaybreaks
    \begin{align}
         \fW_p(\mu, \sT_R(\hQ_{T-\delta_0}(s)))  
        \le & \fW_p(\mu, \hmu_n) + R \sqrt{D} 2^{\frac{p-1}{2p}} \kl(\hP_T, \gamma_D)^{\frac{1}{2p}} 
        \nonumber\\
        & + R (2\overline{\beta}^2 )^{\frac{1}{2p}} \left(\sum_{i=0}^{N-1} h_i  \|s(\cdot, T-t_i) - \nabla \log \hp_{T-t_i}(\cdot)\|^2_{\fL_2\left(\hP_{T-t_i}\right)} \right)^{1/2p} \nonumber\\
        & + R (2\overline{\beta}^2 )^{\frac{1}{2p}} \left(\sum_{i=0}^{N-1}\int_{t_i}^{t_{i+1}} \E\|\nabla \log \hp_{T-t_i}(\hX_{T-t_i}) - \nabla \log \hp_{T-t}(\hX_{T-t})\|^2 \diff  t\right)^{1/2p} \nonumber\\
        & +   \overline{\beta}\delta_0 \sM_p(\hmu_n) + (\overline{\beta}\delta_0)^{1/2} \sM_p(\gamma_D) \nonumber\\
        & + 2^{(q-1)/p} \left( \sM_p^q(\hmu_n) +   \sM_q^q(\gamma_D) \right)^{1/p
        } R^{-(q-p)/p}. \label{eqq_oracle}
    \end{align}
    \endgroup
\end{restatable}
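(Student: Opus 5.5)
We plan to prove the oracle inequality \eqref{eqq_oracle} by a chain of triangle inequalities in $\fW_p$, passing through the intermediate measures $\hmu_n = \hP_0$, $\hP_{\delta_0} = \operatorname{Law}(Y_{T-\delta_0})$ and their truncations. Concretely, we write
\[
\fW_p(\mu, \sT_R(\hQ_{T-\delta_0}(s))) \le \fW_p(\mu,\hmu_n) + \fW_p(\hmu_n, \hP_{\delta_0}) + \fW_p(\hP_{\delta_0}, \sT_R(\hP_{\delta_0})) + \fW_p(\sT_R(\hP_{\delta_0}), \sT_R(\hQ_{T-\delta_0}(s))).
\]
The first term is kept as is. The second is the early-stopping error. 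We bound it with the synchronous coupling $\hX_{\delta_0} = e^{-\int_0^{\delta_0}\beta}\hX_0 + \sqrt{1-e^{-2\int_0^{\delta_0}\beta}}\,Z$ (Proposition~\ref{prop_forward_conditional}). This gives $\|\hX_{\delta_0}-\hX_0\|_{\fL_p} \le (1-e^{-\overline\beta\delta_0})\sM_p(\hmu_n) + (1-e^{-2\overline\beta\delta_0})^{1/2}\sM_p(\gamma_D)$. Using $1-e^{-x}\le x$, we absorb constants into the stated $\overline\beta\delta_0 \sM_p(\hmu_n) + (\overline\beta\delta_0)^{1/2}\sM_p(\gamma_D)$ form, up to a harmless factor $\sqrt2$.

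For the truncation term we couple $Y$ with $Y\one\{\|Y\|_\infty\le R\}$. Then
\[
\fW_p^p \le \E\|Y\|^p\one\{\|Y\|_\infty>R\} \le \E\|Y\|^q R^{-(q-p)},
\]
by Markov on the tail (with $\|Y\|\ge\|Y\|_\infty$). Next we bound $\E\|\hX_{\delta_0}\|^q \le 2^{q-1}(\sM_q^q(\hmu_n)+\sM_q^q(\gamma_D))$ via the same Gaussian representation and convexity. This gives the last line of \eqref{eqq_oracle}, modulo the paper's indexing of moments.

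The main step is the fourth term, comparing two $\sT_R$-truncated measures. Both are supported in $[-R,R]^D$, so their $\ell_2$-diameter is at most $2R\sqrt D$. For measures on a set of diameter $\Delta$ we have $\fW_p^p \le \Delta^p\,\tv$. Truncation is a pushforward by a fixed map, so the data-processing inequality gives $\tv(\sT_R\nu,\sT_R\nu')\le\tv(\nu,\nu')$. Pinsker then yields $\fW_p \lesssim R\sqrt D\, \kl(\hP_{\delta_0}\,\|\,\hQ_{T-\delta_0})^{1/(2p)}$. The KL is controlled by the chain rule/Girsanov argument along the reverse SDE \eqref{eq_rp} versus \eqref{e22}, which is justified by Proposition~\ref{reverse_guarantee}. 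It splits as
\[
\kl(\hP_T,\gamma_D) + \tfrac12\sum_i\int_{t_i}^{t_{i+1}}\beta_{T-t}\,\E\|2(s(\hX_{T-t_i},T-t_i)-\nabla\log\hp_{T-t}(\hX_{T-t}))\|^2/2\,\diff t ,
\]
where the drift difference is divided by the diffusion coefficient $2\beta$. We add and subtract $\nabla\log\hp_{T-t_i}(\hX_{T-t_i})$ and use $(a+b)^2\le 2a^2+2b^2$ together with $\beta\le\overline\beta$. This produces the score-error term (with weights $h_i$) and the discretization term.

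The last step uses $(a+b+c)^{1/(2p)} \le a^{1/(2p)}+b^{1/(2p)}+c^{1/(2p)}$, valid since $1/(2p)\le 1$, to separate the three contributions. This yields the three middle lines with the constants $2^{(p-1)/(2p)}$ and $(2\overline\beta^2)^{1/(2p)}$; exact constants may need slight bookkeeping. The step we expect to be the main obstacle is the rigorous Girsanov step. The learned drift $s$ is only piecewise-constant-in-time ReLU, and $\nabla\log\hp$ may blow up near $t=T$, so Novikov's condition can fail. We would handle this with the standard localization/approximation argument of \citet{chen2023sampling} (or the path-space chain rule of \citet{benton2024nearly}), which requires only finiteness of the integrated squared drift difference. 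The early stop at $\delta_0>0$ guarantees this finiteness, since $\hp_{\delta_0}$ is a Gaussian mixture.
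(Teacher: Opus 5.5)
Your proposal is correct and follows the paper's proof essentially step for step. It uses the same triangle-inequality chain through $\hmu_n$, $\hP_{\delta_0}$ and $\sT_R(\hP_{\delta_0})$, with the synchronous Gaussian coupling and the H\"older/Markov tail bound (Lemma~\ref{lem_oracle}), then the diameter--TV bound with data processing and Pinsker, the Girsanov/chain-rule bound on the KL with the add-and-subtract split (Lemma~\ref{lem9}), and finally subadditivity of $x\mapsto x^{1/(2p)}$. The constant discrepancies you flag are real but harmless for the rates, and they also occur in the paper's own derivation: the missing $\sqrt2$ in the bound on $\sigma_{\delta_0}$, $\sM_q^q(\hmu_n)$ rather than $\sM_p^q(\hmu_n)$, and a factor $\sqrt D$ the paper drops from the score and discretization terms; your Girsanov weight $\beta_{T-t}$ is in fact the correct one where the paper writes $2\beta_{T-t}^2$.
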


In Lemma~\ref{lem_oracle_2}, each term on the right-hand side admits a clear interpretation in terms of distinct sources of statistical and computational error. The first term, $\fW_p(\mu, \hmu_n)$, quantifies the \textit{generalization gap}, which measures the discrepancy between the population distribution $\mu$ and its empirical counterpart $\hmu_n$ observed from finite samples. This term captures the inherent statistical uncertainty due to sampling and vanishes as the number of training samples increases, typically at a rate depending on the intrinsic dimension of the data. The second term, $\kl(\hP_T, \gamma_D)$, accounts for the \textit{early-stopping error}. Since the forward process ideally converges to the reference distribution $\gamma_D$ only as $t \to \infty$, any finite-time truncation at $T$ introduces bias. This term thus characterizes the trade-off between computational tractability and asymptotic accuracy -- smaller $T$ leads to faster simulations but a larger discrepancy from the stationary distribution. The third term,
\begin{equation}
    \sum_{i=0}^{N-1} h_i \|s(\cdot, T-t_i) - \nabla \log \hp_{T-t_i}(\cdot)\|^2_{\fL_2(\hP_{T-t_i})}, \label{eq_approx_err}
\end{equation}
corresponds to the \textit{score approximation error}. It arises from replacing the true score function $\nabla \log \hp_t(\cdot)$ with its learned estimator $s(\cdot, t)$ at discretized time points $\{t_i\}_{i=0}^{N-1}$. This term reflects the expressive power of the chosen score model class, as well as the quality of optimization used during training.  The fourth term,
\[
\sum_{i=0}^{N-1}\int_{t_i}^{t_{i+1}} \E\|\nabla \log \hp_{T-t_i}(\hX_{T-t_i}) - \nabla \log \hp_{T-t}(\hX_{T-t})\|^2 \, \diff t,
\]
captures the \textit{discretization error} introduced when the continuous-time backward process~\eqref{eq_rp} is approximated by a discrete exponential-integrator scheme~\eqref{eq_ito}. This error quantifies the temporal approximation bias introduced by replacing the true diffusion dynamics with stepwise updates over the partition $\{t_i\}_{i=0}^N$. Notably, under the regularity of the time-scaling sequence ensured by Assumption~\ref{a1}, the discretization error diminishes as the partition becomes finer. Finally, the fifth and sixth terms represent the \textit{truncation error}, which arises from employing the $R$-truncated measure $\sT_R(\hQ_{T-\delta_0}(s))$ in lieu of the full $\hQ_{T-\delta_0}(s)$. The truncation controls the tail behavior of the generated samples and helps bound the Wasserstein-$p$ distance in an efficient way. While this truncation introduces a mild bias, the resulting error can be controlled by properly choosing the level of truncation. 

\subsubsection{Generalization Gap}
The second step to bounding the excess risk is to bound the generalization gap, i.e., the first term in Equation \eqref{eqq_oracle}, w.r.t. the Wasserstein-$p$ metric.  To do so, we employ Theorem~\ref{moment_thm}. An immediate corollary of Theorem~\ref{moment_thm} is that if $q>1$ and $1 \le p < q$, $\E \fW_p(\mu, \hmu_n)$ scales roughly as, $\cO\left(n^{-1/d^\star_{p,q}(\mu)}\right)$. 
 \begin{cor}\label{cor4}
         Suppose that there exists  $q > 1$, such that $\sM_q(\mu) < \infty$ and $1 \le p < q$. Then for any $d > d^\star_{p, q}(\mu)$, there exists constant $n_0\in \fN$ and $c^\prime >0$ that may depend on $d, \mu, p$ and $q$, such that if $n \ge n_0$,
     \[\E \fW_p(\mu, \hmu_n) \le c^\prime \ n^{-1/d}.\]
 \end{cor}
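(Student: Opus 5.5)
The corollary follows from Theorem~\ref{moment_thm} and Jensen's inequality. Fix $d > d^\star_{p,q}(\mu)$ with $1 \le p < q$ and $\sM_q(\mu) < \infty$. Theorem~\ref{moment_thm} applies to any $0<p<q$, so in particular to $p \ge 1$. It provides $n_0 \in \fN$ and $c>0$, depending only on $d,\mu,p,q$, such that
\[
\E \fW_p^p(\mu,\hmu_n) \le c\, n^{-p/d} \quad \text{for all } n \ge n_0.
\]

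Because $p \ge 1$, the map $x \mapsto x^{1/p}$ is concave on $[0,\infty)$. Jensen's inequality applied to the nonnegative random variable $\fW_p^p(\mu,\hmu_n)$ therefore gives
\[
\E \fW_p(\mu,\hmu_n) = \E\left[\left(\fW_p^p(\mu,\hmu_n)\right)^{1/p}\right] \le \left(\E \fW_p^p(\mu,\hmu_n)\right)^{1/p} \le c^{1/p} n^{-1/d}
\]
for $n \ge n_0$. Setting $c^\prime = c^{1/p}$ proves the claim with the same $n_0$.

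Both sides are finite, since $\sM_q(\mu)<\infty$ with $q>p$ implies $\sM_p(\mu)<\infty$. Measurability of $\fW_p(\mu,\hmu_n)$ as a function of the sample is standard, because $\fW_p$ is continuous under weak convergence with $p$-th moments.

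All the substance sits in Theorem~\ref{moment_thm}, which we are allowed to assume. The reduction itself has no real obstacle. The one point to check is that the hypothesis $p\ge1$ is exactly what Jensen's step needs. The constant $c^\prime$ inherits the same dependence on $d,\mu,p,q$ as $c$.
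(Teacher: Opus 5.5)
Your proposal is correct and is essentially the paper's own proof: it too applies Jensen's inequality to the concave map $x \mapsto x^{1/p}$ (valid since $p \ge 1$) to get $\E \fW_p(\mu,\hmu_n) \le \left(\E \fW_p^p(\mu,\hmu_n)\right)^{1/p}$, then uses the bound of Theorem~\ref{moment_thm}. This gives $c^\prime = c^{1/p}$ with the same $n_0$.
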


 \subsubsection{Early Stopping Error for the Forward Process}
 Under Assumptions~\ref{a2} and \ref{a1}, the forward process converges exponentially in the Kullback--Leibler (KL) divergence to the standard Gaussian distribution on $\Real^D$ as shown in Lemma~\ref{kl_bd}. This result aligns with the established literature on exponential convergence of OU processes and is closely related to Lemma~9 in~\citep{chen2023improved} and Proposition~4 of \cite{benton2024nearly}. The proof of this result is provided in Appendix~\ref{app_e}.
  \begin{restatable}{lemma}{klbd}\label{kl_bd}
  For any $t \ge \log 2/ \overline{\beta}$, $\kl(\hP_t , \gamma_D) \le \exp\left(-2\underline{\beta}t\right) (D + \sM_2^2(\hmu_n))$.
\end{restatable}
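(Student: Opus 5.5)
The plan is to use the explicit Gaussian-mixture form of $\hP_t$ together with the joint convexity of the KL divergence. This reduces the problem to a KL divergence between two Gaussians, which can be computed in closed form.

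First, set $m_t := e^{-\int_0^t \beta_\tau \diff \tau}$ and $\sigma_t^2 := 1 - m_t^2$. By the conditional law of the forward process (Proposition~\ref{prop_forward_conditional}) and $\hX_0 \sim \hmu_n$,
\[
\hP_t = \frac{1}{n}\sum_{i=1}^n \cN\big(m_t X_i, \sigma_t^2 I_D\big).
\]
Since $\gamma_D = \frac1n\sum_{i=1}^n \gamma_D$, joint convexity of $\kl$ gives
\[
\kl(\hP_t,\gamma_D) \le \frac1n\sum_{i=1}^n \kl\big(\cN(m_tX_i,\sigma_t^2 I_D),\gamma_D\big).
\]
The Gaussian formula then yields
\[
\kl\big(\cN(m_tX_i,\sigma_t^2 I_D),\gamma_D\big) = \frac12\Big[D\big(\sigma_t^2 - 1 - \log \sigma_t^2\big) + m_t^2\|X_i\|^2\Big].
\]
Averaging over $i$ turns the mean term into $\frac12 m_t^2 \sM_2^2(\hmu_n)$. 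Substituting $u := m_t^2$, the trace/log-det term becomes $\frac D2\big(-u - \log(1-u)\big)$.

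Second, I bound the scalar function $g(u) = -u-\log(1-u) = \sum_{k\ge2} u^k/k$. The estimate $g(u) \le \frac{u^2}{2(1-u)}$ shows that $g(u) \le u^2 \le u$ whenever $u \le 1/2$. In that regime,
\[
\kl(\hP_t,\gamma_D) \le \tfrac12 D m_t^2 + \tfrac12 m_t^2\sM_2^2(\hmu_n) \le m_t^2\big(D + \sM_2^2(\hmu_n)\big).
\]
Finally, Assumption~\ref{a1} gives $\int_0^t\beta_\tau \diff\tau \ge \underline\beta t$, so $m_t^2 \le e^{-2\underline\beta t}$, which is the claimed bound.

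The main obstacle is the time threshold, that is, ensuring $u = m_t^2$ stays bounded away from $1$. The condition $u\le 1/2$ is equivalent to $\int_0^t \beta_\tau \diff\tau \ge \tfrac12\log 2$. This follows from $t \ge \log 2/(2\underline\beta)$, but the stated hypothesis $t\ge \log 2/\overline\beta$ controls $\int_0^t\beta$ only through the lower bound $\underline\beta$. Under that hypothesis we only get $\int_0^t \beta_\tau \diff\tau \ge (\underline\beta/\overline\beta)\log 2$, hence $u \le 2^{-2\underline\beta/\overline\beta} < 1$. I would try to close this gap in one of two ways. One option is to use $g(u)\le \frac{u}{2(1-u)}\,u$ with $1-u \ge 1-2^{-2\underline\beta/\overline\beta}$, which gives the bound up to a constant depending only on $\underline\beta/\overline\beta$; such a constant is harmless for Theorem~\ref{mainthm}. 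The other is to read the threshold as $\log 2/\underline\beta$ (or $\log 2/(2\underline\beta)$), under which the clean constant $1$ holds exactly as above.
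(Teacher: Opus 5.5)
Your argument is correct and follows the same route as the paper. You write $\hP_t$ as the Gaussian mixture $\frac1n\sum_i\cN(\mathsf{m}_tX_i,\sigma_t^2I_D)$, pass to the components, apply the closed-form Gaussian KL, control the logarithmic term elementarily, and finish with $\mathsf{m}_t\le e^{-\underline\beta t}$. Your bookkeeping is cleaner than the paper's. The paper writes the mixture step as an equality, although only the convexity inequality holds. It also puts a spurious factor $1/\sigma_t^2$ on the mean term, and it drops the $-Du$ term before using $\log\frac{1}{1-u}\le\frac{u}{1-u}$. Your bound $g(u)\le\frac{u^2}{2(1-u)}$ even shows that the $D$-part is $O(u^2)$.

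The threshold problem you flag is real, and no argument can close it, because the statement is false as written when $\overline\beta/\underline\beta$ is large. Take $\beta_\tau\equiv\underline\beta$ and all $X_i=0$. Then $\hP_t=\cN(0,(1-u)I_D)$ with $u=e^{-2\underline\beta t}$, so $\kl(\hP_t,\gamma_D)=\frac D2\bigl(-u-\log(1-u)\bigr)$ exactly, while the claimed bound is $uD$. At $t=\log 2/\overline\beta$ one has $u=2^{-2\underline\beta/\overline\beta}$. For $\overline\beta=100\underline\beta$ this gives $u\approx0.986$ and $\kl\approx1.65D>0.99D$, and the left side diverges as $\underline\beta/\overline\beta\to0$.

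The paper's proof gets past this step only by bounding $1-\mathsf{m}_t^2$ from below by $(1-e^{-\overline\beta t})^2$ via $\mathsf{m}_t\ge e^{-\overline\beta t}$. That inequality goes in the wrong direction, and even if it held it would give a factor $2$ rather than $1$. Your second repair, the threshold $\log 2/(2\underline\beta)$ with constant $1$, is the correct version of the lemma, and your first repair (a constant depending on $\underline\beta/\overline\beta$) is also valid. Either is harmless for Theorem~\ref{mainthm}, since there $T\gtrsim\log n/\underline\beta$ exceeds both thresholds once $n$ is large.
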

\subsubsection{Approximation Error}
To effectively bound the overall error in Lemma~\ref{lem_oracle_2}, it is essential to control the approximation error term, denoted by~\eqref{eq_approx_err}. Understanding the approximation capabilities of neural networks has been a central theme in modern learning theory over the past decade. Foundational contributions by \citet{cybenko1989approximation} and \citet{hornik1991approximation} established the universal approximation property of neural networks with sigmoid-type activations, showing that sufficiently wide single-hidden-layer networks can approximate any continuous function on compact domains with arbitrary precision. Building on these classical results, a large body of recent works has examined the approximation power of deep neural networks, highlighting the advantages of depth and width in terms of expressivity and efficiency. Important advances in this direction include \citet{yarotsky2017error, petersen2018optimal, shen2019nonlinear, schmidt2020nonparametric, lu2021deep}, among many others. These results collectively demonstrate that deep ReLU networks can approximate smooth or compositional functions with rates that scale favorably with the intrinsic dimension of the problem. In the context of diffusion models, several recent works have extended these ideas to characterize the approximation capabilities of feed-forward ReLU networks for score-based representations of target measures supported on bounded domains, such as the unit hypercube; see, e.g., \citet{pmlr-v238-tang24a, oko2023diffusion}. In contrast, our result below establishes an approximation guarantee for the score function when the target measure $\mu$ is unbounded. A proof of this result can be found in Appendix~\ref{pf_approximation}.
\begin{restatable}{theorem}{thmapprox}
\label{approx_thm}
Suppose that $\kappa \asymp  n^{-\frac{2 (1+p(q-p))}{d(q-p)}} $. Then, there exists a feed-forward ReLU network $s(\cdot, \cdot)$ satisfying 
$\cW(s) \lesssim \mathsf{M}_{n,x}^D 
n^{\frac{D+2 + (Dp + D + 2p)}{d(q-p)}} \log n
,\, 
\cL(s)  \asymp \log n + \log \mathsf{M}_{n,x}$, and  $ \cB(s) \lesssim n^{\frac{1+(p+3)(q-p)}{d(q-p)}} \mathsf{M}_{n,x},$
such that
\begin{align}
\sum_{i=0}^{N-1} h_i \,
\E_{x \sim \hat{P}_{t_i}}
\bigl\|
s(x, t_i) - \nabla \log \hat{p}_{t_i}(x)
\bigr\|^2_\infty
\;\lesssim\;
n^{-\frac{2 (1+p(q-p))}{d(q-p)}} \log n.
\label{e29}
\end{align}
\end{restatable}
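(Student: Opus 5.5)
We construct the score network explicitly from the closed form of the empirical score. Because $\hP_t$ is a Gaussian mixture centred at the rescaled data points, with $m_t = e^{-\int_0^t\beta_\tau \diff\tau}$ and $\sigma_t^2 = 1-m_t^2$ we have
\[
\nabla \log \hp_t(x) = -\frac{x}{\sigma_t^2} + \frac{m_t}{\sigma_t^2}\sum_{i=1}^n w_i(x,t) X_i,\qquad w_i(x,t) = \frac{\exp\bigl(-\|x-m_tX_i\|^2/(2\sigma_t^2)\bigr)}{\sum_{j}\exp\bigl(-\|x-m_tX_j\|^2/(2\sigma_t^2)\bigr)} .
\]
Approximating this exactly would cost a network of size linear in $n$, which we cannot afford. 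We therefore first replace $\hmu_n$ by a quantized surrogate, in the spirit of the $(\epsilon,\tau)$-covers in Definition~\ref{new_dim}. We fix a resolution $\epsilon$ and a cube $[-\mathsf{M}_{n,x},\mathsf{M}_{n,x}]^D$. This cube contains every $X_i$, and it is the reason for the factor $\mathsf{M}_{n,x}^D$ in $\cW(s)$. On the cube we place a grid of mesh $\epsilon$, and we work with the posterior-mean (denoiser) form $\E[X_0\mid X_t=x]$, which is bounded by $\mathsf{M}_{n,x}$ in sup-norm. This reduces the problem to uniformly approximating a bounded function of $(x,t)$ on a compact box, except in the Gaussian tails in $x$.

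\textbf{Step 1: localization.} On the region $\|x\|_\infty \le \mathsf{M}_{n,x} + C\sigma_t\sqrt{\log n}$ we approximate the denoiser. Outside this region we let the network output the linear part $-x/\sigma_t^2$ plus a clipped term. The resulting tail contribution to $\E_{\hP_{t}}\|\cdot\|_\infty^2$ is a Gaussian tail times $\sigma_t^{-4}$ polynomial factors. With $\delta_0=n^{-2/d}$ and the logarithmic cutoff this is $O(n^{-c})$ for any $c$.

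\textbf{Step 2: approximating the smooth map.} For $t\in[t_i,t_{i+1}]$ with $t\ge\delta_0$, the map $(x,t)\mapsto \E[X_0\mid X_t=x]$ is smooth, with derivatives of order $k$ bounded by roughly $(\mathsf{M}_{n,x}/\sigma_t)^{k}$. We approximate the Gaussian kernels, the exponentials and the softmax normalization by standard deep ReLU constructions from \citet{yarotsky2017error} and \citet{lu2021deep}: products with $O(\log(1/\eta))$ depth, exponentials via Taylor polynomials, and division via the reciprocal network. This yields sup-norm error $\eta$ with $\mathsf{M}_{n,x}^D\,\mathrm{poly}(1/\sigma_{\delta_0},1/\eta)$ weights. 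The time dependence $1/\sigma_t^2$, $m_t$ is handled by a one-dimensional ReLU approximation, using Assumption~\ref{a1} to keep these functions $\sC^1$ in $t$.

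\textbf{Step 3: balancing.} The score error equals $(m_t/\sigma_t^2)$ times the denoiser error, so the sup error in the score is at most $\eta/\sigma_t^2\le \eta\, n^{2/d}$. We sum over the partition with $\sum_i h_i \lesssim T + \log(1/\delta_0)$ from Lemma~\ref{lemdis}. Requiring the total to be $\lesssim n^{-\frac{2(1+p(q-p))}{d(q-p)}}\log n$ fixes $\eta$ as a polynomial in $n^{-1/d}$. Substituting this $\eta$ into the size bounds of Step~2 and tracking exponents gives $\cW\lesssim\mathsf{M}_{n,x}^D n^{\frac{D+2+(Dp+D+2p)}{d(q-p)}}\log n$, $\cL\asymp\log n+\log\mathsf{M}_{n,x}$, and $\cB\lesssim n^{\frac{1+(p+3)(q-p)}{d(q-p)}}\mathsf{M}_{n,x}$. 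The $\cB$ bound comes from the largest coefficient $1/\sigma_{\delta_0}^2$ together with the grid scale.

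\textbf{Main obstacle.} The hardest step is Step~2 near $t=\delta_0$. There the posterior weights become sharply peaked, and the Lipschitz constant blows up like $\sigma_t^{-2}$. A naive smoothness-based approximation would then produce exponents in $D$ far worse than stated. I would first try to approximate the log-sum-exp and the weights on the grid directly. The grid resolution would be chosen as $\epsilon\asymp \eta\sigma_t^2/\mathsf{M}_{n,x}$, so that local Lipschitz control on each grid cell suffices. Only then would I check that the exponent bookkeeping matches the claimed powers.
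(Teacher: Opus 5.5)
Your proposal does not establish the theorem, because the theorem's content is the specific bounds on $\cW$, $\cL$ and $\cB$, and you never derive them.

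\textbf{The gap.} Step~2 only asserts $\mathsf{M}_{n,x}^D\,\mathrm{poly}(1/\sigma_{\delta_0},1/\eta)$ weights with an unspecified polynomial. It also rests on derivative bounds $(\mathsf{M}_{n,x}/\sigma_t)^k$ that are too optimistic: already $\nabla_x\E[X_0\mid X_t=x]=\frac{\mathsf{m}_t}{\sigma_t^2}\operatorname{Cov}(X_0\mid X_t=x)$ is of order $\mathsf{M}_{n,x}^2/\sigma_t^2$, as your own last paragraph says. So ``tracking exponents'' in Step~3 has nothing to act on, and you concede the check is still to be done. The one concrete choice you do commit to does not give the stated sizes. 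Your balancing yields $\eta\asymp n^{-\frac{1+(p+2)(q-p)}{d(q-p)}}$ up to logarithms. A grid of mesh $\eta\sigma_{\delta_0}^2/\mathsf{M}_{n,x}$ on $[-\mathsf{M}_{n,x},\mathsf{M}_{n,x}]^D$ then has about $\mathsf{M}_{n,x}^{2D}n^{\frac{D(1+(p+4)(q-p))}{d(q-p)}}$ cells. A construction with one piece per cell therefore carries $\mathsf{M}_{n,x}^{2D}$ instead of $\mathsf{M}_{n,x}^{D}$, and a different power of $n$. The quantization of $\hmu_n$ ``in the spirit of $(\epsilon,\tau)$-covers'' is also not where $d$ enters. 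In this theorem $d$ appears only through $\kappa$ and $\delta_0$, and the approximation itself is done in the ambient dimension $D$.

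\textbf{The missing idea.} What yields the stated sizes, and avoids your ``main obstacle'', is to tie the accuracy at each grid time to the discretization budget, rather than demanding uniform accuracy down to $\sigma_{\delta_0}$. Since \eqref{e29} only involves the $N$ times $t_i$, the paper builds one network per $t_i$ and stacks them with the ReLU bumps $\xi_{\delta_i/2,\delta_i/4}(t-t_i)$, so no joint $(x,t)$ approximation is needed. At a fixed $t$ it proceeds as follows. It approximates $g_t(x)=\nabla\log\hp_t(x)+x/\sigma_t^2$, which by Lemma~\ref{lem_h5} is bounded and Lipschitz with an explicit constant $C_t$ depending on $\mathsf{M}_{n,x}$ and $\sigma_t$. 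The approximation is on a cube set by the sub-Gaussian tail of $\hX_t$, using a generic Sobolev ReLU approximation with $\epsilon^{-D}\log(1/\epsilon)$ weights. It then adds back the exact linear part $-x/\sigma_t^2$. A single accuracy $\epsilon=\min_i\sqrt{h_i}/(\mathsf{M}_{n,x}\vee1)=\sqrt{\kappa\delta_0}/(\mathsf{M}_{n,x}\vee1)$ makes the $i$-th summand at most $h_i^2/\sigma_{t_i}^4$. The sum of these is $\lesssim\kappa(\log(1/\delta_0)+T)$ by Lemma~\ref{lem5}, exactly as for the discretization error. The sizes are then read off from $\cW\lesssim N\epsilon^{-D}\log(1/\epsilon)$, $\cL\lesssim\log(1/\epsilon)$ and $\cB\lesssim(\epsilon\delta_0)^{-1}$.

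\textbf{Where your Step~3 loses.} You bound $\sigma_{t_i}^{-2}\le n^{2/d}$ uniformly and use only $\sum_i h_i\lesssim T$. But at small noise levels the step sizes satisfy $h\asymp\kappa\sigma^2$, so in fact $\sum_i h_i\sigma_{t_i}^{-4}\lesssim\delta_0^{-1}+T$ rather than $n^{4/d}T$. This is what makes a per-time error growing like $\sigma_t^{-2}$ affordable.
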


\subsection{Discussions on Minimax Lower Bounds} \label{minimax bounds}
The upper bound discussed in Theorem~\ref{mainthm} matches the corresponding minimax lower bound when the support of $\mu$ is regular enough. One such regularity condition is the so-called Minkowski regularity. For simplicity, let us only consider distribution bounded within the unit hypercube $[0,1]^D$. For a bounded metric space $(S, \varrho)$, the upper Minkowski dimension of $S$ is defined as \(\overline{\text{dim}}_M(S) = \limsup_{\epsilon \downarrow 0} \frac{\log \cN(\epsilon; \, S,\, \varrho )}{\log(1/\epsilon)}.\) Similarly, 
the lower Minkowski dimension of $S$ is given by, 
\(\underline{\text{dim}}_M(S) = \liminf_{\epsilon \downarrow 0} \frac{\log \cN(\epsilon; \, S,\, \varrho )}{\log(1/\epsilon)}.\)
If $\overline{\text{dim}}_M(S) = \underline{\text{dim}}_M(S)$, we say that $S$ is Minkowski regular and a has Minkowski dimension of $\text{dim}_M(S) = \lim_{\epsilon \downarrow 0} \frac{\log \cN(\epsilon; \, S,\, \varrho )}{\log(1/\epsilon)}.$ 

Examples of Minkowski-regular sets include $\tilde{d}$-regular sets as defined in Section~\ref{main results}. Prominent examples include compact $\tilde{d}$-dimensional differentiable manifolds, a nonempty compact convex set spanned by an affine space of dimension $\tilde{d}$, the relative boundary of a nonempty, compact convex set of dimension $\tilde{d}+1$, and a self-similar set with similarity dimension $\tilde{d}$. For all these examples, $\text{dim}_M(S) = \tilde{d}$.

Suppose that $\fM \subseteq [0,1]^D$ and let $\Pi_{\fM}$ denote the set of all probability distributions on $\fM$. We assume that one has access to $n$ samples, $X_1, \dots, X_n$, generated independently from $\mu \in \Pi_{\fM}$ as in Assumption~\ref{a2}, without any moment conditions. To characterize this notion of best-performing estimator, we use the concept of minimax risk i.e., the risk of the best-performing estimator that achieves the minimum risk with respect to the least favorable members in $\Pi_{\fM}$. Formally, the minimax risk for the problem is given by,
\[\mathfrak{M}_n  = \inf_{\hat{\mu}} \sup_{\mu \in \Pi_{\fM}} \E_\mu \fW_p(\hat{\mu},\mu), \]
where the infimum is taken over all measurable estimates of $\mu$, i.e., on $\{\hat{\mu}: (X_1, \dots, X_n) \to \Pi_{\Real^D}: \hat{\mu} \text{ is measurable}\}$.  Here, we write $\E_\mu$ to denote that the expectation is taken with respect to the joint distribution of $X_1, \dots, X_n$, which are independently and identically distributed as  $\mu$. \citet{chakraborty2025minimax} showed that for any positive constant $\delta>0$, $\mathfrak{M}_n \gtrsim n^{-\frac{1}{\underline{\text{dim}}_M(S)-\delta}}$, if $n$ is large enough. Thus, if $\fM$ is Minkowski regular and $\operatorname{supp}(\mu) \subseteq \fM$, then, by Proposition~\ref{relation_dimension}, $d^\star_{p,q}(\mu) \le \text{dim}_M(\fM)$, for any $p < \frac{1}{2}\text{dim}_M(\fM)$. Hence, for any $p < \frac{1}{2}\text{dim}_M(\fM)$, the upper bound of Theorem~\ref{mainthm} scales as $\widetilde{\cO}\left(n^{-\frac{1}{\text{dim}_M(S)+\delta}}\right)$, which roughly matches the lower bound of $\Omega\left(n^{-\frac{1}{\text{dim}_M(S)-\delta}}\right)$, except a $\delta$ factor in the exponent and poly-log factors in $n$. It should be noted that the $\delta$ factor in the exponent is an artifact of the use of $\limsup$ and $\liminf$ in the definition of the upper and lower Minkowski dimensions, and can be removed with exact conditions on the growth of the covering numbers. 
\section{Conclusions}
\label{conclusions}
 This paper investigates the theoretical properties of score-matching-based diffusion models when the underlying data distribution lies on an intrinsically low-dimensional structure embedded in a high-dimensional ambient data space. To formalize this intrinsic structure, we introduce the \emph{$(p,q)$-Wasserstein dimension}, extending the classical notion of Wasserstein dimension \citep{weed2019sharp}. Under a finite $q$-moment condition on the target distribution, we prove that the empirical measure converges to the population distribution in Wasserstein-$p$ distance at a rate of \(\mathcal{O}\!\left(n^{-1/d^\star_{p,q}(\mu)}\right),\)
thereby providing a dimension-adaptive upper bound of the sample complexity under the Wasserstein-$p$ loss.

Building on these results, we derive the first Wasserstein-$p$ risk guarantee for score-based diffusion models. By combining sharp bounds on the statistical error of score matching with approximation guarantees for the empirical score functions, we establish an upper bound, on the excess risk under a Wasserstein-$p$ distance, that scales with the ambient $(p,q)$-Wasserstein dimension of the data distribution. This bypasses the curse of dimensionality ($D$) of the ambient feature space. Consequently, we show that score-based diffusion estimators can nearly achieve minimax-optimal estimation rates for distributions supported on intrinsically low-dimensional regular sets, including compact differentiable manifolds.

Beyond the population-level analysis, we also address several algorithmic considerations that arise in practical diffusion implementations---namely, \emph{discretization}, \emph{early stopping}, and \emph{truncation of score estimates}---and provide theoretically motivated prescriptions for these choices. In particular, our bounds imply an early stopping time for the forward diffusion process of order $T = \cO(\log n)$, together with an early stopping rule for the reverse process at time $T - \delta_0$, where $\delta_0 = \Theta(n^{-2/ d})$ is selected to ensure convergence of the forward process to the standard Gaussian distribution and mitigate variance explosion in the reverse process. For the numerical integration of both the backward dynamics, we recommend a partition of $[0,T]$ with exponentially decaying step sizes, which ensures that the accumulated discretization error remains of the same order as the estimation error of the score network. These choices collectively yield a natural trade-off between the aforementioned errors that is dimension-adaptive and compatible with the minimax rates established in our statistical analysis.

While our results provide detailed statistical guarantees, the \emph{optimization error} arising from training deep neural score networks remains challenging to control due to the nonconvex and coupled nature of score matching. Our bounds are optimization-agnostic and can be integrated with future advances in optimization theory for diffusion models. Our approximation and generalization analyses are carried out under the assumption that the score network is realized by a ReLU architecture, which is standard in the statistical learning literature and allows us to leverage existing approximation results for H\"older and Besov-type function classes. However, we note that practical diffusion models typically employ far more structured architectures such as U-Nets or Transformers, which incorporate multiscale convolutional blocks, attention mechanisms, and skip connections. While these architectures often exhibit strong empirical smoothness and hierarchical approximation properties, providing rigorous guarantees for such structured networks remains an open problem. 

\section*{Acknowledgements}
We gratefully acknowledge the support of the NSF and the Simons Foundation for the Collaboration on the Theoretical Foundations of Deep Learning through awards DMS-2031883 and \#814639, the NSF's support of FODSI through grant DMS-2023505, and the ONR's support through MURI award N000142112431. 
\appendix
\section*{Appendices}
\startcontents[sections]
\printcontents[sections]{l}{1}{\setcounter{tocdepth}{2}}
\section{Omitted Proofs from Section~\ref{sec_diff_intro}}
\begin{propos}\label{prop_forward_conditional}
Let $\{\hX_t\}_{t\ge 0}$ solve the SDE \eqref{forward} in $\mathbb R^D$ and
$\beta_t \ge 0$ is measurable with
$A(s,t) := \int_s^t \beta_\tau\,d\tau < \infty.$
Then for any $0 \le s \le t$,
\[
\widehat X_t \mid \widehat X_s
\sim
\mathcal N\!\left(
e^{-A(s,t)} \widehat X_s,\,
\big(1-e^{-2A(s,t)}\big) I_D
\right).
\]
\end{propos}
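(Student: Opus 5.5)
The approach is to solve the linear SDE \eqref{forward} explicitly by an integrating factor and then read off the conditional law. Put $Y_t := e^{A(0,t)} \hX_t$. The factor $e^{A(0,t)}$ is absolutely continuous and deterministic, so Itô's product rule has no cross-variation term and gives
\[
\diff Y_t = e^{A(0,t)}\beta_t \hX_t \diff t + e^{A(0,t)}\diff \hX_t = e^{A(0,t)}\sqrt{2\beta_t}\,\diff W_t .
\]
For $\beta$ merely measurable, $t\mapsto A(0,t)$ is still absolutely continuous, and the product rule for a finite-variation deterministic factor holds unchanged. Integrating from $s$ to $t$ and multiplying by $e^{-A(0,t)}$ yields
\[
\hX_t = e^{-A(s,t)} \hX_s + \int_s^t e^{-A(u,t)} \sqrt{2\beta_u}\, \diff W_u ,
\]
where we used $A(0,u)-A(0,t) = -A(u,t)$.

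Next, identify the law of the stochastic integral. Its integrand $u \mapsto e^{-A(u,t)}\sqrt{2\beta_u}$ is deterministic and lies in $L^2([s,t])$, because $\int_s^t 2\beta_u \diff u = 2A(s,t) < \infty$. It is therefore a Wiener integral: a centered Gaussian vector in $\Real^D$ whose coordinates are independent, and whose covariance follows from the Itô isometry:
\[
\int_s^t 2\beta_u e^{-2A(u,t)} \diff u = \Big[ e^{-2A(u,t)} \Big]_{u=s}^{u=t} = 1 - e^{-2A(s,t)} .
\]
The primitive is valid because $\frac{\diff}{\diff u}A(u,t) = -\beta_u$ for almost every $u$.

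Finally, independence gives the conditioning. The integral is measurable with respect to the Brownian increments $\{W_u - W_s\}_{u\in[s,t]}$, and these are independent of $\mathcal F_s$, which contains $\hX_0$ and $\hX_s$. Conditioning on $\hX_s$ therefore leaves a deterministic shift $e^{-A(s,t)}\hX_s$ plus an independent $\cN(0,(1-e^{-2A(s,t)})I_D)$ variable, which is exactly the claimed law.

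The only delicate point is justifying the product rule, and the existence of a strong solution, when $\beta$ is only measurable rather than continuous. I would handle this by verifying directly that the explicit formula above solves \eqref{forward}, which is a short Fubini argument. Pathwise uniqueness then follows because the drift is linear, with the integrable Lipschitz constant $\beta_t$.
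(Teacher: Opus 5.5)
Your proposal is correct and is essentially the paper's proof. Both use the integrating factor $e^{A(0,t)}$, apply the It\^o product rule with vanishing cross-variation, obtain the explicit representation $\hX_t = e^{-A(s,t)}\hX_s + \int_s^t e^{-A(u,t)}\sqrt{2\beta_u}\,\diff W_u$, get the covariance $(1-e^{-2A(s,t)})I_D$ from the It\^o isometry, and conclude using independence of the Brownian increments from $\hX_s$. Your extra care for merely measurable $\beta$ is harmless but unnecessary: the statement already presupposes a solution, and the product rule with an absolutely continuous deterministic factor applies to any such solution, so you can drop the existence/uniqueness detour.
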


\begin{proof}
We define the (forward) integrating factor
\(\mathsf{m}_u:=\exp\!\Big(\int_0^u \beta_r\,\mathrm dr\Big), \, u\ge0.\) Since $\mathsf{m}_u=\exp\big(\int_0^u\beta_r\,\mathrm dr\big)$ is a deterministic $\mathscr{C}^1$ function of $u$,
\( \mathrm d\mathsf{m}_u=\beta_u \mathsf{m}_u\,\mathrm du.\) Applying the It\^o product rule to the product $\mathsf{m}_u\widehat X_u$,  because $\mathsf{m}$ has finite variation (deterministic) the quadratic covariation $[\mathsf{m},\widehat X]\equiv0$, 
\[
\mathrm d(\mathsf{m}_u\widehat X_u)
= (\mathrm d\mathsf{m}_u)\,\widehat X_u + \mathsf{m}_u\,\mathrm d\widehat X_u.
\]
Substituting $\mathrm d\mathsf{m}_u=\beta_u \mathsf{m}_u\,\mathrm du$ and the SDE for $\widehat X_u$, we get,
\[
\begin{aligned}
\mathrm d(\mathsf{m}_u\widehat X_u)
&= \beta_u \mathsf{m}_u \widehat X_u\,\mathrm du
   + \mathsf{m}_u\big(-\beta_u\widehat X_u\,\mathrm du + \sqrt{2\beta_u}\,\mathrm dW_u\big) \\
&= \big(\beta_u \mathsf{m}_u\widehat X_u - \beta_u \mathsf{m}_u\widehat X_u\big)\mathrm du
   + \mathsf{m}_u\sqrt{2\beta_u}\,\mathrm dW_u \\
&= \mathsf{m}_u\sqrt{2\beta_u}\,\mathrm dW_u.
\end{aligned}
\]
 For $0\le s\le t$, integrating this identity, we obtain,
\(\mathsf{m}_t\widehat X_t - \mathsf{m}_s\widehat X_s = \int_s^t \mathsf{m}_u\sqrt{2\beta_u}\,\mathrm dW_u.\)
Since $\mathsf{m}_u=\exp\big(\int_0^u\beta_r\,\mathrm dr\big)$,
\[
\widehat X_t
= \frac{\mathsf{m}_s}{\mathsf{m}_t}\widehat X_s + \frac{1}{M_t}\int_s^t \mathsf{m}_u\sqrt{2\beta_u}\,\mathrm dW_u
= e^{-A(s,t)}\widehat X_s + \int_s^t e^{-\int_u^t\beta_r\,\mathrm dr}\sqrt{2\beta_u}\,\mathrm dW_u,
\]
because $\mathsf{m}_s/\mathsf{m}_t=\exp(-\int_s^t\beta_r\,\mathrm dr)=e^{-A(s,t)}$ and $\mathsf{m}_u/\mathsf{m}_t=e^{-\int_u^t\beta_r\,\mathrm dr}$. Conditioned on  $\widehat X_s$, the stochastic integral,
\(Z_{s,t}:=\int_s^t e^{-\int_u^t\beta_r\,\mathrm dr}\sqrt{2\beta_u}\,\mathrm dW_u\)
is independent of $\hX_s$, Gaussian and mean zero. Its covariance matrix is scalar times the identity because $W$ is standard and the integrand is scalar:
\[
\begin{aligned}
\operatorname{Cov}(Z_{s,t})
&= \mathbb E\big[Z_{s,t}Z_{s,t}^T\big]
= 2\int_s^t \beta_u e^{-2\int_u^t\beta_r\,\mathrm dr}\,\mathrm du\; \cdot I_D = 2 e^{-2\int_0^t\beta_r\,\mathrm dr}\int_s^t \beta_u e^{2\int_0^u\beta_r\,\mathrm dr}\,\mathrm du\; \cdot I_D.
\end{aligned}
\]
Let $B(u):=\int_0^u\beta_r\,\mathrm dr$. Then,
\[
\begin{aligned}
2 e^{-2B(t)}\int_s^t \beta_u e^{2B(u)}\,\mathrm du
&= e^{-2B(t)}\Big[ e^{2B(u)}\Big]_{u=s}^{u=t}
= e^{-2B(t)}\big(e^{2B(t)}-e^{2B(s)}\big) \\
&= 1 - e^{-2(B(t)-B(s))} = 1 - e^{-2A(s,t)}.
\end{aligned}
\]
Therefore, $\operatorname{Cov}(Z_{s,t})=(1-e^{-2A(s,t)})I_D$. Combining the mean $e^{-A(s,t)}\widehat X_s$ and the covariance above yields the conditional Gaussian law,
\[
\widehat X_t\mid\widehat X_s \sim \mathcal N\!\big(e^{-A(s,t)}\widehat X_s,\; (1-e^{-2A(s,t)})I_D\big),
\]
as claimed.
\end{proof}

\subsection{Solution to the Discretized Reverse Process through the Exponential Integrator Scheme}
\begin{lemma}\label{lem_a.1}
The SDE in \eqref{e22} is solved by taking,
\begin{align*}
    \hY_{t_{i+1}} = \hY_{t_i} + \left(e^{\int_{T-t_{i+1}}^{T-t_i} \beta_\tau \diff  \tau} - 1\right)\left(\hY_{t_i} + 2 \hat{s}(\hY_{t_i}, T-t_i)\right) + Z_{i} \sqrt{e^{2\int_{T-t_{i+1}}^{T-t_i} \beta_\tau \diff  \tau} - 1},
\end{align*}    
where $Z_i$'s are i.i.d. standard Gaussian random variables on $\Real^D$.
\end{lemma}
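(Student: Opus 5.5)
Since the drift of \eqref{e22} is affine in $\hY_t$ with a forcing term frozen at $t_i$, I would solve it on each interval $[t_i, t_{i+1}]$ by the integrating-factor method, exactly as in the proof of Proposition~\ref{prop_forward_conditional}. The forcing term is $c_i := 2 s(\hY_{t_i}, T-t_i)$, which is $\mathcal{F}_{t_i}$-measurable and hence constant on the interval.

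\textbf{Integrating factor.} Set $\mathsf{n}_u := \exp\big(-\int_{t_i}^u \beta_{T-r}\,\diff r\big)$ for $u \in [t_i, t_{i+1}]$. This is deterministic and $\sC^1$, so $\diff \mathsf{n}_u = -\beta_{T-u}\mathsf{n}_u \diff u$. By It\^o's product rule, with zero covariation since $\mathsf{n}$ has finite variation,
\[
\diff(\mathsf{n}_u \hY_u) = \mathsf{n}_u \beta_{T-u} c_i \diff u + \mathsf{n}_u \sqrt{2\beta_{T-u}}\,\diff \Tilde{W}_u .
\]
Integrating from $t_i$ to $t_{i+1}$ and multiplying by $\mathsf{n}_{t_{i+1}}^{-1} = e^{A_i}$ gives
\[
\hY_{t_{i+1}} = e^{A_i}\hY_{t_i} + c_i \int_{t_i}^{t_{i+1}} \beta_{T-u} e^{\int_u^{t_{i+1}}\beta_{T-r}\diff r}\diff u + \int_{t_i}^{t_{i+1}} e^{\int_u^{t_{i+1}}\beta_{T-r}\diff r}\sqrt{2\beta_{T-u}}\,\diff\Tilde W_u .
\]
Here $A_i := \int_{t_i}^{t_{i+1}}\beta_{T-r}\diff r$. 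The substitution $\tau = T-r$ shows $A_i = \int_{T-t_{i+1}}^{T-t_i}\beta_\tau \diff\tau$.

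\textbf{Drift integral.} The deterministic integral is an exact derivative: with $g(u) = e^{\int_u^{t_{i+1}}\beta_{T-r}\diff r}$ we have $g'(u) = -\beta_{T-u}g(u)$, so the integral equals $g(t_i) - g(t_{i+1}) = e^{A_i}-1$. Hence
\[
e^{A_i}\hY_{t_i} + (e^{A_i}-1)c_i = \hY_{t_i} + (e^{A_i}-1)\big(\hY_{t_i} + 2 s(\hY_{t_i},T-t_i)\big),
\]
which is the claimed mean part.

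\textbf{Noise term.} The stochastic integral has a deterministic scalar integrand, so it is a centered Gaussian independent of $\mathcal{F}_{t_i}$. By the It\^o isometry its covariance is
\[
\int_{t_i}^{t_{i+1}} 2\beta_{T-u} g(u)^2 \diff u \cdot I_D = \big[-g(u)^2\big]_{t_i}^{t_{i+1}} I_D = (e^{2A_i}-1) I_D .
\]
So it equals $\sqrt{e^{2A_i}-1}\,Z_i$ with $Z_i \sim \gamma_D$. These $Z_i$ are independent across $i$, because they are built from Brownian increments over disjoint intervals, and they are independent of $\hY_0$. This yields the stated recursion.

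The only care needed is justifying that $c_i$ can be pulled out of the integrals. This holds because it is $\mathcal{F}_{t_i}$-measurable and constant on the interval, so no step is a real obstacle.
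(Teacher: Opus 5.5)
Your proposal is correct and follows the paper's own argument. The paper uses the same integrating factor $e^{-\int_{t_i}^{t}\beta_{T-\tau}\,\diff\tau}$ and integrates over $[t_i,t_{i+1}]$ to get an expression for $e^{-A_i}\hY_{t_{i+1}}$, then multiplies by $e^{A_i}$; this is only a cosmetic reordering of your steps, and your explicit drift and It\^o-isometry computations and your remark on the independence of the $Z_i$ fill in details the paper leaves implicit.
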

\begin{proof}
    We define $\mathsf{m}_t = e^{-\int_{t_i}^ t \beta_{T-\tau} \diff  \tau}$ and $V_t = \mathsf{m}_t Y_t$. Thus,
    \begingroup
    \allowdisplaybreaks
    \begin{align}
        \diff  V_t = & \ \diff  (\mathsf{m}_t Y_t) \nonumber\\
        = & \ \mathsf{m}_t \diff  Y_t + Y_t \diff  \mathsf{m}_t \nonumber\\
        = & \ \mathsf{m}_t \left(\beta_{T-t} (\hat{Y}_t + 2 \hat{s}(\hY_{t_i}, T-t_i)) \diff  t + \sqrt{2 \beta_{T-t}} \diff  W_t \right) - \beta_{T-t} \mathsf{m}_t Y_t \diff  t \nonumber\\
        = & \ 2 \mathsf{m}_t \beta_{T-t}   \hat{s}(\hY_{t_i}, T-t_i) \diff  t + \mathsf{m}_t \sqrt{2 \beta_{T-t}} \diff  W_t. 
    \end{align}
    \endgroup

Thus,
\begingroup
\allowdisplaybreaks
\begin{align*}
    V_t = & V_{t_{i}} +  2 \int_{t_i}^t m_\tau \beta_{T-\tau}   \hat{s}(\hY_{t_i}, T-t_i) \diff \tau + \int_{t_i}^t m_\tau \sqrt{2 \beta_{T-\tau}} \diff  W_\tau \\
    = & Y_{t_i} + 2 \hat{s}(\hY_{t_i}, T-t_i)  \int_{t_i}^t m_\tau \beta_{T-\tau}   \diff \tau + \int_{t_i}^t m_\tau \sqrt{2 \beta_{T-\tau}} \diff  W_\tau \\
    \implies & V_{t_{i+1}} =   Y_{t_i} + 2 \hat{s}(\hY_{t_i}, T-t_i)  \int_{t_i}^{t_{i+1}} m_\tau \beta_{T-\tau}   \diff \tau + \int_{t_i}^{t_{i+1}} m_\tau \sqrt{2 \beta_{T-\tau}} \diff  W_\tau \\
   \implies & e^{-\int_{t_i}^{t_{i+1}} \beta_{T-\tau} \diff \tau} Y_{t_{i+1}} \\ = & Y_{t_i} + 2 \hat{s}(\hY_{t_i}, T-t_i) \left(1 - e^{-\int_{t_i}^{t_{i+1}} \beta_{T-\tau} \diff \tau}\right) + Z_i \sqrt{1 - e^{-2 \int_{t_i}^{t_{i+1}} \beta_{T-\tau} \diff \tau} },
\end{align*}
\endgroup
where $Z_i \sim \gamma_D$. The result follows from multiplying both sides with $e^{\int_{t_i}^{t_{i+1}} \beta_{T-\tau} \diff \tau} $.
\end{proof}
\subsection{Equivalence of the Sample Score-matching Objectives}
\begin{lemma}\label{stein}
    The score-matching objectives in \eqref{sm_1} and \eqref{sm1} admit the same minimizers. Furthermore,
    \begin{align}
        & \E  \| s(\hX_{t_i}, t_i) - \nabla \log \hp_{t_i}(\hX_{t_i})\|^2 \nonumber\\
        = & \E \left\| s\left(e^{-\int_0^{t_i} \beta_\tau \diff  \tau } \hX_0 + \sqrt{1-e^{- \int_0^{t_i} \beta_\tau \diff  \tau}} Z_{t_i} , t_i\right) + \frac{Z_{t_i}}{\sqrt{1-e^{- 2\int_0^{t_i} \beta_\tau \diff  \tau}}}  \right\|^2  + \E\| \nabla \log \hp_{t_i}(\hX_{t_i})\|^2 - \frac{D}{\sigma_{t_i}^2} \label{e30},
    \end{align}
    where $Z_{t_i} \sim \gamma_D$ and is independent of $\hX_0$. Here, $\sigma_t^2 = 1-e^{- 2\int_0^{t_i} \beta_\tau \diff  \tau}$.
\end{lemma}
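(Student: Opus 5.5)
The argument is a direct expansion of the square followed by a Gaussian integration by parts (Tweedie/Stein identity). Throughout, abbreviate $\alpha_{t_i} = e^{-\int_0^{t_i}\beta_\tau \diff\tau}$ and $\sigma_{t_i}^2 = 1-\alpha_{t_i}^2$. By Proposition~\ref{prop_forward_conditional}, $\hX_{t_i} \overset{d}{=} \alpha_{t_i}\hX_0 + \sigma_{t_i} Z_{t_i}$ with $Z_{t_i}\sim\gamma_D$ independent of $\hX_0$. The noise coefficient in \eqref{e30} should be read as $\sigma_{t_i}$, consistent with this representation. Since $\hX_0\sim\hmu_n$ is discrete, $\hp_{t_i}$ is a finite Gaussian mixture:
\[
\hp_{t_i}(x) = \frac1n\sum_{j}\varphi_{\sigma_{t_i}}(x-\alpha_{t_i}X_j).
\]
It is smooth and positive, and its score has at most linear growth, so every expectation below is finite whenever $s(\cdot,t_i)$ has finite second moment under $\hP_{t_i}$ (which holds for ReLU networks, being of linear growth).

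First, expand
\[
\E\|s(\hX_{t_i},t_i)-\nabla\log\hp_{t_i}(\hX_{t_i})\|^2 = \E\|s\|^2 - 2\E\langle s,\nabla\log\hp_{t_i}\rangle + \E\|\nabla\log\hp_{t_i}\|^2 .
\]
Similarly, expand the right-hand square:
\[
\E\|s + Z_{t_i}/\sigma_{t_i}\|^2 = \E\|s\|^2 + \tfrac{2}{\sigma_{t_i}}\E\langle s(\hX_{t_i},t_i), Z_{t_i}\rangle + \tfrac{D}{\sigma_{t_i}^2},
\]
using $\E\|Z_{t_i}\|^2=D$. Identity \eqref{e30} then reduces to the cross-term identity
\[
\E\langle s(\hX_{t_i},t_i),\nabla\log\hp_{t_i}(\hX_{t_i})\rangle = -\tfrac{1}{\sigma_{t_i}}\E\langle s(\hX_{t_i},t_i),Z_{t_i}\rangle .
\]

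To prove the cross-term identity, I would use Tweedie's formula in the form
\[
\nabla\log\hp_{t_i}(x) = -\tfrac{1}{\sigma_{t_i}}\E[Z_{t_i}\mid \hX_{t_i}=x].
\]
This follows by differentiating the mixture: $\nabla\varphi_\sigma(x-\alpha X_j) = -\frac{x-\alpha X_j}{\sigma^2}\varphi_\sigma(x-\alpha X_j)$, and dividing by $\hp_{t_i}(x)$ gives exactly the posterior mean of $-(x-\alpha\hX_0)/\sigma^2 = -Z/\sigma$. The tower property then gives $\E\langle s(\hX_{t_i}),\nabla\log\hp_{t_i}(\hX_{t_i})\rangle = -\sigma^{-1}\E\langle s(\hX_{t_i}), Z_{t_i}\rangle$. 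Alternatively, one can write the left side as $\int\langle s,\nabla\hp_{t_i}\rangle\diff x$ and differentiate the mixture directly. Either route avoids integrating by parts against $s$, so only measurability and square integrability of $s$ are needed, not differentiability; this suits ReLU networks. Substituting back yields \eqref{e30}.

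Finally, for the equivalence of minimizers: by \eqref{e30}, for each $i$ the objective in \eqref{sm1} equals the objective in \eqref{sm_1} minus $\E\|\nabla\log\hp_{t_i}\|^2 - D/\sigma_{t_i}^2$. This quantity is independent of $s$ and finite because the mixture score has linear growth. Summing with weights $h_i$ shows the two objectives differ by a constant over $\sS$, hence have the same argmin. The one delicate step is justifying the interchange of differentiation and integration, together with the finiteness of $\E\|\nabla\log\hp_{t_i}\|^2$ for $t_i>0$; both follow from the explicit finite-mixture form with Gaussian tails, using the Cauchy–Schwarz inequality for the cross terms.
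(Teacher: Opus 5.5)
Your proof is correct, but it takes a different route from the paper's.

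\textbf{The paper's route.} It handles the cross term $-2\E\langle s,\nabla\log\hp_{t_i}\rangle=-2\int\langle s,\nabla\hp_{t_i}\rangle\diff x$ by two integrations by parts.
\begin{enumerate}
\item First in $x$, moving the derivative onto $s$ to obtain $2\int(\nabla\cdot s)\,\hp_{t_i}\diff x$, which is Hyv\"arinen's implicit score-matching form.
\item Then, after writing $\hX_{t_i}=\mathsf{m}_{t_i}\hX_0+\sigma_{t_i}Z_{t_i}$, by Gaussian integration by parts (Stein's lemma) in $z$. This converts $\E[(\nabla\cdot s)(\hX_{t_i})]$ back into $\sigma_{t_i}^{-1}\E\langle s(\hX_{t_i}),Z_{t_i}\rangle$.
\end{enumerate}
Completing the square then yields \eqref{e30}.

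\textbf{Your route.} You use Tweedie's representation $\nabla\log\hp_{t_i}(x)=-\sigma_{t_i}^{-1}\E[Z_{t_i}\mid\hX_{t_i}=x]$ together with the tower property. This reaches the same cross-term identity in one step and never differentiates $s$.

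\textbf{Comparison.} The two derivations are the same in substance: in both, the derivative ultimately lands on the Gaussian kernel. Yours needs only $s(\cdot,t_i)\in\fL_2(\hP_{t_i})$. The paper's implicitly requires $s$ to be weakly differentiable, with vanishing boundary terms at infinity in the $x$-integration by parts. That does hold for ReLU networks, since they are Lipschitz with linear growth and $\hp_{t_i}$ has Gaussian tails, but the paper does not justify it. The paper's route does buy the implicit score-matching objective as an intermediate, which yours bypasses.

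\textbf{Further points in your favour.} You make explicit a step the paper leaves implicit: the two objectives differ by the $s$-independent finite constant $\sum_i h_i\bigl(\E\|\nabla\log\hp_{t_i}\|^2-D/\sigma_{t_i}^2\bigr)$, so they share minimizers over any class $\sS$. You also correctly read the noise coefficient inside $s$ in \eqref{e30} as $\sigma_{t_i}=\sqrt{1-e^{-2\int_0^{t_i}\beta_\tau\diff\tau}}$. The displayed statement has a typo there; the paper's own proof uses $\sigma_{t_i}$.

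\textbf{Minor remark.} Your caution about interchanging differentiation and integration is unnecessary here. Since $\hp_{t_i}$ is a finite Gaussian mixture for $t_i>0$, you are only differentiating a finite sum.
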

\begin{proof}
We note that,
\begingroup
\allowdisplaybreaks
\begin{align}
   & \E  \| s(\hX_{t_i}, t_i) - \nabla \log \hp_{t_i}(\hX_{t_i})\|^2 \nonumber\\
    = & \E  \| s(\hX_{t_i}, t_i)\|^2  + \E\| \nabla \log \hp_{t_i}(\hX_{t_i})\|^2 - 2 \E \left\langle s(\hX_{t_i}, t_i), \nabla \log \hp_{t_i}(\hX_{t_i}) \right\rangle \nonumber \\
    = & \E  \| s(\hX_{t_i}, t_i)\|^2  + \E\| \nabla \log \hp_{t_i}(\hX_{t_i})\|^2 - 2 \int \left\langle s(x, t_i), \nabla \log \hp_{t_i}(x) \right\rangle \hat{p}_{t_i}(x) \diff x \nonumber \\
     = & \E  \| s(\hX_{t_i}, t_i)\|^2  + \E\| \nabla \log \hp_{t_i}(\hX_{t_i})\|^2 - 2 \int \left\langle s(x, t_i), \nabla  \hp_{t_i}(x) \right\rangle  \diff x \nonumber\\
     = & \E  \| s(\hX_{t_i}, t_i)\|^2  + \E\| \nabla \log \hp_{t_i}(\hX_{t_i})\|^2 + 2 \int  (\nabla \cdot s)(x, t_i)   \hp_{t_i}(x)   \diff x \label{gip}\\
     = & \E  \| s(\hX_{t_i}, t_i)\|^2  + \E\| \nabla \log \hp_{t_i}(\hX_{t_i})\|^2 \nonumber\\
     & + 2 \int  (\nabla \cdot s)\left(e^{-\int_0^{t_i} \beta_\tau \diff  \tau } \hx_0 + \sqrt{1-e^{- 2\int_0^{t_i} \beta_\tau \diff  \tau}} z_{t_i} , t_i\right)   \diff \hmu(\hx_0)    \diff \gamma_D(z_{t_i}) \nonumber\\
     = & \E  \| s(\hX_{t_i}, t_i)\|^2  + \E\| \nabla \log \hp_{t_i}(\hX_{t_i})\|^2 \nonumber\\
     & + \frac{2}{\sigma_{t_i}^2} \int   \left\langle s\left(e^{-\int_0^{t_i} \beta_\tau \diff  \tau } \hx_0 + \sigma_{t_i} z_{t_i} , t_i\right), \sigma_{t_i} z_{t_i} \right\rangle  \diff \hmu(\hx_0)    \diff \gamma_D(z_{t_i}) \nonumber\\
     = & \E  \| s(\hX_{t_i}, t_i)\|^2  + \E\| \nabla \log \hp_{t_i}(\hX_{t_i})\|^2 \nonumber\\
     & + \frac{2}{\sigma_{t_i}} \int   \left\langle s\left(e^{-\int_0^{t_i} \beta_\tau \diff  \tau } \hx_0 + \sigma_{t_i} z_{t_i} , t_i\right), z_{t_i} \right\rangle   \diff \hmu(\hx_0)    \diff \gamma_D(z_{t_i}) \nonumber\\
     = & \E \left\| s\left(e^{-\int_0^{t_i} \beta_\tau \diff  \tau } \hX_0 + \sqrt{1-e^{- 2\int_0^{t_i} \beta_\tau \diff  \tau}} Z_{t_i} , t_i\right) + \frac{Z_{t_i}}{\sqrt{1-e^{- 2\int_0^{t_i} \beta_\tau \diff  \tau}}}  \right\|^2 \nonumber \\
     & + \E\| \nabla \log \hp_{t_i}(\hX_{t_i})\|^2 - \E \left\| \frac{Z_{t_i}}{\sqrt{1-e^{-2 \int_0^{t_i} \beta_\tau \diff  \tau}}}  \right\|^2 \nonumber\\
     = & \E \left\| s\left(e^{-\int_0^{t_i} \beta_\tau \diff  \tau } \hX_0 + \sqrt{1-e^{- 2\int_0^{t_i} \beta_\tau \diff  \tau}} Z_{t_i} , t_i\right) + \frac{Z_{t_i}}{\sqrt{1-e^{- 2\int_0^{t_i} \beta_\tau \diff  \tau}}}  \right\|^2 \nonumber\\
     & + \E\| \nabla \log \hp_{t_i}(\hX_{t_i})\|^2 - \frac{D}{1-e^{- 2\int_0^{t_i} \beta_\tau \diff  \tau}}  . \nonumber
\end{align}
\endgroup
In the above calculations, \eqref{gip} follows the Gaussian integral by parts \citep[Lemma 7.2.4]{Vershynin_2026}.
\end{proof}
\section{Relations between $d^\star_{p,q}$ and Other Notions of Intrinsic Dimension}
\propone*
 \begin{proof}
     \textbf{Proof of part (a)}: Let,  \[\sA_{p,q}=  \left\{s \in (2p, \infty) : \limsup_{\epsilon \downarrow 0} \frac{\log \sN_{\epsilon}\left(\mu, \epsilon^{\frac{s p q}{(q-p)(s-2p)}}\right)}{\log(1/\epsilon)} \leq s \right\},\] 
     and \[\sB_{p}=  \left\{s \in (2p, \infty) : \limsup_{\epsilon \downarrow 0} \frac{\log \sN_{\epsilon}\left(\mu, \epsilon^{\frac{s p }{s-2p}}\right)}{\log(1/\epsilon)} \leq s \right\}.\] Fix $s \in \sA$. Then,  $\limsup_{\epsilon \downarrow 0} \frac{\log \sN_{\epsilon}\left(\mu, \epsilon^{\frac{s p q}{(q-p)(s-2p)}}\right)}{\log(1/\epsilon)} \leq s$. Since, $q/(q-p) > 1$, 
     \[\log \sN_{\epsilon}\left(\mu, \epsilon^{\frac{s p q}{(q-p)(s-2p)}}\right) \ge \sN_{\epsilon}\left(\mu, \epsilon^{\frac{s p q}{(q-p)(s-2p)}}\right).\]
     Hence, \[\limsup_{\epsilon \downarrow 0} \frac{\log \sN_{\epsilon}\left(\mu, \epsilon^{\frac{s p }{s-2p}}\right)}{\log(1/\epsilon)} \le \limsup_{\epsilon \downarrow 0} \frac{\log \sN_{\epsilon}\left(\mu, \epsilon^{\frac{s p q}{(q-p)(s-2p)}}\right)}{\log(1/\epsilon)}  \leq s. \]
     This implies that $s \in \sB_p$. Hence, $\sA_{p,q} \subseteq \sB_p$, which implies that $d^\ast_p(\mu) = \inf \sB_p \le \inf \sA_{p,q} = d^\star_{p,q}(\mu)$. 

\textbf{Proof of part (b)}: If $q_1 \le q_2$, then, $\frac{s p q_1}{(q_1-p)(s - 2 p)} \ge \frac{s p q_2}{(q_2-p)(s - 2 p)}$, which implies that $\epsilon^{\frac{s p q_1}{(q_1-p)(s - 2 p)}} \le \epsilon^{\frac{s p q_2}{(q_2-p)(s - 2 p)}}$, for all $0 < \epsilon < 1$. Hence, $\sN_{\epsilon}\left(\mu, \epsilon^{\frac{s p q_1}{(q_1-p)(s-2p)}}\right) \ge \sN_{\epsilon}\left(\mu, \epsilon^{\frac{s p q_2}{(q_2-p)(s-2p)}}\right)$. Thus, if $s \in \sA_{p,q_1}$, 
\[\limsup_{\epsilon \downarrow 0} \frac{\log \sN_{\epsilon}\left(\mu, \epsilon^{\frac{s p q_2}{(q_2-p)(s-2p)}}\right)}{\log(1/\epsilon)} \le  \limsup_{\epsilon \downarrow 0} \frac{\log \sN_{\epsilon}\left(\mu, \epsilon^{\frac{s p q_1}{(q_1-p)(s-2p)}}\right)}{\log(1/\epsilon)} \leq s, \]
which implies that $s \in \sA_{p, q_2}$. Hence, $\sA_{p, q_1} \subseteq \sA_{p, q_2} $. The result follows by taking the infimum on both sides. 

\textbf{Proof of part (c)}:
Suppose that $f(p) = \frac{spq}{\left(q-p\right)\left(s-2p\right)}.$ Clearly, $f^\prime(p) = \frac{sq(sq - 2 p^2)}{(q-p)^2 (s-2p)^2} > 0$, if $s > 2p $ and $p <q $. Thus, if $p_1 \le p_2$ and $s > 2p_2$, $\frac{s p_1 q }{(q-p_1)(s - 2 p_1)} \le \frac{s p_1 q}{(q-p_1)(s - 2 p_1)}$, which implies that $\epsilon^{\frac{s p_1 q}{(q-p_1)(s - 2 p_1)}} \ge \epsilon^{\frac{s p_2 q}{(q-p_2)(s - 2 p_2)}}$, for all $0 < \epsilon < 1$. Thus, if $s \in \sA_{p_2, q}$, 
\[\limsup_{\epsilon \downarrow 0} \frac{\log \sN_{\epsilon}\left(\mu, \epsilon^{\frac{s p_1 q}{(q-p_1)(s-2p_1)}}\right)}{\log(1/\epsilon)} \le \limsup_{\epsilon \downarrow 0} \frac{\log \sN_{\epsilon}\left(\mu, \epsilon^{\frac{s p_2 q}{(q-p_2)(s-2p_2)}}\right)}{\log(1/\epsilon)} \le s.\]
The above equation, combined with the fact that $s > 2 p_2 \ge 2 p_1$ implies that $s \in \sA_{p_1, q}$. Thus, $\sA_{p_1, q} \supseteq \sA_{p_2, q} \implies \inf \sA_{p_1, q} \le  \inf \sA_{p_2, q} \implies d^\star_{p_1, q}(\mu) \le d^\star_{p_2,q} (\mu)$.

     \textbf{Proof of part (d)}: This part of the proposition easily follows from the fact that, $\sN_{\epsilon}\left(\mu, \epsilon^{\frac{s p q}{(q-p)(s-2p)}}\right) \le \cN(\epsilon; \operatorname{supp}(\mu), \ell_\infty)$. Thus, for any $s \ge \text{dim}_M(\mu)$, 
     \[\limsup_{\epsilon \downarrow 0} \frac{\log \sN_{\epsilon}\left(\mu, \epsilon^{\frac{s p q}{(q-p)(s-2p)}}\right)}{\log(1/\epsilon)}  \le \limsup_{\epsilon \downarrow 0} \frac{\log \cN(\epsilon; \operatorname{supp}(\mu), \ell_\infty) }{\log(1/\epsilon)}  = \text{dim}_M(\mu) \le s.\]
     Hence, $[\text{dim}_M(\mu), \infty) \subseteq \sA_{p,q} \implies \text{dim}_M(\mu) \ge \inf \sA_{p,q} = d^\star_{p,q}(\mu)$.

     \textbf{Proof of part (e)}: Let $0 < \epsilon < 1$, $s > \overline{\text{dim}}_P(\mu)$  
 and $\tau = \epsilon^{\frac{s p q}{(q-p)(s - 2 p)}}$. $S$ be such that $\mu(S) \ge 1- \tau$ and $\cN(\epsilon;S, \varrho) = \sN_\epsilon(\mu, \tau)$. We let $R = \text{diam}(S) \vee 1$. Let $\{x_1, \dots, x_M\}$ be an optimal $2\epsilon$-packing of $S \cap \text{supp}(\mu)$. By the definition of the upper packing dimension, for any $s > \overline{\text{dim}}_{P}(\mu) $ we can find $r_0 < 1$, such that, 
 \begin{align*}
     & \frac{\log \mu(B(x,r))}{\log r} \le s, \, \forall r \le r_0 \text{ and } x \in \text{supp}(\mu) \\
     \implies &  \mu(B(x,r)) \ge r^s, \, \forall r \le r_0 \text{ and } x \in \text{supp}(\mu).
 \end{align*}
 
This implies that, 
Thus, if $\epsilon \le r_0$, \(1 \ge \mu\left(\cup_{i=1}^M B(x_i, \epsilon)\right) = \sum_{i=1}^M \mu\left( B(x_i, \epsilon)\right) \ge M \epsilon^s \implies M \le \epsilon^{-s}.\) By Lemma \ref{cov_pack}, we know that $\sN_\epsilon(\mu, \tau) =\cN(\epsilon; S, \varrho) \le M \le  \epsilon^{-s}$. Thus,
\[ \limsup_{\epsilon \downarrow 0} \frac{ \log \sN_\epsilon\left(\mu, \epsilon^{\frac{s pq}{(q-p)(s-2 p)}}\right)}{-\log \epsilon}  \le s \implies s \in \sA_{p,q} \implies d^\star_{p,q}(\mu) \le s.\]
Since $d^\star_{p,q}(\mu) \le s$, for all $s > \overline{\text{dim}}_P(\mu)$, we get, $d^\star_{p,q}(\mu)  \le \overline{\text{dim}}_P(\mu)$. The inequality $\overline{\text{dim}}_P(\mu) \le \overline{\text{dim}}_{\text{reg}}(\mu)$ follows from \citet[Theorem 2.1]{fraser2017upper}.

\textbf{Proof of part (f)}: The result easily follows by observing that $d^\star_{p,q}(\mu) \ge d^\ast_p(\mu) \ge d_\ast(\mu)$, where the second inequality follows from \citet[Proposition 2]{weed2019sharp} and $d_\ast(\mu) \ge \underline{dim}_{\text{reg}}(\mu)$, which follows from \citet[Proposition 8]{JMLR:v26:24-0054}.
 \end{proof}
 
\section{Omitted Proofs from Sections \ref{sec_assumptions} and \ref{main results}}
\subsection{Existence of the Reverse Process}
\proptwo*
\begin{proof}
    Note that the forward SDE \eqref{forward} is of the form, 
    \[\diff X_t = b(t, X_t) \diff t + \sigma(t, X_t) \diff W_t, \]
    with $b(t, x) = -\beta_t x$ and $\sigma(t, X_t) = \sqrt{2 \beta_t}$.  Clearly, 
    \begin{align*}
        |b(t, x) - b(t, y)| + |\sigma(t,x) - \sigma(t,y)| = \beta_t |x-y| \le \overline{\beta} |x-y|.
    \end{align*}
    Further, $|b(t,x)| + |\sigma(t,x)| = \beta_t |x| + \sqrt{2\beta_t} \le \overline{\beta} |x| + \sqrt{2\overline{\beta}} \le \max\{\overline{\beta}, \sqrt{2 \overline{\beta}} \}(|x| +1)$. Thus, (H1) of \citep{10.1214/aop/1176991505} is satisfied with $K =  \max\{\overline{\beta}, \sqrt{2 \overline{\beta}} \} $. Further both $b, \sigma \in \mathscr{C}^2$ in $x$ and $a = \sigma^2 \ge 2 \underline{\beta}$. Hence (Hiii) of \citep{10.1214/aop/1176991505} is satisfied. Thus, by Proposition 4.2 of \citep{10.1214/aop/1176991505}, Assumptions (Ai) and (Aii) of \citep{10.1214/aop/1176991505} are satisfied. Hence, by \citet[Theorem~2.3]{10.1214/aop/1176991505}, the reverse process satisfies \eqref{eq_rp}.
\end{proof}

\subsection{Scaling of $\mathsf{M}_{n,x}$}
 \begin{lemma}\label{scaling}
    With probability at least $1-\delta$, $\mathsf{M}_{n,x} \le n^{1/q} \sM_q(\mu) \delta^{-1/q} \vee 1$. Further, $\E \mathsf{M}_{n,x} \le \left(\sqrt{D} n^{1/q} \sM_q(\mu)\right) \vee 1$.
\end{lemma}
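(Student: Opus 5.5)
The high-probability bound comes from a union bound combined with Markov's inequality, and the expectation bound from Jensen's inequality applied to the $q$-th moment of the maximum. Throughout, $\mathsf{M}_{n,x} = \max_{1\le i\le n}\|X_i\|_\infty \vee 1$. We use $\|x\|_\infty \le \|x\|_2$, and we read $\sM_q(\mu)$ as the $q$-th moment in the Euclidean norm, as in Assumption~\ref{a2}.

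\textbf{High-probability part.} Fix $t>0$. By the union bound and Markov's inequality applied to $\|X_i\|^q$,
\[
\prob\Big(\max_{i}\|X_i\|_\infty > t\Big) \le \sum_{i=1}^n \prob(\|X_i\|_2 > t) \le n\, \frac{\sM_q^q(\mu)}{t^q}.
\]
Choose $t = n^{1/q}\sM_q(\mu)\delta^{-1/q}$, so the right-hand side equals $\delta$. Then with probability at least $1-\delta$ we have $\max_i \|X_i\|_\infty \le t$, and taking the maximum with $1$ gives $\mathsf{M}_{n,x}\le t\vee 1$.

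\textbf{Expectation part.} First,
\[
\mathsf{M}_{n,x} \le \max_i\|X_i\|_\infty + 1\cdot\one\{\max_i\|X_i\|_\infty<1\},
\]
or more simply, $\E[\mathsf{M}_{n,x}] \le \E[\max_i\|X_i\|_\infty] \vee 1$ up to the harmless handling of the $\vee 1$. Next, by Jensen's inequality,
\[
\E \max_i\|X_i\|_\infty \le \Big(\E\max_i \|X_i\|_\infty^q\Big)^{1/q} \le \Big(\sum_{i=1}^n \E\|X_i\|_\infty^q\Big)^{1/q} \le n^{1/q}\sM_q(\mu).
\]
The factor $\sqrt D$ in the statement leaves slack. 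That slack covers the case where $\sM_q$ is defined in a norm for which $\|x\|_\infty$ must be compared to it with a constant. It can also absorb the $\vee1$ step, via
\[
\E[\max(Y,1)] \le \E Y + 1 \le 2\,(\E Y \vee 1) \le \sqrt{D}\,(\ldots)\vee 1
\]
whenever $D\ge 4$. Otherwise one can use $\E\max(Y,1)\le (\E \max(Y,1)^q)^{1/q}\le (1+n\sM_q^q)^{1/q}$.

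\textbf{Main obstacle.} The only delicate point is exchanging the $\vee 1$ with the expectation, since $\E[Y\vee1]$ can exceed $\E Y\vee 1$. I would handle it by bounding
\[
\E[Y\vee 1] \le \big(\E[(Y\vee1)^q]\big)^{1/q} \le \big(1+n\sM_q^q(\mu)\big)^{1/q},
\]
and then absorbing the additive term into constants through the $\sqrt D$ factor (or into a $\lesssim$ if only order matters). Everything else is routine.
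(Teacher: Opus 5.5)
\textbf{The gap is in the expectation part.} Your high-probability argument (union bound plus Markov, with $\|x\|_\infty\le\|x\|_2$) is correct and is the paper's argument. Your estimate $\E\max_i\|X_i\|_\infty\le n^{1/q}\sM_q(\mu)$ is also correct and matches the paper, whose extra $\sqrt{D}$ is pure slack. The problem is the passage from this estimate to $\E\mathsf{M}_{n,x}$. Your chain $\E Y+1\le 2(\E Y\vee 1)\le \sqrt{D}(\cdots)\vee 1$ fails at the last step. The factor $\sqrt{D}$ multiplies only the first argument of the maximum, so that step needs $\sqrt{D}\,n^{1/q}\sM_q(\mu)\ge 2$, and it breaks exactly when the moment is small. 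Your fallback $(1+n\sM_q^q(\mu))^{1/q}$ has the same defect: it exceeds $1$, and hence exceeds the right-hand side, whenever $\sqrt{D}\,n^{1/q}\sM_q(\mu)\le 1$. ``Absorbing into constants'' is not available here, because the inequality has explicit constants.

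\textbf{The claim as written is false, so no argument can close this.} Let $X\sim\mu$ equal $2e_1$ with probability $p$ and $0$ otherwise, where $0<p\le (2\sqrt{D})^{-q}/n$. Then $\sqrt{D}\,n^{1/q}\sM_q(\mu)=2\sqrt{D}\,(np)^{1/q}\le 1$, so the right-hand side equals $1$. On the other hand, $\mathsf{M}_{n,x}\in\{1,2\}$, so $\E\mathsf{M}_{n,x}=2-(1-p)^n>1$. Convexity of $y\mapsto y\vee 1$ gives $\E[Y\vee 1]\ge \E Y\vee 1$, which is the wrong direction. The paper's own proof has the same hole: it bounds only $\E\max_i\|X_i\|_\infty$ and silently drops the $\vee 1$.

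\textbf{What you can prove instead.} Taking expectations in your first display gives $\E\mathsf{M}_{n,x}\le n^{1/q}\sM_q(\mu)+1$. Alternatively, the stated bound holds with $\mathsf{M}_{n,x}$ replaced by $\max_i\|X_i\|_\infty$. Either version suffices for the lemma's only use, which is that $\mathsf{M}_{n,x}$ grows like $O(n^{1/q})$. State and prove that corrected version rather than claiming that $\sqrt{D}$ absorbs the additive $1$.
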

\begin{proof}
    We note that, for $t \ge 1$,
    \begin{align*}
         \prob(\mathsf{M}_{n,x} -1 > t) 
        =  \prob\left(\cup_{i=1}^n \|X_i\|_\infty > t\right)
        \le  n \prob\left(\|X_1\|_\infty > t\right)
        \le & n \prob\left(\|X_1\|_2 > t\right)\\
        \le & n \frac{\E \|X_1\|^q}{t^q}\\
        = & \frac{n \sM_q^q(\mu)}{t^q}.
    \end{align*}
    Choosing $t = n^{1/q} \sM_q(\mu) \delta^{-1/q} \vee 1$ proves the the first part of the Lemma. Moreover,
    \begin{align*}
        \E \max_{1 \le i \le n} \|X_i\|_\infty \le  \sqrt{D} \E \max_{1 \le i \le n} \|X_i\| 
        \le & \sqrt{D} \E \left(\sum_{i=1}^n \|X_i\|^q \right)^{1/q}\\
        \le & \sqrt{D} \left(\sum_{i=1}^n \E  \|X_i\|^q \right)^{1/q} \\
        = & \sqrt{D} n^{1/q} \sM_q(\mu).
    \end{align*}
\end{proof}

\section{Proof of the Main Result}
\subsection{Proof of Theorem~\ref{mainthm}}
\mainthm*
\begin{proof}
For notational simplicity, let, $M = \max_{1 \le i \le n}\|X_i\|_\infty$. Fix $d > d^\star_{p,q}(\mu)$. From Lemma~\ref{lem_oracle_2}, we note that, 
\begingroup
\allowdisplaybreaks
\begin{align}
    & \E \fW_p(\mu, \sT_R(\hQ_{T-\delta_0}(s))) \nonumber\\
    \le & \E \fW_p(\mu, \hmu_n) + R \sqrt{D} 2^{\frac{p-1}{2p}} \E \kl(\hP_T, \gamma_D)^{\frac{1}{2p}} \nonumber\\
    & + R (2\overline{\beta}^2 )^{\frac{1}{2p}} \left(\sum_{i=0}^{N-1} h_i \E \|s(\hX_{T-t_i}, T-t_i) - \nabla \log \hp_{T-t_i}(\hX_{T-t_i})\|^2 \right)^{1/2p} \nonumber\\
        & + R (2\overline{\beta}^2 )^{\frac{1}{2p}} \left(\sum_{i=0}^{N-1}\int_{t_i}^{t_{i+1}} \E\|\nabla \log \hp_{T-t_i}(\hX_{T-t_i}) - \nabla \log \hp_{T-t}(\hX_{T-t})\|^2 \diff  t\right)^{1/2p} \nonumber\\
        & +  \overline{\beta}\delta_0 \E \sM_p(\hmu_n) + (\overline{\beta}\delta_0)^{1/2} \sM_p(\gamma_D)  + 2^{(q-1)/p} \left( \E \sM_p^q(\hmu_n) +   \sM_q^q(\gamma_D) \right)^{1/p} R^{-(q-p)/p} \nonumber\\
        \le & \E \fW_p(\mu, \hmu_n) + R \sqrt{D} 2^{\frac{p-1}{2p}} \E \left(\exp\left(-2\underline{\beta}T\right) (D + \sM_2^2(\hmu_n))\right)^{\frac{1}{2p}} \label{ea42}\\
        & + R (2\overline{\beta}^2 )^{\frac{1}{2p}} \left(\sum_{i=0}^{N-1} h_i \E \|s(\hX_{T-t_i}, T-t_i) - \nabla \log \hp_{T-t_i}(\hX_{T-t_i})\|^2 \right)^{1/2p} \nonumber\\
        & + R (2\overline{\beta}^2 )^{\frac{1}{2p}} \left(\sum_{i=0}^{N-1}\int_{t_i}^{t_{i+1}} \E\|\nabla \log \hp_{T-t_i}(\hX_{T-t_i}) - \nabla \log \hp_{T-t}(\hX_{T-t})\|^2 \diff  t\right)^{1/2p} \nonumber\\
        & +  \overline{\beta}\delta_0  \sM_p(\mu) + (\overline{\beta}\delta_0)^{1/2} \sM_p(\gamma_D) + 2^{(q-1)/p} \left( \sM_q^q(\mu) +   \sM_q^q(\gamma_D) \right)^{1/p} R^{-(q-p)/p} \nonumber\\
        \le & \E \fW_p(\mu, \hmu_n) + R \sqrt{D} 2^{\frac{p-1}{2p}} \exp\left(-\underline{\beta}T/p\right) (D + \sM_2^2(\mu))^{\frac{1}{2p}} \label{ea4}\\
        & + R (2\overline{\beta}^2 )^{\frac{1}{2p}} \left(\sum_{i=0}^{N-1} h_i \E \|s(\hX_{T-t_i}, T-t_i) - \nabla \log \hp_{T-t_i}(\hX_{T-t_i})\|^2 \right)^{1/2p} \nonumber\\
        & + R (2\overline{\beta}^2 )^{\frac{1}{2p}} \left(\sum_{i=0}^{N-1}\int_{t_i}^{t_{i+1}} \E\|\nabla \log \hp_{T-t_i}(\hX_{T-t_i}) - \nabla \log \hp_{T-t}(\hX_{T-t})\|^2 \diff  t\right)^{1/2p} \nonumber\\
        & +  (\overline{\beta}\delta_0)  \sM_p^p(\mu) + (\overline{\beta}\delta_0)^{1/2} \sM_p(\gamma_D) + 2^{(q-1)/p} \left( \sM_q^q(\mu) +   \sM_q^q(\gamma_D) \right)^{1/p} R^{-(q-p)/p}. \label{e32}
\end{align}
\endgroup
In the above calculations, Equation \eqref{ea42} follows from Lemma~\ref{kl_bd} and Equation \eqref{ea4} follows from Jensen's inequality. We take $\kappa \asymp  n^{-\frac{2 (1+p(q-p))}{d(q-p)}} $. From Lemma~\ref{lem5}, this choice of $\kappa$ ensures that $N \lesssim n^{\frac{2 (1+p(q-p))}{d(q-p)}} \log n$ and \[\sum_{i=0}^{N-1} \frac{(t_{i+1} - t_i)^2}{\sigma^4_{t_i}}  \lesssim n^{-\frac{2 (1+p(q-p))}{d(q-p)}} \log n.\] Hence, from Lemma~\ref{lem4},
\begin{equation}
    \sum_{i=0}^{N-1}\int_{t_i}^{t_{i+1}} \beta^2_{T-t} \E \|\nabla \log \hp_{T-t_i}(X_{T-t_i}) - \nabla \log \hp_{T-t}(X_{T-t})\|^2 \diff  t  \lesssim n^{-\frac{2 (1+p(q-p))}{d(q-p)}} \log n. \label{e31}
\end{equation}
We first prove the inequality for $\hat{s}_n$, i.e., Equation \eqref{s_n}.
To show this, we note that, from Theorem~\ref{approx_thm}, if $\sS = \cR\cN( L, W , B)$, with $W \lesssim \mathsf{M}_{n,x}^{D} n^{\frac{D+2 + (Dp + D + 2p)}{d(q-p)}} \left(\log^2 n + \log (M \vee 1) \log n\right)$, $L \lesssim \log (1/\epsilon) \asymp \log n + \log \mathsf{M}_{n,x}$ and $B\lesssim n^{\frac{1+(p+3)(q-p)}{d(q-p)}}  (\|X_i\|_\infty + 1)$,
\begin{align}
     \sum_{i=0}^{N-1} h_i  \|\hat{s}_n(\cdot, T-t_i) - \nabla \log \hp_{T-t_i}(\cdot)\|^2_{\fL_2(\hP_{T-t_i})}
    = & \inf_{s\in \sS} \sum_{i=0}^{N-1} h_i  \|s(\cdot, T-t_i) - \nabla \log \hp_{T-t_i}(\cdot)\|^2_{\fL_2(\hP_{T-t_i})} \nonumber\\
    \lesssim & \ n^{-\frac{2(1+p(q-p))}{d(q-p)}} \log n. \label{ea5}
\end{align}
Finally, we take $\delta_0 \asymp n^{-2/d}$, and
  \(R = n^{\frac{1}{dp(q-p)}} 2^{\frac{q-1}{q-p}}\left( \sM_q^q(\mu) +   \sM_q^q(\gamma_D)\right)^{\frac{1}{q-p}}\), and 
\begin{align*}
    T \ge & \frac{p}{\underline{\beta}}\left( \log R + \frac{1}{d}\log n + \frac{1}{2} \log D + \frac{1}{2p} \log(D + \sM_2^2(\mu))\right) \\
    = & \frac{p}{\underline{\beta}}\bigg( \frac{1 + p(q-p)}{dp(q-p)}\log n + \frac{1}{2} \log D + \frac{1}{q-p} \log( \sM_q^q(\mu) + \sM_q^q(\gamma_D)) \\
    & \quad + \frac{1}{2p} \log(D + \sM_2^2(\mu)) + \frac{q-1}{q-p} \log 2 \bigg).
\end{align*}
 Plugging in the values of $T$, $R$, $\delta_0$ and the bounds \eqref{e31} and \eqref{ea5} in Equation \eqref{e32}, we observe that,
\begin{align*}
   \E \fW_p(\mu, \sT_R(\hQ_{T-\delta_0}(s))) \lesssim & \
   \E \fW_p(\mu, \hmu_n) + n^{-1/d} \operatorname{poly-log}(n)\\
   \lesssim & \ n^{-1/d} \operatorname{poly-log}(n),
\end{align*}
where the final inequality follows from Corollary~\ref{cor4}. 

To show \eqref{s_mc}, from Lemma~\ref{lem_21}, if  \[m_i \ge C \frac{\sigma_{t_i}^4}{h_i^2} n^{\frac{D+2 + (Dp + D + 2p)}{d(q-p)} + \frac{4(1+p(q-p))}{d(q-p)}} \operatorname{poly}(\log n , \max_{i \in [n]}\|X_i\|_\infty \vee 1),\] 
then,
\begin{align*}
     \sum_{i=0}^{N-1} h_i  \E \| \hat{s}^{\mathrm{mc}}_n(\cdot, t_i) - \nabla \log \hp_{t_i}(\cdot)\|^2_{\fL_2(\hP_{t_i})} 
    \lesssim & \ \inf_{s \in \sS}\sum_{i=0}^{N-1} h_i  \| s(\cdot, t_i) - \nabla \log \hp_{t_i}(\cdot)\|^2_{\fL_2(\hP_{t_i})} + n^{-\frac{2(1+p(q-p))}{d(q-p)}}\\
    \lesssim & \ n^{-\frac{2(1+p(q-p))}{d(q-p)}} \log n + n^{-\frac{2(1+p(q-p))}{d(q-p)}}\\
    \lesssim & \ n^{-\frac{2(1+p(q-p))}{d(q-p)}}.
\end{align*}
With the above choices of $R$, $\delta_0$ and $T$, \eqref{s_mc} follows from a similar calculation.
\end{proof}

\subsection{Generalization Bounds for the Sample Score-matching Objective}
This section proves generalization bounds for the sample score matching objective \eqref{sm2}.
\begin{lemma}\label{lem_21}
Suppose that $s_0 \in \sS$. There exists a constant $C$ an $n _0$, such that if  \[m_i \ge C \frac{\sigma_{t_i}^4}{h_i^2} n^{\frac{D+2 + (Dp + D + 2p)}{d(q-p)} + \frac{4(1+p(q-p))}{d(q-p)}} \operatorname{poly}(\log n , \max_{1 \le i \le n}\|X_i\|_\infty \vee 1),\] and $n \ge n_0$, then, 
    \[\sum_{i=0}^{N-1} h_i  \E \| \hat{s}^{\mathrm{mc}}_n(\cdot, t_i) - \nabla \log \hp_{t_i}(\cdot)\|^2_{\fL_2(\hP_{t_i})} \lesssim \sum_{i=0}^{N-1} h_i  \| s_0(\cdot, t_i) - \nabla \log \hp_{t_i}(\cdot)\|^2_{\fL_2(\hP_{t_i})} + n^{-\frac{2(1+p(q-p))}{d(q-p)}}.  \]
\end{lemma}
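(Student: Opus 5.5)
This is a standard empirical-process argument for a least-squares estimator. Throughout we condition on the data $X_1,\dots,X_n$, so that $\hmu_n$, $\hP_{t_i}$ and $\mathsf{M}_{n,x}$ are fixed; the only randomness left is the Monte Carlo draws.

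\textbf{Step 1: a common loss.} For $s \in \sS$ define the per-sample loss
\[
\ell_s(x_0,z) = \sum_{i=0}^{N-1} h_i \left\| s(e^{-A(0,t_i)}x_0 + \sigma_{t_i} z, t_i) + z/\sigma_{t_i}\right\|^2 .
\]
Write $\mathcal{L}(s)$ for its population risk and $\widehat{\mathcal{L}}_m(s)$ for the Monte Carlo average in \eqref{sm2}. By Lemma~\ref{stein}, $\mathcal{L}(s) - \mathcal{L}(s^\star)$ equals the weighted score error $\sum_i h_i \|s(\cdot,t_i)-\nabla\log\hp_{t_i}\|^2_{\fL_2(\hP_{t_i})}$, up to an additive constant that does not depend on $s$. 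Since $\hat{s}^{\mathrm{mc}}_n$ minimizes $\widehat{\mathcal{L}}_m$ over $\sS$, we get the basic inequality
\[
\mathcal{E}(\hat s^{\mathrm{mc}}_n) \le \mathcal{E}(s_0) + \big[(\mathcal{L}-\widehat{\mathcal{L}}_m)(\hat s^{\mathrm{mc}}_n) - (\mathcal{L}-\widehat{\mathcal{L}}_m)(s_0)\big],
\]
where $\mathcal{E}$ denotes the weighted score error. It therefore suffices to bound $\E\sup_{s\in\sS}|(\mathcal{L}-\widehat{\mathcal{L}}_m)(s)-(\mathcal{L}-\widehat{\mathcal{L}}_m)(s_0)|$ by $n^{-2(1+p(q-p))/(d(q-p))}$.

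\textbf{Step 2: truncation and boundedness.} The term $z/\sigma_{t_i}$ is unbounded, so we handle it separately.
\begin{itemize}
\item Each network output is bounded by $\|s\|_\infty \lesssim (WB)^{L}$-type constants, which are poly in $n$ and $\mathsf{M}_{n,x}$. We would rather clip networks at level $\lesssim \mathsf{M}_{n,x}/\sigma^2_{t_i}+\sqrt{\log n}/\sigma_{t_i}$, since the true score obeys this bound.
\item We work on the event $\|z\|_\infty \le \sqrt{2\log(nmD)}$. Its complement has Gaussian-tail probability, and its contribution is negligible by Cauchy--Schwarz.
\item On this event, the time-$i$ summand is bounded by roughly $h_i\,\mathrm{poly}(\log n,\mathsf{M}_{n,x})/\sigma_{t_i}^2$.
\end{itemize}

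\textbf{Step 3: uniform deviation, one time point at a time.} Because the MC samples at different $t_i$ are independent with sizes $m_i$, we control each time separately:
\begin{itemize}
\item Apply a covering-number bound for $\cR\cN(L,W,B)$, namely $\log\cN(\epsilon) \lesssim WL\log(WLB/\epsilon)$, which gives a pseudo-dimension-type bound.
\item Combine it with Hoeffding's inequality (or a Rademacher/Dudley bound) and a union bound over the cover.
\item This yields a deviation of order $\frac{h_i}{\sigma_{t_i}^2}\sqrt{WL\log(\cdot)/m_i}\,\mathrm{poly}(\log n,\mathsf{M}_{n,x})$.
\end{itemize}

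\textbf{Step 4: summing over times.} Summing over $i$ and inserting the stated lower bound on $m_i$ makes each term at most $\mathrm{poly}\cdot n^{-2(1+p(q-p))/(d(q-p))}/N$. Using $N \lesssim n^{2(1+p(q-p))/(d(q-p))}\log n$ from Lemma~\ref{lemdis}, we absorb the extra powers: the factor $\sigma^4/h^2$ in $m_i$ is exactly what cancels the squared prefactor $h_i/\sigma_{t_i}^2$ under the square root. The poly-log and $\mathsf{M}_{n,x}$ factors are absorbed into $\phi$, and the $W$ exponent appears in the exponent of $m_i$. This step is where $n\ge n_0$ and the constant $C$ enter.

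The main obstacle is Step 2, the unboundedness of the loss in $z$ and in $x_0$. We must make sure the clipped network class still contains an approximant as good as $s_0$ (or else keep $s_0$ unclipped and bound $\sS$ by its sup-norm), and that the truncation-event bias is $o(n^{-2(1+p(q-p))/(d(q-p))})$. Our first attempt would be to use the a.s. sup-norm bound of ReLU networks in terms of $L,W,B$ and $\mathsf{M}_{n,x}$, accepting the resulting polynomial factors into $\phi$. This avoids the need for a fast-rate (localized) argument, because the required $m_i$ is already chosen large enough to make even the slow $1/\sqrt{m_i}$ rate suffice.
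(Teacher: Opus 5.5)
Your skeleton is the paper's. You compare the Monte Carlo minimizer with $s_0$, concentrate the empirical loss separately at each $t_i$, and take a union bound over an $\ell_\infty$-cover of $\cR\cN(L,W,B)$ (Lemma~\ref{lem_nakada}) and over the $N$ times. You accept the slow $m_i^{-1/2}$ rate, and let the factor $\sigma_{t_i}^4/h_i^2$ in $m_i$, together with the partition bounds (Lemmas~\ref{lemdis} and~\ref{lem5}), make the sum over $i$ small. The paper uses a $\psi_1$-Bernstein bound in high probability and then integrates with a tiny failure probability $\delta$; you truncate the Gaussian noise and use Hoeffding. That difference is cosmetic. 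Your additive basic inequality is actually the better form. The $s$-independent term $\sum_i h_i\bigl(D/\sigma_{t_i}^2-\E\|\nabla\log\hp_{t_i}(\hX_{t_i})\|^2\bigr)$ from Lemma~\ref{stein} cancels exactly in your version. The paper's chain instead applies $(a+b)^2\le 2a^2+2b^2$ and $\lesssim$ to the denoising loss itself, so it leaves a constant multiple of this term, which does not vanish with $n$.

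The step that does not go through is Step 2, the bound on network outputs, and your unclipped fallback fails quantitatively. With $L\asymp\log n+\log\mathsf{M}_{n,x}$ and $B$ polynomial in $n$, $\cR\cN(L,W,B)$ contains a width-one chain with all weights equal to $B$, which outputs $B^{L}x_1$ for $x_1\ge 0$. So $\sup_{s\in\sS}\|s(\cdot,t_i)\|_\infty$ on your truncated domain is $\exp(\Theta(\log^2 n))=n^{\Theta(\log n)}$, not polynomial. A Hoeffding bound with a range of that size forces $m_i\ge n^{\Theta(\log n)}$, which cannot be absorbed into $\operatorname{poly}(\log n,\mathsf{M}_{n,x})$. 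The same factor defeats your Cauchy--Schwarz control of the complement of $\{\|z\|_\infty\le\sqrt{2\log(nmD)}\}$, whose probability is only polynomially small.

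Clipping at the time-dependent level you propose does repair the concentration step. The time-$i$ loss is then of size $\mathsf{M}_{n,x}^2/\sigma_{t_i}^4$ rather than $1/\sigma_{t_i}^2$, but the partition still gives $\sum_i h_i^2\sigma_{t_i}^{-6}\lesssim\kappa(\delta_0^{-1}+T)\to 0$, so the stated $m_i$ suffice. However, clipping changes the estimator. The lemma is about the minimizer over the unclipped $\cR\cN(L,W,B)$, and a slow-rate uniform deviation argument says nothing about a minimizer that is huge on a set of small $\hP_{t_i}$-mass that all $m_i$ samples miss.

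To close the argument, put the clipping into the definition of $\sS$. Then check, as you anticipate, that clipping the approximant of Theorem~\ref{approx_thm} alters its error only off a set of negligible mass. Note that the paper's own proof has the same hole. It asserts $\bigl\|\|s(\hX_{t_i},t_i)\|^2\bigr\|_{\psi_1}\lesssim M/\sigma_{t_i}^2$ for every $s\in\sS$, and bounds the loss on the failure event the same way. That is exactly the output-scale bound that only such a clipping supplies.
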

\begin{proof}
For notational simplicity, let, $M = \max_{1 \le i \le n}\|X_i\|_\infty$. Take $\epsilon = n^{-\frac{1+p(q-p)}{d(q-p)}}$ and $\delta \asymp N^{-1}(M
\vee 1)^{-1}\delta_0^{-1} \epsilon^2$. 
We choose \[m_i \ge (M \vee 1)^2
\frac{\sigma_{t_i}^4 \log(2 N \left( 1+ \cN\left(\epsilon; \sS, \ell_\infty\right) \right)/\delta)}{c h_i^2 } \cdot \frac{\epsilon^2}{N}. \]
Thus, it is enough to choose 
\begin{align}
    m_i \ge & \frac{\sigma_{t_i}^4 \mathsf{M}_{n,x}^2}{c h_i^2 } \left(\log N + \log\left(\frac{2}{\delta}\right) + W \log \left( 2LB^L(W + 1)^L n^{-\frac{(1+p(q-p))}{d(q-p)}} \right) \right) \times n^{\frac{4p(1+q-p)}{d(q-p)}} \label{ea14}\\
    \asymp & \frac{\sigma_{t_i}^4}{h_i^2} n^{\frac{D+2 + (Dp + D + 2p)}{d(q-p)} + \frac{4(1+p(q-p))}{d(q-p)}} \operatorname{poly}(\log n , \mathsf{M}_{n,x}).
\end{align}
Here, we use Lemma~\ref{lem_nakada} in Equation \eqref{ea14}. Hence from Lemma~\ref{lem_22}, with probability at least $1-\delta$, 
\begin{align}
     \sum_{i=0}^{N-1} h_i  \| \hat{s}^{\mathrm{mc}}_n(\cdot, t_i) - \nabla \log \hp_{t_i}(\cdot)\|^2_{\fL_2(\hP_{t_i})} 
    \lesssim  \sum_{i=0}^{N-1} h_i  \| s_0(\cdot, t_i) - \nabla \log \hp_{t_i}(\cdot)\|^2_{\fL_2(\hP_{t_i})} + n^{-\frac{2(1+p(q-p))}{d(q-p)}}. \label{e331}
\end{align}
Suppose that $a_1$ be the constant that honors the inequality in \eqref{e331}, i.e., 
\begin{align}
      \sum_{i=0}^{N-1} h_i  \| \hat{s}^{\mathrm{mc}}_n(\cdot, t_i) - \nabla \log \hp_{t_i}(\cdot)\|^2_{\fL_2(\hP_{t_i})}
    \le   a_1 \sum_{i=0}^{N-1} h_i  \| s_0(\cdot, t_i) - \nabla \log \hp_{t_i}(\cdot)\|^2_{\fL_2(\hP_{t_i})} + a_1 n^{-\frac{2(1+p(q-p))}{d(q-p)}}. \label{e33}
\end{align}
Let, $\cE$ denote the event that \eqref{e33} holds. This implies that 
\begingroup
\allowdisplaybreaks
\begin{align}
   & \sum_{i=0}^{N-1} h_i  \E \| \hat{s}^{\mathrm{mc}}_n(\cdot, t_i) - \nabla \log \hp_{t_i}(\cdot)\|^2_{\fL_2(\hP_{t_i})} \nonumber\\
   = & \E \Bigg[\left(\sum_{i=0}^{N-1} h_i  \| \hat{s}^{\mathrm{mc}}_n(\cdot, t_i) - \nabla \log \hp_{t_i}(\cdot)\|^2_{\fL_2(\hP_{t_i})}\right)  \cdot \one\{\cE\} \Bigg]  \nonumber\\
   & + \E \Bigg[\left(\sum_{i=0}^{N-1} h_i  \| \hat{s}^{\mathrm{mc}}_n(\cdot, t_i) - \nabla \log \hp_{t_i}(\cdot)\|^2_{\fL_2(\hP_{t_i})}\right) \cdot \one\{\cE^\mathsf{C}\} \Bigg] \nonumber\\
   \le & a_1 \sum_{i=0}^{N-1} h_i  \| s_0(\cdot, t_i) - \nabla \log \hp_{t_i}(\cdot)\|^2_{\fL_2(\hP_{t_i})} + a_1 n^{-\frac{2(1+p(q-p))}{d(q-p)}} + \sum_{i=0}^{N-1}\frac{\mathsf{m}_t M }{\sigma_{t_i^2}} \delta \nonumber\\
   \lesssim & \sum_{i=0}^{N-1} h_i  \| s_0(\cdot, t_i) - \nabla \log \hp_{t_i}(\cdot)\|^2_{\fL_2(\hP_{t_i})} + n^{-\frac{2(1+p(q-p))}{d(q-p)}}. 
\end{align}
\endgroup
\end{proof}

\begin{lemma}\label{lem_22}
    Suppose that $0 < \delta, \epsilon < 1$ and $s_0 \in \sS$. Then, if $m_i\ge \frac{1}{c}\log(2/\delta)$, with probability at least $1-\delta$, 
    \begin{align}
    & \sum_{i=0}^{N-1} h_i  \| \hat{s}^{\mathrm{mc}}_n(\cdot, t_i) - \nabla \log \hp_{t_i}(\cdot)\|^2_{\fL_2(\hP_{t_i})} \nonumber \\
    \lesssim &  \sum_{i=0}^{N-1} h_i  \| s_0(\cdot, t_i) - \nabla \log \hp_{t_i}(\cdot)\|^2_{\fL_2(\hP_{t_i})} + M \sum_{i=0}^{N-1} \frac{h_i}{\sigma_{t_i}^2}\sqrt{\frac{\log(N \left( 1+ \cN\left(\epsilon; \sS, \ell_\infty\right) \right)/\delta)}{m_i}}
     + \epsilon^2.
\end{align}
Here $c$ is the same constant as in Lemma~\ref{lemi3}.
\end{lemma}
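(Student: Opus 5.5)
This is a standard oracle inequality for an empirical risk minimizer, proved by uniform concentration over a finite $\epsilon$-net of $\sS$. For each $i$ write
\[
\ell_i(s;x_0,z) = \Bigl\| s\bigl(e^{-\int_0^{t_i}\beta_\tau \diff\tau} x_0 + \sigma_{t_i} z, t_i\bigr) + z/\sigma_{t_i}\Bigr\|^2 .
\]
Let $\mathcal{R}(s) = \sum_i h_i \E\,\ell_i(s)$ be the population risk and $\widehat{\mathcal{R}}(s)$ its Monte Carlo counterpart in \eqref{sm2}. By Lemma~\ref{stein}, $\mathcal{R}(s) - \mathcal{R}(s') = \sum_i h_i\bigl(\|s(\cdot,t_i)-\nabla\log\hp_{t_i}\|^2_{\fL_2(\hP_{t_i})} - \|s'(\cdot,t_i)-\nabla\log\hp_{t_i}\|^2_{\fL_2(\hP_{t_i})}\bigr)$, since the additive constants $\E\|\nabla\log\hp_{t_i}\|^2 - D/\sigma_{t_i}^2$ cancel. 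The target quantity therefore equals $\mathcal{R}(\hat s^{\mathrm{mc}}_n) - \mathcal{R}(s_0)$ plus the approximation term of $s_0$. I would decompose
\[
\mathcal{R}(\hat s^{\mathrm{mc}}_n)-\mathcal{R}(s_0) = \bigl[\mathcal{R}(\hat s^{\mathrm{mc}}_n)-\widehat{\mathcal{R}}(\hat s^{\mathrm{mc}}_n)\bigr] + \bigl[\widehat{\mathcal{R}}(\hat s^{\mathrm{mc}}_n)-\widehat{\mathcal{R}}(s_0)\bigr] + \bigl[\widehat{\mathcal{R}}(s_0)-\mathcal{R}(s_0)\bigr].
\]
The middle bracket is $\le 0$ by definition of the minimizer, so it suffices to bound $\sup_{s\in\sS}|\mathcal{R}(s)-\widehat{\mathcal{R}}(s)|$.

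\textbf{Step 1: discretization.} Take a minimal $\epsilon$-cover $\cC(\epsilon;\sS,\ell_\infty)$. For $s$ and its nearest net point $s'$, the loss difference is $\langle s-s', s+s'+2z/\sigma_{t_i}\rangle$. I would control it by $\epsilon$ times a quantity of size $M + \|z\|/\sigma_{t_i}$, using that network outputs are clipped or bounded on the relevant domain; the score of a Gaussian mixture centered at points of norm at most $M$ is bounded by roughly $M/\sigma_{t_i}^2$ plus a Gaussian term. Cross terms of this type are where the $\epsilon^2$ and $\epsilon$-dependent errors enter. If cross terms of order $\epsilon$ rather than $\epsilon^2$ arise, I would absorb them via $2ab\le a^2+b^2$ into the $\lesssim$ constant, splitting them between the risk of $\hat s$ and the $\epsilon^2$ term.

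\textbf{Step 2: concentration on the net.} For each fixed net element and each $i$, $\ell_i$ is sub-exponential: it is a squared norm of a bounded part plus the Gaussian $z/\sigma_{t_i}$. Its $\psi_1$-norm is of order $M/\sigma_{t_i}^2$ after centering, since by Lemma~\ref{stein} the pure-noise $\|z\|^2/\sigma_{t_i}^2$ contribution is common to all $s$ and cancels in differences. I would therefore apply Bernstein's inequality (the role of Lemma~\ref{lemi3}, with constant $c$) to the differences $\ell_i(s)-\ell_i(s_0)$. This gives, whenever $m_i \ge c^{-1}\log(2/\delta)$ so that the sub-Gaussian regime of Bernstein dominates, a deviation
\[
\lesssim \frac{M}{\sigma_{t_i}^2}\sqrt{\log(2/\delta')/m_i}.
\]
A union bound over the $N$ time indices and the $1+\cN(\epsilon;\sS,\ell_\infty)$ functions (the net plus $s_0$), with $\delta' = \delta/(N(1+\cN))$, then yields the middle term after weighting by $h_i$ and summing.

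\textbf{Main obstacle.} The delicate step is getting a uniform sub-exponential bound scaling as $M/\sigma_{t_i}^2$ for the centered loss differences, given that $\sS$ and the noise are unbounded. I would handle it by working with differences relative to $s_0$ so the $\|z\|^2/\sigma^2$ term cancels. The remaining cross term $\langle s-s_0, z\rangle/\sigma_{t_i}$ is Gaussian given $x_0$, with scale set by the sup-norm of network outputs on the domain. That sup-norm is bounded by the weight bound and by $M$ through the architecture (Theorem~\ref{approx_thm} gives outputs of the order of the score bound). If that fails, I would first clip network outputs at level $\asymp M/\sigma_{t_i}^2$, which does not increase the approximation error of $s_0$.
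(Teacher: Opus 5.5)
Your skeleton matches the paper's: Lemma~\ref{lemi3} for each fixed function and time index, a union bound over the $N$ indices and over an $\ell_\infty$-net of $\sS$ together with $s_0$, and a net-approximation step for $\hat s:=\hat s^{\mathrm{mc}}_n$. Where you depart from it is that you work with the excess risk $\mathcal{R}(s)-\mathcal{R}(s_0)$, concentrate the differences $\ell_i(s)-\ell_i(s_0)$, and use minimality of $\hat s$ itself. This is the better route.

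The paper concentrates the raw losses and passes from $\hat s$ to its net neighbour $\tilde s$ via $\|a+b\|^2\le 2\|a\|^2+2\|b\|^2$. It then uses $\widehat{\mathcal{R}}(\tilde s)\le\widehat{\mathcal{R}}(s_0)$, which is minimality of the wrong function. Moreover, a multiplicative constant on the total risk does not pass through Lemma~\ref{stein}. It leaves a multiple of the noise level $\sum_i h_i\bigl(D/\sigma_{t_i}^2-\E\|\nabla\log\hp_{t_i}(\hX_{t_i})\|^2\bigr)$ uncancelled, and that quantity does not decay with $n$. Your exact cancellation avoids both problems.

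Two steps of your plan nonetheless fail to give the bound as stated. The first is the envelope in Step~2.

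\begin{itemize}
\item Nothing bounds the outputs of an arbitrary $s\in\cR\cN(L,W,B)$ by $M/\sigma_{t_i}^2$. Theorem~\ref{approx_thm} concerns only the one network it constructs, and that network itself contains the unbounded part $-x/\sigma_{t_i}^2$. A generic member of the class obeys only a bound of order $(WB)^L(\|x\|_\infty+1)$.
\item Clipping changes the estimator the lemma is about. It can also increase the error of $s_0$, since the true score grows like $\|x\|/\sigma_{t_i}^2$.
\item Even granting a uniform output bound $K\asymp (M\vee 1)/\sigma_{t_i}^2$, the envelope does not follow. The term $\|s\|^2-\|s_0\|^2$ has $\psi_1$-norm of order $DK^2$. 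The cross term is not Gaussian given $x_0$, since $s$ is evaluated at a point depending on $z$; it is dominated by a multiple of $\sqrt{D}K\|z\|/\sigma_{t_i}$ and so has norm of order $DK/\sigma_{t_i}$. Bernstein therefore gives a deviation of order $D(M\vee 1)^2\sigma_{t_i}^{-4}\sqrt{\log(\cdot)/m_i}$, not $M\sigma_{t_i}^{-2}\sqrt{\log(\cdot)/m_i}$.
\item The same unboundedness affects the net. Sup-norm covering numbers of ReLU classes are finite only over a bounded input set, so $\|\hat s-\tilde s\|\le\epsilon$ can hold only on a box. The Gaussian tail of $\hX_{t_i}$ outside it must be handled separately.
\end{itemize}

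The paper's proof simply asserts the $M/\sigma_{t_i}^2$ envelope, so this defect comes with the statement, but your argument does not repair it. Also, after the union bound, staying in the sub-Gaussian regime of Bernstein needs $m_i\gtrsim\log\bigl(N(1+\cN(\epsilon;\sS,\ell_\infty))/\delta\bigr)$, not just $m_i\ge c^{-1}\log(2/\delta)$.

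The second failing step is the empirical half of Step~1. On the population side your absorption works: $\mathcal{R}(\hat s)-\mathcal{R}(\tilde s)=\sum_i h_i\E\langle\hat s-\tilde s,\hat s+\tilde s-2\nabla\log\hp_{t_i}\rangle$ pairs $\epsilon$ with the error of $\hat s$, which sits on the left-hand side.

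On the empirical side, $\widehat{\mathcal{R}}(\tilde s)-\widehat{\mathcal{R}}(\hat s)$ contains $\sum_i\frac{2h_i}{m_i}\sum_j\langle\tilde s-\hat s,\hat s+Z^{(j)}_{t_i}/\sigma_{t_i}\rangle$. Here $\tilde s-\hat s$ is paired with a vector whose empirical mean square is the total empirical loss, noise level included. Splitting with $2ab\le a^2+b^2$ therefore reintroduces exactly the non-cancelling term that breaks the paper's argument.

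As it stands this term is first order in $\epsilon$, of size up to $\epsilon D(K+\sigma_{t_i}^{-1})$ per index. You get $\epsilon\sum_i h_iD(K+\sigma_{t_i}^{-1})$ rather than $\epsilon^2$. The remedy is to take the net at a finer scale, roughly $\epsilon^2$ divided by that coefficient. By Lemma~\ref{lem_nakada} this only affects the logarithmic term.
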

\begin{proof}
    For notational simplicity, we write $\hX_{t_i}^{(j)} =  e^{-\int_0^{t_i} \beta_\tau \diff  \tau } X_{t_i}^{(j)} + \sqrt{1-e^{- \int_0^{t_i} \beta_\tau \diff  \tau}} Z_{t_i}^{(j)}$. 
Fix $i \in [N]$. We note that,
\begin{align}
   & \frac{1}{ m}  \sum_{i=0}^{N-1}  \sum_{j=1}^m h_i \left\| s\left( \hX_{t_i}^{(j)}, t_i\right) + \frac{Z_{t_i}^{(j)}}{\sigma_{t_i}}  \right\|^2 - \sum_{j=1}^m h_i \E \|\hat{s}^{\mathrm{mc}}_n(\hX_{t_i}) + Z_{t_i}/\sigma_{t_i}\|^2\\
   = & \frac{1}{ m}  \sum_{i=0}^{N-1}  \sum_{j=1}^m h_i \left(\left\| s\left( \hX_{t_i}^{(j)}, t_i\right) + \frac{Z_{t_i}^{(j)}}{\sigma_{t_i}}  \right\|^2 - \E \left\|\hat{s}^{\mathrm{mc}}_n(\hX_{t_i}) + \frac{Z_{t_i}}{\sigma_{t_i}}\right\|^2\right) .
\end{align}
For any fixed $i$, and $s \in \sS$, we note that with probability at least $1-\delta$, 
\begingroup
\allowdisplaybreaks
\begin{align}
    & \left|\frac{1}{ m}   \sum_{j=1}^m \left(\left\| s\left( \hX_{t_i}^{(j)}, t_i\right) + \frac{Z_{t_i}^{(j)}}{\sigma_{t_i}}  \right\|^2 - \E \left\|\hat{s}^{\mathrm{mc}}_n(\hX_{t_i}) + \frac{Z_{t_i}}{\sigma_{t_i}}\right\|^2\right)\right| \nonumber\\
    \le & \left\| \left\| s\left( \hX_{t_i}^{(j)}, t_i\right) + \frac{Z_{t_i}^{(j)}}{\sigma_{t_i}}  \right\|^2\right\|_{\psi_1} \sqrt{\frac{2\log(1/\delta)}{ c \
    m}} \label{e28}\\
    \lesssim & \left(\left\| \left\| s\left( \hX_{t_i}^{(j)}, t_i\right) \right\|^2\right\|_{\psi_1}+ \frac{1}{\sigma_{t_i}^2} \left\|\left\|Z_{t_i}^{(j)} \right\|^2\right\|_{\psi_1} \right) \sqrt{\frac{2\log(1/\delta)}{m}}\nonumber\\
    \lesssim & \frac{M}{\sigma_{t_i}^2}\sqrt{\frac{\log(1/\delta)}{m}}.
\end{align}
\endgroup
Here, \eqref{e28} follows from Lemma~\ref{lemi3}. In the above, $\|\cdot\|_{\psi_1}$ denotes the Orlicz-1 norm. By the union bound, with probability at least $1-N \delta$, 
\begin{align}
    \sum_{i=0}^{N-1} h_i\left|\frac{1}{ m}   \sum_{j=1}^m \left(\left\| s\left( \hX_{t_i}^{(j)}, t_i\right) + \frac{Z_{t_i}^{(j)}}{\sigma_{t_i}}  \right\|^2 - \E \left\|\hat{s}^{\mathrm{mc}}_n(\hX_{t_i}) + \frac{Z_{t_i}}{\sigma_{t_i}}\right\|^2\right)\right| 
    \lesssim \,  M \sum_{i=0}^{N-1} \frac{h_i}{\sigma_{t_i}^2}\sqrt{\frac{\log(1/\delta)}{m}}
\end{align}
Let $\cC\left(\epsilon; \sS, \ell_\infty\right)$ be an $\epsilon$-cover of $\sS$ in the $\ell_\infty$ norm. Further, let $s_0$ be such that \eqref{e29} is satisfied.  Thus, with probability at least $1-N \left( 1+ \cN\left(\epsilon; \sS, \ell_\infty\right) \right) \delta $, 
\begin{align}
    \sum_{i=0}^{N-1} h_i\left|\frac{1}{ m}   \sum_{j=1}^m \left(\left\| s\left( \hX_{t_i}^{(j)}, t_i\right) + \frac{Z_{t_i}^{(j)}}{\sigma_{t_i}}  \right\|^2 - \E \left\|\hat{s}^{\mathrm{mc}}_n(\hX_{t_i}) + \frac{Z_{t_i}}{\sigma_{t_i}}\right\|^2\right)\right|
    \lesssim  M \sum_{i=0}^{N-1} \frac{h_i}{\sigma_{t_i}^2}\sqrt{\frac{\log(1/\delta)}{m}},
\end{align}
for any $s \in \cC\left(\epsilon; \sS, \ell_\infty\right) \cup \{s_0\}$. Let $\tilde{s} \in \cC\left(\epsilon; \sS, \ell_\infty\right)$ be such that $\|\hat{s}^{\mathrm{mc}}_n-\tilde{s}\|_\infty \le \epsilon$. We now observe that, 
\begingroup
\allowdisplaybreaks
\begin{align}
      \sum_{i=0}^{N-1} h_i \E \left\|\hat{s}^{\mathrm{mc}}_n(\hX_{t_i}) + \frac{Z_{t_i}}{\sigma_{t_i}}\right\|^2  
     = & \sum_{i=0}^{N-1} h_i \E \left\|\tilde{s}(\hX_{t_i}) + \frac{Z_{t_i}}{\sigma_{t_i}}  + \hat{s}^{\mathrm{mc}}_n(\hX_{t_i}) - \tilde{s}(\hX_{t_i})\right\|^2 \nonumber \\
     \le & 2 \sum_{i=0}^{N-1} h_i \E \left\|\tilde{s}(\hX_{t_i}) + \frac{Z_{t_i}} {\sigma_{t_i}} \right\|^2 + 2 \left\|\hat{s}^{\mathrm{mc}}_n(\hX_{t_i}) - \tilde{s}(\hX_{t_i})\right\|^2 \nonumber \\
     \lesssim & \frac{1}{m}\sum_{i=0}^{N-1} \sum_{j=1}^m h_i  \left\|\tilde{s}(\hX_{t_i}^{(j)}) + \frac{Z_{t_i}^{(j)}} {\sigma_{t_i}} \right\|^2 + M \sum_{i=0}^{N-1} \frac{h_i}{\sigma_{t_i}^2}\sqrt{\frac{\log(1/\delta)}{m}}+ \epsilon^2 \nonumber\\
     \le & \frac{1}{m}\sum_{i=0}^{N-1} \sum_{j=1}^m h_i  \left\|s_0(\hX_{t_i}^{(j)}) + \frac{Z_{t_i}^{(j)}} {\sigma_{t_i}} \right\|^2 + M \sum_{i=0}^{N-1} \frac{h_i}{\sigma_{t_i}^2}\sqrt{\frac{\log(1/\delta)}{m}}
     + \epsilon^2 \nonumber\\
     \lesssim & \sum_{i=0}^{N-1} h_i \E \left\|s_0(\hX_{t_i}) + \frac{Z_{t_i}}{\sigma_{t_i}}\right\|^2 + M \sum_{i=0}^{N-1} \frac{h_i}{\sigma_{t_i}^2}\sqrt{\frac{\log(1/\delta)}{m}}
     + \epsilon^2.
\end{align}
\endgroup
Thus, with probability at least $1-\delta$,

\begin{align}
    & \sum_{i=0}^{N-1} h_i \E \| \hat{s}^{\mathrm{mc}}_n(\hX_{t_i}, t_i) - \nabla \log \hp_{t_i}(\hX_{t_i})\|^2 \nonumber \\
    \lesssim &  \sum_{i=0}^{N-1} h_i \E \| s_0(\hX_{t_i}, t_i) - \nabla \log \hp_{t_i}(\hX_{t_i})\|^2 + M \sum_{i=0}^{N-1} \frac{h_i}{\sigma_{t_i}^2}\sqrt{\frac{\log(N \left( 1+ \cN\left(\epsilon; \sS, \ell_\infty\right) \right)/\delta)}{m_i}}
     + \epsilon^2.
\end{align}
\end{proof}

\begin{lemma}
 \label{lemi3}
Let $\mathcal{X}$ be a Hilbert space and let $\{X_i\}_{i=1}^n$ be i.i.d. random variables in $\mathcal{X}$. Then, there exist a constant $c$, such that,  if $n \ge \frac{1}{c}\ln(2/\delta)$, then with probability at least $1 - \delta$:
\[
\left\| \frac{1}{n} \sum_{i=1}^{n} X_i - \mathbb{E}[X_1] \right\| \le \|X_1\|_{\psi_1} \sqrt{\frac{2 \ln(1/\delta)}{c \ n}}.
\]
\end{lemma}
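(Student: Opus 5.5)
The plan is to reduce the statement to a Bernstein-type tail bound for sums of independent, mean-zero, sub-exponential random vectors in a Hilbert space, and then invert the tail bound. Throughout I read $\|X_1\|_{\psi_1}$ as $\bigl\|\,\|X_1\|\,\bigr\|_{\psi_1}$, the Orlicz-1 norm of the scalar $\|X_1\|$. Let $K=\|X_1\|_{\psi_1}$ and $Y_i = X_i - \E[X_1]$.

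\emph{Step 1 (centering).} By Jensen, $\|\E X_1\| \le \E\|X_1\| \lesssim K$. Hence $\|Y_1\|_{\psi_1} \le 2K$ up to an absolute constant. The $Y_i$ are i.i.d. and mean zero in $\mathcal{X}$.

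\emph{Step 2 (moment bounds).} From the $\psi_1$ bound, the standard equivalence of sub-exponential characterizations gives $\E\|Y_1\|^k \le k!\,(C K)^k$ for all $k\ge 2$. In particular $\E\|Y_1\|^k \le \tfrac{k!}{2}\, \sigma^2 b^{k-2}$ with $\sigma^2 \asymp K^2$ and $b \asymp K$. This is exactly the moment hypothesis of the Pinelis--Sakhanenko (or Yurinskii) Bernstein inequality for Hilbert-space-valued sums. A Hilbert space is $(2,1)$-smooth, so no dimension factor appears.

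\emph{Step 3 (Bernstein inequality).} Applying that inequality gives, for all $t>0$,
\[
\prob\Bigl(\Bigl\|\sum_{i=1}^n Y_i\Bigr\| \ge t\Bigr) \le 2\exp\Bigl(-c_0\min\Bigl\{\tfrac{t^2}{nK^2},\tfrac{t}{K}\Bigr\}\Bigr).
\]
I then set $t = K\sqrt{2n\ln(1/\delta)/c}$ and divide by $n$. The hypothesis $n \ge \tfrac1c \ln(2/\delta)$ is what forces $t/K \gtrsim t^2/(nK^2)$, i.e.\ $\sqrt{\ln(1/\delta)/n}\lesssim 1$. So we are in the sub-Gaussian (quadratic) regime, and the tail becomes $2\exp(-c_0\cdot 2\ln(1/\delta)/c)$. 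Choosing $c \le c_0$ small enough makes this at most $\delta$.

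\emph{Main obstacle.} The main work is the constant bookkeeping in the last step. The factor $2$ in front of the exponential must be absorbed, and the statement uses $\ln(1/\delta)$ in the deviation but $\ln(2/\delta)$ in the sample-size condition. I would handle this by shrinking $c$ further, using $2\ln(1/\delta)\ge \ln(2/\delta)$ for $\delta\le 1/2$. The case $\delta\in(1/2,1)$ needs separate treatment: either argue that it is trivial after enlarging constants, or restate the deviation with $\ln(2/\delta)$.

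The only genuinely nontrivial ingredient is the dimension-free Hilbert-space Bernstein inequality. If I want to avoid citing it, the fallback is to prove it directly by controlling $\E\cosh(\lambda\|S_n\|)$ via Pinelis' martingale argument, using $\|x+y\|^2 = \|x\|^2 + 2\langle x,y\rangle + \|y\|^2$.
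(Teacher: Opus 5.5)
Your proposal is correct, with one caveat on the choice of $\delta$ explained below. It has the same skeleton as the paper's proof: a Bernstein tail bound for sums of sub-exponential variables, inverted in the quadratic regime that the hypothesis $n\ge \frac1c\ln(2/\delta)$ guarantees.

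The difference is the key inequality. The paper applies the scalar Bernstein inequality (Vershynin, Theorem~2.9.1) to $|\sum_i X_i-n\E X_1|$. It silently uses $\|X_1\|_{\psi_1}$ in place of the centered norm, so its argument literally covers only $\mathcal{X}=\Real$. That suffices for the lemma's only use, in Lemma~\ref{lem_22}, where the summands $\|s(\hX_{t_i}^{(j)},t_i)+Z_{t_i}^{(j)}/\sigma_{t_i}\|^2$ are scalars. Your route through the Pinelis--Sakhanenko inequality, with the explicit centering step, is what the dimension-free Hilbert-space statement as written actually needs. It costs only the citation of a heavier tool.

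One correction to your ``main obstacle'' paragraph. The first option, treating $\delta\in(1/2,1)$ as trivial after enlarging constants, cannot work, because the displayed bound is false for $\delta$ near $1$. Take $X_i$ i.i.d.\ Rademacher in $\Real$, so $\|X_1\|_{\psi_1}=1/\ln 2$. Take $n$ odd with $n\ge 2\ln 2/c$, so the sample-size condition holds for every $\delta\ge 1/2$. Then $|\frac1n\sum_i X_i|\ge 1/n$ surely. Any $\delta\in[1/2,1)$ with $\ln(1/\delta)<c(\ln 2)^2/(2n)$ makes the right-hand side smaller than $1/n$, so the event has probability $0<1-\delta$.

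You must therefore take your second option: state the deviation with $\ln(2/\delta)$, or restrict to $\delta\le 1/2$, where $\ln(2/\delta)\le 2\ln(1/\delta)$. This is also all the paper's own proof establishes. It ends with the deviation $\|X_1\|_{\psi_1}\sqrt{(n/c)\log(2/\delta)}$ for the sum, which implies the displayed bound only when $\delta\le 1/2$.
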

\begin{proof}
    From Bernstein's inequality \citep[Theorem 2.9.1]{Vershynin_2026}, we note that, 
    \begin{align}
        \prob\left( \left| \sum_{i=1}^n X_i - n \E X_1\right| \ge  t \right) \le 2 \exp\left(-c \min\left\{\frac{ t^2}{ n \|X_1\|_{\psi_1}^2}, \frac{t}{\|X_1\|_{\psi_1}}\right\}\right).
    \end{align}
    Thus, if $t \le n \|X_1\|_{\psi_1}$, $\prob\left( \left| \sum_{i=1}^n X_i - n \E X_1\right| \ge  t \right) \le 2\exp\left(-c \frac{ t^2}{ n \|X_1\|_{\psi_1}^2} \right)$. Hence, with probability at least $1-\delta$, $\left| \sum_{i=1}^n X_i - n \E X_1\right| \le \|X_1\|_{\psi_1} \sqrt{\frac{n}{c}\log(2/\delta)}$, if $n \ge \frac{1}{c} \log(2/\delta)$.
\end{proof}
\section{Generalization Error}\label{ap_moment_thm}
\subsection{Supporting Results}
We first prove in Lemma~\ref{lem_c1} that the set $\sA_{p,q}$  (defined below) takes the shape of an interval (left open or closed). For notational simplicity, we will take $\alpha = q/(q-p)$ throughout this section.
\begin{lemma}\label{lem_c1}
   Suppose that \[\sA_{p,q} = \left\{s \in (2p, \infty) : \limsup_{\epsilon \downarrow 0} \frac{\log \sN_{\epsilon}\left(\mu, \epsilon^{\frac{s p q}{(q-p)(s-2p)}}\right)}{\log(1/\epsilon)} \leq s \right\}.\] Then, $\sA_{p,q} \supseteq (d^\star_{p,q}(\mu), \infty)$. 
\end{lemma}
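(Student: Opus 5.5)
The plan is to show that membership in $\sA_{p,q}$ is upward closed. Concretely, if $s \in \sA_{p,q}$ and $s' > s$, then $s' \in \sA_{p,q}$. Once this is established, the claim follows directly from the definition of the infimum. Indeed, for any $s' > d^\star_{p,q}(\mu) = \inf \sA_{p,q}$ there is some $s \in \sA_{p,q}$ with $s \le s'$; if $s = s'$ we are done, and otherwise upward closedness gives $s' \in \sA_{p,q}$. (If $\sA_{p,q}$ is empty, the infimum is $+\infty$ and the claim is vacuous.)

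To prove upward closedness, write $g(s) = \frac{s p q}{(q-p)(s-2p)}$ for $s > 2p$. Differentiating gives
\[
g'(s) = \frac{pq}{q-p}\cdot\frac{(s-2p) - s}{(s-2p)^2} = -\frac{2p^2 q}{(q-p)(s-2p)^2} < 0,
\]
so $g$ is strictly decreasing on $(2p,\infty)$. Hence for $s' > s > 2p$ and $0<\epsilon<1$ we have $\epsilon^{g(s')} \ge \epsilon^{g(s)}$. Next, $\tau \mapsto \sN_\epsilon(\mu,\tau)$ is non-increasing: a larger allowed missing mass $\tau$ enlarges the family of admissible sets $S$ with $\mu(S)\ge 1-\tau$, so the infimum of covering numbers can only decrease. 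Therefore $\sN_\epsilon(\mu,\epsilon^{g(s')}) \le \sN_\epsilon(\mu,\epsilon^{g(s)})$ for all small $\epsilon$.

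Dividing by $\log(1/\epsilon)>0$ and taking $\limsup_{\epsilon\downarrow 0}$ then gives
\[
\limsup_{\epsilon \downarrow 0} \frac{\log \sN_\epsilon(\mu,\epsilon^{g(s')})}{\log(1/\epsilon)} \le \limsup_{\epsilon \downarrow 0} \frac{\log \sN_\epsilon(\mu,\epsilon^{g(s)})}{\log(1/\epsilon)} \le s < s'.
\]
Since also $s' > 2p$, this shows $s' \in \sA_{p,q}$.

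No step is a real obstacle. The only points needing care are the direction of monotonicity of $g$, which is checked by the derivative above, and the monotonicity of $\sN_\epsilon(\mu,\cdot)$ in $\tau$, which is immediate from the definition.
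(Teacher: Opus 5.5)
Your proof is correct and follows the same route as the paper: you show $\sA_{p,q}$ is upward closed using that $s \mapsto \frac{spq}{(q-p)(s-2p)}$ is decreasing and that $\sN_\epsilon(\mu,\tau)$ is non-increasing in $\tau$, and then conclude by the definition of the infimum. The differences are only cosmetic. You verify the monotonicity of $\sN_\epsilon(\mu,\cdot)$ directly from its definition where the paper cites an external lemma, and you also treat the empty-set case explicitly.
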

\begin{proof}
     We begin by claiming the following: 
     \begin{equation}\label{claim:*}
        \text{\textbf{Claim}: If } s_1 \in \sA_{p,q} \text{ then } s_2 \in \sA_{p,q}, \text{ for all } s_2 \ge s_1 .
     \end{equation}
     To observe this, we note that,  if $s_2 \ge s_1>2 p$ and $\epsilon \in (0,1)$, 
 \[\frac{s_1 p \alpha}{s_1-2p } \ge  \frac{s_2 p \alpha}{s_2-2p} \implies \epsilon^{\frac{s_1 p \alpha}{s_1-2p}} \le  \epsilon^{\frac{s_2 p \alpha}{s_2-2p}} \implies \sN_{\epsilon}\left(\mu, \epsilon^{\frac{s_1 p \alpha}{s_1-2p}}\right) \ge \sN_{\epsilon}\left(\mu, \epsilon^{\frac{s_2 p \alpha}{s_2-2p}}\right).\]
Here, the last implication follows from Lemma~31 of \citet{JMLR:v26:24-0054}. Thus,
\[\limsup_{\epsilon \downarrow 0} \frac{\log \sN_{\epsilon}\left(\mu, \epsilon^{\frac{s_2 p \alpha}{s_2-2p}}\right)}{\log(1/\epsilon)} \le \limsup_{\epsilon \downarrow 0} \frac{\log \sN_{\epsilon}\left(\mu, \epsilon^{\frac{s_1 p \alpha}{s_1-2p}}\right)}{\log(1/\epsilon)} \le s_1 \le s_2.\]
Hence, $s_2 \in \sA_{p,q}$. Let $s > d^\star_{p,q}(\mu)$, then by definition of infimum, we note that
we can find $s^{\prime} \in [d^\star_{p,q}(\mu), s)$, such that, $s^{\prime} \in \sA_{p,q}$ . Since, $s>s^\prime \in \sA_{p,q}$ , by Claim~\eqref{claim:*}, $s \in \sA_{p,q}$ . Thus, for any $s > d^\ast_p(\mu)$, $s\in \sA_{p,q}$ , which proves the lemma.

\end{proof}

An immediate corollary of Lemma \ref{lem_c1} is as follows.
  \begin{cor}\label{cor_c1}
      Let $s > d^\star_{p, q}(\mu)$. Then, there exists $\epsilon^\prime \in (0,1]$, such that if $ 0< \epsilon \le \epsilon^\prime$, then, there exists a set $S$, such that $\cN(\epsilon; S, \varrho) \le \epsilon^{-s} $ and $\mu(S) \ge 1 - \epsilon^{\frac{s  p q}{(q-p)(s-2p)}}$.
  \end{cor}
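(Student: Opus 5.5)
This corollary follows directly from Lemma~\ref{lem_c1} and the definition of $\limsup$. Fix $s > d^\star_{p,q}(\mu)$. Rather than applying Lemma~\ref{lem_c1} at $s$ itself, we apply it at an intermediate point $s' \in (d^\star_{p,q}(\mu), s)$; the reason for this choice appears below. Lemma~\ref{lem_c1} gives $s' \in \sA_{p,q}$, so
\[
\limsup_{\epsilon \downarrow 0} \frac{\log \sN_{\epsilon}\left(\mu, \epsilon^{\frac{s' p q}{(q-p)(s'-2p)}}\right)}{\log(1/\epsilon)} \le s' < s .
\]
Since the $\limsup$ is strictly below $s$, there is an $\epsilon_1 \in (0,1)$ such that for all $0<\epsilon\le \epsilon_1$,
\[
\log \sN_\epsilon\left(\mu, \epsilon^{\frac{s' p q}{(q-p)(s'-2p)}}\right) \le s \log(1/\epsilon),
\quad\text{that is,}\quad
\sN_\epsilon\left(\mu, \epsilon^{\frac{s' p q}{(q-p)(s'-2p)}}\right) \le \epsilon^{-s}.
\]
The slack $s'<s$ is exactly what turns the asymptotic $\limsup$ bound into a clean non-asymptotic bound with no $o(1)$ term in the exponent. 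Working directly at $s$ would only give $\epsilon^{-s-\eta}$ for every $\eta>0$.

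Next we move the mass parameter from $s'$ back to $s$. The map $u \mapsto \frac{u p q}{(q-p)(u-2p)}$ is decreasing on $(2p,\infty)$, so $s'<s$ gives $\epsilon^{\frac{s' p q}{(q-p)(s'-2p)}} \le \epsilon^{\frac{s p q}{(q-p)(s-2p)}}$ for $\epsilon\in(0,1)$. The quantity $\sN_\epsilon(\mu,\tau)$ is non-increasing in $\tau$, as used in the proof of Lemma~\ref{lem_c1} via Lemma~31 of \citet{JMLR:v26:24-0054}. Hence
\[
\sN_\epsilon\left(\mu, \epsilon^{\frac{s p q}{(q-p)(s-2p)}}\right) \le \epsilon^{-s}
\quad\text{for all } 0<\epsilon\le\epsilon_1 .
\]
Alternatively, we could skip this step and keep the stronger mass bound $\mu(S)\ge 1-\epsilon^{\frac{s'pq}{(q-p)(s'-2p)}}$, which already implies the stated one.

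Finally we realize the infimum. By definition, $\sN_\epsilon(\mu,\tau)$ is an infimum of integers over sets $S$ with $\mu(S)\ge 1-\tau$, so it is attained by some set $S$ with $\cN(\epsilon;S,\varrho)=\sN_\epsilon(\mu,\tau)\le\epsilon^{-s}$ and $\mu(S)\ge 1-\epsilon^{\frac{spq}{(q-p)(s-2p)}}$. Setting $\epsilon'=\epsilon_1$ completes the argument.

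The only delicate point is the strictness needed to pass from the $\limsup$ to a uniform bound. Taking an intermediate $s'$ handles it, so no real obstacle remains beyond checking the monotonicity of the exponent, which is elementary.
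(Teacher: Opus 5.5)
Your proposal is correct and matches the paper's argument: pick an intermediate $s'\in(d^\star_{p,q}(\mu),s)$ (the paper takes the midpoint), use Lemma~\ref{lem_c1} and the definition of $\limsup$ to get $\sN_\epsilon(\mu,\cdot)\le\epsilon^{-s}$ for all small $\epsilon$, then use that $\sN_\epsilon(\mu,\tau)$ is non-increasing in $\tau$ to move to the exponent at $s$. Your remark that the infimum over integers is attained, so that an actual set $S$ exists, fills in a step the paper leaves implicit.
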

 
 \begin{proof}
    Let $\delta = s - d^\star_{p,q}(\mu)$.  By Lemma~\ref{lem_c1}, we observe that  $s^\prime = d^\star_{p,q}(\mu) + \delta/2 \in \sA_{p,q}.$ Now by definition of $\limsup$, one can find a $\epsilon^\prime>0$, such that, $\frac{\log \sN_\epsilon\left(\mu, \epsilon^{\frac{s^\prime p \alpha}{s^\prime-2p}}\right)}{\log(1/\epsilon)} \le s^\prime + \delta/2 = s $, for all $\epsilon \in (0, \epsilon^\prime]$. The result now follows from observing that $\sN_\epsilon\left(\mu, \epsilon^{\frac{s \alpha}{s^-2\alpha}}\right) \le \sN_\epsilon\left(\mu, \epsilon^{\frac{s^\prime p \alpha}{s^\prime-2p}}\right) \le \epsilon^{-s}$.
 \end{proof}

 By Corollary \ref{cor_c1}, we can find an $\epsilon^\prime \in (0,1]$, such that if $\epsilon \in (0, \epsilon^\prime]$, we can find $S_\epsilon$, such that $\cN(\epsilon; S_\epsilon, \ell_\infty) \le \epsilon^{-d} $ and $\mu(S_\epsilon) \ge 1 - \epsilon^{\frac{d p \alpha}{d-2p}}$. We state and prove the following lemma that helps us create the base of a sequential partitioning of $\Real^D$. For notational simplicity, we use $\operatorname{diam}(A) = \sup_{x,y \in A} \varrho(x,y)$ to denote the diameter of a set w.r.t. the metric $\ell_\infty$-norm. 

\begin{lemma}\label{lem_c2}
    For any $r \ge \lceil \log_3(1/\epsilon^\prime) -2\rceil$, we can find disjoint sets  $S_{r,0},\dots,S_{r, m_r}$, such that, $\cup_{j=0}^{m_r} S_{r,j} = \Real^d$. Furthermore, $m_r \le 3^{d (r+2)}$, $\operatorname{diam}(S_{r,j})  \le 3^{-(r+1)}$, for all $j=1, \dots ,m_r$ and $\mu(S_{r,0}) \le 3^{-\frac{d(r+2) p \alpha }{d-2p}}$.
\end{lemma}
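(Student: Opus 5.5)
Set $\epsilon_r := 3^{-(r+2)}$. The condition $r \ge \lceil \log_3(1/\epsilon^\prime) - 2\rceil$ gives $3^{r+2} \ge 1/\epsilon^\prime$, so $\epsilon_r \le \epsilon^\prime$ and Corollary~\ref{cor_c1} (with $s=d$) applies at scale $\epsilon_r$. It yields a set $S_{\epsilon_r}$ with
\[
\cN(\epsilon_r; S_{\epsilon_r}, \ell_\infty) \le \epsilon_r^{-d} = 3^{d(r+2)}
\qquad\text{and}\qquad
\mu(S_{\epsilon_r}) \ge 1 - \epsilon_r^{\frac{dp\alpha}{d-2p}} = 1 - 3^{-\frac{d(r+2)p\alpha}{d-2p}} .
\]
Both target bounds on $m_r$ and on $\mu(S_{r,0})$ are exactly these two quantities, so the partition will be built directly from a minimal cover of $S_{\epsilon_r}$.

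Let $x_1,\dots,x_{m_r}$ be the centers of a minimal $\epsilon_r$-cover of $S_{\epsilon_r}$ in $\ell_\infty$, so $m_r \le 3^{d(r+2)}$. Define the disjointification
\[
S_{r,j} := B_\varrho(x_j,\epsilon_r) \setminus \bigcup_{k<j} B_\varrho(x_k,\epsilon_r), \qquad j = 1,\dots,m_r,
\]
and let $S_{r,0}$ be the complement of $\bigcup_{j\ge 1} B_\varrho(x_j,\epsilon_r)$ in the ambient space. These sets are pairwise disjoint by construction and cover the whole space (the statement writes $\Real^d$, presumably meaning the ambient $\Real^D$). Each $S_{r,j}$ with $j\ge1$ lies inside an $\ell_\infty$-ball of radius $\epsilon_r$, so
\[
\operatorname{diam}(S_{r,j}) \le 2\epsilon_r = 2\cdot 3^{-(r+2)} \le 3^{-(r+1)}.
\]
Finally, $S_{\epsilon_r} \subseteq \bigcup_{j} B_\varrho(x_j,\epsilon_r)$ implies $S_{r,0} \subseteq S_{\epsilon_r}^{c}$, hence $\mu(S_{r,0}) \le 3^{-\frac{d(r+2)p\alpha}{d-2p}}$.

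There is no real obstacle here; the only care needed is bookkeeping.
\begin{itemize}
\item The scale must be $3^{-(r+2)}$ rather than $3^{-(r+1)}$, so that the diameter bound absorbs the factor $2$.
\item The threshold on $r$ must translate exactly into $\epsilon_r \le \epsilon^\prime$.
\end{itemize}
The corollary needs $d > d^\star_{p,q}(\mu)$ and $d > 2p$; both hold because $d^\star_{p,q}(\mu) > 2p$ by definition.
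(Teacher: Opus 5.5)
Your proof is correct and is essentially the paper's argument: the same scale $\epsilon = 3^{-(r+2)}$, the same appeal to Corollary~\ref{cor_c1}, and the same disjointification of a minimal $\ell_\infty$-cover. The only differences are cosmetic: you take $S_{r,0}$ to be the complement of the union of the cover balls, whereas the paper sets $S_{r,0}=S_{\epsilon}^{c}$ and removes it from each ball (both give the mass bound, since your $S_{r,0}\subseteq S_{\epsilon_r}^{c}$), and $d>2p$ follows from $d>d^\star_{p,q}(\mu)\ge 2p$, since the definition only gives $d^\star_{p,q}(\mu)\ge 2p$, not strict inequality.
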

\begin{proof}
    We take $\epsilon = 3^{-(r+2)}$. Clearly, $0< \epsilon \le \epsilon^\prime$. We take $S_{r,0} = S_\epsilon^\mathsf{C}$. By definition of covering numbers, we can find a minimal $\epsilon$-net $\{x_1, \dots, x_{m_r}\}$, such that $S \subseteq\cup_{j=1}^{m_r} B_{\ell_\infty}(x_i, \epsilon)$ and $m_r \le \epsilon^{-d} = 3^{d(r+2)}$. We construct $S_{r,1}, \dots S_{r,m_r}$ as follows:
    \begin{itemize}
        \item Take $S_{r,1} = B_{\ell_\infty}(x_1, \epsilon) \setminus S_{r,0}$.
        \item For any $j =2,\dots, m_r$, we take $S_{r,j} = B_{\ell_\infty}(x_j, \epsilon) \setminus \left(\cup_{j^\prime=0}^{j-1} S_{r,j^\prime}\right)$.
    \end{itemize}
    By construction $\{S_{r,j}\}_{j=0}^{m_r}$ are disjoint. Moreover, $\mu(S_{r,0}) = 1 - \mu(S_\epsilon) \le \epsilon^{\frac{d p \alpha }{d - 2 p}} = 3^{-\frac{d(r+2) p \alpha }{d-2p}}$. Furthermore since, $S_{r,j} \subseteq B_{\ell_\infty}(x_j, \epsilon)$, \[\text{diam}(S_{r,j}) \le \text{diam}(B_{\ell_\infty}(x_j, \epsilon)) = 2 \epsilon = 2 \times 3^{-(r+2)} \le 3^{-(r+1)}.\]
\end{proof}

\subsection{Proof of Theorem~\ref{moment_thm}}

\begin{proof}
Let $s \le t$ with $s, t \in \fN$. We define $S_{s:t, 0} = \cup_{r=s}^t S_{r,0}$ and  $A_{t, i} = S_{t, i } \setminus S_{s:t, 0}$ for all $i \in [m_t]$. Clearly, $\operatorname{diam}(A_{t,i}) \le 3^{-(t+1)}$, form Lemma~\ref{lem_c2}. We define the following diadic partition of $\cup_{i=1}^{m_t} A_{t, i}$ as follows:

\begin{itemize}
    \item Take $\sQ^t = \{A_{t,i}\}_{i=1}^{m_t}$.
    \item Given $\ell+1$, let, \(Q^\ell_1 = \bigcup_{\substack{Q \in \sQ^{\ell+1}, \\ Q \cap S_{\ell,1} \neq \emptyset  }}Q  \) and if $2 \le j \le m_{\ell}$, we let, \[Q^\ell_j = \bigg(\bigcup_{\substack{ Q \in \sQ^{\ell+1}, \\ Q \cap S_{r,j} \neq \emptyset}}
     Q\bigg) \setminus \left(\cup_{j^\prime = 1}^{j-1} Q_{j^\prime}^\ell\right).\] Take $\sQ^\ell = \{Q^{\ell}_j\}_{j=1}^{m_\ell}$.
\end{itemize}
 Clearly for any $Q \in \sQ^\ell$, $\sup\limits_{Q \in \sQ^\ell}\text{diam}(Q) \le 2 \sup\limits_{Q^\prime \in \sQ^{\ell+1}} \text{diam}(Q^\prime) + \sup\limits_{1 \le j \le m_\ell}\text{diam}(S_{\ell,j}) \le 3 \times 3^{-(\ell+1)} = 3^{-\ell}$, by induction. Also, if $Q \in \sQ^{\ell+1}$, we can find sets $Q^\prime_1, \dots, Q^\prime_k \in \sQ^\ell$, such that $ \{Q^\prime_i\}_{i \in [k]}$ constitutes a partition of $Q$. Furthermore, $|\sQ^\ell| \le m_\ell$, by construction.

 Let,  $\mu_s = \mu$ and $\nu_s = \hmu_n$. We recursively define, 
 \begin{align*}
     \pi_r = & \sum_{j: Q^r_j \in  \sQ^r, \mu_t(Q^r_j) > 0} \left(1 - \frac{\nu_t(Q^r_j)}{\mu_t(Q^r_j)}\right)_+ \mu_r|_{Q^r_j}, \\
      \rho_r = & \sum_{j: Q^r_j \in  \sQ^r, \nu_t(Q^r_j) > 0} \left(1 - \frac{\mu_t(Q^r_j)}{\nu_t(Q^r_j)}\right)_+ \nu_r|_{Q^r_j}. 
 \end{align*}
 Clearly, $0 \le \pi_r \le \mu_r$ and $0 \le \rho_r \le \nu_r$. Further $\mu_{r+1} = \mu - \sum_{\tau = s}^r \pi_\tau$ and $\nu_{r+1} = \nu - \sum_{\tau = s}^t \rho_\tau$. It is easy to see that if $\gamma_r \in \operatorname{Couple}(\pi_r, \rho_r)$ for all $r \le t$ and  $\gamma_{t+1} \in \operatorname{Couple}(\mu_{t+1}, \rho_{t + 1})$, we note that,
 \[\sum_{r=s}^{t + 1} \gamma_t \in \operatorname{Couple}\left(\sum_{r=s}^{t} \pi_t + \mu_{t + 1}, \sum_{r=s}^{t} \rho_r + \nu_{t + 1}\right) = \operatorname{Couple}(\mu, \hmu_n).\]
Here, $\operatorname{Couple}(\mu, \nu)$ denotes the set of all measure couples between $\mu$ and $\nu$. Thus, 
 \begin{align}
    \fW_p^p(\mu, \hmu_n) 
    \le & \sum_{r=s}^{t} \fW_p^p (\pi_t, \rho_t) + \fW_p^p(\mu_{t+1}, \nu_{t + 1})\nonumber\\
     \lesssim & \sum_{r=s}^{t} \fW_p^p (\pi_t, \rho_t) + \fW_p^p(\mu_{t+1}|_{S_{s:t,0}^\mathsf{C}}, \nu_{t + 1}|_{S_{s:t,0}^\mathsf{C}})   + \fW_p^p(\mu, \mu_{t+1}|_{S_{s:t,0}}) \nonumber\\
     &+ \fW_p^p(\hmu_n, \nu_{t+1}|_{S_{s:t,0}}). \label{e13}
 \end{align}
 We define, \[\gamma_r = \sum_{ Q \in  \sQ^{r-1}, \pi_r(Q) > 0} \frac{\pi_r|_{Q} \otimes \rho_r|_{Q}}{\pi_r(Q)}.\] 
 
 First, we show that $\gamma_r$ is indeed a measure couple between $\pi_r$ and $\rho_r$. Suppose that $U$ is a measurable subset of $\Real^D$. Then,

 \[\gamma_r(\Real^D, U) = \sum_{ Q \in  \sQ^{r-1}, \pi_r(Q) > 0} \frac{\pi_r(Q)  \rho_r (Q\cap U)}{\pi_r(Q)} = \rho_r(U). \]
 Similarly,
 \[\gamma_r(U, \Real^D) = \sum_{ Q \in  \sQ^{r-1}, \pi_r(Q) > 0} \frac{\pi_r(Q \cap U)  \rho_r (Q )}{\pi_r(Q)} =  \sum_{ Q \in  \sQ^{r-1}, \pi_r(Q) > 0} \frac{\pi_r(Q \cap U)  \rho_r (Q)}{\rho_r(Q)} = \pi_r(U). \]
 The second equality in the above equation follows from Lemma~\ref{lem:8}. Thus,
 \begin{align}
     \fW_p^p(\pi_r, \rho_r) \le & \int \|x-y\|^p \diff  \gamma(x,y)\nonumber\\
     = & \sum_{ Q \in  \sQ^{r-1}, \pi_r(Q) > 0} \frac{1}{\pi_r(Q)} \int_Q \|x-y\|^p \diff \pi_r(x) \diff  \rho_r(y) \nonumber\\
     \le & \sum_{ Q \in  \sQ^{r-1}, \pi_r(Q) > 0} \frac{(\operatorname{diam}(Q))^p }{\pi_r(Q)}  \pi_r(Q) \rho_r(Q) \nonumber\\
     = & 3^{-p(r-1)} \rho_r(S_{s:t,0}^\mathsf{C}). \label{e12}
 \end{align}
 Again, since $\mu_{t+1}(Q) = \nu_{t+1}(Q)$ for all $Q \in \sQ^t$, by a similar argument,
 \begin{align}
     \fW_p^p(\mu_{t+1}|_{S_{s:t,0}^\mathsf{C}}, \nu_{t+1}|_{S_{s:t,0}^\mathsf{C}}) \le &  3^{-pt} \mu_{t+1}(S_{s:t:0}^\mathsf{C}) \le 3^{-pt}. \label{e14}
 \end{align}
 combining \eqref{e13}, \eqref{e12} and \eqref{e14}, we observe that,
 \begin{align}
     \fW_p^p(\mu, \hmu) \lesssim & 3^{-p t} + \sum_{r=s}^t 3^{-p(r-1)} \rho_r(S_{s:t,0}^\mathsf{C}) + \fW_p^p(\mu, \mu_{t+1}|_{S_{s:t,0}}) + \fW_p^p(\hmu_n, \nu_{t+1}|_{S_{s:t,0}}). \label{e15}
 \end{align}

 We now concentrate on bounding $\rho_r(S_{s:t,0}^\mathsf{C})$. Clearly, 
 \begin{align}
     \rho_r(S_{s:t,0}^\mathsf{C}) = & \sum_{Q \in \sQ^{r}} (\nu_r(Q) - \mu_r(Q))_+ = \frac{1}{2} \sum_{Q \in \sQ^{r}} |\nu_r(Q) - \mu_r(Q)| .\label{e16}
 \end{align}
 We claim that for any $s \le r \le t$, there exists scalars $c_1(Q,r)$ and $c_2(Q,r)$, such that,  $\mu_r|_Q = c_1(Q,r) \mu|_Q $ and $\nu_r|_Q = c_2(Q,r) \hmu_n|_Q$, for all $Q \in \sQ^{r-1}$. We prove this claim through induction on $r$ for $\mu$. The result for $\nu_r$ follows similarly.  Clearly the result holds for $r = 1$, for which $\mu_1 = \mu$. Let us assume that the result holds for $r-1$, i.e., $\mu_{r-1}|_Q = c_1(Q,r-1) \mu|_Q$. Thus,
 \[\mu_r = \mu_{r-1}|_Q - \pi_{r-1}|_Q =  \min\left\{ 1, \frac{\nu_{r-1}(Q)}{\mu_{r-1}(Q)}\right\} \mu_{r-1}|_Q = \min\left\{ 1, \frac{\nu_{r-1}(Q)}{\mu_{r-1}(Q)}\right\} c_1(Q,r-1) \mu|_Q .\]
 Taking $c_1(Q,r) = \min\left\{ 1, \frac{\nu_{r-1}(Q)}{\mu_{r-1}(Q)}\right\} c_1(Q,r-1)$ proves the claim.

 By construction, $c_1, c_2 \in [0,1]$. We know that $\mu_r(Q) = \nu_r(Q), $ for all $Q \in \sQ^{r-1}$. Thus, $c_1(Q,r) \mu(Q) = c_2(Q,r) \hmu(Q)$. Thus, from \eqref{e16}, 
 \begingroup
 \allowdisplaybreaks
 \begin{align}
     \rho_r(S_{s:t,0}^\mathsf{C}) = & \frac{1}{2} \sum_{Q \in \sQ^{r}} |\nu_r(Q) - \mu_r(Q)| \nonumber\\
     = & \frac{1}{2} \sum_{Q \in \sQ^{r}} |c_2(Q,r) \hmu(Q) - c_1(Q,r) \mu(Q) | \nonumber\\
     \le &  \frac{1}{2} \sum_{Q \in \sQ^{r}} \left(|c_2(Q,r) \hmu(Q) - c_2(Q,r) \mu(Q) | + | c_1(Q,r) \mu(Q) - c_2(Q,r) \mu(Q)| \right) \nonumber\\
     = & \frac{1}{2} \sum_{Q \in \sQ^{r}} \left( c_2(Q,r)| \hmu(Q) - \mu(Q) | + | c_2(Q,r) \hmu(Q) - c_2(Q,r) \mu(Q)|  \right) \nonumber\\
     \le & \sum_{Q \in \sQ^{r}} |\hmu(Q) - \mu(Q) | \label{e17}.
 \end{align}
 \endgroup

 By construction, $ \mu_{t+1}|_{S_{s:t,0}}= \mu|_{S_{s:t,0}} $. Thus,
 \begin{align}
     \fW_p^p(\mu, \mu_{t+1}|_{S_{s:t,0}}) = & \fW_p^p(\mu, \mu|_{S_{s:t,0}}) \le \E_{X \sim \mu} \|X\|^p \one\left\{ X \in {S_{s:t,0}} \right\}
     \le  (\sM_q(\mu))^p (\mu(S_{s:t,0}))^{\frac{q-p}{q}} \label{e18}.
 \end{align}
 Here \eqref{e18} follows from H\"older's inequality. We also note that, 
 \begin{align}
    \E  \fW_p^p(\hmu_n, \nu_{t+1}|_{S_{s:t,0}}) =  \E \fW_p^p(\hmu_n, \nu|_{S_{s:t,0}})
    \le  \E_{X \sim \mu} \|X\|^p \one\left\{ X \in {S_{s:t,0}} \right\} 
    \le  (\sM_q(\mu))^p (\mu(S_{s:t,0}))^{\frac{q-p}{q}} \label{e19}.
 \end{align}
 Thus, combining \eqref{e15}, \eqref{e17} and \eqref{e19}, we observe that,
 \begingroup
 \allowdisplaybreaks
 \begin{align}
     \E \fW^p_p(\mu, \hmu) \lesssim & \  3^{-p t} + \sum_{r=s}^t 3^{-p(r-1)} \sum_{Q \in \sQ^{r}} \E |\hmu(Q) - \mu(Q) |   + (\sM_q(\mu))^p (\mu(S_{s:t,0}))^{\frac{q-p}{q}} \nonumber\\
     \le & \ 3^{-p t} + \sum_{r=s}^t 3^{-p(r-1)}  \sqrt{\frac{m_r}{n}}   + (\sM_q(\mu))^p (\mu(S_{s:t,0}))^{\frac{q-p}{q}} \label{e20}\\
     \le & \ 3^{-p t} + \sum_{r=s}^t 3^{-p(r-1)}  \sqrt{\frac{3^{d (r+2)}}{n}}   + (\sM_q(\mu))^p \left(\sum_{r=s}^t\mu(S_{r,0})\right)^{\frac{q-p}{q}} \nonumber\\
     \le & \ 3^{-p t} + \frac{3^{p+d}}{\sqrt{n}}\sum_{r=s}^t 3^{-\frac{r}{2}(d - 2p)}   + (\sM_q(\mu))^p \left(\sum_{r=s}^t 3^{-\frac{d(r+2) p \alpha }{ d - 2 p}}\right)^{\frac{q-p}{q}} \nonumber \\
     \lesssim & \ 3^{-p t} + \frac{3^{p+d}}{\sqrt{n}}\dfrac{3^{-\tfrac{s}{2}(d'-2p)} \left(1 - 3^{-\tfrac{(t-s+1)}{2}(d'-2p)}\right)}{1 - 3^{-\tfrac{1}{2}(d'-2p)}}  + \left( \frac{3^{-\frac{d s p \alpha }{ d - 2 p}}}{ 1 - 3^{-\frac{d p \alpha }{ d - 2 p}}}\right)^{\frac{q-p}{q}} \nonumber\\
     \lesssim & \ 3^{-p t} + \frac{3^{\frac{d - 2 p }{2} t}}{\sqrt{n}} + (3^{-s})^{\frac{d p \alpha (q-p)}{(d - 2 p)q}}  \nonumber\\
     = & \ 3^{-p t} + \frac{3^{\frac{d - 2 p }{2} t}}{\sqrt{n}} + (3^{-s})^{\frac{d p }{(d - 2 p)}}. \label{e21}
 \end{align}
 \endgroup

 We take $\epsilon = n^{-1/d}$, $t$ to be the smallest integer such that $3^{-t} \le \epsilon$ and $s$ to be the smallest integer such that $3^{-s} \le \epsilon^{(d - 2p)/d}$. Clearly, $t \ge s$. Also by definition, $3^t \le 3/\epsilon$ and $3^s \le 3 \epsilon^{(2 p - d)/d}$. Thus, from \eqref{e21},
 \begin{align}
     \E \fW^p_p(\mu, \hmu) \lesssim & \ \epsilon^p + \frac{\epsilon^{-(d - 2p)/2}}{\sqrt{n}} + \epsilon^{p} \lesssim n^{-\frac{p}{d}}.
 \end{align}

 \end{proof}

\subsection{Proof of Corollary~\ref{cor4}}
\begin{proof}
     The corollary easily follows from Jensen's inequality by observing that,
     \begin{align*}
         \E \fW_p(\mu, \hmu_n)  =  \E \left( \inf_\gamma \E_{(X,Y) \sim \gamma} \|X-Y\|^p\right)^{1/p} 
         \le  \left( \E \inf_\gamma \E_{(X,Y) \sim \gamma} \|X-Y\|^p\right)^{1/p} 
         =  \left( \E \fW_p^p(\mu, \hmu_n)\right)^{1/p}.
     \end{align*}
     Here the infimum is taken over all measure couples between $\mu$ and $\hmu_n$.
 \end{proof}
\subsection{Additional Result}
 \begin{lemma}\label{lem:8}
      For any $r, \tau \in \fN$, such that $s \le \tau <  r \le t$, $\pi_r(Q) = \rho_r(Q)$, for all $Q \in \sQ^\tau$.
  \end{lemma}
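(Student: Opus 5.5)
The proof is an induction on $r$ that carries the stronger statement
\[
(\mathrm{H}_r):\qquad \mu_r(Q)=\nu_r(Q)\quad\text{for all } Q\in\sQ^\tau,\ s\le\tau<r .
\]
Lemma~\ref{lem:8} is then read off from the one-line identity $\pi_r(Q)-\rho_r(Q)=\mu_r(Q)-\nu_r(Q)$.

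I read the recursive definitions of $\pi_r,\rho_r$ with $\mu_t,\nu_t$ replaced by $\mu_r,\nu_r$, so the step-$r$ weights use the current residual masses. I also use the nesting property stated in the construction: for $\tau<r$, every $Q\in\sQ^\tau$ is a disjoint union of cells of $\sQ^r$, up to the $S_{s:t,0}$ part, which carries no $\pi_r,\rho_r$ mass.

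\textbf{Key identity.} Fix $Q\in\sQ^\tau$ with $\tau<r$. By definition, $\pi_r$ restricted to a cell $Q'\in\sQ^r$ has mass $(\mu_r(Q')-\nu_r(Q'))_+$, and $\rho_r$ has mass $(\nu_r(Q')-\mu_r(Q'))_+$. If $\mu_r(Q')=0$ then $\pi_r(Q')=0=(0-\nu_r(Q'))_+$, and symmetrically for $\nu_r$, so this holds in all cases. Summing over the $\sQ^r$-cells $Q'\subseteq Q$ and using $a_+-(-a)_+=a$ gives
\[
\pi_r(Q)-\rho_r(Q)=\sum_{Q'\subseteq Q}\bigl(\mu_r(Q')-\nu_r(Q')\bigr)=\mu_r(Q)-\nu_r(Q).
\]
Hence the lemma for this $r$ follows once $(\mathrm{H}_r)$ is known.

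\textbf{Base case.} Take $r=s+1$. For $Q\in\sQ^s$ we have $\mu_{s+1}(Q)=\mu(Q)-\pi_s(Q)=\min\{\mu(Q),\hmu_n(Q)\}$, and the same computation gives $\nu_{s+1}(Q)=\min\{\mu(Q),\hmu_n(Q)\}$. So the two agree.

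\textbf{Inductive step.} Assume $(\mathrm{H}_r)$. I need to check $(\mathrm{H}_{r+1})$ on two kinds of cells.
\begin{itemize}
\item For the new cells $Q\in\sQ^r$: $\mu_{r+1}(Q)=\mu_r(Q)-(\mu_r(Q)-\nu_r(Q))_+=\min\{\mu_r(Q),\nu_r(Q)\}$, which equals $\nu_{r+1}(Q)$ by symmetry.
\item For older cells $Q\in\sQ^\tau$ with $\tau<r$: $\mu_{r+1}(Q)-\nu_{r+1}(Q)=\bigl(\mu_r(Q)-\nu_r(Q)\bigr)-\bigl(\pi_r(Q)-\rho_r(Q)\bigr)=0$ by the key identity.
\end{itemize}
This closes the induction, and the key identity then yields $\pi_r(Q)=\rho_r(Q)$ for all $Q\in\sQ^\tau$ with $s\le\tau<r\le t$.

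The only real obstacle is bookkeeping rather than mathematics. One must confirm that the cells of $\sQ^\tau$ are genuinely unions of cells of $\sQ^r$ (the refinement direction in the construction) and that $\pi_r,\rho_r$ vanish on $S_{s:t,0}$. Both follow directly from how the $Q^\ell_j$ were built from $\sQ^{\ell+1}$.
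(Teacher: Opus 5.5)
Your proof is correct and is essentially the paper's argument. The paper also uses the one-step fact $\mu_{r+1}(Q')=\nu_{r+1}(Q')=\mu_r(Q')\wedge\nu_r(Q')$ for $Q'\in\sQ^r$, together with the nesting of cells of $\sQ^\tau$ into cells of $\sQ^{r-1}$ and $\sQ^r$, and then writes $\pi_r(Q)=\mu_r(Q)-\mu_{r+1}(Q)=\nu_r(Q)-\nu_{r+1}(Q)=\rho_r(Q)$, which is equivalent to your key identity. Your inductive step never actually uses $(\mathrm{H}_r)$: the key identity alone gives $\mu_{r+1}(Q)=\nu_{r+1}(Q)$ for every $Q$ that is a union of $\sQ^r$-cells, so the induction collapses to this single-step observation, exactly as in the paper.
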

  \begin{proof}
 Fix $ s \le r \le t $. We first observe that $\pi_r(Q^r_j) = (\mu_r(Q^r_j) - \nu_r(Q^r_j))_+$. Thus, $\mu_{r+1}(Q^r_j) = \mu_{r}(Q^r_j) - \pi_{r}(Q^r_j) = \mu_{r}(Q^r_j) \wedge \nu_{r}(Q^r_j)$. Similarly, $\nu_{r+1}(Q^r_j) = \mu_{r}(Q^r_j) \wedge \nu_{r}(Q^r_j)$. Thus, for all $s < k \le t+1$, $\mu_k(Q^{k-1}_j) = \nu_k(Q^{k-1}_j)$. Suppose that $Q \in \sQ^\tau$, where $\tau < r$. By construction, $Q$ is a disjoint union of $Q^{r-1}_{i_1}, \dots, Q_{i_m}^{r-1}$ for some indices $i_1, \dots, i_m$. Thus,
 \[\mu_r(Q) = \sum_{j = 1}^m \mu_r(Q^{r-1} ) = \sum_{j = 1}^m \nu_r(Q^{r-1} ) = \nu_r(Q). \]
 Hence, \[\pi_r(Q) = \mu_r(Q) - \mu_{r+1}(Q) = \nu_r(Q) - \nu_{r+1}(Q) = \rho(Q).\]
 \end{proof}
 \section{Proof of the Oracle Inequality}
 \label{ap_oracle}
\subsection{Supporting Results}
To prove Lemma~\ref{lem_oracle_2}, we first consider the following supporting results.
\begin{lemma}\label{wtvbd}
    Suppose that $P$ and $Q$ are two probability measures supported on $[-R,R]^D$. Then, 
    \[\fW_p(P,Q) \le 2 R \sqrt{D} \left(\operatorname{TV}(P,Q)\right)^{1/p},\]
    for any $p \ge 1$.
\end{lemma}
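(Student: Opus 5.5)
The plan is to bound $\fW_p^p(P,Q)$ by the cost of an explicit coupling built from the maximal (total-variation-optimal) coupling of $P$ and $Q$. Recall that for any two probability measures there exists a coupling $\gamma^\ast \in \Gamma(P,Q)$ with
\[
\gamma^\ast\left(\{(x,y): x \neq y\}\right) = \tv(P,Q).
\]
Concretely, write $P \wedge Q$ for the common part, whose total mass is $1-\tv(P,Q)$. Place $P \wedge Q$ on the diagonal. Couple the normalized residuals $(P - P\wedge Q)/\tv(P,Q)$ and $(Q - P\wedge Q)/\tv(P,Q)$ by the product measure, scaled by $\tv(P,Q)$. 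If $\tv(P,Q)=0$, then $P=Q$ and the bound is trivial.

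Under $\gamma^\ast$, the integrand $\|x-y\|_2^p$ vanishes on the diagonal. Off the diagonal, since both marginals are supported on $[-R,R]^D$, we have $\|x-y\|_2 \le \operatorname{diam}_{\ell_2}([-R,R]^D) = 2R\sqrt{D}$. This gives
\[
\fW_p^p(P,Q) \le \int \|x-y\|_2^p \diff \gamma^\ast(x,y) \le (2R\sqrt{D})^p\, \gamma^\ast(x\neq y) = (2R\sqrt{D})^p \tv(P,Q).
\]
Taking $p$-th roots yields $\fW_p(P,Q) \le 2R\sqrt{D}\,\tv(P,Q)^{1/p}$.

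The only step needing care is justifying the existence of the maximal coupling and checking that its marginals are correct. I would verify the marginals directly: the first marginal is $P\wedge Q + (P - P\wedge Q) = P$, and the second is symmetric. The off-diagonal mass is at most the residual mass $\tv(P,Q)$, using the convention $\tv(P,Q)=\sup_A |P(A)-Q(A)|$. With the convention $\tv = \int|p-q|$, the bound would instead pick up a factor of $2^{1/p}$, which is harmless here. I expect no real obstacle beyond this bookkeeping. Note that the ambient diameter bound uses only the support assumption; no regularity of $P$ or $Q$ is needed.
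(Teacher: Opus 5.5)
Your proposal is correct and is essentially the paper's argument: the paper also bounds $\|x-y\|_2 \le 2R\sqrt{D}\,\one\{x\neq y\}$ on $[-R,R]^D$ and then uses $\inf_{\gamma}\gamma(x\neq y)=\tv(P,Q)$; you make that step explicit by constructing the maximal coupling. One small correction to your side remark: under the convention $\int|p-q|$, the maximal coupling's off-diagonal mass is $\tfrac12\int|p-q|$, so the bound gains a factor $2^{-1/p}$ rather than losing $2^{1/p}$, and the lemma as stated holds under either convention.
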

\begin{proof}
    Let, $X\sim P$ and $Y \sim Q$. We note that, $\|X-Y\| \le \one\{X \neq Y\} 2 R \sqrt{D}$. Hence,
    \begin{align*}
        \fW_p(P,Q) = \left(\inf_\gamma \E_{(X, Y) \sim \gamma} \|X-Y\|^p \right)^{1/p} \le  2R \sqrt{D} \left(\inf_\gamma \gamma(X\neq Y)\right)^{1/p} 
        =  2 R \sqrt{D} \left(\operatorname{TV}(P,Q)\right)^{1/p}.
    \end{align*}
\end{proof}
\begin{lemma}\label{lem_j_3}
    Suppose that $a_, \dots, a_k$ are non-negative constants. Then, $\left(\sum_{i=1}^k a_i\right)^q \le k^{q-1} \sum_{i=1}^k a_i^q $, for $q \ge 1$.
\end{lemma}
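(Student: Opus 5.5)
This is the elementary power-mean (convexity) inequality, and I would prove it directly from the convexity of $x \mapsto x^q$ on $[0,\infty)$ for $q \ge 1$. Jensen's inequality with the uniform weights $1/k$ gives
\[
\left(\frac{1}{k}\sum_{i=1}^k a_i\right)^q \le \frac{1}{k}\sum_{i=1}^k a_i^q .
\]
Multiplying both sides by $k^q$ then yields $\left(\sum_{i=1}^k a_i\right)^q \le k^{q-1}\sum_{i=1}^k a_i^q$, which is the claim.

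Alternatively, one can use H\"older's inequality with exponents $q$ and $q/(q-1)$ applied to the vectors $(1,\dots,1)$ and $(a_1,\dots,a_k)$. This gives
\[
\sum_{i=1}^k a_i \le k^{(q-1)/q}\left(\sum_{i=1}^k a_i^q\right)^{1/q},
\]
and raising both sides to the power $q$ gives the same bound.

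The only point needing care is the edge case $q=1$, where the H\"older conjugate exponent is infinite. I would treat it separately: for $q=1$ the inequality holds trivially with equality, and for $q>1$ the argument above applies. I would therefore present the Jensen argument as the main proof, since it covers all $q\ge 1$ uniformly. Nonnegativity of the $a_i$ is what keeps all the powers well defined and the function convex on the relevant domain. No step here is a real obstacle.
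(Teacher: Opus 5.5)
Your proof is correct and essentially the paper's: the paper uses exactly your alternative H\"older argument (exponents $q$ and $q/(q-1)$ against the all-ones vector, then raising to the $q$-th power), and your main Jensen argument with uniform weights $1/k$ is an equivalent convexity formulation of the same one-line fact. Neither version needs separate treatment of $q=1$, since there $k^{1-1/q}=k^{q-1}=1$ and the inequality is an identity.
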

\begin{proof}
    Then, $\sum_{i=1}^k a_i \le \left(\sum_{i=1}^k a_i^q \right)^{1/q} k^{1-1/q}$, from H\"older's inequality. Taking a $q$-th power on both sides yields the result.
\end{proof}
\begin{lemma}\label{lem_oracle}
Suppose that $\nu$ is dominated by the Lebesgue measure on $\Real^D$. Then, for any $p \in (0, q)$, $\delta_0 \in (0,T)$ and $R > 0$, 
\begingroup
\allowdisplaybreaks
    \begin{align}
        & \fW_p(\mu, \sT_R(\nu)) \nonumber\\
        \le & \fW_p(\mu, \hmu_n) + 2R\sqrt{D}  \left(\tv(\hP_{\delta_0}, \nu)\right)^{1/p} \nonumber\\
        & +  \left( \overline{\beta}\delta_0 \sM_p(\hmu_n) + (\overline{\beta}\delta_0)^{1/2} \sM_p(\gamma_D) \right) + 2^{q-1} \left( \sM_p^q(\hmu_n) +   \sM_q^q(\gamma_D) \right) R^{-(q-p)}. \label{eq_oracle}
    \end{align}
    \endgroup
\end{lemma}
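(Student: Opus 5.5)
The plan is to chain triangle inequalities for $\fW_p$ through the intermediate measures $\hmu_n$, $\hP_{\delta_0}$ and $\sT_R(\hP_{\delta_0})$:
\[
\fW_p(\mu,\sT_R(\nu)) \le \fW_p(\mu,\hmu_n) + \fW_p(\hmu_n,\hP_{\delta_0}) + \fW_p(\hP_{\delta_0},\sT_R(\hP_{\delta_0})) + \fW_p(\sT_R(\hP_{\delta_0}),\sT_R(\nu)).
\]
The first term already appears in \eqref{eq_oracle}. Each of the remaining three terms will be matched to one of the remaining terms of the bound.

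\textbf{Early-stopping term.} Couple $\hX_0\sim\hmu_n$ with $\hX_{\delta_0}$ through Proposition~\ref{prop_forward_conditional}, writing
\[
\hX_{\delta_0}=e^{-A}\hX_0+\sqrt{1-e^{-2A}}\,Z, \qquad A=\int_0^{\delta_0}\beta_\tau\diff\tau\le\overline\beta\delta_0 .
\]
Then $\|\hX_{\delta_0}-\hX_0\|\le (1-e^{-A})\|\hX_0\|+\sqrt{1-e^{-2A}}\,\|Z\|$. Using $1-e^{-A}\le A$ and $1-e^{-2A}\le 2A$ together with Minkowski's inequality in $L^p$ gives $\fW_p(\hmu_n,\hP_{\delta_0})\lesssim \overline\beta\delta_0\sM_p(\hmu_n)+(\overline\beta\delta_0)^{1/2}\sM_p(\gamma_D)$. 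The factor $\sqrt2$ is absorbed, or tracked if exact constants are wanted.

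\textbf{Truncation term.} Couple $Y\sim\hP_{\delta_0}$ with $Y\one\{\|Y\|_\infty\le R\}$. This gives
\[
\fW_p^p \le \E\|Y\|^p\one\{\|Y\|_\infty>R\} \le \E\|Y\|^q R^{-(q-p)},
\]
using $\|Y\|\ge\|Y\|_\infty>R$ on that event, so $\|Y\|^p\le \|Y\|^q R^{p-q}$. Next bound $\E\|Y\|^q\le 2^{q-1}(\E\|\hX_0\|^q+\E\|Z\|^q)$ via Lemma~\ref{lem_j_3} and the same representation, with coefficients at most $1$. This produces the final term of \eqref{eq_oracle}; I read the $\sM_p^q(\hmu_n)$ there as the $q$-th moment of $\hmu_n$. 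Passing from $\fW_p^p$ to $\fW_p$ changes the exponent, and I will match the power as stated, possibly interpreting that term at the $p$-th-power level.

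\textbf{TV term.} Both $\sT_R(\hP_{\delta_0})$ and $\sT_R(\nu)$ are supported in $[-R,R]^D$. Lemma~\ref{wtvbd} therefore gives $2R\sqrt D\,\tv(\sT_R(\hP_{\delta_0}),\sT_R(\nu))^{1/p}$. Truncation is a measurable map applied to both laws, so by data processing $\tv(\sT_R(\hP_{\delta_0}),\sT_R(\nu))\le \tv(\hP_{\delta_0},\nu)$. The hypothesis that $\nu$ has a Lebesgue density is not needed here beyond well-definedness.

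The main obstacle I expect is purely bookkeeping: keeping the exponents and constants in the truncation term consistent with the stated form $R^{-(q-p)}$ versus $R^{-(q-p)/p}$. The structure of the argument itself is a straightforward triangle inequality.
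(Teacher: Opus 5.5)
Your proposal is correct and is essentially the paper's own proof. It uses the same triangle-inequality chain through $\hmu_n$, $\hP_{\delta_0}$ and $\sT_R(\hP_{\delta_0})$, the same synchronous coupling of $\hX_0$ with $\hX_{\delta_0}$ for the early-stopping term, Lemma~\ref{wtvbd} plus data processing for the TV term, and Lemma~\ref{lem_j_3} for the $q$-th moment; your pointwise bound $\|Y\|^p\le\|Y\|^qR^{p-q}$ on $\{\|Y\|_\infty>R\}$ merely replaces the paper's H\"older--Markov step and gives the identical $\E\|Y\|^qR^{-(q-p)}$.

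On the bookkeeping, your readings are the right ones. Like the paper's final display, which is the version used in Lemma~\ref{lem_oracle_2}, the argument yields the last term $2^{(q-1)/p}\bigl(\sM_q^q(\hmu_n)+\sM_q^q(\gamma_D)\bigr)^{1/p}R^{-(q-p)/p}$, with the $q$-th moment of $\hmu_n$ as you read it. So the displayed $2^{q-1}(\cdots)R^{-(q-p)}$ is a typo you should not try to force: for $p>1$ and that quantity below $1$, it is strictly smaller than its $1/p$-th power, which is what is proved. The $\sqrt{2}$ you track in front of $(\overline{\beta}\delta_0)^{1/2}$ is genuinely there, since $\sigma_{\delta_0}^2=1-e^{-2\int_0^{\delta_0}\beta_\tau\diff\tau}\le 2\overline{\beta}\delta_0$; the paper's step $\sigma_{\delta_0}\le(1-e^{-\overline{\beta}\delta_0})^{1/2}$ drops it.
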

\begin{proof}
To begin the proof, we note that,
\begingroup
\allowdisplaybreaks
\begin{align}
    \fW_p(\mu, \sT_R(\nu)) = \fW_p(\mu, \sT_R(\nu)) 
    \le & \fW_p(\mu, \sT_R(\hP_{\delta_0})) + \fW_p(\sT_R(\hP_{\delta_0}), \sT_R(\nu)) \nonumber\\
    \le & \fW_p(\mu, \sT_R(\hP_{\delta_0})) + 2R \sqrt{D} \left(\tv(\sT_R(\hP_{\delta_0}), \sT_R(\nu))\right)^{1/p} \label{e1}\\
    \le & \fW_p(\mu, \sT_R(\hP_{\delta_0})) + 2R \sqrt{D} \left(\tv(\hP_{\delta_0}, \nu)\right)^{1/p}. \label{e2}
\end{align}
\endgroup
Here \eqref{e2} follows from the data processing inequality for the Total Variation distance. We first bound the first term in \eqref{e2}, i.e., $\fW_p(\mu, \sT_R(\hP_{\delta_0}))$. For notational simplicity, we define, \(\mathsf{m}_t := \exp\left(-\int_0^t \beta_\tau \diff  \tau\right).\) We note the following: 
\begingroup
\allowdisplaybreaks
\begin{align}
       \fW_p(\mu, \sT_R(\hP_{\delta_0})) 
    \le & \ \fW_p(\mu, \hmu_n) +  \fW_p(\hmu_n, \hP_{\delta_0}) + \fW_p(\hP_{\delta_0}, \sT_R(\hP_{\delta}) ) \nonumber\\
    \le & \ \fW_p(\mu, \hmu_n) + \left(\E_{\hat{X} \sim \hP} \|\hat{X} - \mathsf{m}_t \hat{X} - \sigma_{\delta_0} \xi\|^p \right)^{1/p}+ \left(\E \|\hat{X}_{\delta_0}\|^p \one(\|\hX_{\delta_0}\|_\infty \ge R) \right)^{1/p}\nonumber\\
    \le & \ \fW_p(\mu, \hmu_n) + \left(\E_{\hat{X} \sim \hP} \|\hat{X} - \mathsf{m}_t \hat{X} - \sigma_{\delta_0} \xi\|^p\right)^{1/p} + \left(\E \|\hat{X}_{\delta_0}\|^p \one(\|\hX_{\delta_0}\| \ge R)\right)^{1/p} \nonumber\\
    \le & \ \fW_p(\mu, \hmu_n) +  \left( (1-m_{\delta_0}) \sM_p(\hmu_n) + \sigma_{\delta_0} \sM_p(\gamma_D) \right)  + (\E \|\hat{X}_{\delta_0}\|^q)^{1/q} \prob(\|\hat{X}_{\delta_0}\|\ge R)^{\frac{q-p}{pq}} \label{e3} \\
    \le & \fW_p(\mu, \hmu_n) +  \left( (1-m_{\delta_0}) \sM_p(\hmu_n) + \sigma_{\delta_0} \sM_p(\gamma_D) \right) 
    + (\E \|\hat{X}_{\delta_0}\|^q)^{1/q} \left(\frac{\E \|\hat{X}_{\delta_0}\|^q}{R^q}\right)^{\frac{q-p}{pq}} \label{e4}\\
    = & \ \fW_p(\mu, \hmu_n) +  \left( (1-m_{\delta_0}) \sM_p(\hmu_n) + \sigma_{\delta_0} \sM_p(\gamma_D) \right) + \left(\E \|\hat{X}_{\delta_0}\|^q\right)^{1/p} R^{-\frac{q-p}{p}} \nonumber\\
    \le & \ \fW_p(\mu, \hmu_n) +  \left( (1-m_{\delta_0}) \sM_p(\hmu_n) + \sigma_{\delta_0} \sM_p(\gamma_D) \right) \nonumber\\
    & + 2^{(q-1)/p} \left( m_{\delta_0}^q \sM_p^q(\hmu_n) +  \sigma_{\delta_0}^q \sM_q^q(\gamma_D) \right)^{1/p} R^{-\frac{q-p}{p}} \label{eee6}\\
    \le & \ \fW_p(\mu, \hmu_n) + \left( (1-m_{\delta_0}) \sM_p(\hmu_n) + \sigma_{\delta_0} \sM_p(\gamma_D) \right)  \nonumber\\
    & + 2^{(q-1)/p} \left( \sM_p^q(\hmu_n) +   \sM_q^q(\gamma_D) \right)^{1/p} R^{-(q-p)/p} \label{e5}\\
    \le & \ \fW_p(\mu, \hmu_n) +  \left( (1-e^{-\overline{\beta}\delta_0}) \sM_p(\hmu_n) + (1-e^{-\overline{\beta}\delta_0})^{1/2} \sM_p(\gamma_D) \right) \nonumber\\
    & + 2^{(q-1)/p} \left( \sM_p^q(\hmu_n) +   \sM_q^q(\gamma_D) \right)^{1/p} R^{-(q-p)/p} \nonumber\\
    \le & \ \fW_p(\mu, \hmu_n) +  \left( \overline{\beta}\delta_0 \sM_p(\hmu_n) + (\overline{\beta}\delta_0)^{1/2} \sM_p(\gamma_D) \right) \nonumber\\
    & + 2^{(q-1)/p} \left( \sM_p^q(\hmu_n) +   \sM_q^q(\gamma_D) \right)^{1/p} R^{-(q-p)/p}. \label{e6}
\end{align}
\endgroup
In the above calculations, \eqref{e3} follows from H\"older's inequality and \eqref{e4} follows from Markov's inequality. Inequality \eqref{eee6} follows from Lemma~\ref{lem_j_3}.
\end{proof}
The next step in proving Lemma~\ref{lem_oracle_2} is to bound the second term of \eqref{eq_oracle} in terms of the approximation and discretization errors as follows.
\begin{lemma}
\label{lem9}Under Assumption~\ref{a1}, for any $\hat{s} \in \sS$, 
    \begin{align}
        \kl(\hP_{\delta_0}, \hQ_{T-\delta_0
        }(\hat{s})) \le & \ \kl(\hP_T, \gamma_D) + 4 \overline{\beta}^2 \sum_{i=0}^{N-1} h_i \E \|\hat{s}(\hX_{T-t_i}, T-t_i) - \nabla \log \hp_{T-t_i}(\hX_{T-t_i})\|^2   \nonumber\\
  & +  4 \overline{\beta} \sum_{i=0}^{N-1}\int_{t_i}^{t_{i+1}} \E\|\nabla \log \hp_{T-t_i}(\hX_{T-t_i}) - \nabla \log \hp_{T-t}(\hX_{T-t})\|^2 \diff  t \label{e23}.
    \end{align}
\end{lemma}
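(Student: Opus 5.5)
We compare path measures on $[0,T-\delta_0]$ and then use data processing. Let $\mathbb{P}$ be the law of the true reverse process $\{Y_t\}_{0\le t\le T-\delta_0}$ from Proposition~\ref{reverse_guarantee}. It starts from $\hP_T$ and has drift $\beta_{T-t}(Y_t+2\nabla\log\hp_{T-t}(Y_t))$. Let $\hat{\mathbb{P}}$ be the law of the exponential-integrator process \eqref{e22} driven by $\hat s$. It starts from $\gamma_D$ and has drift $\beta_{T-t}(\hY_t+2\hat s(\hY_{t_i},T-t_i))$ on $[t_i,t_{i+1})$. The time-$(T-\delta_0)$ marginals are $\hP_{\delta_0}$ and $\hQ_{T-\delta_0}(\hat s)$. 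By the data processing inequality for the projection onto the terminal coordinate, $\kl(\hP_{\delta_0},\hQ_{T-\delta_0}(\hat s))\le \kl(\mathbb{P},\hat{\mathbb{P}})$.

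Next we apply the chain rule for KL and condition on the initial point. This gives $\kl(\mathbb{P},\hat{\mathbb{P}})=\kl(\hP_T,\gamma_D)+\E_{y\sim\hP_T}\kl(\mathbb{P}^y,\hat{\mathbb{P}}^y)$. Both conditional laws are diffusions with the same diffusion coefficient $\sqrt{2\beta_{T-t}}$. The difference of their drifts along a path is $2\beta_{T-t}(\nabla\log\hp_{T-t}(Y_t)-\hat s(Y_{t_i},T-t_i))$, since the linear parts $\beta_{T-t}Y_t$ cancel. Girsanov's theorem then gives
\[
\E\kl(\mathbb{P}^y,\hat{\mathbb{P}}^y)=\sum_{i=0}^{N-1}\int_{t_i}^{t_{i+1}}\frac{4\beta_{T-t}^2}{4\beta_{T-t}}\,\E\|\nabla\log\hp_{T-t}(Y_t)-\hat s(Y_{t_i},T-t_i)\|^2\diff t,
\]
where the expectation is under $\mathbb{P}$. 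Since $Y_t\overset{d}{=}\hX_{T-t}$ jointly across times, this equals $\sum_i\int_{t_i}^{t_{i+1}}\beta_{T-t}\E\|\nabla\log\hp_{T-t}(\hX_{T-t})-\hat s(\hX_{T-t_i},T-t_i)\|^2\diff t$. We add and subtract $\nabla\log\hp_{T-t_i}(\hX_{T-t_i})$ and use $\|a+b\|^2\le 2\|a\|^2+2\|b\|^2$. The first piece integrates exactly to $2\sum_i\int_{t_i}^{t_{i+1}}\beta_{T-t}\,\E\|\hat s-\nabla\log\hp_{T-t_i}\|^2$. Bounding $\beta\le\overline{\beta}$ puts it in the form $\lesssim\overline{\beta}\sum_i h_i(\cdot)$, which is within the stated $4\overline{\beta}^2$ constant provided $\overline{\beta}\ge 1$. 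Otherwise one uses $\overline\beta\vee\overline\beta^2$, or keeps the $\beta_{T-t}^2$ weighting as in Lemma~\ref{lemdis}; the constants are not essential. The second piece, bounded by $\beta\le\overline\beta$, is the discretization term with factor $4\overline{\beta}$ (or $2\overline\beta$).

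The main obstacle is rigor in applying Girsanov. $\hP_T$ has a smooth positive density, being a Gaussian convolution of $\hmu_n$, and $\nabla\log\hp_t$ is smooth and of linear growth for $t\ge\delta_0>0$. Still, Novikov's condition need not be checkable directly. I would avoid it by the standard localization/approximation argument (as in \citet{chen2023sampling}, Section 5.2): stop the processes at exit times from large balls, apply Girsanov to the localized processes, and pass to the limit via lower semicontinuity of KL. This yields the inequality $\kl\le\kl(\hP_T,\gamma_D)+\tfrac14\int\E\|\text{drift difference}\|^2/\beta$, which is all we need. The finiteness of the right-hand side is guaranteed because $\hat s$ is a ReLU network with linear growth and $\hmu_n$ has bounded support.
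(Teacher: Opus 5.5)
Your proposal is correct and is essentially the paper's argument: a Girsanov comparison of the true reverse SDE started at $\hP_T$ with the exponential-integrator SDE started at $\gamma_D$, followed by the split $\|a+b\|^2\le 2\|a\|^2+2\|b\|^2$; the paper telescopes a per-interval Girsanov bound instead of invoking the path-space chain rule and data processing, which amounts to the same thing. Your Girsanov weight $\beta_{T-t}$ is the correct one, whereas the paper writes $2\beta_{T-t}^2$, which dominates $\beta_{T-t}$ only when $\beta_{T-t}\ge 1/2$. Consequently your factor $2\overline{\beta}$ gives the stated $4\overline{\beta}$ on the discretization term outright and the stated $4\overline{\beta}^2$ on the score term whenever $\overline{\beta}\ge 1/2$ (not only $\overline{\beta}\ge 1$), so the constant caveat you raise is shared by the paper's proof rather than being a gap in yours.
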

\begin{proof}
From Girsanov's theorem (for a reference, see \citep[Lemma 20]{dou2024optimalscorematchingoptimal} or \citep[Theorem 8.6.5]{oksendal2003stochastic}), we know that,
\begin{align*}
    \kl(Q_{t_{i+1}}, \hQ_{t_{i+1}} (\hat{s})) \le & \kl(Q_{t_i}, \hQ_{t_i}(\hat{s})) + 2 \int_{t_i}^{t_{i+1}} \beta^2_{T-t} \E \|\hat{s}(Y_{t_i}, T-t_i) - \nabla \log \hp_{T-t}(Y_t)\|^2 \diff  t\\
  \implies  \kl(Q_{t_N}, \hQ_{t_N}(\hat{s})) \le & \kl(Q_{t_0}, \hQ_{t_0}(\hat{s})) + 2 \sum_{i=0}^{N-1}\int_{t_i}^{t_{i+1}} \beta^2_{T-t} \E\|\hat{s}(Y_{t_i}, T-t_i) - \nabla \log \hp_{T-t}(Y_t)\|^2 \diff  t.
\end{align*}
This implies that,
\begingroup
\allowdisplaybreaks
\begin{align}
      \kl(\hP_{\delta_0}, \hQ_{T-\delta_0}(\hat{s}))
  \le & \kl(\hP_T, \gamma_D) + 2 \sum_{i=0}^{N-1}\int_{t_i}^{t_{i+1}} \beta^2_{T-t} \E \|\hat{s}(\hX_{T-t_i}, T-t_i) - \nabla \log \hp_{T-t}(\hX_{T-t})\|^2 \diff  t \nonumber\\
  \le & \kl(\hP_T, \gamma_D) \nonumber\\
  & + 4 \sum_{i=0}^{N-1}\int_{t_i}^{t_{i+1}} \beta^2_{t}\E \|\hat{s}(\hX_{T-t_i}, t_i) - \nabla \log \hp_{T-t_i}(\hX_{T-t_i})\|^2  \diff t \nonumber\\
  & + 4 \sum_{i=0}^{N-1}\int_{t_i}^{t_{i+1}} \beta^2_{t} \E \|\nabla \log \hp_{T-t_i}(\hX_{T-t_i}) - \nabla \log \hp_{T-t}(\hX_{T-t})\|^2 \diff  t \nonumber\\
  \le & \kl(\hP_T, \gamma_D) + 4 \sum_{i=0}^{N-1} \E \|\hat{s}(\hX_{T-t_i}, T-t_i) - \nabla \log \hp_{T-t_i}(\hX_{T-t_i})\|^2  \int_{T-t_i}^{T-t_{i+1}}  \beta^2_{t} \diff  t \nonumber\\
  & +  4 \sum_{i=0}^{N-1}\int_{t_i}^{t_{i+1}} \beta^2_{T-t} \E \|\nabla \log \hp_{T-t_i}(\hX_{T-t_i}) - \nabla \log \hp_{T-t}(\hX_{T-t})\|^2 \diff  t \nonumber\\
  \le &  \kl(\hP_T, \gamma_D) + 4 \overline{\beta}^2 \sum_{i=0}^{N-1} h_i \E \|\hat{s}(\hX_{T-t_i}, T-t_i) - \nabla \log \hp_{T-t_i}(\hX_{T-t_i})\|^2   \nonumber\\
  & +  4 \overline{\beta}^2 \sum_{i=0}^{N-1}\int_{t_i}^{t_{i+1}} \E \|\nabla \log \hp_{T-t_i}(\hX_{T-t_i}) - \nabla \log \hp_{T-t}(\hX_{T-t})\|^2 \diff  t . \label{e7}
\end{align}
\endgroup
\end{proof}
\subsection{Proof of Lemma~\ref{lem_oracle_2}}
Combining the bounds obtained in Lemmata~\ref{lem_oracle} and \ref{lem9}, we prove Lemma~\ref{lem_oracle_2} as follows.
\lemoracle*
\begin{proof}
The proof follows from combining Lemmata~\ref{lem_oracle} and \ref{lem9}. Formally,
\begingroup
\allowdisplaybreaks
\begin{align}
      \fW_p(\mu, \sT_R(\nu)) 
     \le & \ \fW_p(\mu, \hmu_n) + 2R\sqrt{D} \left(\tv(\hP_{\delta}, \nu)\right)^{1/p} \nonumber\\
        & +   \overline{\beta}\delta_0 \sM_p(\hmu_n) + (\overline{\beta}\delta_0)^{1/2} \sM_p(\gamma_D)  + 2^{(q-1)/p} \left( \sM_p^q(\hmu_n) +   \sM_q^q(\gamma_D) \right)^{1/p} R^{-(q-p)/p} \nonumber\\
        \le & \ \fW_p(\mu, \hmu_n) + R \sqrt{D} 2^{\frac{p-1}{2p}} \bigg(\kl(\hP_T, \gamma_D) \nonumber\\
        & + 4 \overline{\beta}^2 \sum_{i=0}^{N-1} h_i \E \|s(\hX_{T-t_i}, T-t_i) - \nabla \log \hp_{T-t_i}(\hX_{T-t_i})\|^2   \nonumber\\
  & +  4 \overline{\beta} \sum_{i=0}^{N-1}\int_{t_i}^{t_{i+1}} \E\|\nabla \log \hp_{T-t_i}(\hX_{T-t_i}) - \nabla \log \hp_{T-t}(\hX_{T-t})\|^2 \diff  t\bigg)^{\frac{1}{2p}} \nonumber\\
        & +   \overline{\beta}\delta_0 \sM_p(\hmu_n) + (\overline{\beta}\delta_0)^{1/2} \sM_p(\gamma_D)  + 2^{(q-1)/p} \left( \sM_p^q(\hmu_n) +   \sM_q^q(\gamma_D) \right)^{1/p} R^{-(q-p)/p} \label{e27}\\
        \le & \ \fW_p(\mu, \hmu_n) + R \sqrt{D} 2^{\frac{p-1}{2p}} \kl(\hP_T, \gamma_D)^{\frac{1}{2p}} \nonumber\\
        & + R (2\overline{\beta}^2 )^{\frac{1}{2p}} \left(\sum_{i=0}^{N-1} h_i \E \|s(\hX_{T-t_i}, T-t_i) - \nabla \log \hp_{T-t_i}(\hX_{T-t_i})\|^2 \right)^{1/2p} \nonumber\\
        & + R (2\overline{\beta}^2 )^{\frac{1}{2p}} \left(\sum_{i=0}^{N-1}\int_{t_i}^{t_{i+1}} \E\|\nabla \log \hp_{T-t_i}(\hX_{T-t_i}) - \nabla \log \hp_{T-t}(\hX_{T-t})\|^2 \diff  t\right)^{1/2p} \nonumber\\
        & +  \overline{\beta}\delta_0 \sM_p(\hmu_n) + (\overline{\beta}\delta_0)^{1/2} \sM_p(\gamma_D)  + 2^{(q-1)/p} \left( \sM_p^q(\hmu_n) +   \sM_q^q(\gamma_D) \right)^{1/p} R^{-(q-p)/p} .\nonumber
\end{align}
\endgroup
In the above calculations, \eqref{e27} follows from applying Pinsker's inequality and Lemma~\ref{lem9}.
\end{proof}
 \section{Early Stopping Error}\label{app_e}
 \klbd*
 \begin{proof}
Let $\hX_0 \sim \hmu_n$. Recall that, $\mathsf{m}_t = \exp\left(-\int_0^t \beta_\tau \diff  \tau\right)$ and $\sigma^2_t = 1-\mathsf{m}_t^2$. Then,
\begingroup
\allowdisplaybreaks
    \begin{align*}
        \kl(\hP_t , \gamma_D) =  \E_{\hX_0} \kl(\hP_{t|0}(\cdot | \hX_0), \gamma_D) 
        = & \frac{1}{2} \left(D \sigma_t^2 - D - D\log(\sigma_t^2)  + \frac{\mathsf{m}_t^2\E\|\hX_0\|^2}{\sigma_t^2}\right)\\
        \le & \frac{1}{2} \left( D\log(1/\sigma_t^2)  + \frac{\mathsf{m}_t^2\E\|\hX_0\|^2}{\sigma_t^2}\right)\\
        = & \frac{1}{2} \left( D\log\left(\frac{1}{1-\mathsf{m}_t^2}\right)  + \frac{\mathsf{m}_t^2\E\|\hX_0\|^2}{1-\mathsf{m}_t^2}\right)\\
        \le & \frac{1}{2} \left( D\frac{\mathsf{m}_t^2}{1-\mathsf{m}_t^2}  + \frac{\mathsf{m}_t^2\E\|\hX_0\|^2}{1-\mathsf{m}_t^2}\right)\\
        = & \frac{\mathsf{m}_t^2}{2(1-\mathsf{m}_t^2)} (D + \E \|\hX_0\|^2).
    \end{align*}
    \endgroup
    In the above calculations, the expectation is w.r.t. the forward process, i.e., conditional on the data $X_1, \dots, X_n$. We note that $\mathsf{m}_t = \exp\left(-\int_0^t \beta_\tau \diff  \tau\right) \le \exp\left(-\underline{\beta}t\right)$. Similarly, $\mathsf{m}_t \ge \exp\left(-\overline{\beta} t\right)$. Thus,
  \begin{align*}
      \kl(\hP_t , \gamma_D) \le \frac{\exp\left(-2\underline{\beta}t\right)}{2(1-\exp\left(-\overline{\beta}t\right))^2} (D + \E \|\hX_0\|^2).
  \end{align*}
  Taking $t \ge \log 2/\overline{\beta}$, we observe that, \[\kl(\hP_t , \gamma_D) \le \exp\left(-2\underline{\beta}t\right) (D + \E \|\hX_0\|^2) = \exp\left(-2\underline{\beta}t\right) (D + \sM_2^2(\hmu_n)).\]
\end{proof}
\section{Approximation Error}\label{pf_approximation}
\thmapprox*
\begin{proof}
Suppose that $M = \max_{i \in [n]} \|X_i\|_\infty$. From Lemma~\ref{lem_h5}, we note that,
\[\nabla^2 \log \hp_t(x) = \frac{\mathsf{m}_t ^2 }{\sigma_t^4} \operatorname{Var}\left(\hX_0 \big| X_t =x\right) - \frac{I_D}{\sigma_t^2}.\] Hence $\|\nabla^2 \log \hp_t(x)\|_{\infty} \le \frac{\mathsf{m}_t^2 M^2}{\sigma_t^4} + \frac{1}{\sigma_t^2}$, for all $x \in \Real^D$. Further, $\left\|\nabla \log \hp_t(x) + \frac{x}{\sigma^2}\right\|_\infty \le \frac{\mathsf{m}_t M}{\sigma_t^2}$. The function $g(x) = \nabla \log \hp_t(x) + \frac{x}{\sigma^2}$. Clearly,
\[\nabla g(x) = \frac{\mathsf{m}_t ^2 }{\sigma_t^4} \operatorname{Var}\left(\hX_0 \big| X_t =x\right). \]
Thus, $\|\nabla g(x)\|_\infty \le \frac{\mathsf{m}_t^2 M^2}{\sigma_t^4}$ and $\|g(x)\|_\infty \le \frac{\mathsf{m}_t M}{\sigma_t^2}$. Let $C_t = \max\left\{\frac{\mathsf{m}_t M}{\sigma_t^2}, \frac{\mathsf{m}_t^2 M^2}{\sigma_t^4}\right\}$. Thus, $g$ is $(1,C_t)$-Sobolev. Further, we note that for any $i \in [D]$, 
\[\|(\hX_t)_i\|_{\psi_2} =  \|(\mathsf{m}_t \hX_0 + \sigma_t Z)_i\|_{\psi_2} \le \mathsf{m}_t \|(\hX_0)_i\|_{\psi_2} + \sigma_t \|(Z)_i\|_{\psi_2} \lesssim \mathsf{m}_t M +  \sigma_t \le M+1.\] Here $Z \sim \gamma_D$. Thus, with probability at least $1-\varepsilon$, $\|\hX_t\|_\infty \le \sqrt{(M+1)\log(1/\varepsilon) \log D}$.

Let $A = \sqrt{(M+1)\log(1/\varepsilon) \log D}$. We define $\tilde{g}(x) = \frac{1}{C_t}g\left(\frac{x+A}{2A}\right)$. Clearly, $\tilde{g}$ is $(1,1)$-Sobolev. From \citet[Theorem 21]{JMLR:v26:24-0054}, we can find a neural network $\hat{\tilde{g}}$, such that, $\sup_{x \in [0,1]^D} \| \tilde{g}(x) - \hat{\tilde{g}}(x)\|_\infty \le \epsilon$ and $\cW(\hat{\tilde{g}}) \lesssim \epsilon^{-D} \log(1/\epsilon)$ and $\cL(\hat{\tilde{g}}) \lesssim \log(1/\epsilon)$. We can construct $\hat{g}$ by taking $\hat{g}(x) = C_t \hat{\tilde{g}}(A(x-1/2))$. Clearly, $\cW(\hat{g}) = \cW(\hat{\tilde{g}}) \lesssim \epsilon^{-D} \log(1/\epsilon)$, $\cL(\hat{g}) = \cL(\hat{\tilde{g}}) \lesssim \log(1/\epsilon)$ and $\cB(\tilde{g}) \le \cB(\hat{g}) \lesssim \epsilon^{-1}$. Further, $\sup_{x \in [-A,A]^D} \|g(x) - \hat{g}(x)\|_\infty \le C_t \epsilon$. Thus,

\begin{align*}
   \E\| g(\hX_t) - \hat{g}(\hX_t) \|_\infty^2 
    = & \ \E\| g(\hX_t) - \hat{g}(\hX_t) \|_\infty^2 \one\{\|X_t\|_\infty \le A\} 
    + \E\| g(\hX_t) - \hat{g}(\hX_t) \|_\infty^2 \one\{\|X_t\|_\infty > A\}\\
    \lesssim & \ C_t^2 \epsilon^2 + C_t^2 \prob(\|X_t\|_\infty > A)\\
    \le & \ C_t^2 \epsilon^2,
\end{align*}
taking $\varepsilon = \epsilon^2$. Clearly, taking $\tilde{s}_t(x)= \hat{g}(x) - x/\sigma^2_t $. Clearly, $\cW(\tilde{s}_t) \lesssim \epsilon^{-D} \log(1/\epsilon)$, $\cL(\tilde{s}_t) \lesssim \log(1/\epsilon)$, and $\cB(\tilde{s}_t) \lesssim 1/\epsilon$. We define $t_{-1} = -1$ and $t_{N+1} = T+1$. Let $\delta_i = \frac{1}{2} (t_i - t_{i-1}) \wedge (t_{i+1} - t_i)$ and 
\[\xi_{a, b}(x) = \relu\left(\frac{x+a}{a-b}\right) - \relu\left(\frac{x+b}{a-b}\right) - \relu\left(\frac{x-b}{a-b}\right) + \relu\left(\frac{x-a}{a-b}\right).\]

We define, 
\[s(x,t) = \sum_{i=0}^{N-1} \tilde{s}_{t_i}(x) \times \xi_{\delta_i/2, \delta_i/4}(t-t_i),\]
i.e., by stacking the individual networks together. Clearly, the $i$-th network spikes when $t = t_i$ and 
\begingroup
\allowdisplaybreaks
\begin{align*}
    \sum_{i=0}^{N-1} h_i \E_{x \sim \hat{P}_t}\|s(x,t_i) - \nabla \log \hat{p}_{t_i}(x)\|^2_\infty = & \sum_{i=0}^{N-1} h_i \E_{x \sim \hat{P}_t}\|\tilde{s}_{t_i}(x) - \nabla \log \hat{p}_{t_i}(x)\|^2_\infty\\
    \le & \sum_{i=0}^{N-1} h_i C_{t_i}^2 \epsilon^2\\
    \le & (M \vee 1)^2 \sum_{i=0}^{N-1} \frac{h_i}{\sigma_{t_i}^4} \epsilon^2.
\end{align*}
\endgroup
We choose $\epsilon = \frac{\min_{0 \le i \le N-1} \sqrt{h_i}}{M \vee 1}$. Thus,
\begin{align*}
    \sum_{i=0}^{N-1} h_i \E_{x \sim \hat{P}_t}\|s(x,t_i) - \nabla \log \hat{p}_{t_i}(x)\|^2_\infty 
    \le &  \sum_{i=0}^{N-1} \frac{h_i^2}{\sigma_{t_i}^4} \lesssim  n^{-\frac{2 (1+p(q-p))}{d(q-p)}} \log n.
\end{align*}
Further, 
\[\epsilon = \frac{\min_{0 \le i \le N-1} \sqrt{h_i}}{{M \vee 1}} = \frac{\sqrt{\kappa \delta_0}}{M \vee 1} \asymp (M \vee 1)^{-1} n^{-\frac{ 1+p(q-p)}{d(q-p)}} n^{-\frac{1}{d}} = (M \vee 1)^{-1}  n^{-\frac{1+(p+1) (q-p)}{d(q-p)}}. \]
By construction,
\begingroup
\allowdisplaybreaks
\begin{align*}
    & \cW(s) \lesssim N \epsilon^{-D} \log(1/\epsilon) \\
    \asymp & (M \vee 1)^{D} n^{D \times \frac{1+(p+1) (q-p)}{d(q-p)}} \times n^{\frac{ 2(1+p(q-p))}{d(q-p)}} \left(\log^2 n + \log (M \vee 1) \log n\right)\\
    \asymp & (M \vee 1)^{D} n^{\frac{D+2 + (Dp + D + 2p) (q-p)}{d(q-p)}} \left(\log^2 n + \log (M \vee 1) \log n\right),
\end{align*}
\endgroup
  $\cL(s) \lesssim \log (1/\epsilon) \asymp \log n + \log (M \vee 1)$ and $\cB(s) \lesssim (\epsilon \delta_0)^{-1}\cB(s) \lesssim (M \vee 1)  n^{\frac{1+(p+3)(q-p)}{d(q-p)}}$.
\end{proof}
\section{Discretization Error}
\label{ap_discretization}
\subsection{Proof of Lemma~\ref{lemdis}}
We prove Lemma~\ref{lemdis} through the following results.
\begin{lemma}\label{lem2}
Suppose that $\mathsf{m}_{t,s} = e^{-\int_{s}^t \beta_\tau \diff  \tau} $. Then for any $0 \le s < t  $,
    \begin{align}
        \E \| \nabla \log \hp_t(\hX_t) - \nabla \log \hp_s(\hX_s)\|^2 \le & 6 \E \left\| \nabla \log \hp_s(\hX_s) - \nabla \log \hp_s(\hX_t/\mathsf{m}_{t,s})\right\|^2 \nonumber\\
        & + 4 (1-\mathsf{m}_{t,s}^{-1})^2 \E \|\nabla \log \hp_s(\hX_s)\|^2. \label{s6}
    \end{align}
\end{lemma}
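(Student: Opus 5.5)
The plan is to relate the score at time $t$ to the score at time $s$ through the Markov structure of the forward process. By Proposition~\ref{prop_forward_conditional}, $\hX_t = \mathsf{m}_{t,s}\hX_s + \sigma_{t,s}\xi$, where $\sigma_{t,s}^2 = 1-\mathsf{m}_{t,s}^2$ and $\xi\sim\gamma_D$ is independent of $\hX_s$. Hence $\hp_t = (\mathsf{m}_{t,s})_\#\hp_s * \cN(0,\sigma_{t,s}^2 I_D)$.

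The key identity is Tweedie-type: conditional on $\hX_t$,
\[
\nabla \log \hp_t(\hX_t) = \mathsf{m}_{t,s}^{-1}\,\E\big[\nabla \log \hp_s(\hX_s)\,\big|\,\hX_t\big].
\]
This follows by differentiating the convolution representation $\hp_t(y)=\int \hp_s(x)\,\varphi_{\sigma_{t,s}}(y-\mathsf{m}_{t,s}x)\,\mathsf{m}_{t,s}^{-D}\ldots$ after the change of variables $x\mapsto y/\mathsf{m}_{t,s} - u$ and then integrating by parts. Since $\hp_s$ is a Gaussian mixture, it is smooth with integrable score, so the manipulation is justified.

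Next, insert the intermediate quantity $\nabla\log\hp_s(\hX_t/\mathsf{m}_{t,s})$ and decompose
\begin{align*}
\nabla \log \hp_t(\hX_t) - \nabla \log \hp_s(\hX_s) ={}& \mathsf{m}_{t,s}^{-1}\E\big[\nabla\log\hp_s(\hX_s)-\nabla\log\hp_s(\hX_t/\mathsf{m}_{t,s})\,\big|\,\hX_t\big]\\
&+ \big(\mathsf{m}_{t,s}^{-1}-1\big)\nabla\log\hp_s(\hX_t/\mathsf{m}_{t,s})\\
&+ \big(\nabla\log\hp_s(\hX_t/\mathsf{m}_{t,s})-\nabla\log\hp_s(\hX_s)\big).
\end{align*}
Apply $\|a+b+c\|^2\le 3(\|a\|^2+\|b\|^2+\|c\|^2)$ together with conditional Jensen on the first term. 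This makes the first and third terms both bounded by multiples of $\E\|\nabla\log\hp_s(\hX_s)-\nabla\log\hp_s(\hX_t/\mathsf{m}_{t,s})\|^2$. The factor $\mathsf{m}_{t,s}^{-2}$ in the first term is the main obstacle. It must either be absorbed into the constant $6$, which requires $\mathsf{m}_{t,s}$ bounded below, or avoided by a more careful split.

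To avoid carrying $\mathsf{m}_{t,s}^{-2}$ on the first term, I would instead write $\mathsf{m}_{t,s}^{-1}\E[\cdot\,|\,\hX_t]=\E[\cdot\,|\,\hX_t]+(\mathsf{m}_{t,s}^{-1}-1)\E[\cdot\,|\,\hX_t]$ and regroup. The pieces are then:
\begin{itemize}
\item $\E[\nabla\log\hp_s(\hX_s)\,|\,\hX_t]-\nabla\log\hp_s(\hX_s)$, controlled by $2\,\E\|\nabla\log\hp_s(\hX_s)-\nabla\log\hp_s(\hX_t/\mathsf{m}_{t,s})\|^2$ via the $L^2$-contraction of conditional expectation (the residual is smallest when centering at any $\hX_t$-measurable variable, in particular $\nabla\log\hp_s(\hX_t/\mathsf{m}_{t,s})$);
\item $(\mathsf{m}_{t,s}^{-1}-1)\E[\nabla\log\hp_s(\hX_s)\,|\,\hX_t]$, whose second moment is at most $(1-\mathsf{m}_{t,s}^{-1})^2\E\|\nabla\log\hp_s(\hX_s)\|^2$ by Jensen.
\end{itemize}
Using $\|a+b\|^2\le 2\|a\|^2+2\|b\|^2$, possibly with an extra split $3\cdot 2=6$, yields constants $6$ and $4$ matching \eqref{s6}. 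The only delicate step is verifying the score identity rigorously, so that the constants come out exactly.
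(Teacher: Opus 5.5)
Your proof is correct. It uses the same key identity as the paper, $\nabla\log\hp_t(\hX_t)=\mathsf{m}_{t,s}^{-1}\E[\nabla\log\hp_s(\hX_s)\mid\hX_t]$, derived as you sketch: differentiate the Gaussian convolution, replace $\nabla_x$ on the kernel by $-\mathsf{m}_{t,s}^{-1}\nabla_{\hx_s}$, and integrate by parts. Only the bookkeeping afterwards differs.

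Write $a=\nabla\log\hp_s(\hX_t/\mathsf{m}_{t,s})$ and $b=\nabla\log\hp_s(\hX_s)$. The paper pivots on $a$ first:
\[
\E\bigl\|b-\mathsf{m}_{t,s}^{-1}\E[b\mid\hX_t]\bigr\|^2\le 2\E\|b-a\|^2+2\E\bigl\|\E[a-\mathsf{m}_{t,s}^{-1}b\mid\hX_t]\bigr\|^2 .
\]
It then removes the conditional expectation by Jensen, and only afterwards splits $a-\mathsf{m}_{t,s}^{-1}b=(a-b)+(1-\mathsf{m}_{t,s}^{-1})b$. The factor $\mathsf{m}_{t,s}^{-1}$ is moved inside the conditional expectation against the $\hX_t$-measurable pivot before splitting, so it never multiplies the difference $a-b$. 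This is how the paper avoids the $\mathsf{m}_{t,s}^{-2}$ obstacle from your first attempt, and it gives exactly the constants $2+4=6$ and $4$.

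Your second decomposition instead peels off $(\mathsf{m}_{t,s}^{-1}-1)\E[b\mid\hX_t]$ and uses the best-approximation property of conditional expectation, with $a$ as the competitor. This route is cleaner and sharper. The projection step you invoke actually gives constant $1$, not $2$. Moreover, $b-\E[b\mid\hX_t]$ is $L^2$-orthogonal to every $\hX_t$-measurable variable, so the cross term vanishes and
\[
\E\|\nabla\log\hp_t(\hX_t)-b\|^2\le\E\|b-a\|^2+(1-\mathsf{m}_{t,s}^{-1})^2\,\E\|b\|^2 .
\]
So no extra split is needed to reach $6$ and $4$, since smaller constants imply the stated bound. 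Also, the identity is exact, so the constants do not depend on how carefully it is verified.

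Both arguments tacitly need $s>0$, so that $\hp_s$ is a smooth Gaussian-mixture density; at $s=0$ the left-hand side is not defined.
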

\begin{proof}
    Clearly, $\hX_t|\hX_s \sim \cN(\mathsf{m}_{t,s}\hX_s, (1-\mathsf{m}_{t,s}^2)I_D)$. Thus, 
    \begingroup
    \allowdisplaybreaks
\begin{align*}
    \nabla \log \hp_t(x) = &  -\frac{\int \frac{x-\mathsf{m}_{t,s} \hx_s}{\sigma_t^2} \exp\left(-\frac{\|x- \mathsf{m}_{t,s} \hx_s\|^2}{2 \sigma_t^2}\right) \hp_s(\hx_s) \diff  \hx_s}{\int \exp\left(-\frac{\|x- \mathsf{m}_{t,s} \hx_s\|^2}{2 \sigma_t^2}\right) \hp_s(\hx_s) \diff  \hx_s}\\
    = & - \frac{\int  \nabla_{\hx_s} \exp\left(-\frac{\|x- \mathsf{m}_{t,s} \hx_s\|^2}{2 \sigma_t^2}\right) \hp_s(\hx_s) \diff  \hx_s}{ \mathsf{m}_{t,s}\int \exp\left(-\frac{\|x- \mathsf{m}_{t,s} \hx_s\|^2}{2 \sigma_t^2}\right) \hp_s(\hx_s) \diff  \hx_s}\\
    = &  \frac{\int   \exp\left(-\frac{\|x- \mathsf{m}_{t,s} \hx_s\|^2}{2 \sigma_t^2}\right) \nabla_{\hx_s}\hp_s(\hx_s) \diff  \hx_s}{ \mathsf{m}_{t,s}\int \exp\left(-\frac{\|x- \mathsf{m}_{t,s} \hx_s\|^2}{2 \sigma_t^2}\right) \hp_s(\hx_s) \diff  \hx_s}\\
    = & \frac{1}{\mathsf{m}_{t,s}} \E_{\hx_s \sim p_{s|t}(\hx_s|x)} \nabla_{\hx_s} \log \hp_s(\hx_s).
\end{align*}
\endgroup
Thus, 
\begingroup
\allowdisplaybreaks
\begin{align}
    & \E \| \nabla \log \hp_t(\hX_t) - \nabla \log \hp_s(\hX_s)\|^2 \nonumber\\
    = &  \E \left\| \nabla \log \hp_s(\hX_s) - \frac{1}{\mathsf{m}_{t,s}} \E_{\hX_s \sim p_{s|t}(\hX_s|\hX_t)} \nabla_{\hX_s} \log \hp_s(\hX_s)\right\|^2 \nonumber\\
    \le & 2 \E \left\| \nabla \log \hp_s(\hX_s) - \nabla \log \hp_s(\hX_t/\mathsf{m}_{t,s})\right\|^2 + 2 \E \left\| \nabla \log \hp_s(\hX_t/\mathsf{m}_{t,s}) - \frac{1}{\mathsf{m}_{t,s}} \E_{\hX_s \sim p_{s|t}(\hX_s|\hX_t)} \nabla_{\hX_s} \log \hp_s(\hX_s)\right\|^2\nonumber\\
    \le & 2 \E \left\| \nabla \log \hp_s(\hX_s) - \nabla \log \hp_s(\hX_t/\mathsf{m}_{t,s})\right\|^2 + 2 \E \left\| \nabla \log \hp_s(\hX_t/\mathsf{m}_{t,s}) - \frac{1}{\mathsf{m}_{t,s}}  \nabla_{\hX_s} \log \hp_s(\hX_s)\right\|^2 \label{s1}\\
    \le & 6 \E \left\| \nabla \log \hp_s(\hX_s) - \nabla \log \hp_s(\hX_t/\mathsf{m}_{t,s})\right\|^2 + 4 (1-\mathsf{m}_{t,s}^{-1})^2 \E \|\nabla \log \hp_s(\hX_s)\|^2. \nonumber
\end{align}
\endgroup
Here, inequality \eqref{s1} follows from Jensen's inequality.
\end{proof}
The following two Lemmata bound the two terms on the RHS of \eqref{s6}, respectively.
\begin{lemma}\label{lem:3}
    For any $0 \le s < t$ and $t-s \le 1$, \[\E \|\nabla \log \hp_s(\hX_s) - \nabla \log \hp_s(\hX_t/\mathsf{m}_{t,s})\|^2 \lesssim \frac{D^2}{\sigma_s^2} \left( e^{\int_s^t \beta_\tau \diff  \tau} - 1\right) \lesssim \frac{D^2}{\sigma_s^4} \int_s^t \beta_\tau \diff  \tau.\]
\end{lemma}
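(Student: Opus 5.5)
The plan is to exploit the Gaussian convolution structure of $\hp_s$ and the fact that $\hX_t/\mathsf{m}_{t,s}$ is a small Gaussian perturbation of $\hX_s$. By Proposition~\ref{prop_forward_conditional},
\[
\hX_t/\mathsf{m}_{t,s} = \hX_s + \eta Z, \qquad \eta^2 := \frac{1-\mathsf{m}_{t,s}^2}{\mathsf{m}_{t,s}^2} = e^{2\int_s^t\beta_\tau \diff\tau}-1,
\]
where $Z\sim\gamma_D$ is independent of $\hX_s$. The quantity to bound is therefore $\E\|\nabla\log\hp_s(\hX_s+\eta Z)-\nabla\log\hp_s(\hX_s)\|^2$, which I will control by a Lipschitz (Hessian) argument along the segment between the two points.

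The first step is to write the difference as
\[
\int_0^1 \nabla^2\log\hp_s(\hX_s+u\eta Z)\,\eta Z\,\diff u .
\]
By Cauchy--Schwarz this gives
\[
\E\|\cdot\|^2 \le \eta^2\int_0^1 \E\big[\|\nabla^2\log\hp_s(\hX_s+u\eta Z)\|_{op}^2\,\|Z\|^2\big]\diff u .
\]
The Hessian formula of Lemma~\ref{lem_h5} gives
\[
\nabla^2\log\hp_s = \frac{\mathsf{m}_s^2}{\sigma_s^4}\operatorname{Var}(\hX_0\mid \hX_s=\cdot) - \frac{I_D}{\sigma_s^2}.
\]
The deterministic part $-I_D/\sigma_s^2$ contributes $\eta^2 D/\sigma_s^4$, which is of the right form.

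The variance term is the main obstacle. A uniform bound of order $M^2/\sigma_s^4$ would introduce data dependence, whereas the stated bound is dimension-polynomial only. To avoid this, I would use the identity
\[
\E_{\hp_s}\|\nabla^2\log\hp_s\|_F^2 \lesssim \frac{D}{\sigma_s^4},
\]
which follows from Tweedie/second-order Tweedie: $\E\operatorname{tr}\operatorname{Var}(\cdot)^2$ is controlled because $\frac{\mathsf{m}_s^2}{\sigma_s^2}\E\operatorname{Var}(\hX_0\mid\hX_s)\preceq I_D$. I would also bound fourth moments via $\E\|\nabla\log\hp_s\|^4\lesssim D^2/\sigma_s^4$, using the Gaussian posterior representation $\nabla\log\hp_s(x)=-\E[Z\mid \hX_s=x]/\sigma_s$ and Jensen.

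The remaining difficulty is that the Hessian is evaluated at the shifted point $\hX_s+u\eta Z$ rather than at a sample of $\hp_s$. I would handle this by noting that $\hX_s+u\eta Z$ has law $\hP_{s'}$ rescaled, for some intermediate time $s'\in[s,t]$. Since $t-s\le1$, the scale factor is bounded and $\sigma_{s'}\asymp\sigma_s$, so the same moment bounds apply with $\sigma_s$ replaced by a constant multiple. Decoupling $\|Z\|^2$ from the Hessian costs a factor $D$ via Cauchy--Schwarz with Gaussian fourth moments, which is the source of the $D^2$.

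Collecting the pieces gives $\lesssim \eta^2 D^2/\sigma_s^2$; the stated form with $\sigma_s^2$ uses the sharper trace bound, and the looser $\sigma_s^4$ form suffices for the second inequality. Finally, for $\int_s^t\beta\le\overline\beta$ we have $\eta^2=e^{2\int\beta}-1\lesssim e^{\int\beta}-1\lesssim\int_s^t\beta_\tau\diff\tau$. Together with $\sigma_s^{-2}\le\sigma_s^{-4}$, this yields both displayed inequalities.
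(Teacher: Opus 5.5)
Your skeleton (segment integral of $\nabla^2\log\hp_s$, Hessian moments, Cauchy--Schwarz to decouple $\|Z\|^2$) matches the paper's, but the step that handles the shifted evaluation point fails.

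\textbf{The transfer step.} Every moment bound you use is for $\nabla^2\log\hp_s(y)=\sigma_s^{-2}\big(\operatorname{Cov}(Z\mid\hX_s=y)-I_D\big)$ evaluated at $y=\hX_s\sim\hP_s$. For instance, $\operatorname{tr}\operatorname{Cov}(Z\mid\hX_s)\le\E[\|Z\|^2\mid\hX_s]$ only gives moment control after averaging over $\hX_s\sim\hP_s$. Knowing that $\hX_s+u\eta Z$ is a rescaled $\hP_{s'}$ controls $\nabla^2\log\hp_{s'}$ under $\hP_{s'}$. It says nothing directly about the time-$s$ Hessian under $\hP_{s'}$.

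\textbf{The claim $\sigma_{s'}\asymp\sigma_s$.} This is false under the mere hypothesis $t-s\le 1$. The law of $\hX_s+u\eta Z$ is $\mathsf{m}_s\hX_0$ smoothed at level $(\sigma_s^2+u^2\eta^2)^{1/2}$. That is comparable to $\sigma_s$ only if $\eta\lesssim\sigma_s$, which fails for $s\to 0$, $t=s+1$.

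\textbf{A counterexample to the transfer.} Off-distribution, the time-$s$ posterior covariance can be huge. Take $D=1$, $\hmu_n=\frac12(\delta_{-a}+\delta_a)$, $b=\mathsf{m}_s a$. Then $\nabla^2\log\hp_s(y)=\frac{b^2}{\sigma_s^4}\operatorname{sech}^2(by/\sigma_s^2)-\frac{1}{\sigma_s^2}$. The spike $|y|\lesssim\sigma_s^2/b$ is exponentially rare under $\hP_s$. Under the law of $\hX_s+u\eta Z$ with $u\eta\asymp b\gg\sigma_s$, it has probability $\asymp\sigma_s^2/b^2$. Hence $\E|\nabla^2\log\hp_s(\hX_s+u\eta Z)|^2\gtrsim b^2/\sigma_s^6\gg\sigma_s^{-4}$. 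In this example your segment bound is even $\gtrsim\eta^4/\sigma_s^6$, a factor $\eta^2/\sigma_s^2$ above the true left side. So large steps cannot be handled by this route at all.

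\textbf{What the paper does instead.} The missing idea is a change of measure:
\begin{enumerate}
\item Write $\E\|\nabla^2\log\hp_s(\hX_s+\alpha\epsilon)\epsilon\|^2$ as an expectation over the independent pair $(\hX_s,\epsilon)$, weighted by the density ratio of $(\hX_s+\alpha\epsilon,\epsilon)$ to $(\hX_s,\epsilon)$.
\item Apply Cauchy--Schwarz and use the fourth moment of the Hessian at $\hX_s$ times $\E\|\epsilon\|^4$.
\item Bound the $\chi^2$ factor by conditioning on $(\epsilon,\hX_0)$. With your $\eta$ this gives $\E\exp(\alpha^2\|\epsilon\|^2/\sigma_s^2)=(1-2\alpha^2\eta^2/\sigma_s^2)^{-D/2}$.
\end{enumerate}
That factor is $O(1)$ only when $\eta^2\lesssim\sigma_s^2/D$. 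This is the ``$t-s$ small enough'' caveat in the paper's proof. The partition $h_i'=\kappa\min\{t_i',1\}$ supplies it, because $\sigma_{t'}^2\gtrsim t'\wedge 1$. Your assertion $\sigma_{s'}\asymp\sigma_s$ is exactly where this restriction gets lost.

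\textbf{The $\sigma_s^{-2}$ form.} The first displayed inequality cannot be proved. If $\hmu_n$ is a single Dirac mass, the variance term vanishes and the left side equals your deterministic contribution $D\eta^2/\sigma_s^4$ exactly. That is not $\lesssim D^2\big(e^{\int_s^t\beta_\tau\diff\tau}-1\big)/\sigma_s^2$ as $\sigma_s\to 0$. So ``collecting the pieces gives $\eta^2D^2/\sigma_s^2$'' contradicts your own computation. The paper's chain actually produces $D^2\sigma_s^{-4}(\mathsf{m}_{t,s}^{-1}-1)$, and only the $\sigma_s^{-4}$ form is used in Lemma~\ref{lem4}.

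\textbf{The Hessian moments.} $\E\|\nabla^2\log\hp_s(\hX_s)\|_F^2\lesssim D/\sigma_s^4$ does not follow from the first-moment inequality $\E\operatorname{Cov}(Z\mid\hX_s)\preceq I_D$. What $\operatorname{tr}\operatorname{Cov}(Z\mid\hX_s)\le\E[\|Z\|^2\mid\hX_s]$ does give is $\E\|\nabla^2\log\hp_s(\hX_s)\|_F^4\lesssim D^4/\sigma_s^8$. That Hessian fourth moment is what your Cauchy--Schwarz decoupling actually needs; you only cite fourth moments of the score.
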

\begin{proof}
We note that 
\begingroup
\allowdisplaybreaks
\begin{align}
     \E \|\nabla \log \hp_s(\hX_s) - \nabla \log \hp_s(\hX_t/\mathsf{m}_{t,s})\|^2 = & \E \left\| \int_0^1 \nabla^2 \log \hp_s(\hX_s + \alpha (\hX_t/\mathsf{m}_{t,s} - \hX_s)) (\hX_t/\mathsf{m}_{t,s} - \hX_s) \diff  \alpha \right\|^2 \nonumber\\
    \le  & \int_0^1 \E \left\| \nabla^2 \log \hp_s(\hX_s + \alpha (\hX_t/\mathsf{m}_{t,s} - \hX_s)) (\hX_t/\mathsf{m}_{t,s} - \hX_s) \diff  \alpha \right\|^2 \nonumber\\
    = & \int_0^1 \E \left\| \nabla^2 \log \hp_s(\hX_s + \alpha \epsilon_{t,s}) \epsilon_{t,s} \diff  \alpha \right\|^2, \label{s5}
\end{align}
\endgroup
where $\epsilon_{t,s} = \hX_t/\mathsf{m}_{t,s} - \hX_s \sim \cN(0, (\mathsf{m}_{t,s}^{-1}-1)I_D)$. Further, $\epsilon_{t,s}$ and $\hX_s$ are independent. Thus,
\begin{align}
     \E \left\| \nabla^2 \log \hp_s(\hX_s + \alpha \epsilon_{t,s}) \epsilon_{t,s}  \right\|^2 
    = & \E \left(\left\| \nabla^2 \log \hp_s(\hX_s) \epsilon_{t,s} \right\|^2 \frac{\diff  P_{\hX_s + \alpha \epsilon_{t,s}}}{\diff  P_{\hX_s, \epsilon_{t,s}}}(\hX_s, \epsilon_{t,s}) \right) \nonumber\\
    \le & \left(\E \left(\left\| \nabla^2 \log \hp_s(\hX_s) \epsilon_{t,s} \right\|^4 \right) \E \left( \frac{\diff  P_{\hX_s + \alpha \epsilon_{t,s}}}{\diff  P_{\hX_s, \epsilon_{t,s}}}(\hX_s, \epsilon_{t,s}) \right)^2\right)^{1/2} . \label{s4}
\end{align}
Hence, 
\begin{align}
    \E \left(\left\| \nabla^2 \log \hp_s(\hX_s) \epsilon_{t,s} \right\|^4 \right) \le & \E \left(\left\| \nabla^2 \log \hp_s(\hX_s)\|^4 \|\epsilon_{t,s} \right\|^4 \right) \nonumber\\
    = & \E \left\| \nabla^2 \log \hp_s(\hX_s)\|^4 \E \|\epsilon_{t,s} \right\|^4 \nonumber\\
    \lesssim & \left(\frac{D}{\sigma_s^2}\right)^4 (\mathsf{m}_{t,s}^{-1}-1)^2 \label{s3}.
\end{align}
Thus, from Equations \eqref{s4} and \eqref{s3},
\begingroup
\allowdisplaybreaks
\begin{align}
     \left(\E \left\| \nabla^2 \log \hp_s(\hX_s + \alpha \epsilon_{t,s}) \epsilon_{t,s}  \right\|^2 \right)^2 
    \lesssim 
    &  \left(\frac{D}{\sigma_s^2}\right)^4 (\mathsf{m}_{t,s}^{-1}-1)^2 \E \left( \frac{\diff  P_{\hX_s + \alpha \epsilon_{t,s}, \epsilon_{t,s}}}{\diff  P_{\hX_s, \epsilon_{t,s}}}(\hX_s, \epsilon_{t,s}) \right)^2 \nonumber\\
     \le  &  \left(\frac{D}{\sigma_s^2}\right)^4 (\mathsf{m}_{t,s}^{-1}-1)^2 \E \left( \frac{\diff  P_{\hX_s + \alpha \epsilon_{t,s}| \epsilon_{t,s}, x_0 } (\hX_s| \epsilon_{t,s}, x_0)}{\diff  P_{\hX_s, \epsilon_{t,s}| \epsilon_{t,s}, x_0}(\hX_s| \epsilon_{t,s}, x_0)} \right)^2 \label{s2}\\
    = & \left(\frac{D}{\sigma_s^2}\right)^4 (\mathsf{m}_{t,s}^{-1}-1)^2 \E \left( \frac{\diff  P_{\hX_s + \alpha \epsilon_{t,s}| \epsilon_{t,s}, x_0 } (\hX_s| x_0)}{\diff  P_{\hX_s, \epsilon_{t,s}| \epsilon_{t,s}, x_0}(\hX_s|  x_0)} \right)^2 \nonumber\\
    = & \left(\frac{D}{\sigma_s^2}\right)^4 (\mathsf{m}_{t,s}^{-1}-1)^2 \E \exp\left( \frac{\alpha^2 \|\epsilon_{t,s}\|^2}{\sigma_s^2}\right) \nonumber\\
    = & \left(\frac{D}{\sigma_s^2}\right)^4 (\mathsf{m}_{t,s}^{-1}-1)^2 \left(1 - \frac{2 \alpha^2 (\mathsf{m}_{t,s}^{-1}-1)}{\sigma_s^2}\right)^{-D/2} \nonumber\\
    \le & \left(\frac{D}{\sigma_s^2}\right)^4 (\mathsf{m}_{t,s}^{-1}-1)^2,\nonumber
\end{align}
\endgroup
when $t-s$ is small enough. Here, Equation \eqref{s2} follows from the data processing inequality for the  $\chi^2$-divergence. Hence, from \eqref{s5},
\begin{align*}
     \E \|\nabla \log \hp_s(\hX_s) - \nabla \log \hp_s(\hX_t/\mathsf{m}_{t,s})\|^2 \lesssim & \frac{D^2}{\sigma_s^2} \left( e^{\int_s^t \beta_\tau \diff  \tau} - 1\right) \lesssim \frac{D^2}{\sigma_s^4} \int_s^t \beta_\tau \diff  \tau.
\end{align*}
\end{proof}
Using the above results, Lemma~\ref{lem4} develops a bound on the discretization error for any partition $\{t_i\}_{i=0}^N$ as follows.
\begin{lemma}\label{lem4}
      \begin{align}
    \sum_{i=0}^{N-1}\int_{t_i}^{t_{i+1}} \beta^2_{T-t} \|\nabla \log \hp_{T-t_i}(X_{T-t_i}) - \nabla \log \hp_{T-t}(X_{T-t})\|^2 \diff  t 
    \lesssim \sum_{i=0}^{N-1}\frac{(t_{i+1} - t_i)^2}{\sigma_{t_i}^4}. \label{e24}
\end{align}
  \end{lemma}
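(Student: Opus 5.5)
The plan is to handle each subinterval $[t_i,t_{i+1}]$ separately and combine Lemma~\ref{lem2} with Lemma~\ref{lem:3}. Set $s=T-t$ and $u=T-t_i$, so $s\le u$, and write $\mathsf{m}_{u,s}=e^{-\int_s^u\beta_\tau\diff\tau}$. Because $\beta^2_{T-t}\le\overline{\beta}^2$, it is enough to bound, for each $t\in[t_i,t_{i+1}]$, the quantity $\E\|\nabla\log\hp_{u}(\hX_{u})-\nabla\log\hp_s(\hX_s)\|^2$. Applying Lemma~\ref{lem2} with the pair $(u,s)$ splits this into two pieces. The first is a spatial-shift term $6\E\|\nabla\log\hp_s(\hX_s)-\nabla\log\hp_s(\hX_u/\mathsf{m}_{u,s})\|^2$. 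The second is a rescaling term $4(1-\mathsf{m}_{u,s}^{-1})^2\E\|\nabla\log\hp_s(\hX_s)\|^2$.

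For the first piece I would use Lemma~\ref{lem:3}, which is valid because $u-s\le h_i\le\kappa\le 1$. It gives the bound $\lesssim \frac{D^2}{\sigma_s^4}\int_s^u\beta_\tau\diff\tau\lesssim \frac{D^2}{\sigma_s^4}(u-s)$. For the second piece, Assumption~\ref{a1} gives $\mathsf{m}_{u,s}^{-1}-1\le e^{\overline{\beta}(u-s)}-1\lesssim u-s$. I would also use the standard Tweedie/Gaussian integration-by-parts identity behind Lemma~\ref{stein}, namely $\E\|\nabla\log\hp_s(\hX_s)\|^2\le D/\sigma_s^2$, which is the variance of the noise $Z/\sigma_s$ conditioned on $\hX_s$. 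So the rescaling term is $\lesssim (u-s)^2 D/\sigma_s^2$, and since $u-s\le1$ and $\sigma_s\le 1$ this is at most $(u-s)D/\sigma_s^4$.

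Integrating over $t\in[t_i,t_{i+1}]$ gives $\int_{t_i}^{t_{i+1}}(t-t_i)\,\sigma_{T-t}^{-4}\diff t\lesssim h_i^2\,\sigma_{\min}^{-4}$, where $\sigma_{\min}$ is the smallest $\sigma$ over the interval. The remaining step is to compare $\sigma_{T-t}$ with the $\sigma$ at the endpoint used on the right-hand side of \eqref{e24}. Since $\sigma_r^2=1-e^{-2\int_0^r\beta}\asymp \min\{r,1\}$ with constants depending on $\underline{\beta},\overline{\beta}$, the partition's property $h'_i=\kappa\min\{t'_i,1\}$ ensures that neighbouring grid times differ by a factor of at most $1+\kappa\le2$. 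Hence $\sigma^2$ changes by at most a constant factor within each cell, and $\sigma_{T-t}^{-4}\lesssim\sigma^{-4}$ evaluated at the relevant endpoint. The indexing convention on the right-hand side of \eqref{e24} (with $\sigma_{t_i}$ read as the noise level at the matching forward time) is then absorbed into the constant. Summing over $i$ yields \eqref{e24}, with the dependence on $D$ and $\overline{\beta}$ hidden in $\lesssim$.

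I expect the main obstacle to be justifying that Lemma~\ref{lem:3} applies uniformly over the cell. Its proof requires $t-s$ to be "small enough" relative to $\sigma_s^2$ so that the $\chi^2$ factor $(1-2\alpha^2(\mathsf{m}^{-1}-1)/\sigma_s^2)^{-D/2}$ stays bounded. Near the early-stopping end, $\sigma_s^2\asymp\delta_0$ is tiny, but the step there is $h\asymp\kappa\,\delta_0$. The ratio $(\mathsf{m}^{-1}-1)/\sigma_s^2$ is therefore $\lesssim\kappa$, which is uniformly small once $\kappa\lesssim 1/D$; this is exactly what the multiplicative partition provides. I would state this explicitly as the condition under which the constant in Lemma~\ref{lem:3} is uniform.
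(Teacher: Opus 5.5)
Your proposal is correct and follows essentially the same route as the paper. Both arguments split each increment with Lemma~\ref{lem2}, bound the shift term by Lemma~\ref{lem:3}, bound the rescaling term by $(1-\mathsf{m}^{-1})^2\lesssim (t-s)^2$ times a $D/\sigma^2$ second-moment bound on the score, and integrate over each cell. Your version is more careful than the paper's in two places: the paper silently re-anchors each cell at the smaller forward time $t_i'$ and leaves the ``$t-s$ small enough'' condition of Lemma~\ref{lem:3} implicit, whereas you keep the anchor at $T-t_i$ as in the statement, control $\sigma_{T-t}$ on the cell, and state the uniformity condition $\kappa\lesssim 1/D$ explicitly. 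For the $\sigma$ control, monotonicity of $\sigma$ alone already gives $\sigma_{T-t}^{-4}\le\sigma^{-4}$ at the smaller forward endpoint, which is exactly the quantity Lemma~\ref{lem5} sums.
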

 \begin{proof}
  Taking $t_i^\prime = T-t_{N-i}$, the third term in \eqref{e7} is thus,
 \begingroup
 \allowdisplaybreaks
\begin{align}
    & \sum_{i=0}^{N-1}\int_{t_i}^{t_{i+1}} \beta^2_{T-t} \|\nabla \log \hp_{T-t_i}(X_{T-t_i}) - \nabla \log \hp_{T-t}(X_{T-t})\|^2 \diff  t \nonumber\\
    = & \sum_{i=0}^{N-1}\int_{T-t_i}^{T-t_{i+1}} \beta^2_{t} \|\nabla \log \hp_{T-t_i}(X_{T-t_i}) - \nabla \log \hp_{t}(X_{t})\|^2 \diff  t \nonumber\\
    = & \sum_{i=0}^{N-1}\int_{t_i^\prime}^{t_{i+1}^\prime} \beta^2_{t} \|\nabla \log \hp_{t_i^\prime}(X_{t_i^\prime}) - \nabla \log \hp_{t}(X_{t})\|^2 \diff  t \nonumber\\
    \lesssim & \sum_{i=0}^{N-1}\int_{t_i^\prime}^{t_{i+1}^\prime} \|\nabla \log \hp_{t_i^\prime}(X_{t_i^\prime}) - \nabla \log \hp_{t}(X_{t})\|^2 \diff  t. \label{e8}
\end{align}
\endgroup

We will bound the individual terms in \eqref{e8}. For notational simplicity, let $\mathsf{m}_{t,s} = e^{-\int_s^t \beta_\tau \diff \tau}$. From Lemma~\ref{lem2}, we note that for any $t \in [t_i^\prime, t_{i+1}^\prime]$,
\begingroup
\allowdisplaybreaks
\begin{align}
    & \E \|\nabla \log \hp_{t_i^\prime}(X_{t_i^\prime}) - \nabla \log \hp_{t}(X_{t})\|^2 \nonumber\\
    \le & 6 \E \left\| \nabla \log \hp_{t_i^\prime}(\hX_{t_i^\prime}) - \nabla \log \hp_{t_i^\prime}(\hX_t/\mathsf{m}_{t,t_i^\prime})\right\|^2 + 4 (1-\mathsf{m}_{t,{t_i^\prime}}^{-1})^2 \E \|\nabla \log \hp_{t_i^\prime}(\hX_{t_i^\prime})\|^2 \nonumber\\
    \lesssim & \frac{D^2}{\sigma_{t_{i}^\prime}^4} \int_{{t_i^\prime}}^t \beta_\tau \diff  \tau + \frac{D(1-\mathsf{m}_{t,{t_i^\prime}}^{-1})^2}{\sigma_{t_i^\prime}^2} \label{e9}\\
    \le & \frac{D^2 \overline{\beta}}{\sigma_{t_{i}^\prime}^4} (t - {t_i^\prime}) + \frac{D (t - {t_i^\prime})}{\sigma_{t_i^\prime}^2} \label{e26}\\
    \lesssim & \frac{D^2 (t - {t_i^\prime})}{\sigma_{t_i^\prime}^4}. \label{e10}
\end{align}
\endgroup
Here, \eqref{e9} follows from Lemmata~\ref{lem:3} and \ref{lem:5}. Equation \eqref{e26} follows from  Lemma~\ref{lem6} with $c=1$. Plugging in \eqref{e10} in \eqref{e8}, we observe that,
\begin{align}
     \sum_{i=0}^{N-1}\int_{t_i}^{t_{i+1}} \beta^2_{T-t} \|\nabla \log \hp_{T-t_i}(X_{T-t_i}) - \nabla \log \hp_{T-t}(X_{T-t})\|^2 \diff  t 
    \lesssim & \sum_{i=0}^{N-1}\frac{1}{\sigma_{t_i^\prime}^4}\int_{t_i^\prime}^{t_{i+1}^\prime} (t - {t_i^\prime}) \diff  t  \nonumber\\
    \le & \sum_{i=0}^{N-1}\frac{(t_{i+1}^\prime - t_i^\prime)^2}{\sigma_{t_i^\prime}^4}. \label{e11}
\end{align}
The proof follows from observing that $t_i = T-t_{N-i}$.
\end{proof}
The only remaining part for a meaningful bound on the discretization error is to bound the RHS of \eqref{e24} that increases logarithmically with $1/\delta_0$ and linearly with $T$.  This is done by exploiting the choice of partition described in Section~\ref{sec_assumptions}.
\begin{lemma}\label{lem5}
    For the proposed partition, \(\sum_{i=0}^{N-1} \frac{(t_{i+1} - t_i)^2}{\sigma^4_{t_i}}  \le \kappa \left(\log(1/\delta_0) + T \right)\) and $N \lesssim \frac{1}{\kappa} \left(\log(1/\delta_0) + T \right)$, when $\kappa \le 1$.
\end{lemma}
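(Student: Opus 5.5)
The plan is to work entirely in the forward-time variables $t_i^\prime$, as in the proof of Lemma~\ref{lem4}, where the sum is really $\sum_{i}(t_{i+1}^\prime-t_i^\prime)^2/\sigma_{t_i^\prime}^4$ with $t_{i+1}^\prime - t_i^\prime = h_i^\prime = \kappa\min\{t_i^\prime,1\}$. The argument has three steps: a lower bound on $\sigma^2_t$, a termwise bound on the summand, and a count of the partition points. Together they give the sum bound up to a constant depending only on $\underline{\beta}$; that constant is what the paper's $\lesssim$ absorbs.

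\emph{Step 1 (lower bound on the variance).} By Assumption~\ref{a1}, $\sigma_t^2 = 1-e^{-2\int_0^t\beta_\tau\diff\tau}\ge 1-e^{-2\underline{\beta}t}$. Using the elementary inequality $1-e^{-x}\ge (1-e^{-1})\min\{x,1\}$, I will deduce
\[
\sigma_t^2\ge c_{\underline{\beta}}\min\{t,1\}
\]
for a constant $c_{\underline{\beta}}>0$ depending only on $\underline{\beta}$.

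\emph{Step 2 (each summand is of order $\kappa^2$).} Combining the step-size rule with Step 1,
\[
\frac{(h_i^\prime)^2}{\sigma_{t_i^\prime}^4}\le \frac{\kappa^2\min\{t_i^\prime,1\}^2}{c_{\underline{\beta}}^2\min\{t_i^\prime,1\}^2}=\frac{\kappa^2}{c_{\underline{\beta}}^2}.
\]
Hence the whole sum is at most $c_{\underline{\beta}}^{-2}\kappa^2 N$. The sum bound therefore reduces to the counting bound $N\lesssim \kappa^{-1}(\log(1/\delta_0)+T)$.

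\emph{Step 3 (counting the partition points).} I split the indices into two phases.
\begin{itemize}
\item While $t_i^\prime<1$, the recursion is $t_{i+1}^\prime=(1+\kappa)t_i^\prime$, so $t_i^\prime=\delta_0(1+\kappa)^i$. This phase therefore lasts at most $\log(1/\delta_0)/\log(1+\kappa)$ steps.
\item Once $t_i^\prime\ge 1$, each step has length exactly $\kappa$. Since the grid lives in $[\delta_0,T]$, this phase has at most $T/\kappa$ steps.
\end{itemize}
Adding the two phases gives the displayed estimate $N\le \frac{\log(1/\delta_0)}{\log(1+\kappa)}+\frac{T}{\kappa}$ of Lemma~\ref{lemdis}. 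Since $\log(1+\kappa)\ge \kappa/2$ for $0<\kappa\le 1$, this yields $N\le \kappa^{-1}(2\log(1/\delta_0)+T)$. Substituting into Step 2 gives $\sum_i (h_i^\prime)^2/\sigma_{t_i^\prime}^4\lesssim \kappa(\log(1/\delta_0)+T)$.

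The main obstacle is bookkeeping rather than analysis. Two points need care:
\begin{itemize}
\item The final step may overshoot: strictly one needs $t_N^\prime = T$ (equivalently $t_0=0$). I will handle this by truncating the last step at $T$, which only shortens one interval and so does not increase any bound.
\item In the sum, $\sigma_{t_i}$ must be read as $\sigma$ evaluated at the forward time $t_i^\prime$, consistent with the reindexing $t_i=T-t_{N-i}^\prime$ used in Lemma~\ref{lem4}.
\end{itemize}
Once these conventions are fixed, Steps 1–3 go through directly.
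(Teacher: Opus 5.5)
Your proposal is correct and follows the paper's proof essentially step for step. Like the paper, you lower-bound $\sigma_t^2 \gtrsim t \wedge 1$, split the indices into the geometric phase $t_i^\prime \le 1$ (where $h_i^\prime = \kappa t_i^\prime$, giving at most $\log(1/\delta_0)/\log(1+\kappa)$ terms) and the linear phase (at most $T/\kappa$ terms), and bound each summand by a constant times $\kappa^2$. Your bookkeeping is slightly more careful than the paper's (the explicit constant $c_{\underline{\beta}}$, $\log(1+\kappa)\ge \kappa/2$, truncating the last step at $T$, and reading $\sigma_{t_i}$ at forward times); in particular, the paper states ``$\le$'' but its own argument only gives the bound up to a constant.
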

\begin{proof}
We note that,
\begingroup
\allowdisplaybreaks
\begin{align*}
    \sigma^2_{t} = 1 -e^{- \int_0^t \beta_\tau \diff  \tau} \ge 1-e^{-\underline{\beta} t} \ge (t \wedge 1) \left(1-e^{-\underline{\beta}}\right).
\end{align*}
Let $i^\star = \max\{i \in \fN : t^\prime_i \le 1\} $. Then, for any $i \le i^\star$, $t^\prime_{i+1} = h^\prime_i + t^\prime_i = (\kappa+1) t^\prime_i = (1+\kappa)^{i+1} t^\prime_0 = (1+\kappa)^{i+1} \delta_0$. Thus,
\[1 \ge t^\prime_{i^\star} = (1+\kappa)^{i^\star} \delta_0 \implies i^\star \le \frac{\log(1/\delta_0)}{\log(1+\kappa)}.\]
Thus,
\begin{align}
    \sum_{i=0}^{N-1} \frac{(t_{i+1}^\prime - t_i^\prime)^2}{\sigma^4_{t_i^\prime}} 
    = & \sum_{i: t_i^\prime \le 1}\frac{(t_{i+1}^\prime - t_i)^2}{\sigma^4_{t_i^\prime}} + \sum_{i: t_i^\prime > 1}\frac{(t_{i+1}^\prime - t_i)^2}{\sigma^4_{t_i^\prime}} \nonumber\\
    \lesssim & \sum_{i: t_i^\prime \le 1}\frac{(t_{i+1}^\prime - t_i)^2}{(t_i^\prime\wedge 1)^2} + \sum_{i: t_i^\prime > 1}\frac{(t_{i+1}^\prime - t_i)^2}{(t_i^\prime\wedge 1)^2} \nonumber\\
    = & \sum_{i: t_i^\prime \le 1} \frac{(t_{i+1}^\prime - t_i)^2}{(t_i^\prime)^2} + \sum_{i: t_i^\prime > 1} (t_{i+1}^\prime - t_i)^2 \nonumber\\
    \le & \kappa^2 \frac{\log(1/\delta_0)}{\log(1+\kappa)} + \kappa^2 \times \frac{T}{\kappa} \nonumber\\
    \le & \kappa \left(\log(1/\delta_0) + T \right). \label{e25}
\end{align}
\endgroup
The bound on the number of terms in the partition, i.e., $N$ follows by counting the number of terms in the above breakdown, which totals to $\frac{\log(1/\delta_0)}{\log(1+\kappa)} + \frac{T}{\kappa} \le \log 2 \cdot  \frac{\log(1/\delta)}{\kappa} + \frac{T}{\kappa}$, for $\kappa \le 1$.
\end{proof}
\subsection{Supporting Lemmata}
\begin{lemma}\label{lem_h5}
    $\nabla^2 \log \hp_t(x) = \frac{\mathsf{m}_t ^2 }{\sigma_t^4} \operatorname{Var}\left(X_0 \big| X_t =x\right) - \frac{I_D}{\sigma_t^2}$.
\end{lemma}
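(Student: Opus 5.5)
We need to prove $\nabla^2 \log \hp_t(x) = \frac{\mathsf{m}_t^2}{\sigma_t^4}\operatorname{Var}(X_0\mid X_t=x) - \frac{I_D}{\sigma_t^2}$, where $\hp_t$ is the density of $\hX_t = \mathsf{m}_t \hX_0 + \sigma_t Z$ with $\hX_0\sim\hmu_n$ and $Z\sim\gamma_D$ independent. This is Tweedie's second-order formula, and we will derive it by differentiating the Gaussian mixture representation twice.

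\textbf{Step 1 (mixture representation).} By Proposition~\ref{prop_forward_conditional} with $s=0$,
\[
\hp_t(x)=\int \varphi_t(x,x_0)\,\diff\hmu_n(x_0), \qquad
\varphi_t(x,x_0)=(2\pi\sigma_t^2)^{-D/2}\exp\!\left(-\frac{\|x-\mathsf{m}_t x_0\|^2}{2\sigma_t^2}\right).
\]
Since $\hmu_n$ is a finite sum of atoms, differentiation under the integral is trivially justified. The same argument works for any $\mu$ with finite second moment, by dominated convergence, because Gaussian derivatives are bounded by polynomials times the kernel. The posterior is $\pi_x(\diff x_0)\propto \varphi_t(x,x_0)\,\diff\hmu_n(x_0)$, which is the law of $X_0$ given $X_t=x$.

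\textbf{Step 2 (first derivative).} Using $\nabla_x\varphi_t=-\frac{x-\mathsf{m}_t x_0}{\sigma_t^2}\varphi_t$, we get
\[
\nabla\log\hp_t(x)=\frac{\nabla \hp_t}{\hp_t}=-\frac{x}{\sigma_t^2}+\frac{\mathsf{m}_t}{\sigma_t^2}\,\E[X_0\mid X_t=x].
\]

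\textbf{Step 3 (second derivative).} Differentiate the conditional mean $e(x)=\int x_0\,\varphi_t(x,x_0)\,\diff\hmu_n / \hp_t(x)$ with the quotient rule. The numerator's Jacobian is $\int x_0\big(-\frac{x-\mathsf{m}_t x_0}{\sigma_t^2}\big)^{\!T}\varphi_t\,\diff\hmu_n$. The $x$-terms in this Jacobian cancel against the corresponding terms from differentiating $\hp_t$ in the denominator. What survives is $\frac{\mathsf{m}_t}{\sigma_t^2}\big(\E[X_0X_0^T\mid x]-e e^T\big)$, which gives
\[
\nabla e(x)=\frac{\mathsf{m}_t}{\sigma_t^2}\operatorname{Var}(X_0\mid X_t=x).
\]
Multiplying by $\frac{\mathsf{m}_t}{\sigma_t^2}$ and adding the Jacobian $-I_D/\sigma_t^2$ of the linear term yields the claimed formula.

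The only step needing care is the cancellation in Step 3. Writing $\nabla_x\varphi_t = \big(-\frac{x}{\sigma_t^2}+\frac{\mathsf{m}_t x_0}{\sigma_t^2}\big)\varphi_t$, the $-x/\sigma_t^2$ parts contribute $-\frac{1}{\sigma_t^2}e\,x^T$ from the numerator and $+\frac{1}{\sigma_t^2}e\,x^T$ from the denominator, so they cancel. Everything else is routine, and no regularity beyond $t>0$, which makes $\sigma_t>0$, is needed.
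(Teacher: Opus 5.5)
Your proof is correct and is essentially the paper's argument: both write $\hp_t$ as a Gaussian mixture over the initial law, differentiate twice with the quotient rule, and recognize a posterior covariance. The only difference is bookkeeping. The paper computes $\nabla^2\log\hp_t$ in one step as $\sigma_t^{-4}$ times the conditional covariance of $x-\mathsf{m}_t X_0$ given $X_t=x$, minus $I_D/\sigma_t^2$, and then uses $\operatorname{Var}(x-\mathsf{m}_tX_0\mid X_t=x)=\mathsf{m}_t^2\operatorname{Var}(X_0\mid X_t=x)$. You instead pass through Tweedie's first-order formula and check the cancellation of the $e\,x^T$ terms explicitly. This is cleaner than the paper's displayed intermediate line, where the signs of the second-moment and outer-product terms are garbled before the correct final formula is reached.
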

\begin{proof}
     We note that,
    \begingroup
    \allowdisplaybreaks
    \begin{align*}
    & \nabla^2 \log \hp_t(x) \\
    = & -\frac{1}{{\bigg(\int \exp\left(-\frac{\|x- \mathsf{m}_t x_0\|^2}{2 \sigma_t^2}\right) \diff  \mu(x_0)\bigg)^2}} \Bigg(\int \exp\left(-\frac{\|x- \mathsf{m}_t x_0\|^2}{2 \sigma_t^2}\right) \diff  \mu(x_0) \\
    & \times \bigg( \frac{1}{\sigma_t^2}\int \exp\left(-\frac{\|x- \mathsf{m}_t x_0\|^2}{2 \sigma^2}\right) \diff  \mu(x_0) \\
    & + \int \left(\frac{x- \mathsf{m}_t x_0}{\sigma^2}\right) \left(\frac{x- \mathsf{m}_t x_0}{\sigma^2}\right)^\top \exp\left(-\frac{\|x- \mathsf{m}_t x_0\|^2}{2 \sigma^2}\right) \diff  \mu(x_0)\bigg)\Bigg)\\
    & +  \frac{\left(\int \frac{x-\mathsf{m}_t x_0}{\sigma^2}\exp\left(-\frac{\|x- \mathsf{m}_t x_0\|^2}{2 \sigma_t^2}\right) \diff  \mu(x_0)\right) \left(\int \frac{x- \mathsf{m}_t x_0}{\sigma^2}\exp\left(-\frac{\|x-\mathsf{m}_t x_0\|^2}{2 \sigma_t^2}\right) \diff  \mu(x_0)\right)^\top }{\left(\int \exp\left(-\frac{\|x-\mathsf{m}_t x_0\|^2}{2 \sigma_t^2}\right) \diff  \mu(x_0)\right)^2}\\
    = & \frac{1}{\sigma_t^4}\E \left((X_t- \mathsf{m}_t x_0)(X_t- \mathsf{m}_t x_0)^\top \big| X_t =x\right) - \frac{I_D}{\sigma^2}\\
    = & \frac{1}{\sigma_t^4} \operatorname{Var}\left((X_t- \mathsf{m}_t x_0)\big| X_t =x\right) - \frac{I_D}{\sigma_t^2}\\
    = &  \frac{\mathsf{m}_t ^2 }{\sigma_t^4} \operatorname{Var}\left(X_0 \big| X_t =x\right) - \frac{I_D}{\sigma_t^2}.
\end{align*}
\endgroup
\end{proof}
\begin{lemma}\label{lem6}
    Suppose that $t \ge s$, then \(\left(1-\mathsf{m}_{t,s}^{-1}\right)^2 \le \frac{(1-e^{\underline{\beta} c})^2}{c} \underline{\beta}^2 (t-s), \) for $t-s \le c$.
\end{lemma}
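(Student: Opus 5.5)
The plan is to reduce the claim to an elementary estimate on the exponential. Write $A:=\int_s^t \beta_\tau \diff\tau$, so that $\mathsf{m}_{t,s}^{-1}=e^{A}$ and
\[
\left(1-\mathsf{m}_{t,s}^{-1}\right)^2=(e^{A}-1)^2 .
\]
By Assumption~\ref{a1}, $\underline{\beta}(t-s)\le A\le \overline{\beta}(t-s)$. Under $t-s\le c$ this places $A$ in a bounded interval $[0,a]$, with $a$ proportional to $c$. The statement as worded uses the single rate $\underline{\beta}$. So I will run the argument for the case where one rate $\beta$ bounds $\beta_\tau$ on both sides (for instance the standard OU process, where $\underline{\beta}=\overline{\beta}$), and then record what the same argument gives in general.

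\textbf{Step 1 (chord bound).} The function $x\mapsto e^x-1$ is convex and vanishes at $0$. Hence on $[0,a]$ it lies below its chord:
\[
e^{A}-1\le \frac{e^{a}-1}{a}\,A \qquad (0\le A\le a).
\]
Taking $a=\beta c$ and using $A\le \beta(t-s)$, squaring gives
\[
(e^{A}-1)^2\le \frac{(e^{\beta c}-1)^2}{c^2}\,(t-s)^2 .
\]

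\textbf{Step 2 (linearize in $t-s$).} Since $t-s\le c$, we have $(t-s)^2\le c\,(t-s)$. Therefore
\[
\left(1-\mathsf{m}_{t,s}^{-1}\right)^2\le \frac{(1-e^{\beta c})^2}{c}\,(t-s).
\]
This is the displayed bound with $\beta$ in the exponent and the squared difference written as $(1-e^{\beta c})^2$.

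\textbf{Step 3 (the $\underline{\beta}^2$ factor).} The extra factor $\underline{\beta}^2$ in the statement follows immediately from Step~2 whenever $\underline{\beta}\ge 1$. When $\underline{\beta}<1$ it must instead be treated as part of the implicit constant. This causes no loss downstream: Lemma~\ref{lem4} applies the lemma with $c=1$ and only needs $(1-\mathsf{m}_{t,s}^{-1})^2\lesssim (t-s)$.

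\textbf{Main obstacle.} The rate inside the exponential is the real difficulty. The upper bound on $A$ genuinely needs $\overline{\beta}$, because $\underline{\beta}$ only bounds $A$ from below. So for non-constant $\beta_t$ I would replace $\underline{\beta}$ by $\overline{\beta}$ in Steps~1--2, obtaining $(e^{\overline{\beta}c}-1)^2(t-s)/c$. I would check that every subsequent use, namely \eqref{e26}, only consumes the resulting $\lesssim (t-s)$ bound.
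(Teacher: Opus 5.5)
Your argument is correct, and it takes a different and valid route where the paper's does not work. With $A=\int_s^t\beta_\tau\diff\tau$, the paper's proof is the chain $1-\mathsf{m}_{t,s}^{-1}=1-e^{A}\le 1-e^{\underline{\beta}(t-s)}\le \frac{(1-e^{\underline{\beta}c})^2}{c}(t-s)$. It uses $A\ge\underline{\beta}(t-s)$ to bound the nonpositive number $1-e^{A}$ from above. Its last inequality holds only because the left side is $\le 0$ and the right side is $\ge 0$. An upper bound on a nonpositive quantity gives no control of its square. Controlling $(e^{A}-1)^2$ needs an upper bound on $A$, which is exactly where $\overline{\beta}$ must enter, as you observe. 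Your chord bound for the convex function $x\mapsto e^{x}-1$ on $[0,\overline{\beta}c]$, followed by $(t-s)^2\le c(t-s)$, proves the correct inequality $(1-\mathsf{m}_{t,s}^{-1})^2\le \frac{(e^{\overline{\beta}c}-1)^2}{c}(t-s)$ for $0\le t-s\le c$.

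You can be more decisive than ``treated as part of the implicit constant'': the lemma as literally stated is false, so it cannot be proved. With $\beta_\tau\equiv\beta<1$ and $t-s=c$, the left side is $(e^{\beta c}-1)^2$ and the right side is $\beta^2(e^{\beta c}-1)^2$. With $\beta_\tau=\overline{\beta}$ on $[s,t]$, taking $c=1$, $\underline{\beta}=1$, $\overline{\beta}=2$, $t-s=1$, the left side is $(e^{2}-1)^2\approx 40.8$ against $(e-1)^2\approx 2.95$ on the right. Your corrected version is all that is used later. For the partition of Section~\ref{sec_assumptions} one has $t-t_i'\le h_i'\le\kappa\le 1$, so $c=1$ applies. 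Then \eqref{e26} only needs $(1-\mathsf{m}_{t,t_i'}^{-1})^2\lesssim t-t_i'$ with a constant depending on $\overline{\beta}$. The ``$\le$'' written there should be read as ``$\lesssim$'' in any case, since the paper drops its own constant at that step.
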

\begin{proof}
    We note that, 
    \begin{align*}
        1-\mathsf{m}_{t,s}^{-1} = 1 - \exp\left( \int_s^t \beta_\tau \diff \tau\right) \le 1 - \exp(\underline{\beta} (t-s)) \le \frac{(1-e^{\underline{\beta} c})^2}{c} (t-s).
    \end{align*}
\end{proof}
\begin{lemma}
For any $t>0$,    $\left\|\|\nabla^2 \log \hp_t(X_t)\|_{F}\right\|_{\psi_1} \lesssim \frac{D}{\sigma_t^2}$. Here, $\|\cdot\|_F$ denotes the Frobenius norm.
\end{lemma}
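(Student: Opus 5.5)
We will use the second-order Tweedie identity of Lemma~\ref{lem_h5}:
\[
\nabla^2 \log \hp_t(x) = \frac{\mathsf{m}_t^2}{\sigma_t^4}\operatorname{Var}(\hX_0\mid \hX_t = x) - \frac{I_D}{\sigma_t^2},
\]
and rewrite the conditional variance in noise coordinates. Since $\hX_t = \mathsf{m}_t \hX_0 + \sigma_t Z$, we have $\mathsf{m}_t \hX_0 = \hX_t - \sigma_t Z$. Given $\hX_t = x$, this yields
\[
\mathsf{m}_t^2\operatorname{Var}(\hX_0\mid \hX_t=x) = \sigma_t^2\operatorname{Var}(Z\mid \hX_t = x),
\]
and therefore
\[
\nabla^2\log\hp_t(x) = \frac{1}{\sigma_t^2}\bigl(\operatorname{Var}(Z\mid \hX_t=x) - I_D\bigr).
\]
The triangle inequality gives $\|\nabla^2\log\hp_t(\hX_t)\|_F \le \sigma_t^{-2}\bigl(\|\operatorname{Var}(Z\mid\hX_t)\|_F + \sqrt D\bigr)$. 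It therefore suffices to show that $\|\operatorname{Var}(Z\mid\hX_t)\|_F$ has $\psi_1$-norm $\lesssim D$.

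Since the conditional variance is positive semidefinite, $\|\operatorname{Var}(Z\mid\hX_t)\|_F \le \operatorname{tr}\operatorname{Var}(Z\mid\hX_t) \le \E[\|Z\|^2\mid \hX_t]$. The key step is a conditional Jensen argument on the Orlicz function. Set $\psi_1(u)=e^{u}-1$, which is convex. For $K>0$,
\[
\E\,\psi_1\!\left(\frac{\E[\|Z\|^2\mid\hX_t]}{K}\right) \le \E\,\E\!\left[\psi_1\!\left(\frac{\|Z\|^2}{K}\right)\Bigm|\hX_t\right] = \E\,\psi_1\!\left(\frac{\|Z\|^2}{K}\right).
\]
Hence $\|\E[\|Z\|^2\mid\hX_t]\|_{\psi_1}\le \|\|Z\|^2\|_{\psi_1}$. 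This step avoids any dependence on $\mu$ or on $\mathsf{M}_{n,x}$: it holds for the empirical initial law and uses only the Gaussian structure of $Z$, which is independent of $\hX_0$.

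It remains to bound $\|\|Z\|^2\|_{\psi_1}$ for $Z\sim\gamma_D$. We can write $\|Z\|^2=\sum_{j=1}^D Z_j^2$, and each term satisfies $\|Z_j^2\|_{\psi_1}=\|Z_j\|_{\psi_2}^2\lesssim 1$, so the triangle inequality gives $\lesssim D$. Alternatively, the exact chi-square moment generating function, evaluated at $K=4D$, gives $\E e^{\|Z\|^2/K}=(1-2/K)^{-D/2}\le e^{D/K\cdot 2}\le 2$ for $D\ge1$. Combining the pieces gives
\[
\bigl\|\|\nabla^2\log\hp_t(\hX_t)\|_F\bigr\|_{\psi_1} \lesssim \sigma_t^{-2}(D+\sqrt D) \lesssim D/\sigma_t^2.
\]

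The main obstacle is the temptation to bound the conditional variance through $\hX_0$ directly. That route gives the cruder $\mathsf{m}_t^2 M^2/\sigma_t^4$ used in the proof of Theorem~\ref{approx_thm}, which depends on the data and blows up like $\sigma_t^{-4}$. The rewriting in terms of $Z$ is what yields the stated data-free $D/\sigma_t^2$ scaling. One must also check that the conditional Jensen step holds for the Orlicz norm, which is immediate from convexity of $\psi_1$ and the tower property. The constant term $I_D/\sigma_t^2$ is deterministic and contributes only $\sqrt D/\sigma_t^2$.
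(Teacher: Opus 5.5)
Your proof is correct and follows essentially the paper's route: rewrite $\mathsf{m}_t^2\operatorname{Var}(\hX_0\mid\hX_t)$ through the Gaussian noise, remove the conditioning by conditional Jensen, bound a chi-square quantity, and absorb the $I_D/\sigma_t^2$ term by the triangle inequality. The difference is technical: the paper applies Jensen to $\|\cdot\|_F^p$, computes $\E\|Z\|^{2p}$ exactly and invokes the moment characterization of $\|\cdot\|_{\psi_1}$, whereas you use $\|V\|_F\le\operatorname{tr}V$ and apply Jensen directly to the Orlicz function $e^u-1$, which avoids that norm-equivalence step and works with the conditional variance itself rather than the second-moment matrix the paper writes.
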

\begin{proof}
   We observe that,
\begingroup
\allowdisplaybreaks
\begin{align*}
    \E \left\|\frac{\mathsf{m}_t ^2 }{\sigma_t^4} \operatorname{Var}\left(X_0 \big| X_t =x_t\right) \right\|_F^p = & \E \left\|\frac{1}{\sigma_t^4}\E \left((X_t- \mathsf{m}_t x_0)(X_t- \mathsf{m}_t x_0)^\top \big| X_t\right)\right\|_F^p\\
    \le & \frac{1}{\sigma^{2p}} \E_{\epsilon \sim \cN(0,I_D)} \|\epsilon \epsilon^\top\|_F^p\\
    = & \frac{1}{\sigma^{2p}} \E_{\epsilon \sim \cN(0,I_D)} \|\epsilon \|^{2p}\\
    = & \frac{2^p \Gamma(p + D/2)}{\sigma_t^{2p}\Gamma(D/2)}.
\end{align*}
\endgroup
Thus,
\begin{align*}
    \frac{1}{p} \left(\E \left\|\frac{\mathsf{m}_t ^2 }{\sigma_t^4} \operatorname{Var}\left(X_0 \big| X_t =x_t\right) \right\|_F^p\right)^{1/p} = \frac{1}{p \sigma_t^2} \left(\prod_{m=1}^p (D + 2 m - 2)\right)^{1/p} \le \frac{D+ 2p - 2}{p \sigma_t^2} \lesssim \frac{D}{\sigma_t^2}, 
\end{align*}
for all $p \in \fN$. Thus, $\left\|\E \left\|\frac{\mathsf{m}_t ^2 }{\sigma_t^4} \operatorname{Var}\left(X_0 \big| X_t =x_t\right) \right\|_F\right\|_{\psi_1} \lesssim D/\sigma_t^2$. The result now follows from triangle inequality. 
\end{proof}

\begin{lemma}\label{lem:5}
For any $t > 0$,    \(\|\nabla \log \hp_t(x_t)\|_{\psi_2} \lesssim \frac{D^2}{\sigma_t}. \)
\end{lemma}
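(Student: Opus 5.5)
The approach is to use Tweedie's formula to rewrite the score as a Gaussian-weighted quantity, and then to control the sub-Gaussian norm of each piece separately.

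\textbf{Decomposition.} Since $\hX_t = \mathsf{m}_t \hX_0 + \sigma_t Z$ with $Z \sim \gamma_D$ independent of $\hX_0$, the explicit Gaussian-mixture form of $\hp_t$ (the same computation as in the proof of Lemma~\ref{lem2} and Lemma~\ref{lem_h5}) gives
\[
\nabla \log \hp_t(x) = -\frac{1}{\sigma_t}\, \E\!\left[ Z \,\big|\, \hX_t = x\right], \qquad Z = \frac{\hX_t - \mathsf{m}_t \hX_0}{\sigma_t}.
\]
Hence $\nabla \log \hp_t(\hX_t) = -\sigma_t^{-1}\E[Z \mid \hX_t]$. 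It therefore suffices to show that the conditional expectation $\E[Z\mid \hX_t]$ is sub-Gaussian with $\psi_2$-norm (of its Euclidean norm) bounded by a polynomial in $D$ that does not depend on $t$ or on the data.

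\textbf{Moment bounds.} The key step is the contraction of $\psi_2$-norms under conditional expectation. For every convex $\phi$, conditional Jensen gives $\E\,\phi(\|\E[Z\mid\hX_t]\|) \le \E\,\phi(\|Z\|)$, because $\|\E[Z\mid\hX_t]\| \le \E[\|Z\|\mid\hX_t]$. Taking $\phi(u) = \exp(u^2/K^2) - 1$, which is convex and increasing on $[0,\infty)$, yields
\[
\big\|\,\|\E[Z\mid \hX_t]\|\,\big\|_{\psi_2} \le \big\|\,\|Z\|\,\big\|_{\psi_2}.
\]
For the right-hand side, $\|Z\| \le \sum_{j=1}^D |Z_j|$ and each $\|Z_j\|_{\psi_2} \lesssim 1$, so the triangle inequality for $\|\cdot\|_{\psi_2}$ gives $\|\,\|Z\|\,\|_{\psi_2} \lesssim D$. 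A sharper $\sqrt{D}$ follows from Gaussian concentration of the norm, but it is not needed. Combining,
\[
\big\|\,\|\nabla\log\hp_t(\hX_t)\|\,\big\|_{\psi_2} \lesssim \frac{D}{\sigma_t} \le \frac{D^2}{\sigma_t},
\]
which is the stated bound with room to spare. If a coordinatewise or $\ell_\infty$ version is intended, the same argument applies coordinatewise, with $\|(\E[Z\mid\hX_t])_j\|_{\psi_2} \le \|Z_j\|_{\psi_2} \lesssim 1$, and a union over the $D$ coordinates costs at most another factor of $D$.

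\textbf{Main obstacle.} The only delicate point is justifying Tweedie's identity rigorously when $\hX_0 \sim \hmu_n$ is discrete. This is unproblematic: $\hp_t$ is a finite mixture of Gaussians, so differentiation under the finite sum is immediate, and $\E[Z\mid\hX_t]$ is an explicit softmax-weighted average. I would also check that no $\mathsf{m}_t$ or $M$ dependence enters. It does not, because the bound passes entirely through $Z$, which is independent of the data. This is the step I would write out carefully.
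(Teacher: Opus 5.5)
Your proof is correct and uses the same core idea as the paper: Tweedie's formula writes the score as $-\sigma_t^{-1}$ times a conditional expectation of the Gaussian noise $Z$, and conditional Jensen then lets you replace the score by $Z/\sigma_t$. The paper applies Jensen to $p$-th moments, computes the Gaussian norm moments via Gamma functions and invokes the moment characterization of $\psi_2$; you apply Jensen directly to the Orlicz function $\exp(u^2/K^2)-1$, which avoids that computation and gives the sharper bound $D/\sigma_t$ (indeed $\sqrt{D}/\sigma_t$).
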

\begin{proof}
    We note that,
    \begingroup
    \allowdisplaybreaks
    \begin{align*}
         \E \|\nabla \log \hp_t(X_t)\|^p 
        \le & \frac{1}{\sigma_t^{2p}} \E \|X_t - \mathsf{m}_t X_0\|^p\\
         = & \frac{1}{\sigma_t^p} \E_{\epsilon \sim \cN(0, I_D)} \E \|\epsilon\|^p\\
         = & \frac{1}{\sigma_t^p} \frac{2^{p/2} \Gamma\left(\frac{D+p}{2}\right)}{\Gamma(D/2)}\\
         = & \frac{2^{p/2}}{\sigma_t^p} \prod_{i=1}^{\lfloor p/2 \rfloor} \left(\frac{D}{2} + \frac{p}{2} - i\right) \cdot \frac{\Gamma(D/2+1/2) \one\{p \text{ is odd}\} + \Gamma(D/2) \one\{p \text{ is even}\}}{\Gamma(D)}\\
         \le & \frac{2^{p/2}}{\sigma_t^p} \left(\frac{D}{2} + \frac{p}{2} - 1\right)^{\lfloor p/2 \rfloor}\frac{\Gamma(D/2+1/2)}{\Gamma(D/2)}.
    \end{align*}
    \endgroup
    Thus,
\begin{align*}
   \frac{1}{\sqrt{p}} \left( \E \|\nabla \log \hp_t(X_t)\|^p\right)^{1/p} \le \frac{2^{1/2}}{\sqrt{p }\sigma_t} \left(\frac{D}{2} + \frac{p}{2} - 1\right)^{\frac{\lfloor p/2 \rfloor}{p}} \left(\frac{\Gamma(D/2+1/2)}{\Gamma(D/2)}\right)^{1/p} \lesssim \frac{D^2}{\sigma_t}, 
\end{align*}
for all $ p \in \fN.$
\end{proof}

\section{Supporting Results from the Literature}
\begin{lemma}[Lemma 4.2.8 of \citet{Vershynin_2026}]
     \label{cov_pack}For any metric space, $(S,\varrho)$ and $\epsilon>0$,
     $M(2\epsilon; S, \varrho) \le\cN(\epsilon; S, \varrho) \le M(\epsilon; S, \varrho)$.
 \end{lemma}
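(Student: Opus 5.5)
The plan is to prove the two inequalities separately, each by a direct combinatorial argument based on the definitions of $\cN$ and $\cM$ given in the paper. Note that those definitions use \emph{open} balls $B_\varrho(x,\epsilon)$ for covers, and the non-strict separation $\varrho(x_i,x_j)\ge \epsilon$ for packings.

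For the lower bound $\cM(2\epsilon; S,\varrho)\le \cN(\epsilon; S,\varrho)$, I will fix an arbitrary $\epsilon$-cover $\{y_1,\dots,y_N\}$ of $S$ with $N=\cN(\epsilon;S,\varrho)$. If $N=\infty$ there is nothing to prove. I will also fix an arbitrary $2\epsilon$-packing $\{x_1,\dots,x_m\}\subseteq S$. Each $x_i$ lies in some ball $B_\varrho(y_{k(i)},\epsilon)$, which defines a map $i\mapsto k(i)$. I claim this map is injective. If $k(i)=k(j)$ with $i\neq j$, the triangle inequality gives
\[
\varrho(x_i,x_j)\le \varrho(x_i,y_{k(i)})+\varrho(y_{k(i)},x_j)<2\epsilon,
\]
which contradicts $\varrho(x_i,x_j)\ge 2\epsilon$. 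Hence $m\le N$. Taking the supremum over all packings then yields $\cM(2\epsilon)\le\cN(\epsilon)$.

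For the upper bound $\cN(\epsilon; S,\varrho)\le \cM(\epsilon; S,\varrho)$, I may assume $\cM(\epsilon)<\infty$, since otherwise the inequality is trivial. In that case the supremum defining $\cM(\epsilon)$ is attained, because it is a supremum of integers bounded above. So I take an $\epsilon$-packing $\{x_1,\dots,x_M\}$ of maximal cardinality $M=\cM(\epsilon)$. I then show it is an $\epsilon$-cover of $S$. Suppose some $x\in S$ is not in $\bigcup_i B_\varrho(x_i,\epsilon)$. Then $\varrho(x,x_i)\ge\epsilon$ for every $i$, so $\{x_1,\dots,x_M,x\}$ is an $\epsilon$-packing of size $M+1$, which contradicts maximality. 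Therefore $\cN(\epsilon)\le M$.

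There is no real obstacle here. The only point requiring care is the convention on open balls versus the non-strict packing inequality. This convention is exactly what makes the strict inequality $<2\epsilon$ in the first part and the non-strict $\ge\epsilon$ in the second part fit together. The degenerate infinite cases are handled by the trivial remarks above.
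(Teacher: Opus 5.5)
Your proof is correct, and it is the standard argument: a pigeonhole step (two points at distance $\ge 2\epsilon$ cannot lie in the same open $\epsilon$-ball) plus the fact that a maximum-cardinality $\epsilon$-packing is an $\epsilon$-cover. This is exactly the proof of the cited Lemma~4.2.8 of Vershynin, which the paper invokes without proving it; Vershynin uses closed balls with strict separation, and you correctly checked that the paper's pairing of open balls with non-strict separation is equally consistent.
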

\begin{lemma}[Lemma 21 of \citet{JMLR:v21:20-002}]\label{lem_nakada} 
     Let $\cF = \cR\cN(W, L, B)$ be a space of ReLU networks with the number of weights, the number of layers, and the maximum absolute value of weights bounded by $W$, $L$, and $B$, respectively. Then,
\[
\log\cN(\epsilon; \cF, \ell_\infty) \leq W \log \left( 2LB^L(W + 1)^L \frac{1}{\epsilon} \right).
\]
 \end{lemma}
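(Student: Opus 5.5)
The plan is the standard parameter-discretization argument. Since a sup-norm cover only makes sense on a bounded input domain, I take the inputs to range over $[0,1]^{d}$, or more generally $\|x\|_\infty\le 1$, as in \citet{JMLR:v21:20-002}. The idea is to show that the realization map from parameters to functions is Lipschitz in $\ell_\infty$, with a controlled constant, and then to cover the parameter box $[-B,B]^W$ by a grid.

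\textbf{Step 1 (magnitude bound).} For $f\in\cR\cN(W,L,B)$ write $f_\ell=\sigma\circ A_\ell\circ\cdots\circ\sigma\circ A_1$ for the output of layer $\ell$. Each row of $W_\ell$ has at most $W$ nonzero entries of size at most $B$, and each bias entry is at most $B$. Since ReLU is $1$-Lipschitz with $\sigma(0)=0$, induction on $\ell$ gives $\|f_\ell(x)\|_\infty\le B(W\|f_{\ell-1}(x)\|_\infty+1)\le (B(W+1))^\ell$ whenever $B\ge1$. If $B<1$, I would work with $B\vee 1$, which only enlarges the final bound.

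\textbf{Step 2 (perturbation bound).} Let $f,f'$ share the same architecture and sparsity pattern, and let their parameters differ by at most $\eta$ coordinatewise. Using $\|A_\ell y-A'_\ell y'\|_\infty\le \|A_\ell(y-y')\|_\infty+\|(A_\ell-A'_\ell)y'\|_\infty$, we get
\[
\|f_\ell-f'_\ell\|_\infty\le BW\|f_{\ell-1}-f'_{\ell-1}\|_\infty+\eta\bigl(W\|f'_{\ell-1}\|_\infty+1\bigr).
\]
Unrolling this recursion and inserting Step 1 gives $\|f-f'\|_{\infty}\le \eta\,L\,B^{L-1}(W+1)^L$ uniformly over the domain.

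\textbf{Step 3 (counting).} Choose $\eta=\epsilon/(LB^{L-1}(W+1)^L)$ and place each of the at most $W$ parameters on an $\eta$-grid of $[-B,B]$. That grid has at most $2B/\eta$ points, so there are at most $(2B/\eta)^W=(2LB^L(W+1)^L/\epsilon)^W$ grid networks. By Step 2 they form an $\epsilon$-cover in $\ell_\infty$, and taking logarithms gives the claimed inequality.

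\textbf{Main obstacle.} The delicate point is that $\cR\cN(W,L,B)$ is a union over architectures (widths $N_\ell$) and sparsity patterns, not a single parameter box. My first attempt is to embed every network into one fixed maximal architecture, padding with zero weights; zero is a grid point, so this is harmless. The question is then whether the resulting parameter count stays at $W$ or whether a combinatorial factor for the choice of sparsity pattern appears. If such a factor appears, it is at most $\binom{\text{total slots}}{W}$, whose logarithm is $O(W\log(LW))$. I would try to absorb it into the $(W+1)^L$ and $L$ factors already present inside the logarithm; the constants in the cited lemma are loose enough that this should be possible. Otherwise the argument reduces to the routine recursion in Step 2.
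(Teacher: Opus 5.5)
The paper gives no proof of this lemma; it simply imports it from \citet{JMLR:v21:20-002}. Your Steps 1--3 are the standard parameter-discretization argument behind such bounds (compare the proof of Lemma~5 in \citet{schmidt2020nonparametric}). They are correct for a single fixed architecture and sparsity pattern with at most $W$ free parameters, inputs in $\{\|x\|_\infty\le 1\}$, and $B\ge 1$.

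\textbf{The gap is the union step you defer, and the absorption you propose does not work.} Steps 1--2 land \emph{exactly} on the constant $2LB^L(W+1)^L/\epsilon$ of the statement, so nothing is left over to absorb extra factors. In your reading ($W$ bounds the number of nonzero weights and biases), pad all widths, including the input dimension, to $W$. This gives $T\le LW(W+1)$ slots, and the union over patterns multiplies the count by $\binom{T}{W}$, which is of order $(L(W+1))^W$ when all slots are available. Depth also cannot be zero-padded, since a zero layer outputs a constant, so each depth $L'\le L$ needs its own cover; that costs another factor $L$. These factors have to enter inside the logarithm and change the constant. This is why the bound in \citet{schmidt2020nonparametric} carries $V^2$ and $s+1$ rather than $V$ and $s$. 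Done honestly, your argument gives, for $\epsilon\le 1$ and $L\ge 2$ (using $T\le L(W+1)^2\le 2LB^L(W+1)^L/\epsilon$),
\[
\log\cN(\epsilon;\cF,\ell_\infty)\;\le\;2W\log\Bigl(\frac{2LB^L(W+1)^L}{\epsilon}\Bigr)+\log L .
\]

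\textbf{You also need to say which statement you are proving, because the displayed inequality is false verbatim under this paper's conventions.} Here $\cW(f)=\sum_i N_iN_{i-1}$ counts only the entries of dense weight matrices. So there is no sparsity pattern to choose, only at most $LW^{L-1}$ architectures. But biases are not counted, so a fixed architecture can have up to $\cW(f)+\sum_i N_i\le 2W$ free parameters, not $W$ as your Step 3 assumes.

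Concretely, take $L=1$, scalar output and inputs in $[0,1]^W$. The class then contains every map $x\mapsto w^\top x+b$ with $\|w\|_\infty,|b|\le B$. Evaluating at $x=0$ and at $x=e_j$ gives $\|(w,b)-(w',b')\|_\infty\le 2\|f-f'\|_\infty$. A $4\epsilon$-grid in parameter space therefore yields $\cN(\epsilon;\cF,\ell_\infty)\ge (B/(2\epsilon))^{W+1}$. This exceeds $(2B(W+1)/\epsilon)^W$ as soon as $\epsilon<B/(2(4(W+1))^W)$.

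Your restrictions to $B\ge 1$ and bounded inputs are likewise necessary, not cosmetic. On an unbounded domain such as the $\Real^D$ on which the score networks of Lemma~\ref{lem_21} act, maps like $x\mapsto c\,x_1$ with $|c|\le B$ are pairwise at infinite sup-distance, so the covering number is infinite.

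What is true, and all the paper needs, is a constant-factor version. With $B\ge 1$, $\epsilon\le 1$, $L\ge 2$ and bounded inputs, your Steps 1--3 plus the architecture count give
\[
\log\cN(\epsilon;\cF,\ell_\infty)\le \log L+(L-1)\log W+2W\log\Bigl(\frac{2LB^L(W+1)^L}{\epsilon}\Bigr)\le 3W\log\Bigl(\frac{2LB^L(W+1)^L}{\epsilon}\Bigr).
\]
Since $\log\cN$ enters the Monte Carlo condition of Lemma~\ref{lem_21} only linearly through $m_i$, the factor $3$ is absorbed into $c_m^i$.
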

\bibliographystyle{apalike}

\end{document}